\documentclass[10pt]{article} 
\usepackage[preprint]{tmlr}

\usepackage[pagebackref=true,backref=page]{hyperref}
\usepackage[nohints]{minitoc}
\usepackage{graphicx}
\usepackage{array}
\usepackage{booktabs}
\usepackage{placeins}
\usepackage{capt-of}
\usepackage{my_equations}
\usepackage{amsmath,amsfonts,amssymb,bbm,bm}
\usepackage{dsfont} 
\usepackage{upgreek} 
\usepackage{xcolor}
\usepackage{amsthm}
\usepackage{thmtools}
\usepackage{mathtools}

\definecolor{niceRed}{RGB}{190,38,38}
\definecolor{niceYellow}{HTML}{f5b400}
\definecolor{blueGrotto}{HTML}{059DC0}
\definecolor{royalBlue}{HTML}{057DCD}
\definecolor{navyBlue}{HTML}{0B579C}
\definecolor{limeGreen}{HTML}{81B622}
\definecolor{nicePurple}{HTML}{9c27b0}
\definecolor{lightRoyalBlue}{HTML}{def2ff}
\definecolor{gold}{HTML}{ffa300}
\definecolor{frameblue}{RGB}{0,123,255}     
\definecolor{framebg}{RGB}{241,244,247}     
\definecolor{shadecolor}{gray}{0.90}
\definecolor{plum}{HTML}{9c27b0}

\usepackage{enumitem}
\usepackage{mdframed} 
\usepackage{amsthm}
\usepackage{thmtools}
\definecolor{shadecolor}{gray}{0.90}
\declaretheoremstyle[
headfont=\normalfont\bfseries,
notefont=\mdseries, notebraces={(}{)},
bodyfont=\normalfont,
postheadspace=0.5em,
spaceabove=0.5em,
spacebelow=0.5em,
mdframed={
 skipabove=8pt,
 skipbelow=8pt,
 hidealllines=true,
 backgroundcolor={shadecolor},
 innerleftmargin=4pt,
 innerrightmargin=4pt}
]{shaded}
\declaretheoremstyle[
  bodyfont=\normalfont  
]{nonitalic}
\newmdenv[
    backgroundcolor=framebg,
    skipabove=8pt,
    skipbelow=8pt,
    roundcorner=5pt,
    linewidth=0pt,
    innertopmargin=4pt
]{myframe}

\declaretheorem[within=section]{style}

\declaretheorem[sibling=style]{main theorem}

\declaretheorem[sibling=style]{corollary}
\declaretheorem[style=nonitalic,sibling=style]{lemma}

\declaretheorem[sibling=style]{proposition}

\declaretheorem[style=nonitalic,sibling=style, numbered=no]{remark}
\declaretheorem[style=nonitalic,sibling=style, numbered=no]{proof sketch}

\usepackage[capitalize,nameinlink]{cleveref}
\crefname{assumption}{assumption}{assumptions}
\Crefname{assumption}{Assumption}{Assumptions}
\crefname{lemma}{lemma}{lemmas}
\Crefname{lemma}{Lemma}{Lemmas}
\crefname{theorem}{theorem}{theorems}
\Crefname{theorem}{Theorem}{Theorems}
\crefname{main theorem}{main theorem}{main theorems}
\Crefname{main theorem}{Main theorem}{Main theorems}
\crefname{definition}{definition}{definitions}
\Crefname{definition}{Definition}{Definitions}
\crefname{corollary}{corollary}{corollaries}
\Crefname{corollary}{Corollary}{Corollaries}
\crefname{proposition}{proposition}{propositions}
\Crefname{proposition}{Proposition}{Propositions}
\crefname{fact}{fact}{facts}
\Crefname{fact}{Fact}{Facts}
\crefname{identity}{identity}{identities}
\Crefname{identity}{Identity}{Identities}
\crefname{function}{function}{functions}
\Crefname{function}{Function}{Functions}

\DeclareMathOperator*{\argmin}{arg\,min}

\DeclareMathOperator{\sech}{sech}

\newcommand{\R}{\mathbb{R}}

\newcommand{\p}{\uppi} 
\renewcommand{\Pr}{\mathbb{P}}
\newcommand{\E}{\mathbb{E}}

\newcommand{\Var}{\mathrm{Var}}

\DeclareMathOperator{\KL}{\mathsf{KL}}
\DeclareMathOperator{\TV}{\mathsf{TV}}
\DeclareMathOperator{\FI}{\mathsf{FI}}

\newcommand{\calA}{\mathcal{A}}

\newcommand{\calC}{\mathcal{C}}
\newcommand{\calD}{\mathcal{D}}
\newcommand{\calE}{\mathcal{E}}

\newcommand{\calH}{\mathcal{H}}

\newcommand{\calJ}{\mathcal{J}}
\newcommand{\calK}{\mathcal{K}}
\newcommand{\calL}{\mathcal{L}}

\newcommand{\calN}{\mathcal{N}}
\newcommand{\calO}{\mathcal{O}}
\newcommand{\calP}{\mathcal{P}}

\newcommand{\calR}{\mathcal{R}}
\newcommand{\calS}{\mathcal{S}}

\newcommand{\calW}{\mathcal{W}}

\newcommand{\sgn}{\textnormal{sgn}}

\def\<{\langle}
\def\>{\rangle}

\renewcommand{\d}{\mathrm{d}}

\newcommand{\bu}{\mathbf{u}} 
\newcommand{\bx}{\mathbf{x}} 

\newcommand{\mathe}{\mathrm{e}}

\newcommand{\ind}{\mathrel{\perp\!\!\!\perp}} 

\hypersetup{colorlinks=true,linkcolor=royalBlue,citecolor=navyBlue,urlcolor=navyBlue}

\title{\raggedright
Connecting Score Matching, Maximum Likelihood, and\\
Expectation-Maximization in Mixed Linear Regression}

\author{\name Zhankun Luo \email luo333@purdue.edu \\
      \name Abolfazl Hashemi \email abolfazl@purdue.edu \\
      \addr School of Electrical and Computer Engineering\\
      Purdue University, West Lafayette, IN, USA}

\begin{document}

\maketitle

\begingroup
\renewcommand{\thefootnote}{}
\footnotetext{LLMs were used as a general-purpose assistive tool during manuscript preparation
for language editing and code
assistance. The authors independently verified theoretical and experimental results and take full
responsibility for the manuscript.}
\endgroup
\vspace{-1.5em} 
\begin{abstract}
We study variance-preserving diffusion of the response in mixed linear regression (MLR) with unknown mixing weights.
Our analysis separates the statistical guarantees of score matching from the loss geometry and optimization signal at a fixed diffusion noise level. 
The KL divergence links the denoising score matching
objective integrated over the diffusion path with the likelihood and a terminal
discrepancy. Under mild regularity conditions and terminal schedule, the
resulting estimator converges up to the ground truth parameters of MLR, and its scaled error converges to the Gaussian limit of the maximum-likelihood estimator.
At a fixed scale of the diffusion noise level, we derive a decomposition linking the score matching loss to cross-entropy and Expectation-Maximization (EM) operators. 
This decomposition yields an EM-related low-noise gradient expansion with additional correction terms of latent variance. 
In the high-noise limit, we further characterize gradient descent on this
limiting loss under isotropic covariance. Along fixed high signal-to-noise ratio rays, the score matching imbalance gradient and the latent-variance term tend to zero pointwise.
Numerical experiments illustrate our theoretical findings and statistical guarantees.
\end{abstract}

\doparttoc[n] 
\faketableofcontents 
\vspace{-1.5em} 
\section{Introduction}\label{sec:intro}
For a response law with several modes, its mean can fall in a region of low
probability and fail to describe a plausible outcome
\citep{bishop1994mixture,han2022card}. Diffusion models instead learn scores for
Gaussian-smoothed versions of the response law~\citep{ddpm,song2021scorebased}.
Conditional variants retain the covariates while adding noise to the response.
\citet{batzolis2021conditional} justified conditional
denoising score estimation, while
\citet{fu2024unveil} and
\citet{tang2025conditional} established conditional-score and
distribution-estimation guarantees under their respective assumptions.
Building on this distributional perspective, we study parameter recovery in a
structured latent model, its connection to latent-variable optimization, and
the variation of parameter information and gradient signal with diffusion noise level and signal strength.
We study these questions in mixed linear regression (MLR), a classical latent model~\citep{quandt1972switching,de1989mixtures,zilber2023imbalanced}.
Under Gaussian diffusion of the response, the conditional score and the posterior of the latent sign remain explicit.
We denote the covariate by \(\bx\in\R^d\) and the response by \(y_0\in\R\) in the MLR model.
The ground truth regression parameters \(\theta^\ast\) represent the separation between components, \(\nu^\ast\) characterizes the imbalance of the ground truth mixing weights \(\pi^\ast=(\pi^\ast(1),\pi^\ast(2)):=((1+\tanh\nu^\ast)/2,(1-\tanh\nu^\ast)/2)\), and the conditional law for the response is
\begin{equation}
    y_0\mid\bx\sim\pi^\ast(1)\,\calN\!\left(\langle\theta^\ast,\bx\rangle,1\right)+\pi^\ast(2)\,\calN\!\left(-\langle\theta^\ast,\bx\rangle,1\right).
\end{equation}
\vspace{-2em}
\begin{myframe}
\textbf{Questions:} 
\textit{For this MLR model, under what conditions does the path-integrated score matching estimator recover the regression parameters and mixing weights? \\
How is the score matching loss related to Expectation-Maximization, and how do the noise level and signal strength affect the optimization dynamics and parameter information retained by score matching?}
\end{myframe}

\paragraph{Score matching and mixture learning.}
Score matching was introduced for unnormalized models, connected to maximum likelihood under perturbation, and linked to denoising objectives~\citep{hyvarinen2005estimation,lyu2009interpretation,vincent2011connection}.
Analyses based on variational methods and maximum likelihood connected diffusion score objectives to likelihood under their respective reverse models and weightings~\citep{huang2021variational,song2021maximum}. Results on learning densities and distributions with scores further related an integrated score objective to density and parameter estimation~\citep{li2024density,chewi2025arxiv}, while results for Gaussian mixtures provided end-to-end distribution-learning guarantees for diffusion and score methods~\citep{chen2024learning,gatmiry2024learning}.
\citet{shah2023learning} analyzed balanced spherical Gaussian mixtures. Their
large-noise proof used a spectral iteration, while their small-noise proof
compared a gradient step with the M-step of Expectation-Maximization (EM).
Our conditional Gaussian components have centers that move with \(\bx\), and both their regression parameters and mixing weights are unknown.
Prior work studied MLR through identifiability, spectral and moment methods, convex formulations, alternating minimization, approximate message passing, and EM~\citep{hennig2000identifiability,chaganty2013spectral,zhong2016mixed,yi2014alternating,yudong2018trans,tan2023mixed,balakrishnan2017statistical,dana2019estimate2mix,kwon2021minimax,kwon2024global}.
\citet{zilber2023imbalanced} studied estimation of unknown imbalanced MLR, and \citet{luo2025structural,luo2026characterizing} characterized the coupled EM operators for the regression parameters and the imbalance of mixing weights.
\citet{ozkara2025spire} used a related conditional diffusion method to estimate client-specific weights in a two-component Gaussian mixture at one noise level with fixed component means. Our setting estimates both the covariate-dependent centers and the imbalance of mixing weights.

\paragraph{Score blindness and the role of noise.}
At a high diffusion noise level, component separation can remove the information about the mixing weights from the
score as established by \citet{wenliang2020blindness} in one-dimensional Gaussian examples with two mixtures where the Fisher divergence between them
vanished as the separation grew.
\citet{zhang2022healing} proposed a mixture Fisher divergence to mitigate this failure, while \citet{koehler2023statistical} related score matching efficiency to isoperimetry.
The corresponding behavior of population EM in
\citet{luo2024unveiling} showed that the EM update for the mixing weights was independent of the incoming imbalance of mixing weights in the noiseless setting but remained
informative about the ground-truth mixing imbalance through the alignment of the current and
ground-truth regression parameters.
For finite Gaussian mixtures with known mixture weights and a known shared covariance,
\citet{qin2024fit} proved a polynomial asymptotic
bound on the statistical complexity of fitting the component means with a generalized
score matching objective that used continuous tempering and was related to
annealed score matching. For mixing weights of a two-Gaussian family with known component
parameters, \citet{schwienhorst2026diffusiondenoising} showed that suitable horizon
tuning removed separation dependence from their finite-sample error bound,
while \citet{dennehy2026mixtureweights} showed that
mode-overlapping intermediate scales of the diffusion noise level could retain information about the mixing weights.

\subsection{Technical Overview and Contributions}\label{subsec:technical_overview}

The technical development combines the relative-entropy and
likelihood bridges by following the de Bruijn identity~\citep{stam1959some}, Expectation-Maximization update~\citep{dempster1977maximum} with the coupled EM operators~\citep{weinberger2022algorithm,luo2025structural}, and standard M-estimation arguments~\citep{van2000asymptotic,chewi2025ddpm,schwienhorst2026diffusiondenoising}.
The main contributions are as follows.
\begin{itemize}
    \item \textbf{Path-Integrated Score Matching: Maximum-Likelihood Asymptotics}
    (Section~\ref{sec:asymptotic}).
    Lemmas~\ref{lemma:kl_divergence_SM_loss}
    and~\ref{lemma:log_likelihood_SM_loss_finite} establish population and
    empirical likelihood bridges for which terminal discrepancies form the
    only parameter-dependent gap between the path-integrated objective and
    likelihood. Proposition~\ref{prop:terminal_kl_pinsker_zero_set} shows that
    the terminal population KL divergence preserves identifiability, and Main
    Theorem~\ref{thm:consistency_asymptotic_normality} proves that controlling
    the terminal discrepancy transfers consistency and asymptotic normality
    with inverse-Fisher covariance from maximum likelihood to the
    path-integrated Score Matching estimator.\par

    \item \textbf{Fixed-Scale Score Matching: Cross-Entropy and EM Decompositions}
    (Section~\ref{sec:connect}).
    Propositions~\ref{prop:em_update} and~\ref{prop:cross_entropy} identify the
    diffused MLR Expectation-Maximization (EM) operators and express the cross-entropy gradients as EM
    residuals, while Main Theorem~\ref{thm:information_theoretical_decomposition}
    shows that the fixed-scale loss admits equivalent decompositions of cross-entropy and EM operators.
    Corollaries~\ref{cor:sm_gradients_entropy_activation}
    and~\ref{cor:sm_gradients_em_variance}, together with
    Proposition~\ref{prop:sm_gradients}, translate these decompositions into
    exact gradient identities, showing that the score matching gradient field
    involves EM operators, operator derivatives, and latent variance terms. Equation~\eqref{eq:low_noise_loss_limit}
    and Propositions~\ref{prop:low_noise_em}
    and~\ref{prop:high_noise_effective_coefficient} then characterize the two
    endpoint regimes: the low-noise gradient limit retains this EM-related
    correction structure, whereas the high-noise limiting objective depends
    on the candidate parameters and the ground truth parameters.\par

    \item \textbf{High-Noise Gradient Dynamics and Score Matching Blindness}
    (Section~\ref{sec:theory}).
    Main Theorem~\ref{thm:global_convergence_high_noise} gives an
    \(\calO(K^{-2})\) loss bound when the effective target is zero, where
    \(K\) denotes the number of gradient-descent iterations, and an
    \(\calO(\log(1/\epsilon))\) bound for generic initializations when
    the effective target is nonzero, with only a measure-zero exceptional
    set attracted to a saddle.
    Proposition~\ref{prop:pointwise_sm_blindness} formalizes pointwise
    blindness to the mixing imbalance: as the ground-truth signal diverges
    along a fixed ray, the gradient in the imbalance parameter vanishes at
    fixed scale of the diffusion noise level.
\end{itemize}

\paragraph{Organization.}
In Section~\ref{sec:setup}, we formulate the MLR model, score matching
objectives, and associated EM operators. In Section~\ref{sec:asymptotic}, we
show how a likelihood bridge transfers maximum-likelihood asymptotics to the
path-integrated Score Matching estimator. In Section~\ref{sec:connect}, we
derive exact cross-entropy and EM decompositions and use them to characterize
the fixed-scale loss and gradients across the low- and high-noise regimes. In
Section~\ref{sec:theory},
we analyze high-noise gradient dynamics and pointwise score matching
blindness. In Section~\ref{sec:experiments}, we consolidate the corresponding
numerical implications. In
Appendix~\ref{sup:lemma}, we develop the auxiliary identities and bounds 
used in the main proofs for results of Sections~\ref{sec:asymptotic}--\ref{sec:theory} in Appendices~\ref{sup:setup}--\ref{sup:theory}.
In Appendix~\ref{sup:experiment}, we record the experimental details for the numerical results to verify the theoretical findings of Sections~\ref{sec:asymptotic}--\ref{sec:theory}.

\section{Problem Setup}\label{sec:setup}
In this section, we specify the mixed linear regression (MLR) model and its variance-preserving
diffusion of the response. We then define the population and empirical
score matching objectives. We also introduce the cross-entropy and EM operators
and the notations required for the estimation and asymptotic analysis.
See the detailed derivations of the equations below in Appendix~\ref{sup:lemma}.
\subsection{Mixed Linear Regression and Variance-Preserving Diffusion}\label{subsec:2mlr}
In the two-component mixed linear regression (MLR) model, the response \(y_0\) is generated from the
regression parameter \(\theta\) and additive noise \(\varepsilon_0\) according
to
\begin{equation}\label{eq:2mlr}
    y_0 = (-1)^{z+1}\langle \theta, \bx \rangle + \varepsilon_0,
\end{equation}
where \(\E[\bx \bx^\top]=\Sigma\succ0\) and
\(\varepsilon_0 \sim \mathcal{N}(0, 1)\). The latent variable
\(z \in \{1,2\}\) has probabilities \(\Pr(z=1\mid \pi) = \pi(1)\) and
\(\Pr(z=2\mid \pi) = \pi(2)\). The mixing weights
\(\pi = (\pi(1), \pi(2))\) are positive and sum to one.
We impose the independence assumptions \((z; \pi)\ind \theta\), \(\bx\ind (z; \theta, \pi)\), and \(\varepsilon_0 \ind (\bx, z; \theta, \pi)\).
The half log-odds parameter \(\nu := (\ln\pi(1)-\ln\pi(2))/2\) characterizes the mixing imbalance and satisfies \(\tanh\nu=\pi(1)-\pi(2)\), \(\pi(1)=(1+\tanh\nu)/2\), and \(\pi(2)=(1-\tanh\nu)/2\).
The variance-preserving diffusion process satisfies the stochastic differential equation (SDE)
\begin{equation}\label{eq:sde}
    \d y_t = -\frac{\beta_t}{2}y_t \d t + \sqrt{\beta_t}\d B_t,
\end{equation}
where \(\beta_t>0\) is the noise schedule and \(B_t\) is a standard Brownian motion satisfying \(\E[B_t]=0\) and \(\E[B_tB_s]=\min(t,s)\).
Noise schedules used or analyzed in the literature include the constant choice
\(\beta_t=2\)~\citep{chen2023probability}, the linear choice
\(\beta_t=a+bt\)~\citep{ddpm}, and the power-law family
\(\beta_t=(a+bt)^\rho\) with \(\rho\geq1\)~\citep{gao2025wasserstein,gao2025convergence}.
The retained-signal factor
\(
\bar{\alpha}_t := \exp\left(-\int_0^t \beta_\tau \d \tau \right)
\)
encodes cumulative attenuation along the diffusion path, and its square root
\(\sqrt{\bar{\alpha}_t}\) controls the decay of the signal from \(y_0\).
The corresponding forward transition density is
\(
p(y_t|y_0) = \mathcal{N}\left(y_t; y_0\sqrt{\bar{\alpha}_t}, 1-\bar{\alpha}_t\right)
\), so the response variable \(y_t\) at time \(t\) admits the representation
\begin{equation}\label{eq:yt}
    y_t = y_0\sqrt{\bar{\alpha}_t} + \xi_t \sqrt{1-\bar{\alpha}_t} = (-1)^{z+1} \langle \theta_t, \bx \rangle + \varepsilon_t, 
\end{equation}
where \(\xi_t \sim \mathcal{N}(0, 1)\) and \(\varepsilon_t \sim \mathcal{N}(0, 1)\) are additive Gaussian noises independent of \(\bx\) and \(z\), while \(\theta_t := \theta \sqrt{\bar{\alpha}_t}\) and \(\mu_t := \langle \theta_t, \bx \rangle\) provide concise notation for the scaled parameters at time \(t\).
At \(t = 0\), these definitions give \(\theta_0 \equiv \theta\) and \(\mu_0 \equiv \langle \theta, \bx \rangle\). These identities imply \(\mu_t = \mu_0 \sqrt{\bar{\alpha}_t}\) and \(\d \mu_t/\d t= -(\beta_t/2)\mu_t\).
For the ground-truth data-generating parameters \(\theta^\ast\in\R^d\) and \(\pi^\ast \equiv (\pi^\ast(1),\pi^\ast(2))\), we have \(\nu^\ast:=(\ln\pi^\ast(1)-\ln\pi^\ast(2))/2\), \(\theta_t^\ast:=\theta^\ast\sqrt{\bar\alpha_t}\), and \(\mu_t^\ast:=\langle\theta_t^\ast,\bx\rangle\).

\subsection{Score Function and Score Matching Loss}\label{subsec:score}
With the scaled regression parameters \(\theta_t \equiv \theta \sqrt{\bar{\alpha}_t}\), mixing weights \(\pi\), and shorthand \(\mu_t \equiv \mu_0 \sqrt{\bar{\alpha}_t} = \langle \theta, \bx \rangle \sqrt{\bar{\alpha}_t}\), the Stein score function with respect to \(y_t\) is
\begin{equation}\label{eq:score}
    s_{\theta_t, \nu}(y_t,\bx) := \nabla \log p(y_t\mid \bx; \theta_t, \pi) = -y_t + \tanh(\mu_t y_t + \nu)\mu_t.
\end{equation}

The score matching loss \(\calL_t(\theta,\nu)\) is
the covariate expectation of
the relative Fisher information 
\(\FI(p(y_t\mid\bx;\theta_t^\ast,\pi^\ast)\parallel
p(y_t\mid\bx;\theta_t,\pi))
:=\E_{y_t\mid\bx;\theta_t^\ast,\pi^\ast}
\|s_{\theta_t,\nu}(y_t,\bx)-s_{\theta_t^\ast,\nu^\ast}(y_t,\bx)\|^2\) between the model score with parameters \(\theta,\pi\) and the true score with ground-truth
parameters \(\theta^\ast,\pi^\ast\) under the same normalization of \(1/(2\bar{\alpha}_t)\).
\begin{equation}\label{eq:loss}
    \calL_t(\theta,\nu) := \frac{1}{2\bar{\alpha}_t}\E_{y_t,\bx|\theta^*,\pi^*} \|s_{\theta_t, \nu}(y_t,\bx) - s_{\theta^*_t, \nu^*}(y_t,\bx)\|^2
    = \frac{1}{2 \bar{\alpha}_t} \E_{\bx} \FI(p(y_t\mid \bx; \theta^*_t, \pi^*)\parallel p(y_t\mid \bx; \theta_t, \pi)).
\end{equation}


The auxiliary loss \(\calJ_t(\theta,\nu)\) compares the model score with the
score of the forward transition. The Gaussian transition is
\(p(y_t\mid y_0) = \mathcal{N}(y_t; y_0\sqrt{\bar{\alpha}_t},
1-\bar{\alpha}_t)\). Its score satisfies
\(\nabla \log p(y_t\mid y_0) =
-\frac{y_t-y_0\sqrt{\bar{\alpha}_t}}{1-\bar{\alpha}_t}
\stackrel{\mathsf{d}}{=} -\frac{\xi}{\sqrt{1-\bar{\alpha}_t}}\), where
\(\xi \stackrel{\mathsf{d}}{=} \xi_t \sim \mathcal{N}(0, 1)\) is independent
of \((y_0,\bx)\).
The equivalent representation
\(y_t \stackrel{\mathsf{d}}{=} y_0\sqrt{\bar{\alpha}_t} + \xi
\sqrt{1-\bar{\alpha}_t}\) gives
\begin{equation}\label{eq:auxiliary_loss}
\begin{aligned}
    \calJ_t(\theta,\nu) :=& \frac{1}{2\bar{\alpha}_t}\E_{y_t,y_0,\bx|\theta^*,\pi^*} 
    \left[\|s_{\theta_t, \nu}(y_t,\bx)\|^2 - 2 \langle s_{\theta_t, \nu}(y_t,\bx), \nabla \log p(y_t\mid y_0) \rangle\right]\\
    =& \frac{1}{2\bar{\alpha}_t}\E_{y_0,\bx|\theta^*,\pi^*} \E_{\xi}
    \left[
    \Big\|s_{\theta_t, \nu}(y_0\sqrt{\bar{\alpha}_t} + \xi \sqrt{1-\bar{\alpha}_t},\bx) \Big\|^2
    + 2 \left\langle s_{\theta_t, \nu}(y_0\sqrt{\bar{\alpha}_t} + \xi \sqrt{1-\bar{\alpha}_t},\bx), \frac{\xi}{\sqrt{1-\bar{\alpha}_t}} \right\rangle
    \right].
\end{aligned}
\end{equation}

For i.i.d. observations \(\{(y_0^{(i)},\bx^{(i)})\}_{i=1}^n
\stackrel{\mathrm{i.i.d.}}{\sim}p(y_0,\bx\mid\theta^\ast,\pi^\ast)\), we define
\(\Pr_n:=n^{-1}\sum_{i=1}^n\delta_{(y_0^{(i)},\bx^{(i)})}\) and
\(\E_{\Pr_n}[f(y_0,\bx)]:=n^{-1}\sum_{i=1}^n
f(y_0^{(i)},\bx^{(i)})\). Thus \(\E_{\Pr_n}\) averages over the observed
pairs. The inner expectation in \(\E_{\Pr_n}\E_{y_t\mid y_0}[\cdot]\)
averages over forward-diffusion noise. The
finite-sample loss is given by
\begin{equation}\label{eq:auxiliary_loss_finite}
\begin{aligned}
    \calJ_{t}^n(\theta,\nu) 
    :=& \frac{1}{2\bar{\alpha}_t}\E_{\Pr_n} \E_{y_t\mid y_0}
    \left[
        \|s_{\theta_t, \nu}(y_t,\bx)\|^2 - 2 \langle s_{\theta_t, \nu}(y_t,\bx), \nabla \log p(y_t\mid y_0) \rangle
    \right]\\
    =& 
    \frac{1}{2\bar{\alpha}_t}\E_{\Pr_n} \E_{\xi}
    \left[
    \Big\|s_{\theta_t, \nu}(y_0\sqrt{\bar{\alpha}_t} + \xi \sqrt{1-\bar{\alpha}_t},\bx) \Big\|^2
    + 2 \left\langle s_{\theta_t, \nu}(y_0\sqrt{\bar{\alpha}_t} + \xi \sqrt{1-\bar{\alpha}_t},\bx), \frac{\xi}{\sqrt{1-\bar{\alpha}_t}} \right\rangle
    \right].
\end{aligned}
\end{equation}

\subsection{Negative Log-Likelihood and Expectation-Maximization Operators}\label{subsec:em_update}
The unit-variance negative log-likelihood and the
associated population EM operators are given by
the following equations, see detailed derivations in Appendix~\ref{sup:lemma} and Proposition 3 and Appendix A of~\citep{luo2025structural}. The negative log-likelihood is
\begin{equation}
F(y_t, \mu_t, \nu) := -\ln p(y_t\mid \bx; \theta_t, \pi) = \frac{1}{2}(y_t^2 + \mu_t^2) - \ln \frac{\cosh(\mu_t y_t+\nu)}{\cosh(\nu)} + \frac{1}{2} \ln(2\p).
\end{equation}
We use
\(\E_{\ast}[\cdot]:=\E_{y_t,\bx\mid\theta^\ast,\pi^\ast}[\cdot]\) for
expectation under the ground-truth law. The cross-entropy
\(\calH(\theta_t,\nu):=\E_{\ast}[-\ln
p(y_t\mid\bx;\theta_t,\pi)]=\E_{\ast}[F(y_t,\mu_t,\nu)]\) is the expected
negative log-likelihood. 
The population Expectation-Maximization (EM) operators for
\(\theta_t\) and the imbalance of mixing weights \(\tanh\nu \equiv \pi(1)-\pi(2)\) are
\begin{equation}\label{eq:em_updates}
    M(\theta_t, \nu) = \Sigma^{-1}\E_{\ast}\big[\bx y_t \tanh(\mu_t y_t + \nu)\big],
    \qquad N(\theta_t, \nu) = \E_{\ast}\big[\tanh(\mu_t y_t + \nu)\big].
\end{equation}

\subsection{Notations on Metrics, Asymptotics, and Estimation Objectives}\label{subsec:notations}

For probability measures \(P,Q\) on a common measurable space, we define
\(\KL(P\parallel Q):=\int\ln(\d P/\d Q)\,\d P\) when \(P\ll Q\) and
\(\TV(P,Q):=\sup_A|P(A)-Q(A)|\) over measurable sets.
For probability measures \(P,Q\) with finite second moments, the Wasserstein distance is defined as
\(W_2^2(P,Q):=\inf_{\gamma\in\Gamma(P,Q)}
\int\|u-v\|_2^2\,\gamma(\d u,\d v)\) over their set of couplings \(\Gamma(P,Q)\).
We write \(X_n=\calO_P(a_n)\) when \(X_n/a_n\) is stochastically bounded and \(X_n=o_P(a_n)\) when \(X_n/a_n\xrightarrow{\Pr}0\). 
The model parameters \(\vartheta:=(\theta,\nu)\) and the ground-truth values \(\vartheta^\ast:=(\theta^\ast,\nu^\ast)\) 
are introduced for the convenience of notation.
\(T_n\) denotes a deterministic terminal horizon for the total number of samples \(n\), 
and \(\bar{\alpha}_{T_n}\) is the retained-signal factor at the terminal horizon.
Using \(\Pr_n\) and \(\calJ_t^n\) defined above, the empirical negative
log-likelihood is defined as \(\widehat{\calR}_n^{\,\mathsf{MLE}}(\vartheta)
:=-\E_{\Pr_n}[\ln p(y_0\mid\bx;\theta,\pi)]\), and the path-integrated score
matching objective is denoted by \(\widehat{\calR}_{n,T_n}^{\,\mathsf{SM}}(\vartheta)
:=\int_{\bar\alpha_{T_n}}^{\bar\alpha_0}\calJ_t^n(\theta,\nu)\,
\d\bar\alpha_t\). On the event that this objective has a nonempty argmin, let
\(\widehat\vartheta_{n,T_n}^{\,\mathsf{SM}}
:=(\widehat\theta_{n,T_n}^{\,\mathsf{SM}},
\widehat\nu_{n,T_n}^{\,\mathsf{SM}})\) be a sample-measurable exact minimizer;
on the complementary event, set
\(\widehat\vartheta_{n,T_n}^{\,\mathsf{SM}}:=(\mathbf0,0)\). This convention
defines the score matching estimator on the entire sample space, and
Lemma~\ref{lemma:sm_minimum_attainment} supplies the measurable selection used
under the conditions of Main Theorem~\ref{thm:consistency_asymptotic_normality}.
The empirical conditional terminal discrepancy is defined as
\(\Delta_n(\vartheta):=\E_{\Pr_n}\!\left[\KL\!\left(
p(y_{T_n}\mid y_0)\parallel
p(y_{T_n}\mid\bx;\theta_{T_n},\pi)\right)\right]\geq0\).
The constant term \(\mathsf{c}(\bar{\alpha}_{T_n}) := \frac{1}{2}(\ln\frac{1-\bar{\alpha}_{T_n}}{\bar{\alpha}_{T_n}} + \ln (2\p) + 1)\) only depends on the retained-signal factor \(\bar{\alpha}_{T_n}\) at the terminal horizon.

\section{Path-Integrated Score Matching: Maximum-Likelihood Asymptotics}\label{sec:asymptotic}
In this section, we show how integrating score matching along the diffusion path
recovers the asymptotic guarantees of maximum likelihood. We first derive
population and empirical likelihood bridges that isolate the terminal
discrepancy separating the integrated objective from likelihood and identify
the terminal population KL divergence's global minimizers. We then control
this discrepancy to transfer consistency up to joint sign and asymptotic
normality with inverse-Fisher covariance to the path-integrated estimator in
Main Theorem~\ref{thm:consistency_asymptotic_normality}, with the supporting
arguments deferred to Appendices~\ref{sup:lemma} and~\ref{sup:setup}.
by the two-law relative analogue of de Bruijn identity~\citep{stam1959some} for entropy and Fisher information under the variance-preserving Ornstein--Uhlenbeck flow
(see the derivation of Lemma~\ref{lemma:kl_derivative} in Appendix~\ref{sup:lemma}). 
\begin{equation}
\int_{\bar{\alpha}_T}^{\bar{\alpha}_0} \calL_t(\theta, \nu) \d \bar{\alpha}_t
= \KL(p(y_0, \bx\mid\theta^*,\pi^*) \parallel p(y_0, \bx\mid\theta,\pi)) - \KL(p(y_T, \bx\mid\theta_T^*,\pi^*) \parallel p(y_T, \bx\mid\theta_T,\pi)).
\end{equation}
Its finite-sample counterpart involves the empirical objectives \(\widehat{\calR}_{n,T_n}^{\,\mathsf{SM}}(\vartheta)\), \(\widehat{\calR}_n^{\,\mathsf{MLE}}(\vartheta)\), and \(\Delta_n(\vartheta)\), which correspond to the left-hand side and the first and second terms on the right-hand side of the equation above, respectively, and differ from them only by the constant \(\mathsf{c}(\bar{\alpha}_{T_n})\) (see Lemmas~\ref{lemma:kl_divergence_SM_loss} and~\ref{lemma:log_likelihood_SM_loss_finite} in Appendix~\ref{sup:lemma}).
\begin{equation}\label{eq:empirical_sm_mle_bridge}
    \widehat{\calR}_{n,T_n}^{\,\mathsf{SM}}(\vartheta)
    = \widehat{\calR}_n^{\,\mathsf{MLE}}(\vartheta)
    - \Delta_n(\vartheta)
    - \mathsf{c}(\bar{\alpha}_{T_n}).
\end{equation}
We can bound the terminal population KL divergence from the true law to the model law by the Wasserstein geometry of the discrete mixing measures supported on at most two points \(D \equiv D(\bx;\theta, \pi) \equiv \pi(1) \delta_{\langle \theta, \bx\rangle} + \pi(2) \delta_{-\langle \theta, \bx\rangle}\) and
\(D^* \equiv D(\bx;\theta^*, \pi^*) \equiv \pi^*(1) \delta_{\langle \theta^*, \bx\rangle} + \pi^*(2) \delta_{-\langle \theta^*, \bx\rangle}\) (see Lemmas~\ref{lemma:decaying_kl_bound},~\ref{lemma:wasserstein_distance_conditional},~\ref{lemma:expectations_wasserstein_distance_discrete} in Appendix~\ref{sup:lemma}).
\begin{equation}
\KL(p(y_T, \bx\mid\theta_T^*,\pi^*) \parallel p(y_T, \bx\mid\theta_T,\pi))
\leq \frac{\bar{\alpha}_T}{2(1-\bar{\alpha}_T)} \E_{\bx} W_2^2(D, D^*),
\end{equation}
Similarly, the finite-sample counterpart \(\Delta_n(\vartheta)\) decays in the rate of \(\frac{\bar{\alpha}_{T_n}}{2(1-\bar{\alpha}_{T_n})}\) as \(n \to \infty\) (see Lemma~\ref{lemma:decaying_kl_bound_transition} in Appendix~\ref{sup:lemma}).
If \(\ln\frac{1-\bar{\alpha}_{T_n}}{\bar{\alpha}_{T_n}}-\ln n\to\infty\), then \(\frac{\bar{\alpha}_{T_n}}{2(1-\bar{\alpha}_{T_n})}= o(n^{-1})\) and \(\Delta_n(\vartheta)\) is \(o_P(n^{-1})\) when \(\vartheta\) is bounded.

The next proposition complements that the ground truth parameters are the global minimizers of the terminal population KL divergence.
\begin{proposition}[Global Minimizers of the Terminal KL Divergence]\label{prop:terminal_kl_pinsker_zero_set}
    The retained-signal level satisfies \(0<\bar{\alpha}_T\leq 1\), the
    ground-truth parameters satisfy \(\theta^\ast\neq\mathbf{0}\) and
    \(\nu^\ast\in\R\), and \(\bx\) has a density that is positive
    almost everywhere on \(\R^d\) and finite second moments.
    For the terminal population KL divergence
    \(
        \KL_T(\theta,\nu)
        := \KL\left(
        p(y_T,\bx\mid\theta_T^\ast,\pi^\ast)
        \parallel
        p(y_T,\bx\mid\theta_T,\pi)
        \right),
    \)
    the set of global minimizers is
    \(
        \argmin_{(\theta,\nu)\in\R^d\times\R}\KL_T(\theta,\nu)
        =\{(\theta^\ast,\nu^\ast),(-\theta^\ast,-\nu^\ast)\}.
    \)
\end{proposition}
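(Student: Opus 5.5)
Since \(\KL_T\geq0\), the minimizers are exactly the points where \(\KL_T(\theta,\nu)=0\), provided such points exist; \((\theta^\ast,\nu^\ast)\) is one. So the task is to identify the zero set. Because the covariate marginal is the same under both laws (\(\bx\ind(z;\theta,\pi)\)), the chain rule gives
\[
\KL_T(\theta,\nu)=\E_{\bx}\,\KL\big(p(y_T\mid\bx;\theta_T^\ast,\pi^\ast)\parallel p(y_T\mid\bx;\theta_T,\pi)\big).
\]
Hence \(\KL_T=0\) if and only if the two conditional laws of \(y_T\) coincide for \(\bx\)-almost every \(\bx\). For such \(\bx\), the conditional law is the Gaussian convolution \(\calN(0,1)\ast D(\bx;\theta_T,\pi)\), where \(D\) is the two-point mixing measure at scale \(\sqrt{\bar\alpha_T}\).

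\textbf{Sufficiency.} First I check that the flipped pair also achieves zero. The relations \(\pi(1)=(1+\tanh\nu)/2\) and \(\tanh(-\nu)=-\tanh\nu\) show that \((-\theta^\ast,-\nu^\ast)\) assigns weight \(\pi^\ast(2)\) to \(+\langle\theta^\ast_T,\bx\rangle\) and weight \(\pi^\ast(1)\) to \(-\langle\theta^\ast_T,\bx\rangle\). This is exactly \(D^\ast\), so the conditional laws agree for every \(\bx\) and \(\KL_T(-\theta^\ast,-\nu^\ast)=0\). Both points are therefore global minimizers.

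\textbf{Necessity.} Suppose \(\KL_T(\theta,\nu)=0\).
\begin{enumerate}
\item The Gaussian location-mixture convolution is injective: the characteristic function is \(e^{-s^2/2}\hat D(s)\), so equal laws force \(D(\bx;\theta_T,\pi)=D(\bx;\theta_T^\ast,\pi^\ast)\) as measures, for a.e.\ \(\bx\). Write \(a:=\langle\theta,\bx\rangle\), \(a^\ast:=\langle\theta^\ast,\bx\rangle\), and note \(\sqrt{\bar\alpha_T}>0\).
\item Since \(\theta^\ast\neq\mathbf0\), the set \(\{a^\ast=0\}\) is a hyperplane and has Lebesgue measure zero. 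Because \(\bx\) has an a.e.\ positive density, the event \(E:=\{a^\ast\neq0\}\) has positive probability and is essentially all of \(\R^d\).
\item On \(E\), the measure \(D^\ast\) has two distinct atoms \(\pm a^\ast\sqrt{\bar\alpha_T}\) with positive weights. Matching supports forces \(a=a^\ast\) or \(a=-a^\ast\). (If \(a=0\), \(D\) is a single atom, which cannot equal \(D^\ast\).)
\item Matching weights: if \(a=a^\ast\) then \(\pi(1)=\pi^\ast(1)\); if \(a=-a^\ast\) then \(\pi(1)=\pi^\ast(2)\).
\item The sets \(A_\pm:=\{\bx\in E:\langle\theta\mp\theta^\ast,\bx\rangle=0\}\) cover \(E\) up to a null set. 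A linear functional \(\langle\theta\mp\theta^\ast,\cdot\rangle\) either vanishes identically or vanishes only on a Lebesgue-null hyperplane. Two null hyperplanes cannot cover a set of full measure, so \(\theta=\theta^\ast\) or \(\theta=-\theta^\ast\).
\item In the first case \(\pi=\pi^\ast\), hence \(\nu=\nu^\ast\). In the second, \(\pi(1)=\pi^\ast(2)\), so \(\tanh\nu=-\tanh\nu^\ast\) and \(\nu=-\nu^\ast\) by injectivity of \(\tanh\).
\end{enumerate}

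\textbf{Main obstacle.} The delicate step is the measure-theoretic passage from ``\(a=\pm a^\ast\) for a.e.\ \(\bx\)'' to a single global sign. The case \(\theta^\ast=\mathbf0\) is excluded precisely because, without it, \(\nu\) would be unidentifiable. The positive-density hypothesis is what rules out \(\bx\) concentrating on the union of the two hyperplanes.

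Two routine checks remain. Finite second moments guarantee that \(\KL_T\) is finite, and in fact bounded by the Wasserstein bound stated earlier, so the minimization is well posed. The regime \(\bar\alpha_T=1\) is covered by the same argument with unit scaling.
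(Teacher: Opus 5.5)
Your proof is correct, but it identifies the zero set by a different mechanism from the paper's.

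Both arguments start from the same chain-rule reduction. The paper also routes through Pinsker's inequality, which is inessential; your direct use of ``\(\KL=0\) iff the conditional laws agree for a.e.\ \(\bx\)'' is the same fact. After that, the paper never recovers the mixing measures. It matches only the first two conditional moments, \(\E[y_T\mid\bx]=\mu_T\tanh\nu\) and \(\E[y_T^2\mid\bx]=1+\mu_T^2\), to get \(\mu_0\tanh\nu=\mu_0^\ast\tanh\nu^\ast\) and \(\mu_0^2=(\mu_0^\ast)^2\) almost surely. The second identity says the quadratic polynomial \(\bx^\top(\theta\theta^\top-\theta^\ast(\theta^\ast)^\top)\bx\) vanishes a.s., so it is the zero polynomial. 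This gives the rank-one identity \(\theta\theta^\top=\theta^\ast(\theta^\ast)^\top\), hence \(\theta=s\theta^\ast\) with a single global sign, and the first-moment identity then forces \(\nu=s\nu^\ast\).

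You instead deconvolve via characteristic functions to get \(D=D^\ast\) for a.e.\ \(\bx\). You then match atoms to obtain a pointwise sign, and need your two-hyperplane covering argument to globalize it. The paper's route is lighter: two moments suffice, and the rank-one identity absorbs the pointwise-to-global sign issue you single out as the main obstacle. Yours proves more, namely equality of the discrete mixing measures themselves. That is precisely the statement \(\E_{\bx}W_2^2(D,D^\ast)=0\) that the paper cites separately at the end of its proof. Neither argument really needs positivity of the density: absolute continuity of \(\bx\) already makes hyperplanes, and zero sets of nonzero polynomials, null.

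Two small slips. First, in your sufficiency check the atoms are swapped as written. Under \((-\theta^\ast,-\nu^\ast)\), the weight \(\pi^\ast(2)\) sits at \(-\langle\theta_T^\ast,\bx\rangle\) and \(\pi^\ast(1)\) at \(+\langle\theta_T^\ast,\bx\rangle\); that is why the measure equals \(D^\ast\). As you wrote it, the measure would differ from \(D^\ast\) unless \(\nu^\ast=0\). Second, the Wasserstein bound cannot certify finiteness of \(\KL_T\) at \(\bar\alpha_T=1\), because its prefactor \(\bar\alpha_T/(2(1-\bar\alpha_T))\) is infinite there. Finiteness is not needed for the argmin characterization anyway, since the minimum value \(0\) is attained.
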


\begin{proof sketch}
    \(\KL_T(\theta,\nu)=0\) only if \(p(\,\cdot\mid\bx;\theta_T^\ast,\pi^\ast)=p(\,\cdot\mid\bx;\theta_T,\pi)\) almost everywhere by Pinsker's inequality~\citep{cover2006elements}
    \(
        \KL_T(\theta,\nu)\geq 2\E_{\bx}\TV^2\left(
            p(\,\cdot\mid\bx;\theta_T^\ast,\pi^\ast),
            p(\,\cdot\mid\bx;\theta_T,\pi)
        \right).
    \)
    Therefore, \((\theta,\nu)\in\{(\theta^\ast,\nu^\ast), (-\theta^\ast,-\nu^\ast)\}\) when \(\KL_T(\theta,\nu)=0\) by the positive-density assumption on \(\bx\), while \(\E_{\bx}W_2^2(D,D^*)=0\) has the same two solutions. 
    Thus the lower bound on the terminal KL divergence and the upper bound on the expected discrete Wasserstein cost have the
    same global zero set, proving the stated minimizer set.
\end{proof sketch}
The next theorem adapts the similar argument of asymtotic normality in 
\citet[Assumption3, Theorem 3.1 on page 25]{chewi2025arxiv} to MLR but without the regularity condition on Lipschitz continuity of the log-probability density.
Proposition~\ref{prop:terminal_kl_pinsker_zero_set} suggests that the sign identification is needed for this model.
\begin{myframe}
\begin{main theorem}[Consistency and Asymptotic Normality of Score Matching Estimator]\label{thm:consistency_asymptotic_normality}
    The ground-truth parameters
    \((\theta^\ast,\nu^\ast)\in\R^d\times\R\) satisfy
    \(\theta^\ast\ne\mathbf0\). The deterministic terminal-horizon sequence
    \(\{T_n\}_{n\geq1}\) satisfies
    \(\ln\frac{1-\bar{\alpha}_{T_n}}{\bar{\alpha}_{T_n}}-\ln n\to\infty\),
    and the covariate \(\bx\) has a density that is positive almost everywhere
    and satisfies \(\E\|\bx\|^2<\infty\). The estimator \(\widehat{\vartheta}_{n,T_n}^{\,\mathsf{SM}} \equiv (\widehat{\theta}_{n,T_n}^{\,\mathsf{SM}},\widehat{\nu}_{n,T_n}^{\,\mathsf{SM}})\) is \textsf{consistent} up
    to a joint sign change. For the alignment map
    \(
        s_n \in \argmin_{s\in \{ +1, -1\}} \left\| s\,\left(\widehat{\theta}_{n,T_n}^{\,\mathsf{SM}},\widehat{\nu}_{n,T_n}^{\,\mathsf{SM}}\right) - (\theta^\ast,\nu^\ast) \right\|
    \),
    as \(n \to \infty\),
    \[
        s_n \left(\widehat{\theta}_{n,T_n}^{\,\mathsf{SM}},\widehat{\nu}_{n,T_n}^{\,\mathsf{SM}}\right) \xrightarrow{\Pr} (\theta^\ast,\nu^\ast).
    \]
    If \(\E[\|\bx\|^4] < \infty\), then the aligned Score Matching estimator
    is \textsf{asymptotically normal}. As \(n\to\infty\),
    \[
        \sqrt{n} \left[ s_n \left(\widehat{\theta}_{n,T_n}^{\,\mathsf{SM}},\widehat{\nu}_{n,T_n}^{\,\mathsf{SM}}\right) - (\theta^\ast,\nu^\ast) \right]
        \xrightarrow{\mathsf{d}} \mathcal{N}(\mathbf{0}, I(\theta^\ast,\nu^\ast)^{-1}),
    \]
    where \(I\) denotes the Fisher information matrix.
\end{main theorem}
\end{myframe}

\begin{proof sketch}[Bridge between Score Matching and MLE]
    Lemma~\ref{lemma:sm_minimum_attainment} shows that the empirical score
    matching objective \(\widehat{\calR}_{n,T_n}^{\,\mathsf{SM}}(\vartheta)\) admits \(\widehat{\vartheta}_{n,T_n}^{\,\mathsf{SM}}\)  with
    probability tending to one, and the global minimizer \(\widehat{\vartheta}_{n,T_n}^{\,\mathsf{SM}}\) is attained asymptotically almost surely. 
    See the detailed proof of Main Theorem~\ref{thm:consistency_asymptotic_normality} in Appendix~\ref{sup:setup}.\\
    \textbf{Step 1. Boundedness and excess-risk control.}
    On the attainment event, the empirical
    bridge~\eqref{eq:empirical_sm_mle_bridge} bounds the empirical likelihood
    excess risk by the terminal discrepancy such that 
    \(\widehat{\calR}_n^{\,\mathsf{MLE}}(\widehat\vartheta_{n,T_n}^{\,\mathsf{SM}})-\widehat{\calR}_n^{\,\mathsf{MLE}}(\vartheta^\ast)
    = \widehat{\calR}_{n,T_n}^{\,\mathsf{SM}}(\widehat\vartheta_{n,T_n}^{\,\mathsf{SM}})-\widehat{\calR}_{n,T_n}^{\,\mathsf{SM}}(\vartheta^\ast) + \Delta_n(\widehat\vartheta_{n,T_n}^{\,\mathsf{SM}}) - \Delta_n(\vartheta^\ast) \leq \Delta_n(\widehat\vartheta_{n,T_n}^{\,\mathsf{SM}})\). The transition-kernel KL bound
    and a coercive likelihood bound give
    \(\|\widehat\theta_{n,T_n}^{\,\mathsf{SM}}\|=\calO_P(1)\). The
    terminal-scale condition then gives
    \(\Delta_n(\widehat\vartheta_{n,T_n}^{\,\mathsf{SM}})=o_P(n^{-1})\).\\
    \textbf{Step 2. Consistency up to joint sign.}
    The population likelihood excess risk is a conditional KL divergence
    whose zero set is \(\{\vartheta^\ast,-\vartheta^\ast\}\). The attainment
    result, the empirical bridge, and Step~1 make the estimator an
    \(o_P(n^{-1})\)-near maximizer of the empirical likelihood.
    Theorem~5.14 of \citet[p.~48]{van2000asymptotic} then gives consistency up
    to joint sign and
    \(\|\widehat\vartheta_{n,T_n}^{\,\mathsf{SM}}\|=\calO_P(1)\).\\
    \textbf{Step 3. Asymptotic normality.}
    After joint sign alignment, the estimator is consistent and remains an
    \(o_P(n^{-1})\)-near maximizer. Under the fourth-moment condition, the
    population log-likelihood has a nonsingular quadratic expansion with
    Hessian \(-I(\theta^\ast,\nu^\ast)\), and the local log-likelihood admits
    a square-integrable Lipschitz envelope. These properties verify the
    hypotheses of Theorem~5.23 of
    \citet[p.~53]{van2000asymptotic}, which gives the stated Gaussian limit
    with covariance
    \(I(\theta^\ast,\nu^\ast)^{-1}\).
\end{proof sketch}

\begin{remark}[On Consistency and Asymptotic Normality]\label{rmk:consistency_asymptotic_normality}
    Lemma~\ref{lemma:nll} shows that 
    \(p(y_0\mid\bx;\theta,\pi)\equiv p(y_0\mid\bx;-\theta,\pi^{\mathsf{swap}})\),
    where \(\pi^{\mathsf{swap}}=(\pi(2),\pi(1))\). The likelihood is invariant under
    \((\theta,\nu)\mapsto(-\theta,-\nu)\), and both estimators
    \(\widehat{\calR}_n^{\,\mathsf{MLE}}\) and
    \(\widehat{\calR}_{n,T_n}^{\,\mathsf{SM}}\) inherit this invariance. 
    The alignment map \(s_n\in\{\pm1\}\) selects the estimator nearest
    to the ground truth, after which
    \(s_n\widehat{\vartheta}_{n,T_n}^{\,\mathsf{SM}}
    \xrightarrow{\Pr}\vartheta^\ast\). After sign alignment, the local M-estimator
    expansion yields the \(n^{-1/2}\) rate with limiting covariance
    \(I(\theta^\ast,\nu^\ast)^{-1}\), which matches the asymptotic
    variance of the MLE.
\end{remark}

\begin{figure}[!t]
    \centering
    \includegraphics[width=\linewidth]{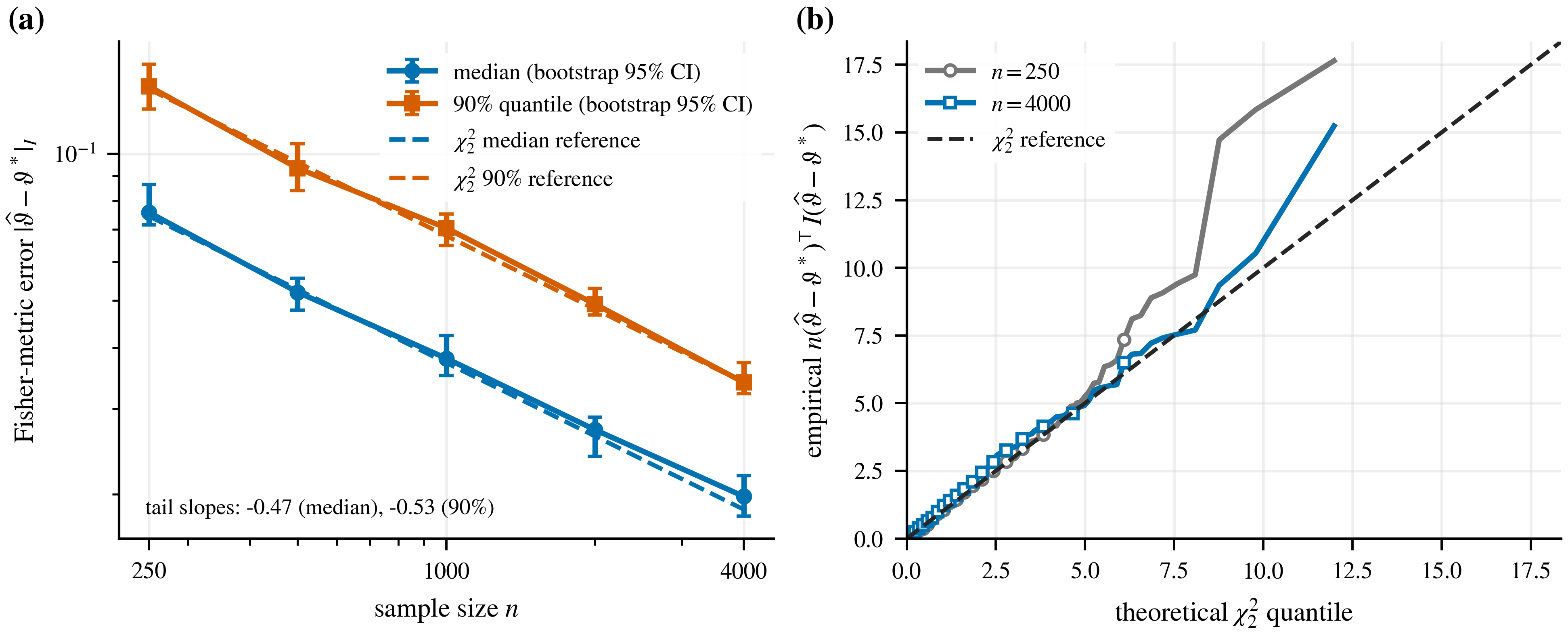}
    \caption{\textbf{Illustration of the finite-sample estimation results in Theorem~\ref{thm:consistency_asymptotic_normality}.}
    The experiment uses a two-parameter Gaussian 2MLR instance, the theorem's
    joint sign alignment, and a terminal sequence whose terminal-scale
    condition is verified in Appendix~\ref{sup:experiment}. Panel (a) reports
    the median and 90\% quantile of the error after joint sign alignment in the
    Fisher metric, together with 95\% bootstrap intervals and the corresponding
    limiting quantiles of $\sqrt{\chi^2_2/n}$. Over the three largest sample
    sizes, the fitted slopes are $-0.471$ and $-0.526$, close to the predicted
    $-1/2$ in the rate of $n^{-1/2}$. Panel (b) compares the squared, $n$-scaled Fisher error with
    $\chi^2_2$ at the smallest and largest sample sizes. Coverage of the joint
    95\% Fisher ellipse ranges from 93.5\% to 96.5\%, equals 95.0\% at the
    largest sample size, and has a corresponding covariance discrepancy of
    0.070. These diagnostics provide finite-sample illustrations under the
    alignment and terminal-scale conditions of Main Theorem~\ref{thm:consistency_asymptotic_normality}. Appendix~\ref{sup:experiment}
    provides additional calibration details.}
    \label{fig:finite-sample-scaling}
\end{figure}

\section{Fixed-Scale Score Matching: Cross-Entropy and EM Decompositions}\label{sec:connect}
In this section, we establish the connection between the EM operators \(M, N\) for regression parameters and the imbalance of mixing weights, 
and the cross entropy \(\mathcal{H}(\theta_t, \nu)\equiv \E_{\ast}[-\log p(y_t\mid\bx;\theta_t, \pi) ] = \E_{\ast}[F(y_t, \mu_t, \nu)]\)   and EM--latent-variance decompositions of the loss, together with gradient identities that retain operator-Jacobian and posterior-variance corrections.
We then use these identities to characterize the loss and its gradients at the low- and high-noise endpoints. Appendix~\ref{sup:connect} contains the proofs of Main Theorem~\ref{thm:information_theoretical_decomposition}, Propositions~\ref{prop:em_update}--\ref{prop:cross_entropy} and~\ref{prop:sm_gradients}--\ref{prop:high_noise_effective_coefficient}, and Corollaries~\ref{cor:sm_gradients_entropy_activation}--\ref{cor:sm_gradients_em_variance}.

By applying the established Lemma~\ref{lemma:derivatives_negative_log_likelihood} for the derivatives of the negative log-likelihood \(F(y_t, \mu_t, \nu)\equiv -\ln p(y_t\mid\bx;\theta_t, \pi)\), we can derive the following proposition for the 
EM update operators \(M, N\) for the regression parameters \(\theta_t\) and the imbalance parameter \(\nu\), 
and further formulate the relation between the derivatives of the EM update operators and derivatives of the cross entropy \(\calH(\theta_t, \nu)=\E_{\ast}[F(y_t, \mu_t, \nu)]\).
\begin{proposition}[Expectation-Maximization Rules]\label{prop:em_update}
    The population EM operators for the regression parameter \(\theta_t\) and imbalance of mixing weights \(\tanh \nu\) are
    \[
    M(\theta_t, \nu) = \Sigma^{-1}\E_{\ast}[\bx y_t \tanh(\mu_t y_t + \nu)],\qquad
    N(\theta_t, \nu) = \E_{\ast}[\tanh(\mu_t y_t + \nu)].
    \]
    In addition to defining the EM updates, these operators satisfy the scalar
    identity
    \[
    \langle \Sigma M(\theta_t, \nu), \theta_t \rangle
    = \E_{\ast}[\mu_t y_t\tanh(\mu_t y_t + \nu)]
    = \E_{\ast}[y_t^2 - y_t \partial_{y_t} F(y_t, \mu_t, \nu)],
    \]
    while differentiation with respect to \(\nu\) yields
    \(
    \Sigma \nabla_{\nu} M(\theta_t, \nu)
    = \nabla_{\theta_t} N(\theta_t, \nu)
    = \E_{\ast}[\bx y_t \sech^2(\mu_t y_t + \nu)].
    \)
\end{proposition}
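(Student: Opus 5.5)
I will prove the three parts separately: derive the EM operators, verify the scalar identity, and verify the derivative identity. Throughout I use the explicit negative log-likelihood \(F(y_t,\mu_t,\nu)=\frac12(y_t^2+\mu_t^2)-\ln\cosh(\mu_t y_t+\nu)+\ln\cosh\nu+\frac12\ln(2\p)\), whose derivatives are recorded in Lemma~\ref{lemma:derivatives_negative_log_likelihood}.

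\textbf{EM operators.} The diffused model in \eqref{eq:yt} is again a two-component MLR with unit noise, parameters \(\theta_t\) and \(\pi\). The E-step posterior of the latent label given \((y_t,\bx)\) under current parameters is
\[
\Pr(z=1\mid y_t,\bx)=\frac{1+\tanh(\mu_t y_t+\nu)}{2},
\]
so the posterior mean of the sign \((-1)^{z+1}\) equals \(\tanh(\mu_t y_t+\nu)\). The expected complete-data log-likelihood is \(-\tfrac12\E_\ast[(y_t-\tanh(\mu_ty_t+\nu)\langle\theta',\bx\rangle)^2]\) plus terms not involving \(\theta'\), because the squared sign equals one. Maximizing in \(\theta'\) gives the normal equation \(\Sigma\theta'=\E_\ast[\bx y_t\tanh(\mu_t y_t+\nu)]\), which uses \(\E_\ast[\bx\bx^\top]=\Sigma\) and \(\bx\ind\) everything else. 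Maximizing \(\sum_k \E_\ast[\Pr(z=k\mid\cdot)]\ln\pi'(k)\) over the simplex gives \(\pi'(k)=\E_\ast[\Pr(z=k\mid\cdot)]\). Hence \(\tanh\nu'=\pi'(1)-\pi'(2)=\E_\ast[\tanh(\mu_t y_t+\nu)]\), which yields \(M\) and \(N\). Alternatively, I can cite Proposition~3 of \citet{luo2025structural} with \(\theta\) replaced by \(\theta_t\).

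\textbf{Scalar identity.} Linearity of expectation gives
\[
\langle \Sigma M,\theta_t\rangle=\E_\ast[\langle\bx,\theta_t\rangle y_t\tanh(\cdot)]=\E_\ast[\mu_t y_t\tanh(\mu_t y_t+\nu)].
\]
Next, \(\partial_{y_t}F=y_t-\mu_t\tanh(\mu_t y_t+\nu)\), which is minus the score in \eqref{eq:score}. Therefore \(y_t^2-y_t\partial_{y_t}F=\mu_t y_t\tanh(\mu_t y_t+\nu)\) pointwise, and taking \(\E_\ast\) completes this part.

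\textbf{Derivative identity.} I differentiate under the expectation. The expectation is over the ground-truth law, which does not depend on \((\theta_t,\nu)\). Pointwise, \(\partial_\nu[\bx y_t\tanh(\mu_ty_t+\nu)]=\bx y_t\sech^2(\mu_ty_t+\nu)\). Also \(\nabla_{\theta_t}\tanh(\langle\theta_t,\bx\rangle y_t+\nu)=\bx y_t\sech^2(\mu_ty_t+\nu)\), so the two gradients agree with \(\E_\ast[\bx y_t\sech^2(\cdot)]\).

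The only non-routine point is justifying the interchange of derivative and expectation. Since \(|\sech^2|\le1\), the integrand's derivatives are dominated by \(\|\bx\||y_t|\), uniformly in the parameters. Under the ground-truth law, \(\E_\ast[\|\bx\||y_t|]\le(\E\|\bx\|^2)^{1/2}(\E_\ast y_t^2)^{1/2}<\infty\), because \(\E_\ast y_t^2=\bar\alpha_t\langle\theta^\ast,\Sigma\theta^\ast\rangle+1\) is finite whenever \(\Sigma\) exists. Dominated convergence together with the mean value theorem then justifies the interchange. The same integrability ensures that \(M\) and \(N\) are well defined.
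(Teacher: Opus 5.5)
Your proof is correct in substance. For the scalar and derivative identities it coincides with the paper's argument, and your explicit envelope \(\|\bx\|\,|y_t|\), integrable by Cauchy--Schwarz, is more careful than the paper's bare appeal to dominated convergence.

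There is one slip in the EM derivation. Write \(s=(-1)^{z+1}\) and let \(\tau=\tanh(\mu_t y_t+\nu)\) be its posterior mean. Then \(s^2=1\) gives
\[
\E_{z\mid y_t,\bx}\Bigl[-\tfrac12\bigl(y_t-s\langle\theta',\bx\rangle\bigr)^2\Bigr]
=-\tfrac12\bigl(\tau y_t-\langle\theta',\bx\rangle\bigr)^2-\tfrac12\sech^2(\mu_t y_t+\nu)\,y_t^2 .
\]
So the expected complete-data log-likelihood is \(-\tfrac12\E_\ast[(\tanh(\mu_t y_t+\nu)\,y_t-\langle\theta',\bx\rangle)^2]\) up to a \(\theta'\)-free term. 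It is not \(-\tfrac12\E_\ast[(y_t-\tanh(\mu_t y_t+\nu)\langle\theta',\bx\rangle)^2]\).

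The two expressions differ by \(\tfrac12\E_\ast[\sech^2(\mu_t y_t+\nu)\langle\theta',\bx\rangle^2]\), which depends on \(\theta'\). Maximizing your displayed objective would give \(\E_\ast[\tanh^2(\mu_t y_t+\nu)\bx\bx^\top]\theta'=\E_\ast[\bx y_t\tanh(\mu_t y_t+\nu)]\), not the normal equation you state. Your remark about the squared sign actually yields the version with \(\tanh\) placed on \(y_t\) rather than on \(\langle\theta',\bx\rangle\). With that correction, \(\Sigma\theta'=\E_\ast[\bx y_t\tanh(\mu_t y_t+\nu)]\) follows as you claim.

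Your route for the first part differs from the paper's. The paper never runs the E- and M-steps. It writes the EM maps as unit-step gradient updates on the diffused cross-entropy, \(M=\theta_t-\Sigma^{-1}\E_\ast[\bx\,\partial_{\mu_t}F]\) and \(N=\tanh\nu-\E_\ast[\partial_\nu F]\), importing from the cited prior work the fact that these are the EM updates. It then substitutes \(\partial_{\mu_t}F\) and \(\partial_\nu F\) from Lemma~\ref{lemma:derivatives_negative_log_likelihood}, and reads off the scalar identity from \(\mu_t\partial_{\mu_t}F-\mu_t^2=y_t\partial_{y_t}F-y_t^2\).

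Your first-principles derivation is self-contained and verifies the EM label instead of citing it. The paper's form instead delivers the gradient-residual identities of Proposition~\ref{prop:cross_entropy} at no extra cost. From your explicit formulas those identities also follow in one line, e.g.\ \(\Sigma(\theta_t-M)=\E_\ast[\bx(\mu_t-y_t\tanh(\mu_t y_t+\nu))]=\E_\ast[\bx\,\partial_{\mu_t}F]\).
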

\begin{proposition}[Cross-Entropy and EM Update Rules]\label{prop:cross_entropy}
    The cross-entropy \(\calH(\theta_t,\nu)\) satisfies
    \[
    \nabla_{\theta_t} \calH(\theta_t, \nu) = \Sigma \left(\theta_t - M(\theta_t, \nu)\right),\qquad
    \nabla_{\nu} \calH(\theta_t, \nu) = \tanh \nu - N(\theta_t, \nu).
    \]
\end{proposition}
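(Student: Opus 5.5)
The plan is to differentiate \(\calH(\theta_t,\nu)=\E_{\ast}[F(y_t,\mu_t,\nu)]\) under the expectation. The integrand depends on \(\theta_t\) only through \(\mu_t=\langle\theta_t,\bx\rangle\). The expectation \(\E_{\ast}\) is over the fixed ground-truth law of \((y_t,\bx)\), which does not depend on \((\theta_t,\nu)\). Hence the only work is computing the partial derivatives of \(F\) and justifying the interchange of derivative and expectation.

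\textbf{Step 1: partial derivatives of \(F\).} From the explicit form \(F=\tfrac12(y_t^2+\mu_t^2)-\ln\cosh(\mu_t y_t+\nu)+\ln\cosh\nu+\tfrac12\ln(2\p)\), as recorded in Lemma~\ref{lemma:derivatives_negative_log_likelihood}, we have
\[
\partial_{\mu_t}F=\mu_t-y_t\tanh(\mu_t y_t+\nu),\qquad \partial_\nu F=\tanh\nu-\tanh(\mu_t y_t+\nu).
\]
The chain rule gives \(\nabla_{\theta_t}F=\bx\,\partial_{\mu_t}F\).

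\textbf{Step 2: interchange.} I would use dominated convergence on a neighborhood of any fixed \((\theta_t,\nu)\). Since \(|\tanh|\le1\), the derivatives are bounded by
\[
\|\bx\|(\|\theta_t\|\|\bx\|+|y_t|)\quad\text{and}\quad 2,
\]
respectively, locally uniformly in the parameters. These bounds are integrable under the ground-truth law because \(\E\|\bx\|^2<\infty\). This moment condition is implicit in \(\Sigma=\E[\bx\bx^\top]\) existing. Also \(\E[\|\bx\||y_t|]\le \E[\|\bx\|(|\mu_t^\ast|+|\varepsilon_t|)]<\infty\) by Cauchy--Schwarz and the representation~\eqref{eq:yt}. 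This step, verifying the integrable envelope, is the only nontrivial point, and it is routine.

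\textbf{Step 3: identify EM operators.} For the \(\theta_t\)-gradient,
\[
\nabla_{\theta_t}\calH=\E_{\ast}[\bx\bx^\top]\theta_t-\E_{\ast}[\bx y_t\tanh(\mu_t y_t+\nu)]=\Sigma\theta_t-\Sigma M(\theta_t,\nu)
\]
by the definition of \(M\) in Proposition~\ref{prop:em_update}. Here we use that \(\bx\) has the same marginal law under \(\E_{\ast}\), so \(\E_{\ast}[\bx\bx^\top]=\Sigma\). For the \(\nu\)-gradient,
\[
\nabla_\nu\calH=\tanh\nu-\E_{\ast}[\tanh(\mu_t y_t+\nu)]=\tanh\nu-N(\theta_t,\nu).
\]
This gives both identities.
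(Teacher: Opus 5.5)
Your proposal is correct and follows essentially the paper's route: the paper likewise uses the derivative formulas $\partial_{\mu_t}F=\mu_t-y_t\tanh(\mu_ty_t+\nu)$ and $\partial_\nu F=\tanh\nu-\tanh(\mu_ty_t+\nu)$ to write $\Sigma(\theta_t-M)=\E_\ast[\bx\,\partial_{\mu_t}F]$ and $\tanh\nu-N=\E_\ast[\partial_\nu F]$, and then moves the gradient through $\E_\ast$ by dominated convergence. Your explicit envelopes, $\|\bx\|(\|\theta_t\|\|\bx\|+|y_t|)$ and the constant $2$, make that interchange more concrete than the paper's brief appeal to ``the same integrable bounds.''
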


From the established Proposition~\ref{prop:em_update} and Proposition~\ref{prop:cross_entropy}, we can derive the following theorem for the information theoretical decomposition of the score matching loss,
and show its connection with the EM update operators \(M, N\) and derivatives of the cross entropy \(\calH(\theta_t, \nu)\).
\begin{myframe}
\begin{main theorem}[Information Decomposition of the Score Matching Loss]\label{thm:information_theoretical_decomposition}
    The expected squared activation is
    \(\calA(\theta_t, \nu) := \frac{1}{2}\E_{\ast}[\mu_t^2 \tanh^2(\mu_t y_t + \nu)]\)
    and its ground-truth counterpart is
    \(\calA(\theta_t^*, \nu^*) := \frac{1}{2}\E_{\ast}[(\mu_t^*)^2 \tanh^2(\mu_t^* y_t + \nu^\ast)]\).
    With these definitions, the score matching loss satisfies
    \[
    \bar{\alpha}_t \calL_t(\theta, \nu) = \langle \nabla_{\theta_t}\calH(\theta_t, \nu), \theta_t \rangle
    - \calA(\theta_t, \nu) + \calA(\theta_t^*, \nu^*).
    \]
    Defining the latent variance
    \(V(\theta_t, \nu) \equiv \frac{1}{2}\E_{\ast}[\mu_t^2 \sech^2(\mu_t y_t + \nu)]\)
    gives an equivalent representation.
    \[
    \bar{\alpha}_t \calL_t(\theta, \nu) =
    \|\theta_t\|_{\Sigma}^2/2 -  \langle \Sigma M(\theta_t, \nu), \theta_t \rangle + \|\theta_t^\ast\|_{\Sigma}^2/2 + V(\theta_t, \nu) - V(\theta_t^\ast, \nu^\ast).
    \]
\end{main theorem}
\end{myframe}
\begin{proof sketch}
    Lemmas~\ref{lemma:nll} and~\ref{lemma:stein_score} express both scores as
    negative \(y_t\)-derivatives of the negative log-likelihood \(F(y_t, \mu_t, \nu)\equiv -\ln p(y_t\mid\bx;\theta_t, \pi)\). Expanding their squared discrepancy
    and integrating the mixed and squared ground-truth-score terms by parts
    with vanishing boundary terms 
    yields an identity involving the first two derivatives. The derivative
    formulas and the conditional second moment of \(y_t\) then reduce this
    identity to quadratic and \(\sech^2\) terms.
    Proposition~\ref{prop:em_update} identifies the
    remaining mixed moment and gives the representation with EM operators and latent variance.
    Proposition~\ref{prop:cross_entropy}, together with
    \(\sech^2(u)=1-\tanh^2(u)\), converts it into the representation with the cross entropy \(\calH(\theta_t, \nu)\) and expected squared activation.
\end{proof sketch}

\begin{corollary}[Gradients of the Score Matching Loss via Cross-Entropy and Expected Squared Activation]\label{cor:sm_gradients_entropy_activation}
    The gradients of the score matching loss \(\bar{\alpha}_t \calL_t(\theta, \nu)\) with respect to the regression parameter \(\theta_t\) and the mixing imbalance \(\nu\) are as follows.
    \[
    \begin{aligned}
        \nabla_{\theta_t} [\bar{\alpha}_t \calL_t(\theta, \nu)] &= \nabla_{\theta_t}\calH(\theta_t, \nu) + \nabla^2_{\theta_t} \calH(\theta_t, \nu) \theta_t - \nabla_{\theta_t} \calA(\theta_t, \nu), \\
        \nabla_\nu [\bar{\alpha}_t \calL_t(\theta, \nu)] &= \langle \nabla^2_{\theta_t \nu} \calH(\theta_t, \nu), \theta_t \rangle - \nabla_\nu \calA(\theta_t, \nu).
    \end{aligned}
    \]
\end{corollary}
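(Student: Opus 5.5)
The plan is to differentiate the first representation of Main Theorem~\ref{thm:information_theoretical_decomposition},
\[
\bar{\alpha}_t \calL_t(\theta, \nu) = \langle \nabla_{\theta_t}\calH(\theta_t, \nu), \theta_t \rangle - \calA(\theta_t, \nu) + \calA(\theta_t^*, \nu^*),
\]
directly, treating \((\theta_t,\nu)\) as the free variables. The expectation \(\E_\ast\) is taken under the fixed ground-truth law. The last term \(\calA(\theta_t^\ast,\nu^\ast)\) therefore does not depend on the candidate parameters, and its gradient in \(\theta_t\) and in \(\nu\) is zero. Only the inner-product term and \(-\calA(\theta_t,\nu)\) need to be differentiated.

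\textbf{Gradient in \(\theta_t\).} For the inner product \(\langle \nabla_{\theta_t}\calH, \theta_t\rangle\), apply the product rule. Differentiating the gradient factor gives \(\nabla^2_{\theta_t}\calH\,\theta_t\), using symmetry of the Hessian. Differentiating the linear factor gives \(\nabla_{\theta_t}\calH\). Subtracting \(\nabla_{\theta_t}\calA\) then yields the first stated identity.

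\textbf{Gradient in \(\nu\).} The factor \(\theta_t\) does not depend on \(\nu\), so \(\nabla_\nu \langle \nabla_{\theta_t}\calH,\theta_t\rangle = \langle \nabla^2_{\theta_t\nu}\calH,\theta_t\rangle\), where \(\nabla^2_{\theta_t\nu}\calH := \nabla_\nu\nabla_{\theta_t}\calH\). Subtracting \(\nabla_\nu\calA\) gives the second identity.

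For a more concrete form, Proposition~\ref{prop:cross_entropy} gives \(\nabla^2_{\theta_t\nu}\calH = -\Sigma\nabla_\nu M\), and Proposition~\ref{prop:em_update} identifies this as \(-\E_\ast[\bx y_t\sech^2(\mu_t y_t+\nu)]\). This rewriting is not needed for the corollary as stated.

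\textbf{Main obstacle.} The only real issue is justifying differentiation under \(\E_\ast\), both twice for \(\calH\) and once for \(\calA\). I would handle this by dominated convergence. The integrands of the \(\theta_t\)-derivatives of \(F\) are built from \(\tanh\) and \(\sech^2\), which are bounded by one, multiplied by polynomials in \(y_t\) and \(\bx\). The same holds for \(\calA\), whose derivatives involve \(\mu_t\tanh(\cdot)\bx\) and \(\mu_t^2\sech^2(\cdot)\tanh(\cdot)(y_t\bx, 1)\). Hence, locally uniformly in \((\theta_t,\nu)\), all these integrands are dominated by \(C(1+|y_t|^2)(1+\|\bx\|^2)\).

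Given \(\bx\), the variable \(y_t\) is a Gaussian mixture, so the domination reduces to finiteness of moments of \(\bx\) of order at most four, combined with conditional moments of \(y_t\). These are the moment conditions already implicit in the finiteness of \(\calL_t\), \(\calH\) and \(\calA\). I would state the required moment bound explicitly, for example \(\E\|\bx\|^4<\infty\). Once interchange is justified, the corollary follows from the product rule, with no further computation.
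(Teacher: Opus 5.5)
Your proposal is correct and matches the paper's proof. The paper also differentiates the first identity of Main Theorem~\ref{thm:information_theoretical_decomposition} by the product rule and drops the candidate-independent term \(\calA(\theta_t^\ast,\nu^\ast)\); your dominated-convergence paragraph adds rigor the paper omits, though \(\E\|\bx\|^4<\infty\) is a hypothesis you are adding rather than one implied by finiteness of \(\calL_t\), \(\calH\), \(\calA\), which needs only second moments.
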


\begin{corollary}[Gradients of the Score Matching Loss via EM Operators and Latent Variance]\label{cor:sm_gradients_em_variance}
    The gradients of the score matching loss \(\bar{\alpha}_t \calL_t(\theta, \nu)\) with respect to \(\theta_t\) and \(\nu\) are as follows.
    \[
    \begin{aligned}
        \nabla_{\theta_t} [\bar{\alpha}_t \calL_t(\theta, \nu)] &= \Sigma (\theta_t - M(\theta_t, \nu)) - \big(\nabla_{\theta_t} M(\theta_t, \nu)\big)^\top \Sigma \theta_t + \nabla_{\theta_t} V(\theta_t, \nu), \\
        \nabla_\nu [\bar{\alpha}_t \calL_t(\theta, \nu)] &= -\langle \nabla_{\theta_t} N(\theta_t, \nu), \theta_t \rangle + \nabla_\nu V(\theta_t, \nu).
    \end{aligned}
    \]
\end{corollary}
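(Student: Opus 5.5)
We will differentiate the second (EM--latent-variance) representation of Main Theorem~\ref{thm:information_theoretical_decomposition} term by term, treating \(\theta_t\) and \(\nu\) as independent variables. The expectation \(\E_\ast\) is taken under the ground-truth law, so it does not depend on \((\theta_t,\nu)\). The terms \(\|\theta_t^\ast\|_\Sigma^2/2\) and \(V(\theta_t^\ast,\nu^\ast)\) are constants and drop out. This leaves
\[
\bar\alpha_t\calL_t = \|\theta_t\|_\Sigma^2/2 - \langle \Sigma M(\theta_t,\nu),\theta_t\rangle + V(\theta_t,\nu) + \text{const}.
\]

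\textbf{Gradient in \(\theta_t\).} The quadratic term contributes \(\Sigma\theta_t\). For the bilinear term we use the product rule. Since \(\Sigma\) is symmetric, the gradient of \(\langle \Sigma M(\theta_t,\nu),\theta_t\rangle\) is \(\Sigma M(\theta_t,\nu) + (\nabla_{\theta_t}M(\theta_t,\nu))^\top\Sigma\theta_t\). Here \(\nabla_{\theta_t}M\) denotes the Jacobian of \(M\), obtained by differentiating \(\Sigma^{-1}\E_\ast[\bx y_t\tanh(\mu_t y_t+\nu)]\) under the integral sign. Collecting the pieces gives
\[
\Sigma(\theta_t-M) - (\nabla_{\theta_t}M)^\top\Sigma\theta_t + \nabla_{\theta_t}V,
\]
which is the stated formula.

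\textbf{Gradient in \(\nu\).} The quadratic term does not depend on \(\nu\), and \(\theta_t\) is held fixed. The bilinear term therefore contributes \(-\langle \Sigma\nabla_\nu M(\theta_t,\nu),\theta_t\rangle\). By the symmetry identity \(\Sigma\nabla_\nu M = \nabla_{\theta_t}N\) of Proposition~\ref{prop:em_update}, this equals \(-\langle \nabla_{\theta_t}N(\theta_t,\nu),\theta_t\rangle\). Adding \(\nabla_\nu V\) gives the second line of the statement.

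\textbf{Main obstacle.} The only step needing care is justifying that differentiation may be interchanged with \(\E_\ast\) in \(M\), \(N\) and \(V\). We will use dominated convergence on compact parameter neighborhoods. The \(\theta_t\)- and \(\nu\)-derivatives of the integrands are \(\bx y_t\,\bx^\top y_t\sech^2(\cdot)\), \(\bx y_t\sech^2(\cdot)\), \(\mu_t\bx\sech^2(\cdot)\), and \(\mu_t^2\sech^2(\cdot)\tanh(\cdot)(\bx y_t)\). Each is bounded by a polynomial in \(\|\bx\|\) and \(|y_t|\), because \(\tanh\) and \(\sech\) are bounded by \(1\) and \(|\mu_t|\le\|\theta_t\|\|\bx\|\) is locally uniformly bounded. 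These envelopes are integrable given Gaussian tails of \(y_t\mid\bx\) and adequate moments of \(\bx\), as assumed where the Fisher-type quantities are used.

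As a consistency check, the same result should follow from Corollary~\ref{cor:sm_gradients_entropy_activation}. There we would substitute \(\nabla_{\theta_t}\calH = \Sigma(\theta_t-M)\) and \(\nabla^2_{\theta_t}\calH = \Sigma - \Sigma\nabla_{\theta_t}M\) from Proposition~\ref{prop:cross_entropy}. We would then use \(\calA = \tfrac12\E_\ast[\mu_t^2] - V\) to rewrite \(-\nabla\calA\) as \(-\Sigma\theta_t + \nabla V\). We expect this to reproduce the same expressions, with the symmetry \((\Sigma\nabla_{\theta_t}M)^\top = (\nabla_{\theta_t}M)^\top\Sigma\) reconciling the transposes.
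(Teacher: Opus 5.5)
Your proposal is correct and follows the paper's proof. You differentiate the EM--latent-variance representation of Main Theorem~\ref{thm:information_theoretical_decomposition} with the product rule, using the symmetry of \(\Sigma\) for the \(\theta_t\)-gradient of \(\langle\Sigma M,\theta_t\rangle\), and you convert \(\langle\Sigma\nabla_\nu M,\theta_t\rangle\) into \(\langle\nabla_{\theta_t}N,\theta_t\rangle\) via Proposition~\ref{prop:em_update}; your dominated-convergence remarks and cross-check through Corollary~\ref{cor:sm_gradients_entropy_activation} go beyond what the paper writes but are consistent with it.
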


The normalized EM operators are 
\(
    \bar{M}_t(\theta, \nu) := M(\theta_t, \nu) / \sqrt{\bar{\alpha}_t},\quad
    \bar{N}_t(\theta, \nu) := (N(\theta_t, \nu)-(1-\bar{\alpha}_t)\tanh \nu) / \bar{\alpha}_t.
\)
These definitions imply
\(\theta - \bar{M}_t(\theta, \nu) = (\theta_t - M(\theta_t, \nu))/\sqrt{\bar{\alpha}_t}\),
\(\tanh \nu - \bar{N}_t(\theta, \nu) = (\tanh \nu - N(\theta_t, \nu))/\bar{\alpha}_t\),
and \(\Sigma \nabla_{\nu} \bar{M}_t(\theta, \nu) = \nabla_{\theta} \bar{N}_t(\theta, \nu)\),
which yield the following gradient identities with \(V_t(\theta, \nu) := V(\theta_t, \nu)/\bar{\alpha}_t\).
\begin{proposition}[Gradients of the score matching Loss]\label{prop:sm_gradients}
    The gradients of the score matching loss \(\calL_t(\theta, \nu)\) are as follows:
    \[
    \begin{aligned}
        \nabla_{\theta} \calL_t(\theta, \nu) &= \Sigma (\theta - \bar{M}_t(\theta, \nu)) - \left( \nabla_{\theta}\bar{M}_t(\theta, \nu) \right)^\top \Sigma \theta + \nabla_\theta V_t(\theta, \nu),\\
        \nabla_{\nu} \calL_t(\theta, \nu) &= -\langle \nabla_{\theta} \bar{N}_t(\theta, \nu), \theta \rangle + \nabla_{\nu} V_t(\theta, \nu).
    \end{aligned}
    \]
\end{proposition}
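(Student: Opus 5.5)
The plan is to obtain Proposition~\ref{prop:sm_gradients} from Corollary~\ref{cor:sm_gradients_em_variance} by a change of variables \(\theta_t=\sqrt{\bar\alpha_t}\,\theta\) at fixed \(t\), followed by division by \(\bar\alpha_t\). Since \(\partial\theta_t/\partial\theta=\sqrt{\bar\alpha_t}\,I\), the chain rule gives
\[
\nabla_\theta\calL_t=\bar\alpha_t^{-1/2}\,\nabla_{\theta_t}[\bar\alpha_t\calL_t],\qquad
\nabla_\nu\calL_t=\bar\alpha_t^{-1}\,\nabla_\nu[\bar\alpha_t\calL_t].
\]
I will treat each term of Corollary~\ref{cor:sm_gradients_em_variance} separately and express it in the normalized quantities \(\bar M_t\), \(\bar N_t\), \(V_t\).

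\textbf{The \(\theta\)-gradient.} There are three terms.
\begin{itemize}
\item The residual term: \(\bar\alpha_t^{-1/2}\Sigma(\theta_t-M)=\Sigma(\theta-\bar M_t)\), which is the identity recorded before the proposition.
\item The Jacobian term: \(\bar M_t(\theta,\nu)=M(\sqrt{\bar\alpha_t}\theta,\nu)/\sqrt{\bar\alpha_t}\) gives \(\nabla_\theta\bar M_t=\nabla_{\theta_t}M\). Hence
\[
\bar\alpha_t^{-1/2}(\nabla_{\theta_t}M)^\top\Sigma\theta_t=(\nabla_\theta\bar M_t)^\top\Sigma\theta .
\]
\item The variance term: \(\nabla_\theta V_t=\bar\alpha_t^{-1}\sqrt{\bar\alpha_t}\,\nabla_{\theta_t}V=\bar\alpha_t^{-1/2}\nabla_{\theta_t}V\), which is exactly the required factor.
\end{itemize}

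\textbf{The \(\nu\)-gradient.} The subtracted piece \((1-\bar\alpha_t)\tanh\nu\) in \(\bar N_t\) does not depend on \(\theta\), so \(\nabla_\theta\bar N_t=\bar\alpha_t^{-1}\sqrt{\bar\alpha_t}\,\nabla_{\theta_t}N\). Therefore
\[
\langle\nabla_\theta\bar N_t,\theta\rangle=\bar\alpha_t^{-1}\langle\nabla_{\theta_t}N,\theta_t\rangle .
\]
Also \(\nabla_\nu V_t=\bar\alpha_t^{-1}\nabla_\nu V\). Dividing the \(\nu\)-identity of Corollary~\ref{cor:sm_gradients_em_variance} by \(\bar\alpha_t\) then gives the claim.

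\textbf{Main obstacle.} The only subtle point is interpreting \(\nabla_{\theta_t}M\) and \(\nabla_{\theta_t}N\). The expectation \(\E_\ast\) is taken under the ground-truth law of \(y_t\), which does not depend on the candidate \(\theta\). So differentiation acts only through \(\mu_t\) inside \(\tanh\), and the chain-rule rescaling is legitimate. I will justify exchanging derivative and expectation by dominated convergence, using \(|\tanh|,\sech^2\le1\) and finite second moments of \(\bx\) and \(y_t\), as in the proof of Proposition~\ref{prop:em_update}. Everything else is bookkeeping of powers of \(\sqrt{\bar\alpha_t}\).
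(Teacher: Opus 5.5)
Your proposal is correct and follows the paper's own proof: apply the chain rule with \(\theta_t=\sqrt{\bar\alpha_t}\,\theta\) to Corollary~\ref{cor:sm_gradients_em_variance} and divide by \(\bar\alpha_t\). Your term-by-term bookkeeping for \(\bar M_t\), \(\bar N_t\), and \(V_t\) checks out. One small caveat: the chain-rule step itself needs no derivative--expectation interchange, since that regularity is inherited from the corollary. Also, your second-moment justification would not by itself cover the full Jacobian \(\nabla_{\theta_t}M=\Sigma^{-1}\E_\ast[\bx\bx^\top y_t^2\sech^2(\mu_t y_t+\nu)]\); its natural envelope \(\|\bx\|^2 y_t^2\) needs a fourth-order moment, which is why the paper assumes \(\E[\|\bx\|^4]<\infty\) where this matters.
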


\begin{remark}
    As \(\bar{\alpha}_t\to 1\), we have
    \(\theta_t \equiv \theta \sqrt{\bar{\alpha}_t} \to \theta\), so
    \(\bar{M}_t(\theta, \nu) \to M(\theta, \nu)\),
    \(\bar{N}_t(\theta, \nu) \to N(\theta, \nu)\), and
    \(V_t(\theta, \nu) \to V(\theta, \nu)\). Substitution into
    Proposition~\ref{prop:sm_gradients} gives
    \begin{equation}
    \begin{aligned}
        \lim_{\bar{\alpha}_t\to 1}\nabla_{\theta} \calL_t(\theta, \nu) &= \Sigma (\theta - M(\theta, \nu)) - \left( \nabla_{\theta}M(\theta, \nu) \right)^\top \Sigma \theta + \nabla_\theta V(\theta, \nu),\\
        \lim_{\bar{\alpha}_t\to 1}\nabla_{\nu} \calL_t(\theta, \nu) &= -\langle \nabla_{\theta} N(\theta, \nu), \theta \rangle + \nabla_{\nu} V(\theta, \nu).
    \end{aligned}
    \end{equation}
\end{remark}

\begin{proposition}[Limit at Low Noise and Expectation-Maximization]\label{prop:low_noise_em}
    If \(\E[\|\bx\|^4]<\infty\), the gradients of the score matching loss \(\calL_t(\theta, \nu)\) satisfy the following expansions as \(\bar{\alpha}_t \to 1\).
    \[
    \begin{aligned}
        \nabla_{\theta} \calL_t(\theta, \nu) &= \Sigma (\theta - M(\theta, \nu)) - \left( \nabla_{\theta}M(\theta, \nu) \right)^\top \Sigma \theta + \nabla_\theta V(\theta, \nu) + \calO(1-\bar{\alpha}_t),\\
        \nabla_{\nu} \calL_t(\theta, \nu) &= -\langle \nabla_{\theta} N(\theta, \nu), \theta \rangle + \nabla_{\nu} V(\theta, \nu) + \calO(1-\bar{\alpha}_t).
    \end{aligned}
    \]
\end{proposition}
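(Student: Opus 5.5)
Starting from the exact identities of Proposition~\ref{prop:sm_gradients}, the expansion reduces to showing that the normalized operators \(\bar{M}_t\), \(\nabla_\theta\bar{M}_t\), \(\nabla_\theta\bar{N}_t\), \(\nabla_\theta V_t\) and \(\nabla_\nu V_t\) differ from their \(t=0\) counterparts \(M\), \(\nabla_\theta M\), \(\nabla_\theta N\), \(\nabla_\theta V\), \(\nabla_\nu V\) by \(\calO(1-\bar\alpha_t)\), locally uniformly in \((\theta,\nu)\). We set \(\epsilon:=1-\bar\alpha_t\) and \(a:=\sqrt{\bar\alpha_t}=\sqrt{1-\epsilon}\). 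All quantities are expectations under the ground truth of smooth functions of \((a,\bx,y_t)\), where \(y_t=a y_0+\sqrt{\epsilon}\,\xi\) and \(\mu_t=a\mu_0\). We therefore write each one as
\[
g(\epsilon):=\E_{y_0,\bx}\E_\xi\big[h(a\mu_0,\,a y_0+\sqrt{\epsilon}\,\xi,\,\bx)\big]
\]
with explicit integrands \(h\), and expand in \(\epsilon\). For example, for \(\Sigma\bar M_t\) we take \(h=\bx\,y_t\tanh(\mu_t y_t+\nu)/a\). The prefactors \(1/a\) and \(1/a^2\) equal \(1+\calO(\epsilon)\), and \(a=1-\epsilon/2+\calO(\epsilon^2)\). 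So the deterministic rescalings contribute only \(\calO(\epsilon)\), since \(\tanh\) and \(\sech^2\) are smooth with bounded derivatives.

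The key step concerns the stochastic perturbation \(\sqrt{\epsilon}\,\xi\), which is of order \(\epsilon^{1/2}\) rather than \(\epsilon\). We handle it by a second-order Taylor expansion in the \(y\)-argument:
\[
\E_\xi\big[h(\cdot,\,ay_0+\sqrt\epsilon\,\xi)\big]=h(\cdot,\,ay_0)+\tfrac{\epsilon}{2}\,\E_\xi\big[\partial_y^2 h(\cdot,\,ay_0+\tau\sqrt\epsilon\,\xi)\,\xi^2\big]
\]
for some \(\tau\in(0,1)\). The linear term vanishes because \(\E\xi=0\) and \(\xi\) is independent of \((y_0,\bx)\). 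This is why the error is \(\calO(\epsilon)\) rather than \(\calO(\sqrt\epsilon)\). The normalization of \(\bar N_t\) needs separate care: \(N(\theta_t,\nu)-(1-\bar\alpha_t)\tanh\nu\) is divided by \(\bar\alpha_t\), but only its \(\theta\)-gradient enters Proposition~\ref{prop:sm_gradients}. The subtracted constant therefore drops out, leaving \(\nabla_\theta\bar N_t=a\,\E_\ast[\bx y_t\sech^2(\mu_ty_t+\nu)]/a^2\), which is of the same form as above.

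The main obstacle is obtaining integrable envelopes so that the remainders are genuinely \(\calO(\epsilon)\) after taking \(\E_{y_0,\bx}\). Differentiating \(h\) twice in \(y\) produces factors such as \(\mu^2\sech^2\tanh\), multiplied by \(y\) and \(\bx\). For the Jacobian terms \(\nabla_\theta\bar M_t\) and \(\nabla_\theta V_t\), it also produces extra factors \(\bx\bx^\top y^2\) and \(\mu^2\). Using \(|\tanh|,|\sech|\le1\), \(|\mu_0|\le\|\theta\|\|\bx\|\), \(|y_0|\le\|\theta^\ast\|\|\bx\|+|\varepsilon_0|\) and Gaussian moments of \(\xi\) and \(\varepsilon_0\), we expect every term to be bounded by a polynomial in \(\|\bx\|\) and in \(|y_0|+|\xi|\). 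The degrees must be checked carefully. The worst Jacobian terms initially look like \(\|\bx\|^2|\mu|\,|y|^3\), which naively exceeds fourth moments. We will first try to absorb powers of \(\mu y\) using \(|u\,\sech^2 u|\le1\) and \(|u\tanh u\,\sech^2u|\le C\) with \(u=\mu y+\nu\). This trades \(\mu y\) factors for bounded functions, so that the envelope reduces to \(C(1+\|\theta\|^2)(1+\|\bx\|^4)(1+|\varepsilon_0|^k+|\xi|^k)\). That envelope is integrable under \(\E\|\bx\|^4<\infty\) by independence of \(\bx\), \(\varepsilon_0\) and \(\xi\).

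Dominated convergence then justifies exchanging derivatives and expectations. The remainder bounds are uniform over bounded \((\theta,\nu)\) and \(\epsilon\in[0,1/2]\). Substituting the expansions into Proposition~\ref{prop:sm_gradients} gives both stated formulas, with the limits consistent with the preceding remark.
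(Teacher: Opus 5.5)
Your proposal is correct, but it takes a different route from the paper.

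\textbf{What the paper does.} It neither compares the operators one by one nor uses the forward coupling \(y_t=\sqrt{\bar\alpha_t}\,y_0+\sqrt{1-\bar\alpha_t}\,\xi\). Because \(\calL_t\) depends only on the conditional marginal law of \(y_t\), it writes \(y_t=(-1)^{z+1}\sqrt{a}\,\mu_0^\ast+\varepsilon\), with \(a=\bar\alpha_t\) and a single \(\varepsilon\sim\calN(0,1)\). In this coupling the activation \(\mu_ty_t=(-1)^{z+1}a\mu_0\mu_0^\ast+\sqrt a\,\mu_0\varepsilon\) and its ground-truth analogue are \(C^1\) in \(a\) up to \(a=1\). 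So the \(\sqrt{1-\bar\alpha_t}\) singularity, which you cancel with \(\E\xi=0\) and a second-order expansion, never arises. The paper then works with the whole gradient \(\E[G_t\nabla G_t]\) of \(\calL_t=\tfrac12\E[G_t^2]\). It bounds the \(a\)-derivatives of these integrands by \(C(\|\bx\|^2+\|\bx\|^4)\), using the same absorption of \(\mu_ty_t\) into bounded functions \(u^k\tanh^{(j)}(u+c)\) that you plan. It integrates from \(a\) to \(1\), and only at the endpoint does it invoke Proposition~\ref{prop:sm_gradients} to identify the limit with the EM/latent-variance expression.

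\textbf{What each route buys.} The paper needs a single first-order estimate, with no second \(y\)-derivatives and an envelope involving only \(\|\bx\|\). Your route needs five separate envelope checks, but it proves something stronger: each of \(\bar M_t\), \(\nabla_\theta\bar M_t\), \(\nabla_\theta\bar N_t\), \(\nabla_\theta V_t\) and \(\nabla_\nu V_t\) converges at rate \(\calO(1-\bar\alpha_t)\). This quantifies the rate-free limits stated in the preceding remark.

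\textbf{Your envelope worry resolves favorably.} Each \(y\)-derivative of your integrands produces a factor \(a\mu_0\). That factor either pairs with \(y\) inside a bounded function of \(a\mu_0y+\nu\), or is bounded by \(\|\theta\|\|\bx\|\). Hence the \(\xi\)-remainders are bounded uniformly in \(y\) by \(C\|\bx\|^2\xi^2\) for the \(\bar M_t,\bar N_t\) terms and by \(C\|\bx\|^4\xi^2\) for the \(V_t\) terms, and no \(|y|^3\) factor survives. The deterministic rescaling contributes at worst \(C\|\bx\|^2y_0^2\), which is integrable when \(\E\|\bx\|^4<\infty\).

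\textbf{One technical fix.} For vector- or matrix-valued \(h\), use the integral form of the Taylor remainder rather than a single intermediate point \(\tau\).
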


\begin{remark}[low-noise loss as \(\bar\alpha_t\to1\)]
For fixed \(\theta,\theta^\ast\), under the independence assumptions of
Section~\ref{subsec:2mlr} and
\(\E[\|\bx\|^2]<\infty\), with
\(\mu_0:=\langle\theta,\bx\rangle\) and
\(\mu_0^\ast:=\langle\theta^\ast,\bx\rangle\),
the normalized conditional squared score discrepancy in
Lemma~\ref{lemma:asymptotic_score_loss} is bounded uniformly in
\(\bar\alpha_t\) by
\((|\mu_0|+|\mu_0^\ast|)^2\), which is integrable under the stated
second-moment condition. Dominated convergence over \(\bx\) therefore gives
the low-noise limit
\begin{equation}\label{eq:low_noise_loss_limit}
\lim_{\bar{\alpha}_t \to 1} \calL_t(\theta,\nu)
= \frac{1}{2} \E_{\bx} \E_{y_0 \mid \bx; \theta^\ast, \pi^\ast} \left[ \left( \mu_0 \tanh(\mu_0 y_0 + \nu) - \mu_0^\ast \tanh(\mu_0^\ast y_0 + \nu^\ast) \right)^2 \right].
\end{equation}
Thus the objective limit itself requires only a second moment, whereas
Proposition~\ref{prop:low_noise_em} uses the stronger fourth-moment condition
to control differentiation under the covariate expectation and obtain the
corresponding gradient expansion with EM operators, latent variance, and an \(\calO(1-\bar\alpha_t)\) remainder.
\end{remark}
\begin{proposition}[Effective Coefficient Limit at High Noise]\label{prop:high_noise_effective_coefficient}
    If \(\E[\|\bx\|^2] < \infty\), the effective coefficient \(c^\ast:=\theta^\ast\tanh\nu^\ast\) determines the limiting objective through
    \(
        \lim_{\bar{\alpha}_t \to 0} \calL_t(\theta, \nu) = \frac{1}{2}\|\theta \tanh \nu-c^\ast\|_{\Sigma}^2.
    \)
    The condition \(\E[\|\bx\|^4] < \infty\) suffices for the \(\calO(\bar\alpha_t)\) gradient expansions as \(\bar{\alpha}_t \to 0\)
    \[
        \nabla_{\theta} \calL_t(\theta, \nu) = \nabla_{\theta} \lim_{\bar{\alpha}_t \to 0} \calL_t(\theta, \nu) + \calO(\bar{\alpha}_t),\qquad \nabla_{\nu} \calL_t(\theta, \nu) = \nabla_{\nu} \lim_{\bar{\alpha}_t \to 0} \calL_t(\theta, \nu) + \calO(\bar{\alpha}_t).
    \]
\end{proposition}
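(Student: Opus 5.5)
We work directly from the explicit score \eqref{eq:score} rather than from the EM decomposition. The normalized loss is the covariate expectation of
\[
\frac{1}{2\bar\alpha_t}\E\big[(\mu_t\tanh(\mu_t y_t+\nu)-\mu_t^\ast\tanh(\mu_t^\ast y_t+\nu^\ast))^2\big],
\]
because the $-y_t$ terms cancel. Write $\mu_t=\sqrt{\bar\alpha_t}\,\mu_0$ with $\mu_0=\langle\theta,\bx\rangle$. Then the prefactor $1/\bar\alpha_t$ absorbs the two factors of $\sqrt{\bar\alpha_t}$, and
\[
\calL_t(\theta,\nu)=\tfrac12\E_{\bx}\E_{y_t\mid\bx}\big[(\mu_0\tanh(\sqrt{\bar\alpha_t}\mu_0 y_t+\nu)-\mu_0^\ast\tanh(\sqrt{\bar\alpha_t}\mu_0^\ast y_t+\nu^\ast))^2\big].
\]

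\textbf{Step 1: pointwise limit.} Under the true law, $y_t=\pm\sqrt{\bar\alpha_t}\mu_0^\ast+\varepsilon_t$ with $\varepsilon_t\sim\calN(0,1)$. Hence $\sqrt{\bar\alpha_t}\mu_0 y_t\to0$ in probability for each fixed $\bx$, and the integrand converges to $(\mu_0\tanh\nu-\mu_0^\ast\tanh\nu^\ast)^2$. Since $|\tanh|\le1$, the integrand is dominated by $(|\mu_0|+|\mu_0^\ast|)^2\le 2(\|\theta\|^2+\|\theta^\ast\|^2)\|\bx\|^2$, which is integrable when $\E\|\bx\|^2<\infty$. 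Dominated convergence over $(y_t,\bx)$ then gives
\[
\tfrac12\E_{\bx}\langle\theta\tanh\nu-c^\ast,\bx\rangle^2=\tfrac12\|\theta\tanh\nu-c^\ast\|_\Sigma^2 .
\]
To make the dominated convergence clean, we parametrize by $\varepsilon$ and the latent sign, so that the measure is fixed and only the integrand depends on $\bar\alpha_t$.

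\textbf{Step 2: gradient expansion.} Set $s=\sqrt{\bar\alpha_t}$, $g=\mu_0\tanh(s\mu_0y+\nu)$ and $g^\ast=\mu_0^\ast\tanh(s\mu_0^\ast y+\nu^\ast)$, with $y=\pm s\mu_0^\ast+\varepsilon$. Differentiating under the expectation gives
\[
\nabla_\theta\calL_t=\E[(g-g^\ast)\nabla_\theta g],\qquad \nabla_\theta g=\bx\tanh(\cdot)+s\,\mu_0 y\,\bx\,\sech^2(\cdot),
\]
\[
\nabla_\nu\calL_t=\E[(g-g^\ast)\mu_0\sech^2(s\mu_0y+\nu)].
\]
Each factor is expanded in $s$ around $s=0$. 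Since $\tanh(a+\nu)=\tanh\nu+a\,\sech^2\nu+O(a^2)$ with a uniformly bounded second derivative, we have $g-g^\ast=(\mu_0\tanh\nu-\mu_0^\ast\tanh\nu^\ast)+s\,y\,(\mu_0^2\sech^2\nu-(\mu_0^\ast)^2\sech^2\nu^\ast)+R$, with $|R|\le C s^2 y^2(|\mu_0|^3+|\mu_0^\ast|^3)$.

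The key observation concerns the first-order terms in $s$. They are odd: after taking the expectation over $\varepsilon$ and the sign, $\E[y\mid\bx]=s\,\mu_0^\ast\tanh\nu^\ast$ is itself $O(s)$. Hence every term linear in $s y$ contributes only $O(s^2)=O(\bar\alpha_t)$. This explains why the remainder is $\calO(\bar\alpha_t)$ rather than $\calO(\sqrt{\bar\alpha_t})$. The zeroth-order terms reproduce $\Sigma\tanh\nu(\theta\tanh\nu-c^\ast)$ and $\langle\theta,\Sigma(\theta\tanh\nu-c^\ast)\rangle\sech^2\nu$, which are exactly the gradients of the limiting objective.

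\textbf{Main obstacle.} The hard part is the uniform integrability of the remainders, which have the form $s^2\,\E[\,y^2\cdot\text{(polynomial of degree up to four in }\mu_0,\mu_0^\ast)]$ times bounded functions. Products such as $(g-g^\ast)\cdot s\mu_0y\bx\sech^2$ at second order involve $\mu_0^2\mu_0^\ast\|\bx\|$-type moments. These are the reason the statement assumes $\E\|\bx\|^4<\infty$, since each such product is bounded by a constant times $\|\bx\|^4(1+\varepsilon^2)$.

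To avoid third-order Taylor terms, we use the exact mean-value form with bounded derivatives of $\tanh$. We keep exactly the two orders needed and bound everything else by $C s^2\|\bx\|^4(1+\varepsilon^2)$. This also justifies differentiating under the integral sign for $s$ in a neighborhood of $0$.
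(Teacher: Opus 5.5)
Your route differs from the paper's. The paper sets $a=\bar\alpha_t$ and writes $\calL_t=\E[f(\sqrt a\,\varepsilon_t+a(-1)^{z+1}\mu_0^\ast)]$, where $f(w)=\frac12[\mu_0\tanh(\nu+\mu_0w)-\mu_0^\ast\tanh(\nu^\ast+\mu_0^\ast w)]^2$. It then uses Gaussian integration by parts, in effect the heat equation in the variance variable, to obtain $\frac{\mathrm{d}}{\mathrm{d}a}\E_{\varepsilon_t}[f]=\E_{\varepsilon_t}[(-1)^{z+1}\mu_0^\ast f'(w)+\frac12 f''(w)]$. The parameter derivatives of this integrand are bounded uniformly in $w$ by $C(1+\|\bx\|^4)$. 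Integrating in $a$ from $0$ therefore gives the $\calO(\bar\alpha_t)$ remainder, and no powers of $\varepsilon_t$ or $y_t$ ever appear. Your Taylor expansion in $s=\sqrt{\bar\alpha_t}$, with the linear term killed because its mean is $\calO(s)$, is a legitimate alternative and shows explicitly why the $\sqrt{\bar\alpha_t}$ order drops out. Your Step 1 matches the paper's.

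There is, however, a gap in the moment bookkeeping, and that bookkeeping is exactly what the fourth-moment clause asserts. Your remainder bound $|R|\le Cs^2y^2(|\mu_0|^3+|\mu_0^\ast|^3)$ gets multiplied by a factor one degree higher ($\mu_0\sech^2(\cdot)$ or $\nabla_\theta g$). Since $\E[y^2\mid\bx,z]=1+s^2(\mu_0^\ast)^2$, this piece contributes $s^4\E[(\mu_0^\ast)^2\cdot(\text{degree }4)]$, which requires $\E\|\bx\|^6<\infty$. The cross term $(sy)^2\times(\text{degree }2)\times(\text{degree }2)$ has the same defect if you integrate it separately. So your claimed envelope $Cs^2\|\bx\|^4(1+\varepsilon^2)$ does not follow from the bounds you wrote.

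The repair goes as follows. Each gradient integrand is a function $G(w)$ of $w:=sy$ alone. Split $w=w_1+w_2$ with $w_1:=(-1)^{z+1}s^2\mu_0^\ast$ and $w_2:=s\varepsilon$. Note that $\sup_w|G'|$ and $\sup_w|G''|$ are bounded by polynomials of degree $3$ and $4$ in $\|\bx\|$, because each $w$-derivative produces a factor $\mu_0$ or $\mu_0^\ast$ and the functions $v^k\tanh^{(j)}(\nu+v)$ are bounded. Then
\[
G(w)-G(0)-G'(0)w_2=\bigl[G(w_2+w_1)-G(w_2)\bigr]+\bigl[G(w_2)-G(0)-G'(0)w_2\bigr].
\]
The first bracket is at most $\sup|G'|\,s^2|\mu_0^\ast|$ and the second at most $\frac12\sup|G''|\,s^2\varepsilon^2$. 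Both are bounded by $Cs^2(1+\|\bx\|^4)(1+\varepsilon^2)$, and $\E[G'(0)w_2\mid\bx,z]=0$. This yields $\E[G(w)]=G(0)+\calO(\bar\alpha_t)$ under fourth moments alone. It also replaces your $\E[y\mid\bx]$ parity argument, since the drift part of the linear term is absorbed into the first bracket.
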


\begin{remark}
Proposition~\ref{prop:high_noise_effective_coefficient} characterizes only the leading high-noise objective, whoes gradient dynamics will be discussed in Section~\ref{sec:theory}; its \(\calO(\bar\alpha_t)\) gradient remainder is checked componentwise in Figure~\ref{fig:endpoint-gradients}(b).
On the balanced slice \(\nu^\ast=\nu=0\), that leading objective is flat in \(\theta\), so the proposition supplies no preferred direction.
We report a separate next-order directional diagnostic, illustrated in Figure~\ref{fig:local-spectral}, which compares finite-noise score-matching updates with power steps on the spiked fourth-order covariance in Appendix~\ref{sup:experiment}.
\end{remark}

\begin{figure}[!htbp]
    \centering
    \includegraphics[width=\linewidth]{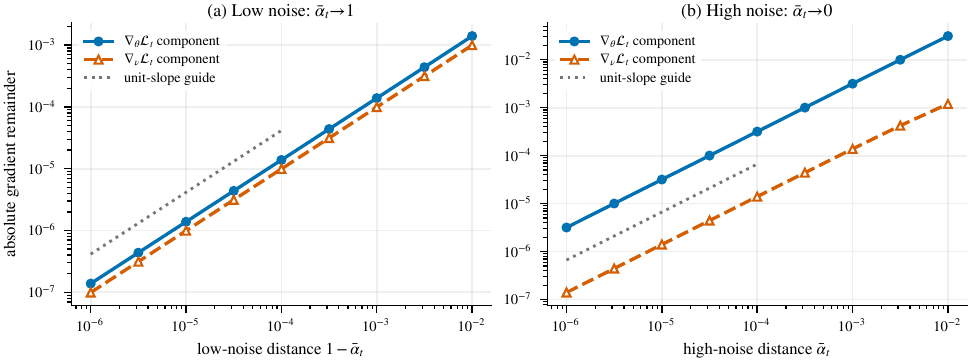}
    \caption{\textbf{First-order scaling of the endpoint gradient remainders in Propositions~\ref{prop:low_noise_em} and~\ref{prop:high_noise_effective_coefficient}.}
    At one fixed model pair, the panels plot componentwise remainders for the
    complete low-noise expression in Proposition~\ref{prop:low_noise_em} and the
    effective high-noise coefficient expression in
    Proposition~\ref{prop:high_noise_effective_coefficient}. The fitted log--log slopes
    track the guides with unit slope, consistent with the respective
    $\calO(1-\bar\alpha_t)$ and $\calO(\bar\alpha_t)$ bounds. The panels report
    one selected instance, whose settings and fitted values appear in
    Appendix~\ref{sup:experiment}.}
    \label{fig:endpoint-gradients}
\end{figure}

\FloatBarrier

\section{High-Noise Gradient Dynamics and Score Matching Blindness}\label{sec:theory}
Building on the endpoint loss and gradient limits in
Section~\ref{sec:connect}, we further study the optimization dynamics and
parameter-information consequences in this section. 
We first analyze fixed-step gradient descent on the high-noise limiting objective in Main
Theorem~\ref{thm:global_convergence_high_noise} and then establish pointwise
score matching blindness along fixed high-SNR rays in
Proposition~\ref{prop:pointwise_sm_blindness}. Complete arguments and details of proofs are provided
in Appendix~\ref{sup:theory}.

\subsection{High-Noise Gradient Dynamics}\label{subsec:loss_landscape}

As shown in Proposition~\ref{prop:high_noise_effective_coefficient}, the high-noise limiting objective is given by \(\calL(\theta,\nu):=\lim_{\bar\alpha_t\to0}\calL_t(\theta,\nu)\).
We consider the optimization dynamics of gradient descent with a fixed step size \(\eta\):
\(
    \vartheta_{k+1}
    \leftarrow \vartheta_k-\eta\nabla\calL(\vartheta_k),
\) for the iterate \(\vartheta_k=(\theta_k,\nu_k)\) of parameters at time step \(k\).
The following informal Main Theorem~\ref{thm:global_convergence_high_noise} demonstrates the convergence rates of the optimization dynamics. 

\begin{myframe}
\begin{main theorem}[Informal; Global Dynamics as \(\bar{\alpha}_t\to0\)]\label{thm:global_convergence_high_noise}
    On the high-noise limiting loss
    \(\calL(\theta,\nu)=\frac12\|\theta\tanh\nu-c^\ast\|^2\)
    with \(c^\ast:=\theta^\ast\tanh\nu^\ast\), isotropic covariates
    \(\Sigma=I_d\), fixed-step gradient descent with an admissible
    initialization has nonincreasing loss and obeys the
    following rates.\\
    (\(c^\ast=\mathbf0\).)
    The loss satisfies \(\calL(\theta_K,\nu_K)=\calO(K^{-2})\).\\
    (\(c^\ast\neq\mathbf0\), generic initializations.)
    Every admissible initialization outside the exceptional saddle basin has
    geometric loss decay after a finite transient and reaches any
    \(\epsilon>0\) in \(\calO(\log(1/\epsilon))\) iterations.\\
    (\(c^\ast\neq\mathbf0\), exceptional initializations.)
    The exceptional saddle basin consists of the admissible initializations
    whose trajectories converge to the saddle manifold
    \(\{(\theta,0):\langle\theta,c^\ast\rangle=0\}\). This basin has
    \((d+1)\)-dimensional Lebesgue measure zero, and every such trajectory has
    limiting loss \(\|c^\ast\|^2/2\).
\end{main theorem}
\end{myframe}

\begin{proof sketch}
    We establish the proof by showing that the coupled descent bounds keep the iterates in the admissible region. When
    \(c^\ast=\mathbf0\), a gradient bound of order \(\calL^{3/2}(\vartheta_k)\) gives the
    \(K^{-2}\) upper bound. When \(c^\ast\ne\mathbf0\), the sign recurrences
    give geometric decay after the signs agree. The local stable set of the
    saddle has lower dimension, and inverse images of null sets under the
    update map remain null. Hence the admissible saddle basin has zero
    \((d+1)\)-dimensional Lebesgue measure. See Theorem~\ref{thm:global_convergence_high_noise_full} in
    Appendix~\ref{sup:theory} for the precise admissibility condition for the step size, explicit
    constants in convergence rates, and the detailed proofs.
\end{proof sketch}

\begin{figure}[!htb]
    \centering
    \includegraphics[width=\linewidth]{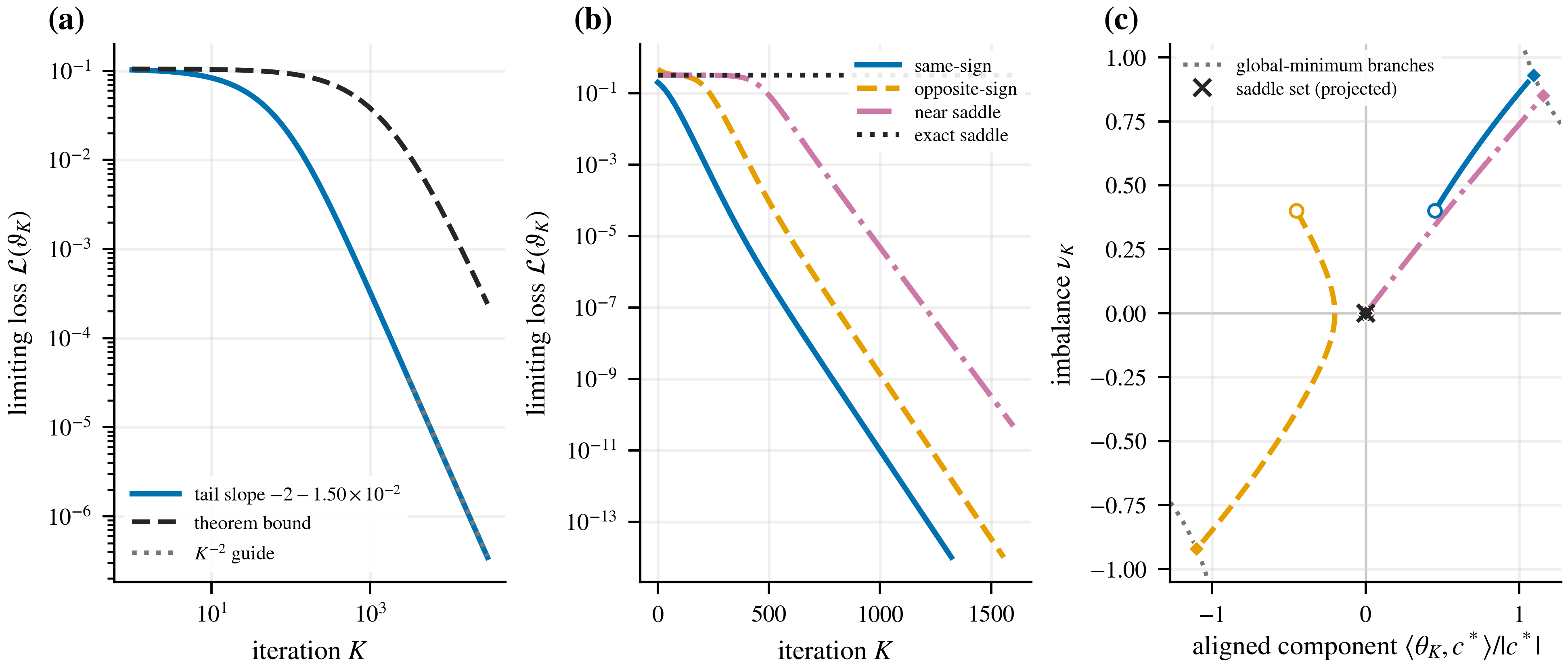}
    \caption{\textbf{Optimization Dynamics of Gradient Descent on the High-Noise Objective in Theorem~\ref{thm:global_convergence_high_noise}.}
    The gradient descent update with a fixed step size is iterated from initializations
    satisfying the admissibility condition in
    Theorem~\ref{thm:global_convergence_high_noise_full}, and every displayed path
    preserves that condition and has nonincreasing loss. (a) For the degenerate
    target, the computed loss remains below the $K^{-2}$ upper bound in
    Theorem~\ref{thm:global_convergence_high_noise}, and its fitted log--log
    tail slope is $-2-1.50\times10^{-2}$, tracking the anchored $K^{-2}$ guide
    over the displayed tail. (b) For $c^\ast\ne\mathbf0$, the three displayed
    generic initializations lie outside the exceptional saddle basin and show
    initialization-dependent transients followed by geometric loss decay,
    consistent with the $\calO(\log(1/\epsilon))$ conclusion. (c) The
    phase portrait in $(\langle\theta_K,c^\ast\rangle/\|c^\ast\|,\nu_K)$
    coordinates uses open circles and filled diamonds to mark the initial and
    final iterates. The dotted curves are the two branches of the global-minimum
    set $\{(\theta,\nu):\theta\tanh\nu=c^\ast\}$. By contrast, the saddle set
    $\{(\theta,0):\langle\theta,c^\ast\rangle=0\}$ projects to the single point
    $(0,0)$ in these coordinates and is marked by the black cross. The observed
    degenerate slope diagnoses finite-window agreement with
    the $K^{-2}$ upper bound in Theorem~\ref{thm:global_convergence_high_noise}.
    Appendix~\ref{sup:experiment} provides the configurations for generating the figures.}
    \label{fig:high-noise-dynamics}
\end{figure}

\begin{remark}[Interpreting the Dynamics at High Noise]\label{rmk:global_convergence_high_noise}
    The objective identifies the ground truth only through the effective signal
    \(c^\ast=\theta^\ast\tanh\nu^\ast\), so all ground-truth pairs with the same
    product induce the same target. The degenerate case includes
    \(\theta^\ast=\mathbf0\) and balanced mixtures \(\nu^\ast=0\), for which the
    theorem provides a uniform polynomial upper bound without a matching lower
    bound, so individual zero-target trajectories can decay faster than
    \(K^{-2}\).
    The exponent is consistent with the K\L{} gradient-inequality exponent
    \(3/4\) of a singular loss formed by a squared product and with the
    associated \(\calO(K^{-2})\) bound on function values
    \citep{josz2026computing}. For
    \(c^\ast\neq\mathbf0\), the almost-global conclusion applies trajectory by
    trajectory within the admissible class, with both the transient and
    contraction factor depending on the initialization.
\end{remark}

\subsection{Score Matching Blindness at High SNR}

\begin{proposition}[Pointwise Score Matching Blindness Along a Fixed Ray at High SNR]\label{prop:pointwise_sm_blindness}
    The covariates satisfy \(\E[\|\bx\|^2]<\infty\). The scale of the diffusion noise level
    \(\bar{\alpha}_t\in(0,1]\) and finite model pair
    \((\theta,\nu)\in\R^d\times\R\) are fixed. Let
    \(\theta^\ast\ne\mathbf0\) vary with \(\|\theta^\ast\|\to\infty\) along
    a fixed ray, while the true mixing weights and all other model and
    data-generating quantities remain fixed. When
    \(\theta\ne\mathbf0\), we also impose the following projection condition.
    \[
        \Pr\!\left(\{\langle\theta,\bx\rangle=0\}\cup\{\langle\theta^\ast,\bx\rangle=0\}\right)=0.
    \]
    The expectations defining \(\calL_t\) and \(V_t\) are taken under the
    corresponding varying ground truth.
    At the specified fixed model pair and fixed scale of the diffusion noise level, the following
    pointwise limits hold.
    \[
        \lim_{\|\theta^\ast\|\to\infty}|\nabla_\nu\calL_t(\theta,\nu)|=0,
        \qquad
        \lim_{\|\theta^\ast\|\to\infty}V_t(\theta,\nu)=0.
    \]
\end{proposition}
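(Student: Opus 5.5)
We work at fixed $\bar\alpha_t\in(0,1]$ and fixed $(\theta,\nu)$, and use Proposition~\ref{prop:sm_gradients}:
\[
\nabla_\nu\calL_t(\theta,\nu)=-\langle\nabla_\theta\bar N_t(\theta,\nu),\theta\rangle+\nabla_\nu V_t(\theta,\nu).
\]
Using $\nabla_\theta\bar N_t=\nabla_{\theta_t}N/\sqrt{\bar\alpha_t}$ and Proposition~\ref{prop:em_update}, the first term equals $-\bar\alpha_t^{-1}\E_\ast[\mu_t y_t\sech^2(\mu_t y_t+\nu)]$. Differentiating $V$ under the expectation gives $\nabla_\nu V_t=-\bar\alpha_t^{-1}\E_\ast[\mu_t^2\sech^2(\mu_ty_t+\nu)\tanh(\mu_ty_t+\nu)]$. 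Both quantities, and $V_t$ itself, are therefore expectations of the form $\E_\ast[g(\mu_t,y_t)\sech^2(\mu_ty_t+\nu)]$ with $|g|\le |\mu_t y_t|+\mu_t^2$.

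The case $\theta=\mathbf0$ is trivial: $\mu_t\equiv0$, so every term vanishes identically. For $\theta\neq\mathbf0$, we condition on $\bx$ and $z$. Writing $\theta^\ast=r u$ with $r\to\infty$ and $u$ a fixed unit vector, we have
\[
y_t=\pm r\sqrt{\bar\alpha_t}\langle u,\bx\rangle+\varepsilon_t,
\]
with the mixing weights $\pi^\ast$ fixed. The projection condition gives $\langle\theta,\bx\rangle\neq0$ and $\langle u,\bx\rangle\ne0$ almost surely. On this event, $|\mu_ty_t+\nu|\to\infty$ for each fixed $\varepsilon_t$, so $\sech^2(\mu_ty_t+\nu)\le 4e^{-2|\mu_ty_t+\nu|}$ decays exponentially in $r$. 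The prefactor $|\mu_t y_t|$ grows only linearly in $r$, so the integrand tends to zero pointwise.

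The main obstacle is the dominated convergence, because $y_t$ itself grows with $r$. We handle it with the elementary bound $\sup_{s}|s|\sech^2(s+\nu)\le C(1+|\nu|)$, which implies
\[
|\mu_ty_t|\sech^2(\mu_ty_t+\nu)\le C(1+|\nu|)
\]
uniformly in $r$. This uniform bound takes care of the term $\E_\ast[\mu_ty_t\sech^2]$. For the terms carrying $\mu_t^2$, we use $\mu_t^2\sech^2\le\mu_t^2\le\bar\alpha_t\|\theta\|^2\|\bx\|^2$, which is integrable by $\E\|\bx\|^2<\infty$ and independent of $r$. Both dominating functions are free of $r$, so dominated convergence over $(\bx,z,\varepsilon_t)$ yields convergence to zero for $V_t$, for $\nabla_\nu V_t$, and for $\langle\nabla_\theta\bar N_t,\theta\rangle$.

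Finally, we justify the interchange of derivative and expectation that defines $\nabla_\nu V_t$ at each fixed $r$. The $\nu$-derivative of the integrand is bounded by $\mu_t^2$, which is integrable, so the interchange holds. Combining the two limits gives $|\nabla_\nu\calL_t|\to0$.
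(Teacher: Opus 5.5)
Your proposal is correct and follows essentially the same route as the paper. Both arguments use Proposition~\ref{prop:sm_gradients} to reduce $\nabla_\nu\calL_t$ and $V_t$ to expectations of $(\mu_t y_t+\mu_t^2\tanh(\mu_t y_t+\nu))\sech^2(\mu_t y_t+\nu)$ and $\mu_t^2\sech^2(\mu_t y_t+\nu)$, bound these by majorants free of $\|\theta^\ast\|$ (a constant plus $|\nu|$ via the shift $\mu_t y_t=(\mu_t y_t+\nu)-\nu$, and $\mu_t^2$), and apply dominated convergence on the common coupling, where the projection condition forces $|\mu_t y_t+\nu|\to\infty$ almost surely. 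Your explicit justification of differentiating $V_t$ under the expectation is a small addition that the paper leaves to its earlier propositions.
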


\begin{remark}[Blindness at High SNR]\label{rmk:pointwise_sm_blindness}
Under the fixed-scale high-SNR regime of Proposition~\ref{prop:pointwise_sm_blindness}, Lemma~\ref{lemma:high_snr_em_imbalance} gives the population EM limit \(\tanh\nu^+=N_{\theta^\ast}(\theta_t,\nu)\to(2/\pi)\arcsin(\rho)\tanh\nu^\ast\) for isotropic Gaussian covariates and \(\rho:=\langle\theta,\theta^\ast\rangle/(\|\theta\|\|\theta^\ast\|)\), which agrees with the infinite-SNR population update in \citet[Corollary~3.3]{luo2024unveiling}. Its independence from the incoming \(\nu\) shows that high-SNR also affects population EM, while the dependence on \(\rho\) and \(\nu^\ast\) retains true-imbalance information through regression alignment.
The first limit in Proposition~\ref{prop:pointwise_sm_blindness} implies blindness wrt. the mixing weight because every finite \(\nu\)
parametrizes an interior candidate mixing weights bijectively through
\(\pi(1)=(1+\tanh\nu)/2, \pi(2)=(1-\tanh\nu)/2\), its vanishing gradient means that increasing
component separation eliminates the local first-order signal of the fixed-scale score matching loss for changing mixing weights at the fixed model pair,
even though the true mixing weights remain fixed. The second limit in Proposition~\ref{prop:pointwise_sm_blindness} concerns the accompanying latent-variance term: for a nonzero candidate satisfying the projection condition, its collapse records the disappearance of uncertainty about the component assignment. 
\end{remark}

\FloatBarrier
\begin{figure}[!ht]
    \centering
    \includegraphics[width=.94\linewidth]{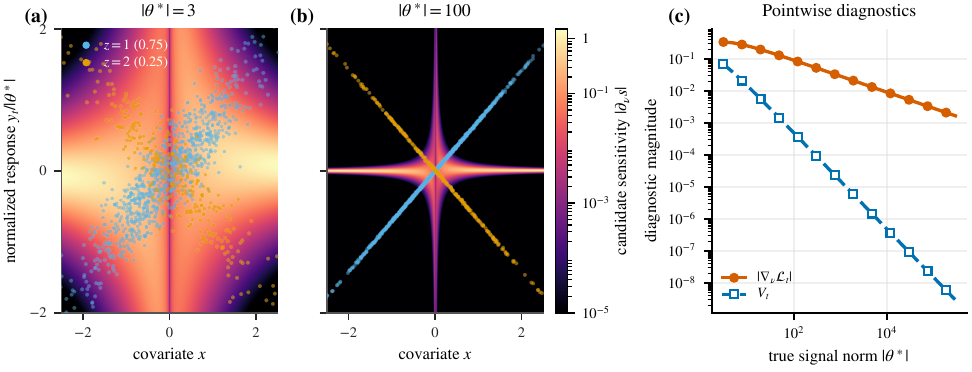}
    \caption{\textbf{Score Matching Blindness at High SNR in Proposition~\ref{prop:pointwise_sm_blindness}.}
    The displayed instance uses \(d=1\), \(X\sim\calN(0,1)\),
    \(\bar\alpha_t=\tfrac12\), the fixed candidate
    \((\theta,\nu)=(0.80,-0.35)\), and \(\nu^\ast=0.55\), while
    \(\theta^\ast>0\) grows along the positive ray. This setting satisfies
    Proposition~\ref{prop:pointwise_sm_blindness}. (a)--(b) Following
    \citet{wenliang2020blindness}, samples from the varying truths are overlaid
    on the fixed-candidate sensitivity
    \(|\partial_\nu s_{\theta_t,\nu}(y_t,x)|=|\mu_t|\sech^2(\mu_ty_t+\nu)\)
    on a log scale. The normalized responses concentrate on
    \(y_t/\|\theta^\ast\|\simeq\pm\sqrt{\bar\alpha_t}\,x\), and the appreciable
    sensitivity concentrates near the ray intersection. (c) Quadrature shows
    that \(|\nabla_\nu\calL_t|\) and the latent variance term
    \(V_t=V(\theta_t,\nu)/\bar\alpha_t\) decrease along the fixed ray, providing
    pointwise evidence at one fixed model and scale. Appendix
    Figure~\ref{fig:high-snr-blindness-controls} provides fixed-scale and
    cross-entropy controls together with an auxiliary moving-candidate
    diagnostic for a separate setting. At high noise,
    Proposition~\ref{prop:high_noise_effective_coefficient} gives
    \(\frac12\|\theta\tanh\nu-c^\ast\|_\Sigma^2\), whose directional signal
    vanishes at balance. At low noise, Proposition~\ref{prop:low_noise_em}
    retains operator-Jacobian and
    posterior-variance corrections beyond the EM residuals.}
    \label{fig:high-snr-blindness}
\end{figure}

\section{Numerical Experiments}\label{sec:experiments}
In the section of numerical experiments, we first assess the
finite-sample behavior of the path-integrated estimator and then check the two endpoint gradient expansions, the high-noise gradient dynamics,
and the pointwise high-SNR blindness result. Appendix~\ref{sup:experiment}
provides the complete configurations of numerical experiments for generating the figures, additional numerical checks, and an auxiliary finite-noise spectral diagnostic.

Under the terminal schedule of
Theorem~\ref{thm:consistency_asymptotic_normality} and after applying its
joint sign alignment, Figure~\ref{fig:finite-sample-scaling} illustrates the
predicted parametric scaling and joint calibration of the integrated
estimator under the assumptions stated in Theorem~\ref{thm:consistency_asymptotic_normality}.

Figure~\ref{fig:endpoint-gradients} checks the componentwise remainder orders
in Propositions~\ref{prop:low_noise_em} and~\ref{prop:high_noise_effective_coefficient} at
the fixed model pair.  Panel (a) uses the complete low-noise expression,
including the EM operator and latent variance, while
panel (b) uses the high-noise effective-coefficient expression.

For the displayed admissible initializations,
Figure~\ref{fig:high-noise-dynamics} illustrates the polynomial upper bound,
geometric decay, and exceptional saddle in
Theorem~\ref{thm:global_convergence_high_noise}. 

For a one-dimensional specialization of
Proposition~\ref{prop:pointwise_sm_blindness},
Figure~\ref{fig:high-snr-blindness} shows the imbalance gradient and
posterior-variance term decreasing along the fixed-candidate, fixed-scale ray,
thereby illustrating the proposition's pointwise conclusion at one fixed
candidate and scale.

\section{Conclusions}\label{sec:conclusion}
This paper studies variance-preserving diffusion of the response in
two-component mixed linear regression with unknown mixing weights and
distinguishes the statistical information accumulated by integrating Score
Matching along the diffusion path from the loss geometry and optimization
signal available at a fixed scale of the diffusion noise level. For the first question posed in
the introduction, Proposition~\ref{prop:terminal_kl_pinsker_zero_set} shows
that, under the stated identifiability conditions, the terminal population KL
divergence at every positive retained-signal level has the ground
truth as its global minimizers. Main
Theorem~\ref{thm:consistency_asymptotic_normality} proves that the
path-integrated score matching estimator recovers the regression parameters and
mixing weights up to their joint-sign symmetry. The same Main Theorem~\ref{thm:consistency_asymptotic_normality}  establishes asymptotic
normality with inverse-Fisher covariance, matching the limiting distribution
of the maximum-likelihood estimator.

For the second question, Propositions~\ref{prop:em_update}
and~\ref{prop:cross_entropy} identify the diffused MLR Expectation--Maximization (EM)
operators and express the cross-entropy gradients as EM residuals, while Main
Theorem~\ref{thm:information_theoretical_decomposition} gives equivalent
cross-entropy and EM-related decompositions of the fixed-scale score
matching loss. Corollaries~\ref{cor:sm_gradients_entropy_activation}--\ref{cor:sm_gradients_em_variance}
and Proposition~\ref{prop:sm_gradients} consequently show the decompositions of the
fixed-scale score matching gradient as a combination of EM
residuals with operator-Jacobian and latent variance terms.
The endpoint and signal-strength results explain how noise changes both the
optimization dynamics and the parameter information retained by score matching
At low noise, Equation~\eqref{eq:low_noise_loss_limit} identifies the
limiting loss, and Proposition~\ref{prop:low_noise_em} shows that the gradient
retains the EM-related correction structure. At high noise,
Proposition~\ref{prop:high_noise_effective_coefficient} shows that the limiting
loss depends on the regression vector and mixing imbalance only through their
effective coefficient. Under isotropic covariance, Main
Theorem~\ref{thm:global_convergence_high_noise} proves an
\(\calO(K^{-2})\) loss bound when this effective target is zero and eventual
geometric decay for generic initializations when it is nonzero, with only a
measure-zero exceptional basin attracted to the saddle manifold. Finally,
Proposition~\ref{prop:pointwise_sm_blindness} proves that, along a fixed
high-SNR ground-truth ray and at each covered fixed candidate and diffusion
scale, the imbalance gradient and latent-variance term converge pointwise to
zero, eliminating the local first-order signal for changing the candidate
mixing weights.


\FloatBarrier
\bibliographystyle{tmlr}
\bibliography{ref_diffusion_2MLR}

\newpage
\appendix
\onecolumn 
\newpage


\part{Appendices}
\parttoc

The appendices are organized as follows.

\begin{itemize}
    \item Appendix~\ref{sup:lemma}: We derive the diffused MLR density and score,
    establish the relative-entropy dissipation and the population and empirical
    likelihood bridges for path-integrated Score Matching, characterize the
    terminal KL zero set and its Wasserstein relation, and prove the covariance
    formulas for the results in
    Sections~\ref{sec:setup}--\ref{sec:experiments}.
    \item Appendix~\ref{sup:setup}: Using the likelihood bridges from
    Appendix~\ref{sup:lemma}, we prove consistency up to joint sign,
    the aligned Gaussian limit with inverse-Fisher covariance, and
    high-probability attainment of the empirical Score Matching minimum for
    the results in Section~\ref{sec:asymptotic}.
    \item Appendix~\ref{sup:connect}: We prove the exact
    cross-entropy--activation and EM--latent-variance decompositions, derive
    their associated gradient identities, and establish the low- and
    high-noise endpoint results from
    Section~\ref{sec:connect}.
    \item Appendix~\ref{sup:theory}: We prove the convergence rates of the
    high-noise gradient dynamics, establish pointwise Score Matching blindness
    along fixed high-SNR rays, and derive the hard-assignment EM comparison for
    the results in Section~\ref{sec:theory}.
    \item Appendix~\ref{sup:experiment}: We report the numerical configurations,
    implementation details, calibration calculations, and additional spectral
    and high-SNR blindness diagnostics for the results in
    Section~\ref{sec:experiments}.
\end{itemize}

\newpage

\section{Auxiliary Identities and Bounds for Diffused Mixed Linear Regression}\label{sup:lemma}
This appendix develops the model-specific identities and bounds used by the
main proofs, proceeding from the diffused MLR representation and score
formulae to the relative-entropy, terminal-identification, Wasserstein, and
fourth-order covariance results.
\begin{lemma}[Equation~\eqref{eq:yt} derived from MLR~\eqref{eq:2mlr} and SDE~\eqref{eq:sde}]\label{lemma:mlr_sde}
    Let \(y_0\) be generated by MLR~\eqref{eq:2mlr} with fixed parameters
    \(\theta,\pi\). Suppose that \(\varepsilon_0\sim\calN(0,1)\) is
    independent of \((\bx,z)\). Let \(y_t\) be generated by
    SDE~\eqref{eq:sde} with noise schedule \(\beta_t\), whose Brownian motion
    is independent of \((\bx,z,\varepsilon_0)\). Then, for every \(t>0\),
    \(p(y_t\mid y_0,\bx;\theta,\pi)=p(y_t\mid y_0)
    =\calN(y_t;y_0\sqrt{\bar\alpha_t},1-\bar\alpha_t)\), and
    Equation~\eqref{eq:yt} holds.
    \[
        y_t = y_0\sqrt{\bar{\alpha}_t} + \xi_t \sqrt{1-\bar{\alpha}_t} = (-1)^{z+1} \langle \theta_t, \bx \rangle + \varepsilon_t = (-1)^{z+1} \mu_t + \varepsilon_t,
    \]
    where \(\bar{\alpha}_t := \exp\left(-\int_0^t \beta_\tau \d \tau \right)\) is the cumulative noise schedule,
    \(\theta_t := \theta \sqrt{\bar{\alpha}_t}\) is the scaled regression
    parameter, and \(\mu_t := \langle \theta_t, \bx \rangle\) is the
    corresponding mean. The Gaussian noises
    \(\xi_t,\varepsilon_t\sim\mathcal{N}(0,1)\) are independent of
    \((\bx,z)\).
\end{lemma}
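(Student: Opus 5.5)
We verify the two representations of \(y_t\) in turn, conditioning on \((y_0,\bx,z)\) throughout.

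\textbf{Step 1: solve the linear SDE.} The SDE~\eqref{eq:sde} is a linear (Ornstein--Uhlenbeck-type) equation with deterministic coefficients. Multiplying by the integrating factor \(\exp(\frac12\int_0^t\beta_\tau\d\tau)=\bar\alpha_t^{-1/2}\) and applying Itô's formula gives
\[
y_t=\sqrt{\bar\alpha_t}\,y_0+\sqrt{\bar\alpha_t}\int_0^t\bar\alpha_s^{-1/2}\sqrt{\beta_s}\,\d B_s .
\]
The stochastic integral has a deterministic integrand, so it is a centered Gaussian. It is independent of \((\bx,z,\varepsilon_0)\) because the Brownian motion is.

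\textbf{Step 2: compute its variance.} By the Itô isometry, the variance of the noise term is
\[
\bar\alpha_t\int_0^t\beta_s\bar\alpha_s^{-1}\d s .
\]
Since \(\frac{\d}{\d s}\bar\alpha_s^{-1}=\beta_s\bar\alpha_s^{-1}\), this equals \(\bar\alpha_t(\bar\alpha_t^{-1}-1)=1-\bar\alpha_t\). Writing the noise term as \(\xi_t\sqrt{1-\bar\alpha_t}\) with \(\xi_t\sim\calN(0,1)\) independent of \((y_0,\bx,z)\) gives \(y_t=y_0\sqrt{\bar\alpha_t}+\xi_t\sqrt{1-\bar\alpha_t}\). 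This is the first equality. It also shows \(p(y_t\mid y_0,\bx;\theta,\pi)=\calN(y_t;y_0\sqrt{\bar\alpha_t},1-\bar\alpha_t)\), which does not depend on \((\bx,\theta,\pi)\), so it equals \(p(y_t\mid y_0)\).

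\textbf{Step 3: substitute the MLR model.} Plugging~\eqref{eq:2mlr} into Step 2 gives
\[
y_t=(-1)^{z+1}\langle\theta\sqrt{\bar\alpha_t},\bx\rangle+\varepsilon_0\sqrt{\bar\alpha_t}+\xi_t\sqrt{1-\bar\alpha_t}.
\]
Set \(\varepsilon_t:=\varepsilon_0\sqrt{\bar\alpha_t}+\xi_t\sqrt{1-\bar\alpha_t}\). Since \(\varepsilon_0\) and \(\xi_t\) are independent standard Gaussians, both independent of \((\bx,z)\), \(\varepsilon_t\) is Gaussian with variance \(\bar\alpha_t+1-\bar\alpha_t=1\) and is independent of \((\bx,z)\). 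With \(\theta_t=\theta\sqrt{\bar\alpha_t}\) and \(\mu_t=\langle\theta_t,\bx\rangle\), this gives the remaining two equalities.

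\textbf{Main obstacle.} The only delicate point is the independence bookkeeping. Both \(\xi_t\) and \(\varepsilon_0\) must be independent of \((\bx,z)\), and of each other, before they can be merged into \(\varepsilon_t\). This follows from the lemma's hypotheses: \(\varepsilon_0\) is independent of \((\bx,z)\), and the Brownian motion (hence \(\xi_t\)) is independent of \((\bx,z,\varepsilon_0)\). The rest of the argument is routine.
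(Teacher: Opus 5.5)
Your proof is correct and follows essentially the same route as the paper: the integrating factor \(\bar\alpha_t^{-1/2}\) (the paper applies It\^{o}'s formula to \(y_t/\sqrt{\bar\alpha_t}\)), the It\^{o} isometry with \(\frac{\d}{\d s}\bar\alpha_s^{-1}=\beta_s\bar\alpha_s^{-1}\), and substitution of the MLR representation. The only difference is the order of the steps. You extract \(\xi_t\) first and build \(\varepsilon_t\) from two independent standard Gaussians, whereas the paper computes \(\Var[\varepsilon_t]=\bar\alpha_t\bigl(1+\int_0^t\beta_\tau\bar\alpha_\tau^{-1}\d\tau\bigr)=1\) directly and then defines \(\xi_t\).
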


\begin{lemma}[Negative Log-Likelihood]\label{lemma:nll}
    For the model in Section~\ref{subsec:2mlr}, the negative log-likelihood at
    the scale of the diffusion noise level \(t\) is
    \(F(y_t,\mu_t,\nu):=-\ln p(y_t\mid\bx;\theta_t,\pi)\), where
    \(\theta_t=\theta\sqrt{\bar\alpha_t}\). It has the form
    \[
        F(y_t, \mu_t, \nu) = \frac{1}{2}(y_t^2 + \mu_t^2) - \ln \frac{\cosh(\mu_t y_t+\nu)}{\cosh \nu} + \frac{1}{2} \ln(2\p).
    \]
\end{lemma}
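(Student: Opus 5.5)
We compute the conditional density directly from the latent-variable representation of the diffused response in Lemma~\ref{lemma:mlr_sde}. Conditionally on \(\bx\), the latent sign \(z\) is independent of \(\bx\), with \(\Pr(z=1)=\pi(1)\) and \(\Pr(z=2)=\pi(2)\). Given \((z,\bx)\), the representation \(y_t=(-1)^{z+1}\mu_t+\varepsilon_t\) with \(\varepsilon_t\sim\calN(0,1)\) independent of \((\bx,z)\) gives the conditional law \(\calN((-1)^{z+1}\mu_t,1)\). Marginalizing \(z\) therefore yields the two-component mixture
\[
p(y_t\mid\bx;\theta_t,\pi)=\frac{1}{\sqrt{2\p}}\Big(\pi(1)\,\mathe^{-(y_t-\mu_t)^2/2}+\pi(2)\,\mathe^{-(y_t+\mu_t)^2/2}\Big).
\]

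Next we factor out the common Gaussian part. Expanding the squares gives \((y_t\mp\mu_t)^2=y_t^2+\mu_t^2\mp2\mu_ty_t\), so the common factor is \(\mathe^{-(y_t^2+\mu_t^2)/2}\). This leaves \(\pi(1)\mathe^{\mu_ty_t}+\pi(2)\mathe^{-\mu_ty_t}\).

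We then substitute the half log-odds parametrization from Section~\ref{subsec:2mlr}. Since \(\pi(1)=\mathe^{\nu}/(\mathe^{\nu}+\mathe^{-\nu})\) and \(\pi(2)=\mathe^{-\nu}/(\mathe^{\nu}+\mathe^{-\nu})\), which is equivalent to \(\pi(1)=(1+\tanh\nu)/2\), the bracket becomes
\[
\frac{\mathe^{\mu_ty_t+\nu}+\mathe^{-(\mu_ty_t+\nu)}}{\mathe^{\nu}+\mathe^{-\nu}}=\frac{\cosh(\mu_ty_t+\nu)}{\cosh\nu}.
\]
Taking \(-\ln\) of the product then gives the stated \(F\), with the constant \(\tfrac12\ln(2\p)\).

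There is no real obstacle here; the only care needed is in the conditioning step. One must check that \(z\) and \(\varepsilon_t\) are independent of \(\bx\), so that the marginalization over \(z\) is valid conditionally on \(\bx\). This is guaranteed by the independence assumptions and by Lemma~\ref{lemma:mlr_sde}, since the Brownian motion is independent of \((\bx,z,\varepsilon_0)\).

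As a byproduct, \(\partial_{y_t}F=y_t-\mu_t\tanh(\mu_ty_t+\nu)\), which is consistent with the score formula~\eqref{eq:score}.
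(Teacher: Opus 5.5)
Your proposal is correct and follows essentially the same route as the paper's proof: you marginalize the latent sign using the conditional Gaussian law from Lemma~\ref{lemma:mlr_sde}, then factor out \(\mathe^{-(y_t^2+\mu_t^2)/2}\). The substitution \(\pi(1)=\mathe^{\nu}/(2\cosh\nu)\), \(\pi(2)=\mathe^{-\nu}/(2\cosh\nu)\) then collapses the bracket to \(\cosh(\mu_t y_t+\nu)/\cosh\nu\), exactly as in the paper.
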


\begin{lemma}[Stein Score Function]\label{lemma:stein_score}
    The Stein score is
    \(s_{\theta_t,\nu}(y_t,\bx):=\nabla_{y_t}\ln
    p(y_t\mid\bx;\theta_t,\pi)\). It has the form
    \[
        s_{\theta_t, \nu} (y_t, \bx) = - y_t + \tanh (\mu_t y_t + \nu) \mu_t.
    \]
\end{lemma}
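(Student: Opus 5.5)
The plan is to differentiate the explicit log-density from Lemma~\ref{lemma:nll} with respect to \(y_t\). This is legitimate because, by Lemma~\ref{lemma:mlr_sde}, the conditional law of \(y_t\) given \(\bx\) is the two-component Gaussian mixture \(\pi(1)\calN(\mu_t,1)+\pi(2)\calN(-\mu_t,1)\). Its density is smooth in \(y_t\), and
\[
-\ln p(y_t\mid\bx;\theta_t,\pi)=F(y_t,\mu_t,\nu).
\]
The Stein score is therefore simply \(s_{\theta_t,\nu}(y_t,\bx)=-\partial_{y_t}F(y_t,\mu_t,\nu)\). No integration or regularity issue arises, since everything is pointwise in \((y_t,\bx)\).

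The computation itself is short.
\begin{itemize}
\item The quadratic term \(\tfrac12(y_t^2+\mu_t^2)\) has \(y_t\)-derivative \(y_t\).
\item The constant \(\tfrac12\ln(2\p)\) and the normalizer \(\ln\cosh\nu\) do not depend on \(y_t\), so they drop out.
\item The remaining term is \(-\ln\cosh(\mu_t y_t+\nu)\). By the chain rule its derivative is \(-\tanh(\mu_t y_t+\nu)\,\mu_t\).
\end{itemize}
Collecting these gives \(\partial_{y_t}F=y_t-\mu_t\tanh(\mu_t y_t+\nu)\). Negating yields the claimed form \(s_{\theta_t,\nu}(y_t,\bx)=-y_t+\tanh(\mu_t y_t+\nu)\mu_t\).

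As a sanity check that does not rely on Lemma~\ref{lemma:nll}, one can differentiate the mixture directly. Write \(w_1,w_2\) for the posterior responsibilities of the two components. Then
\[
\partial_{y_t}\ln p = w_1(\mu_t-y_t)+w_2(-\mu_t-y_t)=-y_t+(w_1-w_2)\mu_t .
\]
The responsibilities are
\[
w_{1,2}=\frac{\pi(1,2)\,e^{\pm\mu_t y_t}}{\pi(1)e^{\mu_t y_t}+\pi(2)e^{-\mu_t y_t}} .
\]
Substituting \(\pi(1)/\pi(2)=e^{2\nu}\) gives \(w_1-w_2=\tanh(\mu_t y_t+\nu)\), which confirms the formula.

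There is no real obstacle here. The only point requiring care is to keep track that the \(\cosh\nu\) normalization is independent of \(y_t\), and that the half log-odds convention \(\nu=\tfrac12\ln(\pi(1)/\pi(2))\) is exactly what makes the responsibility difference a \(\tanh\) with argument \(\mu_t y_t+\nu\).
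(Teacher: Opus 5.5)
Your proposal is correct and follows essentially the same route as the paper, which also writes the score as $-\partial_{y_t}F(y_t,\mu_t,\nu)$ using Lemma~\ref{lemma:nll} and applies the chain rule to the $\ln\cosh(\mu_t y_t+\nu)$ term, noting that the remaining terms do not depend on $y_t$. Your responsibility-based cross-check, which gives $w_1-w_2=\tanh(\mu_t y_t+\nu)$, is also valid, though it is not needed.
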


\begin{proof}[Proof of Lemma~\ref{lemma:mlr_sde}]
    Applying It\^{o}'s formula to \(y_t/\sqrt{\bar{\alpha}_t}\), using \(\d[1/\sqrt{\bar{\alpha}_t}] = \frac{\beta_t}{2 \sqrt{\bar{\alpha}_t}}\d t\) for \(\bar{\alpha}_t := \exp\left(-\int_0^t \beta_\tau \d \tau \right)\), and substituting \(\d y_t = -\frac{\beta_t}{2}y_t \d t + \sqrt{\beta_t}\d B_t\) from SDE~\eqref{eq:sde} gives
    \[
    \d\left[\frac{y_t}{\sqrt{\bar{\alpha}_t}}\right] 
    = \frac{1}{\sqrt{\bar{\alpha}_t}}\d y_t + y_t \d\left[\frac{1}{\sqrt{\bar{\alpha}_t}}\right]
    = \sqrt{\frac{\beta_t}{\bar{\alpha}_t}}\d B_t.
    \]
    Integrating from \(0\) to \(t\), setting \(\theta_t := \theta \sqrt{\bar{\alpha}_t}\),
    and substituting \(y_0 = (-1)^{z+1}\langle \theta, \bx \rangle + \varepsilon_0\) from MLR~\eqref{eq:2mlr} gives
    \[
    y_t 
    = \sqrt{\bar{\alpha}_t} y_0 + \sqrt{\bar{\alpha}_t}\int_0^t \sqrt{\frac{\beta_\tau}{\bar{\alpha}_\tau}}\d B_\tau
    = (-1)^{z+1} \underbrace{\langle \theta_t, \bx \rangle}_{=:{\color{plum}\;\mu_t}} + \underbrace{\sqrt{\bar{\alpha}_t} \left[\varepsilon_0 + \int_0^t \sqrt{\frac{\beta_\tau}{\bar{\alpha}_\tau}}\d B_\tau\right]}_{=:{\color{royalBlue}\;\varepsilon_t}}
    = (-1)^{z+1} {\color{plum}\mu_t} + {\color{royalBlue}\varepsilon_t}.
    \]
    Since \(\sqrt{\beta_\tau / \bar{\alpha}_\tau}\) is deterministic, the stochastic integral is a zero-mean Gaussian.
    The initial noise satisfies \(\varepsilon_0 \sim \calN(0, 1)\), and the
    forward Brownian motion is independent of \((\bx,z,\varepsilon_0)\).
    Hence their linear combination \(\varepsilon_t\) is also a zero-mean
    Gaussian, with
    \[
    \E[{\color{royalBlue} \varepsilon_t}] = \sqrt{\bar{\alpha}_t} \left( \E[\varepsilon_0] + \E\left[\int_0^t \sqrt{\frac{\beta_\tau}{\bar{\alpha}_\tau}}\d B_\tau\right]\right) = \sqrt{\bar{\alpha}_t} \left(0 + 0\right) = 0.
    \]
    Since \(\varepsilon_0\sim\mathcal{N}(0,1)\) is independent of
    \(\{B_\tau:0\leq\tau\leq t\}\), the variance of \(\varepsilon_t\) is
    the sum of the two variances.
    It\^{o}'s isometry gives \(\Var\big[\int_0^t \sqrt{\frac{\beta_\tau}{\bar{\alpha}_\tau}}\d B_\tau\big] = \int_0^t \frac{\beta_\tau}{\bar{\alpha}_\tau}\d\tau\).
    The identity \(\d \left[ \frac{1}{\bar{\alpha}_\tau}\right] = \frac{\beta_\tau}{\bar{\alpha}_\tau}\d \tau\) then gives
    \(\int_0^t \sqrt{\frac{\beta_\tau}{\bar{\alpha}_\tau}}\d B_\tau \sim \calN(0,1/\bar{\alpha}_t-1)\) and
    \[
    \Var[{\color{royalBlue}\varepsilon_t}] = \bar{\alpha}_t \left(\Var[\varepsilon_0] + \Var\left[\int_0^t \sqrt{\frac{\beta_\tau}{\bar{\alpha}_\tau}}\d B_\tau\right]\right) 
    = \bar{\alpha}_t \left(1 + \int_0^t \frac{\beta_\tau}{\bar{\alpha}_\tau}\d\tau\right) 
    = \bar{\alpha}_t \left(1+ \frac{1}{\bar{\alpha}_\tau}\Big\vert_0^t\right)  
    = 1.
    \]
    Thus, \(\varepsilon_t \sim \mathcal{N}(0, 1)\) and
    \(p(y_t\mid y_0,\bx;\theta,\pi)=p(y_t\mid y_0)
    =\calN(y_t;y_0\sqrt{\bar\alpha_t},1-\bar\alpha_t)\).
    For \(t>0\), it follows that \(\xi_t := (y_t - y_0\sqrt{\bar{\alpha}_t})/\sqrt{1-\bar{\alpha}_t} = \sqrt{\frac{\bar{\alpha}_t}{1-\bar{\alpha}_t}}\int_0^t \sqrt{\frac{\beta_\tau}{\bar{\alpha}_\tau}}\d B_\tau \sim \mathcal{N}(0, 1)\).
    The assumed independence of \(\varepsilon_0\), the Brownian motion, and
    \((\bx,z)\) shows that \(\xi_t\) and \(\varepsilon_t\) are independent
    of \((\bx,z)\).
\end{proof}

\begin{proof}[Proofs of Lemmas~\ref{lemma:nll} and~\ref{lemma:stein_score}]
    We first obtain the conditional density of \(y_t\) by marginalizing the
    joint density over \(z\in\{1,2\}\). The model assumptions give
    \(\Pr(z\mid\bx;\theta_t,\pi)=\Pr(z\mid\pi)=\pi(z)\). Hence
    \[
    F(y_t, \mu_t, \nu) := -\ln p(y_t \mid \bx; \theta_t, \pi)
    = -\ln \sum_{z \in \{1,2\}} p(y_t, z\mid \bx;\theta_t, \pi)
    = -\ln \sum_{z \in \{1,2\}} \pi(z) \cdot p(y_t \mid z, \bx; \theta_t,\pi).
    \]
    Lemma~\ref{lemma:mlr_sde} gives \(\varepsilon_t\sim\mathcal{N}(0,1)\),
    independent of \((\bx,z)\). Hence
    \(p(y_t \mid z, \bx; \theta_t, \pi) = p(y_t \mid z, \mu_t)=\calN((-1)^{z+1}\mu_t, 1)\), where \(\mu_t=\langle\theta_t,\bx\rangle\).
    \[
    F(y_t, \mu_t, \nu) = -\ln \sum_{z \in \{1,2\}} \pi(z) \cdot p(y_t \mid z, \mu_t) = -\ln \sum_{z \in \{1,2\}} \pi(z) \cdot \calN((-1)^{z+1}\mu_t, 1).
    \]
    The imbalance parameter \(\nu := (\ln\pi(1)-\ln\pi(2))/2\) is in one-to-one correspondence with the mixing weights. In particular,
    \(\pi(1) = \frac{\mathe^\nu}{2 \cosh \nu}\) and \(\pi(2) = \frac{\mathe^{-\nu}}{2 \cosh \nu}\). Substitution gives
    \begin{eqnarray*}
        F(y_t, \mu_t, \nu)
        &=& -\ln \left( \frac{\mathe^{\nu}}{2 \cosh \nu} \frac{1}{\sqrt{2\p}} \mathe^{-\frac{1}{2}(y_t^2 - 2y_t\mu_t + \mu_t^2)} + \frac{\mathe^{-\nu}}{2 \cosh \nu} \frac{1}{\sqrt{2\p}} \mathe^{-\frac{1}{2}(y_t^2 + 2y_t\mu_t + \mu_t^2)} \right) \\
        &=& \frac{1}{2}(y_t^2 + \mu_t^2) - \ln \left( e^{\nu + y_t\mu_t} + e^{-(\nu + y_t\mu_t)} \right) + \ln(2\cosh \nu) + \frac{1}{2} \ln(2\p) \\
        &=& \frac{1}{2}(y_t^2 + \mu_t^2)
        - \ln \frac{\cosh(\nu + y_t\mu_t)}{\cosh \nu}
        + \frac{1}{2} \ln(2\p).
    \end{eqnarray*}
    For Lemma~\ref{lemma:stein_score}, write \(\nabla\) for \(\nabla_{y_t}\). The Stein score is the negative gradient of \(F\). Since \(\mu_t\), \(\nu\), and \(\frac{1}{2}\ln(2\p)\) do not depend on \(y_t\), the chain rule gives
    \[
    s_{\theta_t, \nu} (y_t, \bx) := \nabla \ln p(y_t \mid \bx; \theta_t, \pi) = -\nabla F(y_t, \mu_t, \nu) = - \left( y_t - \nabla \ln \cosh(\mu_t y_t + \nu) \right) = - y_t + \tanh(\mu_t y_t + \nu) \mu_t.
    \]
\end{proof}

\begin{lemma}[Conditional Relative Entropy Dissipation]\label{lemma:kl_derivative}
    The derivative of the KL divergence between the true and model distributions satisfies
    \[
    \begin{aligned}
        -\frac{\d}{\d t}\KL\left(p(y_t|\bx;\theta_t^\ast,\pi^\ast) \parallel p(y_t|\bx;\theta_t,\pi)\right)
        = \frac{\beta_t}{2}\E_{y_t|\bx;\theta_t^\ast,\pi^\ast}
        \|\nabla \ln p(y_t|\bx;\theta_t,\pi) - \nabla \ln p(y_t|\bx;\theta_t^\ast,\pi^\ast)\|^2.
    \end{aligned}
    \]
\end{lemma}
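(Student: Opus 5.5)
The plan is to fix the covariate $\bx$ and treat both conditional laws as solutions of the same linear Fokker--Planck equation. Then I differentiate the relative entropy in time and integrate by parts, which is the classical two-law de Bruijn computation.

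Write $p_t(y):=p(y\mid\bx;\theta_t^\ast,\pi^\ast)$ and $q_t(y):=p(y\mid\bx;\theta_t,\pi)$. By Lemma~\ref{lemma:mlr_sde}, conditionally on $\bx$ (and on the parameters), $y_t$ is the solution of SDE~\eqref{eq:sde} started from the corresponding law of $y_0$. So both densities satisfy
\[
\partial_t p_t=\frac{\beta_t}{2}\,\partial_y\bigl(y\,p_t+\partial_y p_t\bigr)=\frac{\beta_t}{2}\,\partial_y\bigl(p_t\,(y+\partial_y\ln p_t)\bigr),
\]
and likewise for $q_t$. This can also be checked directly from the explicit two-component form in Lemma~\ref{lemma:nll}. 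Each component $\calN(\pm\mu_t,1)$ with $\d\mu_t/\d t=-(\beta_t/2)\mu_t$ solves the equation, and the equation is linear.

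Next I differentiate under the integral:
\[
\frac{\d}{\d t}\int p_t\ln\frac{p_t}{q_t}\,\d y=\int \partial_t p_t\,\ln\frac{p_t}{q_t}\,\d y+\int\partial_t p_t\,\d y-\int\frac{p_t}{q_t}\,\partial_t q_t\,\d y .
\]
The middle term vanishes by mass conservation. Substituting the Fokker--Planck form and integrating by parts once in each remaining term gives the following two expressions.

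The first term becomes $-\frac{\beta_t}{2}\int p_t\,(y+\partial_y\ln p_t)\,\partial_y\ln\frac{p_t}{q_t}\,\d y$.

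For the third term, use $\partial_y(p_t/q_t)=(p_t/q_t)\,\partial_y\ln(p_t/q_t)$. It then becomes $+\frac{\beta_t}{2}\int p_t\,(y+\partial_y\ln q_t)\,\partial_y\ln\frac{p_t}{q_t}\,\d y$.

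Adding them, the drift terms $y$ cancel and we obtain
\[
-\frac{\beta_t}{2}\int p_t\,\bigl(\partial_y\ln p_t-\partial_y\ln q_t\bigr)^2\,\d y .
\]
By Lemma~\ref{lemma:stein_score}, this is exactly $-\frac{\beta_t}{2}$ times the stated expected squared score difference, which gives the claim.

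The main obstacle is the analytic justification: differentiating under the integral sign and the vanishing of boundary terms $[\,p_t(y+\partial_y\ln p_t)\ln(p_t/q_t)\,]_{-\infty}^{\infty}$ and $[\,q_t(y+\partial_y\ln q_t)\,p_t/q_t\,]$. Here the explicit forms do the work.

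By Lemma~\ref{lemma:nll}, $\ln(p_t/q_t)=\tfrac12(\mu_t^2-(\mu_t^\ast)^2)+\ln\cosh(\mu_t^\ast y+\nu^\ast)-\ln\cosh(\mu_t y+\nu)+\text{const}$. This grows at most linearly in $|y|$. By Lemma~\ref{lemma:stein_score}, both scores are $-y+\calO(1)$. Meanwhile $p_t$ has Gaussian tails, so every boundary product is $\calO(\mathrm{poly}(|y|)e^{-y^2/2+c|y|})\to0$.

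The time derivatives of these integrands are likewise dominated, locally uniformly in $t$, by integrable functions of the form $\mathrm{poly}(|y|)e^{-y^2/4}$. This uses continuity of $\beta_t$ and $\bar\alpha_t$, which justifies the interchange by dominated convergence.
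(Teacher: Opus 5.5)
Your proof is correct, but it follows a different (Eulerian) route from the paper's trajectory-based argument.

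\textbf{The paper's route.} In your notation ($p_t$ true, $q_t$ model), the paper derives $\partial_t\ln q_t$ from the Fokker--Planck equation. It then applies It\^{o}'s formula to $\ln q_t(y_t)$ along the forward SDE under the true path measure; the drift cancels inside the It\^{o} differential. Dropping the stochastic integral as mean zero gives $\frac{\d}{\d t}\E_{p_t}[\ln q_t]=\frac{\beta_t}{2}\E_{p_t}[(\partial_y\ln q_t)^2+2\partial_y^2\ln q_t+1]$. One integration by parts turns $\partial_y^2\ln q_t$ into $-\partial_y\ln p_t\,\partial_y\ln q_t$. The entropy derivative, computed the same way, is then subtracted so that the constant $+1$ cancels.

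\textbf{Your route.} You differentiate the KL integral directly, use both Fokker--Planck equations in divergence form, and integrate by parts once per term. The drift $y$ then cancels between the two terms. No stochastic calculus is needed: neither Fubini over path space nor the martingale property of the It\^{o} integral.

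\textbf{What each buys.} The paper's organization isolates the cross-entropy derivative for an arbitrary reference law transported by the same SDE. It reuses this for Lemma~\ref{lemma:log_likelihood_derivative} with $p(y_t\mid y_0)$ as reference. Your computation adapts equally well if you differentiate $\int p_t\ln q_t\,\d y$ alone, where $\int p_t\,y\,\partial_y\ln p_t\,\d y=-1$ supplies the $+1$. Your version is more elementary. It also actually checks the analytic steps---vanishing boundary terms and differentiation under the integral---from the explicit mixture form: a log-ratio of linear growth, scores $-y+\calO(1)$, and Gaussian tails. The paper only asserts these steps.
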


\begin{lemma}[Log-likelihood Derivative and Score Matching Loss]\label{lemma:log_likelihood_derivative}
    Let \(\Pr_n\) be the empirical measure defined before
    Equation~\eqref{eq:auxiliary_loss_finite}. Then, for every \(t>0\),
    \[
    \frac{\d}{\d t}\E_{\Pr_n}\E_{y_t\mid y_0} \ln p(y_t|\bx;\theta_t,\pi) = \frac{\beta_t}{2}\E_{\Pr_n}\E_{y_t\mid y_0} \left[ \|\nabla \ln p(y_t|\bx;\theta_t,\pi)\|^2 -2 \langle \nabla \ln p(y_t|\bx;\theta_t,\pi), \nabla \ln p(y_t|y_0) \rangle + 1\right].
    \]
\end{lemma}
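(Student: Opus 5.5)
The plan is to work pointwise in the observed pair \((y_0,\bx)\) and then average, since \(\E_{\Pr_n}\) is a finite average. Fix \((y_0,\bx)\) and write \(q_t(y):=p(y_t=y\mid y_0)=\calN(y;y_0\sqrt{\bar\alpha_t},1-\bar\alpha_t)\) and \(p_t(y):=p(y_t=y\mid\bx;\theta_t,\pi)\). Let \(s:=\nabla\ln p_t\) be the model score of Lemma~\ref{lemma:stein_score}, and note that \(\nabla\ln q_t=\nabla\ln p(y_t\mid y_0)\).

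The key structural fact is that both densities solve the same Fokker--Planck equation of the variance-preserving SDE~\eqref{eq:sde}:
\[
\partial_t f=\frac{\beta_t}{2}\bigl[\partial_y(y f)+\partial_y^2 f\bigr].
\]
For \(q_t\) this holds because it is the transition law of~\eqref{eq:sde} started from \(\delta_{y_0}\). For \(p_t\), Lemma~\ref{lemma:mlr_sde} applied with the model parameters \((\theta,\pi)\) shows that \(p_t\) is the time-\(t\) marginal of the same OU flow started from \(p(y_0\mid\bx;\theta,\pi)\), so it satisfies the same equation. As a fallback, I would verify this directly by differentiating the explicit form in Lemma~\ref{lemma:nll}, using \(\d\mu_t/\d t=-(\beta_t/2)\mu_t\).

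Next I differentiate under the integral:
\[
\frac{\d}{\d t}\int q_t\ln p_t\,\d y=\int \partial_t q_t\,\ln p_t\,\d y+\int q_t\,\frac{\partial_t p_t}{p_t}\,\d y .
\]
For the first term, integrating by parts once gives
\[
-\frac{\beta_t}{2}\int (y q_t+\partial_y q_t)\,s\,\d y=-\frac{\beta_t}{2}\,\E_{q_t}\bigl[(y+\nabla\ln q_t)\,s\bigr].
\]
For the second term, I use \(\partial_y(yp_t)/p_t=1+ys\) and \(\partial_y^2p_t/p_t=\partial_y s+s^2\). This gives \(\frac{\beta_t}{2}\E_{q_t}[1+ys+\partial_y s+s^2]\). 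One further integration by parts turns \(\E_{q_t}[\partial_y s]\) into \(-\E_{q_t}[s\,\nabla\ln q_t]\).

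Adding the two terms, the \(ys\) contributions cancel. What remains is \(\frac{\beta_t}{2}\E_{q_t}\bigl[s^2-2s\nabla\ln q_t+1\bigr]\), which is the integrand claimed in the lemma. Averaging over the \(n\) observed pairs then gives the statement.

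The main obstacle I expect is the analytic justification: differentiating under \(\int\d y\) and discarding the boundary terms. For fixed \(t>0\) and fixed \((y_0,\bx)\), \(q_t\) is Gaussian with variance \(1-\bar\alpha_t>0\). Its derivatives are also Gaussian-tailed, and locally uniform in \(t\) because \(\bar\alpha_t\) is smooth and \(\bar\alpha_t<1\) for \(t>0\). On the model side, \(|\ln p_t|\) grows at most quadratically in \(y\), \(|s|\le |y|+|\mu_t|\), and \(|\partial_y s|\le 1+\mu_t^2\). Every integrand and boundary product is therefore dominated by a polynomial times a Gaussian uniformly on compact \(t\)-intervals in \((0,\infty)\). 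Dominated convergence then licenses both the interchange of derivative and integral and the vanishing of the boundary terms \(q_t s\ln p_t\) and \(q_t s\) as \(|y|\to\infty\). Because \(\E_{\Pr_n}\) is a finite sum, no moment condition on \(\bx\) is needed.
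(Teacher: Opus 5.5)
Your proof is correct, and it reaches the identity by a somewhat different mechanism than the paper. The paper proves this lemma by rerunning the proof of Lemma~\ref{lemma:kl_derivative} with \(p_t^\ast\) replaced by \(p(y_t\mid y_0)\). That proof uses the Fokker--Planck equation for \(p_t\) only, rewrites it as \(\partial_t\ln p_t=\frac{\beta_t}{2}[1+y s+\partial_y s+s^2]\), and then applies It\^{o}'s formula to \(\ln p_t(y_t)\) along the forward SDE started at \(y_0\). It drops the stochastic integral by its zero mean, uses Fubini and the fundamental theorem of calculus to recover the time derivative, and finally integrates \(\E[\Delta\ln p_t]\) by parts against \(p(y_t\mid y_0)\).

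You instead use that the transition density \(q_t\) also solves the Fokker--Planck equation, and you differentiate \(\int q_t\ln p_t\,\d y\) directly. Your integration by parts of the \(\partial_t q_t\) term is the PDE dual of the paper's It\^{o} drift-plus-correction term: both equal \(\frac{\beta_t}{2}\E_{q_t}[-ys+\partial_y s]\). The \(ys\) cancellation is the same in both proofs.

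The paper's route buys economy, since one computation serves both Lemma~\ref{lemma:kl_derivative} and this lemma. Your route avoids stochastic calculus, so you never need the It\^{o} integral to be a true martingale or the \(\tau\)-integrand to be continuous for the fundamental-theorem step; the paper uses both without spelling them out. Your domination argument on compact \(t\)-intervals in \((0,\infty)\), where \(1-\bar\alpha_t\) is bounded away from zero, also makes the interchange of derivative and integral explicit at every \(t>0\). Like the paper, you implicitly need \(\beta\) continuous so that \(\bar\alpha_t\) is \(C^1\); calling \(\bar\alpha_t\) smooth overstates this, but nothing more is used.

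One small slip: the boundary term from your first integration by parts is \(q_t(y+\nabla\ln q_t)\ln p_t\), not \(q_t s\ln p_t\). It is still a polynomial times a Gaussian and vanishes for the same reason.
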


\begin{proof}[Proof of Lemma~\ref{lemma:kl_derivative}]
    Let \(p_t := p(y_t|\bx;\theta_t,\pi)\) denote the model density, and let
    \(p^*_t := p(y_t|\bx;\theta_t^*,\pi^*)\) denote the ground truth density.
    The negative time derivative of the KL divergence is
    \[
    -\frac{\d}{\d t}\KL(p^*_t \parallel p_t) = -\frac{\d}{\d t} \E_{p^*_t} \ln \frac{p^*_t}{p_t} = \frac{\d}{\d t}\E_{p^*_t}[\ln p_t] - \frac{\d}{\d t}\E_{p^*_t}[\ln p^*_t].
    \]
    The forward Kolmogorov operator for SDE~\eqref{eq:sde} is
    \(\calA^*[f] :=
    -\nabla \cdot\left[f \cdot(-\frac{\beta_t}{2}y_t)\right]
    +\frac{1}{2}\Delta\left[f\cdot (\sqrt{\beta_t})^2\right]
    = \frac{\beta_t}{2} \left[\nabla \cdot (f y_t) + \Delta f\right]\).
    Lemma~\ref{lemma:mlr_sde} gives the transition density
    \(p(y_t\mid y_0)=\calN(y_t;y_0\sqrt{\bar\alpha_t},
    1-\bar\alpha_t)\). Its Fokker--Planck--Kolmogorov equation is
    \[
        \partial_t p(y_t \mid y_0) 
        = \frac{\beta_t}{2} \left[\nabla \cdot (p(y_t\mid y_0) y_t) + \Delta p(y_t\mid y_0)\right]
        \equiv \calA^*[p(y_t \mid y_0)].
    \]
    The Chapman--Kolmogorov relation uses the fixed initial law and gives
    \[
        p_t
        =\E_{y_0\mid\bx;\theta,\pi}[p(y_t\mid y_0)]
        =\sum_{k=1}^2\pi(k)\,
          \calN\!\left(y_t;(-1)^{k+1}\mu_0\sqrt{\bar\alpha_t},1\right).
    \]
    Direct differentiation shows that each of these two Gaussian components
    satisfies the FPK equation above, and linearity of the forward operator
    therefore gives
    \[
        \partial_t p_t=\calA^*[p_t]
        =\frac{\beta_t}{2}\left[\nabla\cdot(p_t y_t)+\Delta p_t\right].
    \]
    Dividing by \(p_t\) and using identities \(\frac{\nabla \cdot (p_t y_t)}{p_t} = 1 + \langle \nabla \ln p_t, y_t \rangle\), \(\frac{\Delta p_t}{p_t} = \Delta \ln p_t + \|\nabla \ln p_t\|^2\), 
    we obtain the derivative of the log-density.
    \[
    \partial_t \ln p_t = \frac{\beta_t}{2} \left[ 1 + \langle \nabla \ln p_t, y_t \rangle + \Delta \ln p_t + \|\nabla \ln p_t\|^2 \right].
    \]
    Applying It\^{o}'s formula to \(\ln p_t\) under the forward SDE gives
    \[
    \begin{aligned}
    \d \ln p_t &= \partial_t \ln p_t \d t + \langle \nabla \ln p_t, \d y_t \rangle + \frac{1}{2}\Delta \ln p_t (\d y_t)^2 \\
    &= \partial_t \ln p_t \d t + \left\langle \nabla \ln p_t, -\frac{\beta_t}{2}y_t \d t + \sqrt{\beta_t}\d B_t \right\rangle + \frac{\beta_t}{2}\Delta \ln p_t \d t \\
    &= \frac{\beta_t}{2} \left[ \|\nabla \ln p_t\|^2 + 2\Delta \ln p_t + 1 \right]\d t + \sqrt{\beta_t}\langle \nabla \ln p_t, \d B_t \rangle .
    \end{aligned}
    \]
    Under the ground truth path measure \(\mathbb{P}^*\), the state at each \(\tau \in [0,t]\) has marginal density \(p_\tau^\ast\).
    Fubini's theorem moves the expectation inside the time integral, and the Brownian stochastic integral has mean zero.
    \[
    \begin{aligned}
    \E_{p_t^\ast}[\ln p_t] - \E_{p_0^\ast}[\ln p_0]
    &= \E_{\Pr^\ast}\left[ \int_0^t \frac{\beta_\tau}{2}
        \left[ \|\nabla \ln p_\tau\|^2 + 2\Delta \ln p_\tau + 1 \right]\d \tau \right] \\
    &= \int_0^t \frac{\beta_\tau}{2} \E_{p^*_\tau}
        \left[ \|\nabla \ln p_\tau\|^2 + 2\Delta \ln p_\tau + 1 \right] \d \tau.
    \end{aligned}
    \]
    The fundamental theorem of calculus gives
    \[
    \frac{\d}{\d t}\E_{p_t^\ast}[\ln p_t] = \frac{\beta_t}{2} \E_{p_t^\ast}\left[ \|\nabla \ln p_t\|^2 + 2\Delta \ln p_t + 1 \right]. 
    \]
    Integration by parts and the vanishing boundary terms give
    \[
    \E_{p^*_t}[\Delta \ln p_t] = \E_{p^*_t}[\nabla \cdot (\nabla \ln p_t)] = -\E_{p^*_t}[\langle \nabla \ln p^*_t, \nabla \ln p_t \rangle].
    \]
    Substitution gives
    \[
    \frac{\d}{\d t}\E_{p_t^\ast}[\ln p_t] = \frac{\beta_t}{2} \E_{p_t^\ast}\left[ \|\nabla \ln p_t\|^2 - 2\langle \nabla \ln p^*_t, \nabla \ln p_t \rangle + 1 \right]. 
    \]
    Applying the same calculation to the ground truth distribution gives
    \[
    \frac{\d}{\d t}\E_{p^*_t}[\ln p^*_t] = \frac{\beta_t}{2} \E_{p^*_t} \left[ \|\nabla \ln p^*_t\|^2 - 2\|\nabla \ln p^*_t\|^2 + 1 \right] = \frac{\beta_t}{2} \E_{p^*_t} \left[ -\|\nabla \ln p^*_t\|^2 + 1 \right].
    \]
    Subtracting the entropy derivative from the cross-entropy derivative gives
    \[
    -\frac{\d}{\d t}\KL(p^*_t \parallel p_t)
    = \frac{\d}{\d t}\E_{p_t^\ast}[\ln p_t] - \frac{\d}{\d t}\E_{p_t^\ast}[\ln p^*_t]
    = \frac{\beta_t}{2} \E_{p^*_t} \left[ \|\nabla \ln p_t - \nabla \ln p^*_t\|^2 \right].
    \]
    Restoring the full notation \(p_t \equiv p(y_t|\bx;\theta_t,\pi)\) and \(p_t^\ast \equiv p(y_t|\bx;\theta_t^*,\pi^*)\) yields the lemma statement.
\end{proof}

\begin{proof}[Proof of Lemma~\ref{lemma:log_likelihood_derivative}]
    We apply the proof of Lemma~\ref{lemma:kl_derivative} with \(p^\ast_t\)
    replaced by \(p(y_t\mid y_0)\). This gives
    \[
    \frac{\d}{\d t}\E_{y_t\mid y_0} [\ln p_t] = \frac{\beta_t}{2} \E_{y_t\mid y_0} [\|\nabla \ln p_t\|^2 - 2 \langle \nabla \ln p_t, \nabla \ln p(y_t|y_0) \rangle + 1].
    \]
    Taking the expectation over \(\Pr_n\), interchanging \(\E_{\Pr_n}\) and \(\frac{\d}{\d t}\), and restoring \(p_t \equiv p(y_t|\bx;\theta_t,\pi)\) proves the result.
\end{proof}

\begin{lemma}[Decaying KL Divergence Bound]\label{lemma:decaying_kl_bound}
    Suppose \(0<\bar{\alpha}_t<1\).
    Let the two conditional diffusions start from the initial probability laws
    \(p(y_0|\bx;\theta^*,\pi^*)\) and
    \(p(y_0|\bx;\theta,\pi)\). Their time-\(t\) laws satisfy
    \[
    \KL\left(p(y_t|\bx;\theta_t^*,\pi^*) \parallel p(y_t|\bx;\theta_t,\pi)\right) \le \frac{\bar{\alpha}_t}{2(1 - \bar{\alpha}_t)} W_2^2\left(p(y_0|\bx;\theta^*,\pi^*),\; p(y_0|\bx;\theta,\pi)\right).
    \]
\end{lemma}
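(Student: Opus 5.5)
The time-\(t\) law is a mixture of Gaussian transition kernels driven by the initial law: \(p(y_t\mid\bx;\theta_t,\pi)=\E_{y_0\sim p(y_0\mid\bx;\theta,\pi)}[\calN(y_t;y_0\sqrt{\bar\alpha_t},1-\bar\alpha_t)]\), and analogously for the ground truth. This follows from Lemma~\ref{lemma:mlr_sde}, since the transition density \(p(y_t\mid y_0)\) does not depend on \((\theta,\pi)\). The plan is to couple the two initial laws optimally and then use joint convexity of the KL divergence together with the closed-form KL between two Gaussians of equal variance.

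\textbf{Step 1 (coupling).} Both initial laws are Gaussian mixtures with finite second moments, so an optimal coupling \(\gamma\in\Gamma(p(y_0\mid\bx;\theta^\ast,\pi^\ast),p(y_0\mid\bx;\theta,\pi))\) attaining \(W_2^2\) exists by standard optimal-transport theory. Writing the time-\(t\) laws as \(\int p(\cdot\mid u)\,\gamma(\d u,\d v)\) and \(\int p(\cdot\mid v)\,\gamma(\d u,\d v)\), joint convexity of \(\KL\) (the data-processing inequality applied to the joint laws of \((y_0^\ast,y_0,y_t)\)) gives
\[
\KL\big(p(y_t\mid\bx;\theta_t^\ast,\pi^\ast)\parallel p(y_t\mid\bx;\theta_t,\pi)\big)\le\int\KL\big(\calN(u\sqrt{\bar\alpha_t},1-\bar\alpha_t)\parallel\calN(v\sqrt{\bar\alpha_t},1-\bar\alpha_t)\big)\,\gamma(\d u,\d v).
\]

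\textbf{Step 2 (Gaussian KL).} Two Gaussians with common variance \(1-\bar\alpha_t\) have KL divergence \(\bar\alpha_t(u-v)^2/(2(1-\bar\alpha_t))\). Integrating this against \(\gamma\) yields exactly \(\frac{\bar\alpha_t}{2(1-\bar\alpha_t)}W_2^2\). The hypothesis \(0<\bar\alpha_t<1\) ensures the variance is positive and the bound is finite.

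\textbf{Main obstacle.} The only delicate point is justifying the joint-convexity step for a continuum mixture. I would handle it by applying the chain rule for KL to the joint laws \(\gamma(\d u,\d v)\,p(y_t\mid u)\) and \(\gamma(\d u,\d v)\,p(y_t\mid v)\). These share the same \((u,v)\)-marginal, so the joint KL equals the integrated conditional KL. Marginalizing out \((u,v)\) can only decrease the KL, which gives the inequality in Step~1.
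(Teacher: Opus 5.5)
Your proposal is correct and follows the paper's proof essentially step for step. Both arguments write each time-$t$ law as a mixture over the common Gaussian transition kernel, take an optimal $W_2$ coupling of the initial laws, apply joint convexity of the KL divergence, and then use the equal-variance Gaussian formula $\bar\alpha_t(u-v)^2/(2(1-\bar\alpha_t))$. The only difference is cosmetic: the paper cites the integral log-sum inequality for the convexity step, whereas you derive it from the chain rule and data processing applied to the joint laws $\gamma(\d u,\d v)\,p(y_t\mid u)$ and $\gamma(\d u,\d v)\,p(y_t\mid v)$, which is equally valid.
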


\begin{lemma}[Wasserstein Distance for Conditional Distributions]\label{lemma:wasserstein_distance_conditional}
    Fix \(\bx\). Define the two discrete distributions
    \(D:= D(\bx;\theta, \pi) \equiv \pi(1) \delta_{\langle \theta, \bx\rangle} + \pi(2) \delta_{-\langle \theta, \bx\rangle}\) and
    \(D^*:= D(\bx;\theta^*, \pi^*) \equiv \pi^*(1) \delta_{\langle \theta^*, \bx\rangle} + \pi^*(2) \delta_{-\langle \theta^*, \bx\rangle}\).
    Then the squared 2-Wasserstein distance between \(p(y_0 \mid \bx; \theta^*, \pi^*)\) and \(p(y_0 \mid \bx; \theta, \pi)\) is upper bounded by the squared 2-Wasserstein distance between the discrete distributions.
    \[
    W_2^2(p(y_0 \mid \bx; \theta^*, \pi^*), p(y_0 \mid \bx; \theta, \pi)) \leq W_2^2(D, D^*).
    \]
\end{lemma}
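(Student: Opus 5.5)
The plan is to construct an explicit coupling of $p(y_0\mid\bx;\theta^\ast,\pi^\ast)$ and $p(y_0\mid\bx;\theta,\pi)$ from an optimal coupling of the discrete mixing laws $D^\ast$ and $D$. The starting point is the representation in Equation~\eqref{eq:2mlr}: for fixed $\bx$, $y_0 = m + \varepsilon_0$, where $m \sim D(\bx;\theta,\pi)$ and $\varepsilon_0\sim\calN(0,1)$ is independent of $m$. The independence comes from $\varepsilon_0\ind(\bx,z;\theta,\pi)$ together with $m=(-1)^{z+1}\langle\theta,\bx\rangle$. Thus each conditional law is the convolution $D*\calN(0,1)$, and likewise $D^\ast*\calN(0,1)$.

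\textbf{Step 1 (optimal discrete coupling).} Both $D$ and $D^\ast$ are supported on at most two points of $\R$, so they have finite second moments. An optimal coupling $\gamma^\star\in\Gamma(D^\ast,D)$ therefore exists; existence is standard, and here it even reduces to a finite linear program on a $2\times 2$ support. Let $(m^\ast,m)\sim\gamma^\star$, so that $\E|m^\ast-m|^2=W_2^2(D,D^\ast)$. Because $W_2$ is symmetric, the ordering of the arguments does not matter.

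\textbf{Step 2 (synchronous noise).} Draw one $\varepsilon\sim\calN(0,1)$ independent of $(m^\ast,m)$, and set $u=m^\ast+\varepsilon$ and $v=m+\varepsilon$. The marginal of $u$ is $D^\ast*\calN(0,1)=p(y_0\mid\bx;\theta^\ast,\pi^\ast)$, and the marginal of $v$ is $p(y_0\mid\bx;\theta,\pi)$. Hence the law of $(u,v)$ is an admissible coupling of the two conditional laws.

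\textbf{Step 3 (bound).} Since $u-v=m^\ast-m$, the noise cancels, and the definition of $W_2^2$ as an infimum over couplings gives
\[
W_2^2\bigl(p(y_0\mid\bx;\theta^\ast,\pi^\ast),p(y_0\mid\bx;\theta,\pi)\bigr)\le \E|u-v|^2=\E|m^\ast-m|^2=W_2^2(D,D^\ast).
\]

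The only point needing care is attainment of the infimum in Step 1. If one prefers to avoid that question, the same argument works with an $\epsilon$-optimal coupling, followed by letting $\epsilon\to0$. Verifying the convolution representation of the conditional law, which relies on the independence assumptions of Section~\ref{subsec:2mlr}, is the other ingredient, and it is routine. I therefore expect no genuine obstacle; the argument is the standard fact that Wasserstein distance contracts under convolution with a common kernel.
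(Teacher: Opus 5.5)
Your proof is correct and follows the paper's route. The paper writes both conditional laws as the convolutions $D^\ast*\calN(0,1)$ and $D*\calN(0,1)$ and then cites Santambrogio's lemma that $W_2$ contracts under convolution with a common kernel. You prove that contraction directly with the synchronous coupling $(m^\ast+\varepsilon,\,m+\varepsilon)$, which is the standard argument behind the cited lemma.
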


\begin{lemma}[Decaying KL Divergence w.r.t. Transition Kernel]\label{lemma:decaying_kl_bound_transition} 
    Suppose \(0<\bar{\alpha}_t<1\).
    Let \(p(y_t \mid y_0)\) be the transition kernel at time \(t\) from a
    fixed initial value \(y_0\). Set \(\mu_0 = \langle \theta, \bx\rangle\)
    and \(\nu = (\ln\pi(1)-\ln\pi(2))/2\). Then
    \[
    \KL\left(p(y_t \mid y_0) \parallel p(y_t\mid \bx; \theta_t, \pi)\right) 
    \leq \frac{\bar{\alpha}_t}{2(1 - \bar{\alpha}_t)} (y_0^2 + \mu_0^2 - 2 y_0 \mu_0 \tanh \nu + 1).
    \]
\end{lemma}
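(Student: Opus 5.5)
We need to prove Lemma (Decaying KL w.r.t. transition kernel): KL(N(y_0√ᾱ,1−ᾱ) || mixture p(y_t|x;θ_t,π)) ≤ ᾱ/(2(1−ᾱ)) (y_0²+μ_0²−2y_0μ_0 tanhν+1).

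The plan is to write the model law as a Gaussian smoothing of a two-point law and then apply joint convexity of KL. By Lemma~\ref{lemma:mlr_sde}, the model density is a convolution:
\[
p(y_t\mid\bx;\theta_t,\pi)=\E_{y_0'\sim p(y_0\mid\bx;\theta,\pi)}\bigl[\calN(y_t;y_0'\sqrt{\bar\alpha_t},1-\bar\alpha_t)\bigr].
\]
The transition kernel from the fixed point \(y_0\) is \(p(y_t\mid y_0)=\calN(y_t;y_0\sqrt{\bar\alpha_t},1-\bar\alpha_t)\), which we may trivially write as the same average over \(y_0'\) of itself. Pairing the two averages under the product coupling of \(\delta_{y_0}\) with \(p(y_0'\mid\bx;\theta,\pi)\), joint convexity of KL gives
\[
\KL\bigl(p(y_t\mid y_0)\parallel p(y_t\mid\bx;\theta_t,\pi)\bigr)\le \E_{y_0'}\,\KL\bigl(\calN(y_0\sqrt{\bar\alpha_t},1-\bar\alpha_t)\parallel\calN(y_0'\sqrt{\bar\alpha_t},1-\bar\alpha_t)\bigr).
\]
For two Gaussians with a common variance the KL divergence is \(\bar\alpha_t(y_0-y_0')^2/(2(1-\bar\alpha_t))\). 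This is the same mechanism as in Lemma~\ref{lemma:decaying_kl_bound}; the only difference is that the initial law here is a Dirac mass, so the only coupling available is the product coupling.

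It remains to compute \(\E_{y_0'}(y_0-y_0')^2\) under the model law \(y_0'=(-1)^{z+1}\mu_0+\varepsilon_0\). We have \(\E y_0'=\mu_0(\pi(1)-\pi(2))=\mu_0\tanh\nu\) and \(\E y_0'^2=\mu_0^2+1\). Hence
\[
\E_{y_0'}(y_0-y_0')^2=y_0^2-2y_0\mu_0\tanh\nu+\mu_0^2+1,
\]
which yields exactly the stated bound.

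The only delicate point is justifying joint convexity for the continuous mixture. This is the standard convexity inequality (log-sum inequality in integral form) for \(\KL(\int P_u\,\d\gamma\parallel\int Q_u\,\d\gamma)\). Here the mixture is in fact a two-component mixture of Gaussian convolutions, so Fubini applies and finiteness is clear. Since \(0<\bar\alpha_t<1\), all the variances are positive, so every density involved is well defined.
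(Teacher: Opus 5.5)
Your proposal is correct and is essentially the paper's proof. The paper applies Lemma~\ref{lemma:decaying_kl_bound} with initial laws $\delta_{y_0}$ and $p(u\mid\bx;\theta,\pi)$, notes that a point mass admits only the product coupling so $W_2^2=\E_u[(y_0-u)^2]$, and computes the same moment $y_0^2+\mu_0^2-2y_0\mu_0\tanh\nu+1$; you simply inline that lemma's joint-convexity and equal-variance Gaussian KL step instead of routing it through the Wasserstein distance.
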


\begin{proof}[Proof of Lemma~\ref{lemma:decaying_kl_bound}]
    For any fixed \(u\in\R\), Lemma~\ref{lemma:mlr_sde} gives the transition
    density
    \[
        q_t(y_t\mid u)
        :=\calN\left(y_t;u\sqrt{\bar\alpha_t},1-\bar\alpha_t\right).
    \]
    Define the initial probability laws
    \(p_0^*:=p(y_0\mid\bx;\theta^*,\pi^*)\) and
    \(p_0:=p(y_0\mid\bx;\theta,\pi)\). We use
    \(\widetilde y_0\) for a value under \(p_0^*\) and \(y_0\) for a value
    under \(p_0\). Since the two transition densities have variance
    \(1-\bar\alpha_t\), the Gaussian KL formula gives
    \[
    \KL(q_t(\,\cdot\mid\widetilde y_0)\parallel q_t(\,\cdot\mid y_0))
    =\frac{\bar\alpha_t}{2(1-\bar\alpha_t)}
      \|\widetilde y_0-y_0\|^2.
    \]
    Let \(p_t := p(y_t|\bx;\theta_t,\pi)\) and \(p_t^* := p(y_t|\bx;\theta_t^*,\pi^*)\). Since \(\theta_t = \theta \sqrt{\bar{\alpha}_t}\), we have
    \[
    p_t = \E_{y_0\sim p_0}[q_t(y_t\mid y_0)],\qquad
    p_t^* = \E_{\widetilde y_0\sim p_0^*}
    [q_t(y_t\mid\widetilde y_0)].
    \]
    Let \(\calP(\mathbb{R})\) denote the space of probability measures on \(\mathbb{R}\). For \(p_0^*,p_0\in\calP(\mathbb{R})\), let \(\Gamma(p_0^*,p_0)\) be the set of joint probability measures on \(\mathbb{R}\times\mathbb{R}\) with marginals \(p_0^*\) and \(p_0\).
    Thus \(\gamma \in \Gamma(p_0^*, p_0)\) if and only if every pair of Borel sets \(A,B\subseteq\mathbb{R}\) satisfies
    \(
    \gamma(A \times \mathbb{R}) = p_0^*(A),\, \gamma(\mathbb{R} \times B) = p_0(B)
    \).
    Theorem~4.1 of \citet{villani2009optimal} shows that a lower
    semicontinuous quadratic cost bounded from below attains its infimum.
    Thus an optimal coupling \(\gamma^\star\in\Gamma(p_0^*,p_0)\) satisfies
    \[
    \E_{(\widetilde y_0, y_0) \sim \gamma^\star}[\|\widetilde y_0 - y_0\|^2]
    = \inf_{\gamma \in \Gamma(p_0^*, p_0)} \E_{(\widetilde y_0, y_0) \sim \gamma}[\|\widetilde y_0 - y_0\|^2]
    =: W_2^2(p_0^*, p_0).
    \]
    The optimal coupling \(\gamma^\star\) expresses both marginals \(p_t^*\) and \(p_t\) as expectations on the same probability space.
    \[
    p_t^* = \E_{(\widetilde y_0, y_0) \sim \gamma^\star}
    [q_t(y_t\mid\widetilde y_0)],\qquad
    p_t = \E_{(\widetilde y_0, y_0) \sim \gamma^\star}
    [q_t(y_t\mid y_0)].
    \]
    The integral log-sum inequality is the continuous-density
    form of joint convexity of relative entropy
    \cite[Theorem~2.7.2]{cover2006elements}. It moves the expectation outside
    the divergence and gives the following upper bound.
    \[
    \KL(p_t^* \parallel p_t)
    \le \E_{(\widetilde y_0, y_0) \sim \gamma^\star}
    \Big[\KL(q_t(\,\cdot\mid\widetilde y_0)
    \parallel q_t(\,\cdot\mid y_0))\Big].
    \]
    Substituting the conditional KL divergence derived above gives the stated upper bound.
    \[
    \KL(p_t^* \parallel p_t) \le \E_{(\widetilde y_0, y_0) \sim \gamma^\star} \left[ \frac{\bar{\alpha}_t}{2(1 - \bar{\alpha}_t)} \|\widetilde y_0 - y_0\|^2 \right] = \frac{\bar{\alpha}_t}{2(1 - \bar{\alpha}_t)} W_2^2(p_0^*, p_0).
    \]
    Restoring the full notation for the four probability laws yields the lemma
    statement.
\end{proof}

\begin{proof}[Proof of Lemma~\ref{lemma:wasserstein_distance_conditional}]
    Fix \(\bx\) and both parameter pairs. Write
    \(p_0:=p(y_0|\bx;\theta,\pi)\) and
    \(p_0^*:=p(y_0|\bx;\theta^*,\pi^*)\), and let \(D,D^*\) be the
    discrete laws in the lemma statement.
    Set \(\bar{y}_0 = (-1)^{z+1} \langle \theta, \bx\rangle\) in Equation~\eqref{eq:yt}. Lemma~\ref{lemma:mlr_sde} gives \(y_0 = \bar{y}_0 + \varepsilon_0\), with \(\bar{y}_0 \ind \varepsilon_0\) and \(\varepsilon_0\sim \eta := \calN(0, 1)\), and hence
    \[
    \bar{y}_0 \sim D, \quad y_0 = \bar{y}_0 + \varepsilon_0 \sim p_0 = D * \eta.
    \]
    For the ground truth law, let
    \(y_0^* = \bar{y}_0^* + \varepsilon_0^*\), where
    \(\bar{y}_0^*\sim D^*\), \(\varepsilon_0^*\sim\eta\), and the two terms
    are independent. Thus
    \[
    y_0^* = \bar{y}_0^* + \varepsilon_0^* \sim p_0^* = D^* * \eta.
    \]
    Since \(\eta\) is an even regularizing kernel, Lemma~5.2 of
    \citet{santambrogio2015optimal} gives
    \[
    W_2^2(p_0^*, p_0) = W_2^2(D^* * \eta, D * \eta) \leq W_2^2(D^*, D) = W_2^2(D, D^*).
    \]
    This establishes the desired upper bound and completes the proof.
\end{proof}

\begin{proof}[Proof of Lemma~\ref{lemma:decaying_kl_bound_transition}]
    Let \(u\) denote a dummy initial value. We write
    \(p_t:=p(y_t\mid\bx;\theta_t,\pi)
    =\E_{u\mid\bx;\theta,\pi}[p(y_t\mid u)]\) and
    \(p(y_t\mid y_0)=\E_{u\sim\delta_{y_0}}[p(y_t\mid u)]\).
    Lemma~\ref{lemma:decaying_kl_bound} applies with the two initial laws
    \(p(u\mid\bx;\theta,\pi)\) and \(\delta_{y_0}\). It gives
    \[
    \KL\left(p(y_t \mid y_0) \parallel p(y_t\mid \bx; \theta_t, \pi)\right)
    \leq \frac{\bar{\alpha}_t}{2(1 - \bar{\alpha}_t)} W_2^2\left(\delta_{y_0},\, p(u \mid \bx; \theta, \pi)\right).
    \]
    The squared 2-Wasserstein distance between \(\delta_{y_0}\) and the
    two-Gaussian mixture is
    \[
    \begin{aligned}
    &W_2^2\left(\delta_{y_0},\, p(u \mid \bx; \theta, \pi)\right)
    = \E_{u\mid\bx;\theta,\pi}[(y_0-u)^2] \\
    =& \pi(1)\bigl[(y_0-\mu_0)^2+1\bigr]
       +\pi(2)\bigl[(y_0+\mu_0)^2+1\bigr]
    = y_0^2 + \mu_0^2 - 2 y_0 \mu_0 \tanh \nu + 1.
    \end{aligned}
    \]
    The first equality holds because a point mass has one coupling with the
    second marginal. The second equality uses
    \(u=(-1)^{z+1}\mu_0+\varepsilon_0\) and
    \(\E[\varepsilon_0^2]=1\).
\end{proof}

\begin{lemma}[Two Losses of Stein Score Function]
    Set \(\theta_t=\theta\sqrt{\bar\alpha_t}\) and
    \(\nu=(\ln\pi(1)-\ln\pi(2))/2\). Define the candidate score by
    \[
        s_{\theta_t,\nu}(y_t,\bx)
        :=\nabla_{y_t}\ln p(y_t\mid\bx;\theta_t,\pi).
    \]
    Set \(\theta_t^\ast=\theta^\ast\sqrt{\bar\alpha_t}\) and
    \(\nu^\ast=(\ln\pi^\ast(1)-\ln\pi^\ast(2))/2\). Define the ground truth score by
    \[
        s_{\theta_t^\ast,\nu^\ast}(y_t,\bx)
        :=\nabla_{y_t}\ln p(y_t\mid\bx;\theta_t^\ast,\pi^\ast).
    \]
    Then
    \[
    \begin{aligned}
    &\E_{y_t \mid \bx; \theta_t^\ast, \pi^\ast}
      \|s_{\theta_t, \nu}(y_t, \bx)
      -s_{\theta_t^\ast, \nu^\ast}(y_t, \bx)\|^2 \\
    &\quad=\E_{y_0\mid\bx;\theta^\ast,\pi^\ast}
      \E_{y_t\mid y_0}\bigl[\|s_{\theta_t, \nu}(y_t, \bx)\|^2\bigr] \\
    &\qquad\quad-2\E_{y_0\mid\bx;\theta^\ast,\pi^\ast}
      \E_{y_t\mid y_0}\bigl[\langle s_{\theta_t, \nu}(y_t, \bx),
      \nabla \ln p(y_t \mid y_0) \rangle\bigr]
      + \mathsf{C}(\mu_t^\ast, \nu^\ast).
    \end{aligned}
    \]
    Here \(\mathsf{C}(\mu_t^\ast, \nu^\ast) := 1-[\mu_t^\ast]^2 \E_{y_t \mid \bx; \theta_t^\ast, \pi^\ast}[\sech^2(\mu_t^\ast y_t + \nu^\ast)] \geq 0\) depends on \(\mu_t^\ast = \langle \theta_t^\ast, \bx \rangle\) and \(\nu^\ast\) alone.
\end{lemma}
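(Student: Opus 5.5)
The plan is to expand the square and then rewrite the two terms that involve the ground-truth score. Throughout, \(\bx\) is fixed, and we write \(p_t^\ast(y_t):=p(y_t\mid\bx;\theta_t^\ast,\pi^\ast)\), \(s:=s_{\theta_t,\nu}\) and \(s^\ast:=s_{\theta_t^\ast,\nu^\ast}\). Expanding gives
\[
\E_{p_t^\ast}\|s-s^\ast\|^2=\E_{p_t^\ast}\|s\|^2-2\E_{p_t^\ast}\langle s,s^\ast\rangle+\E_{p_t^\ast}\|s^\ast\|^2 .
\]
All three expectations are finite. By Lemma~\ref{lemma:stein_score}, both scores grow at most linearly in \(y_t\), because \(|\tanh|\le1\). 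By Lemma~\ref{lemma:nll}, \(p_t^\ast\) is a two-component Gaussian mixture with unit variances. For the first term, Lemma~\ref{lemma:mlr_sde} gives the representation \(p_t^\ast(y_t)=\E_{y_0\mid\bx;\theta^\ast,\pi^\ast}[p(y_t\mid y_0)]\). Tonelli's theorem then rewrites \(\E_{p_t^\ast}\|s\|^2\) as the iterated expectation \(\E_{y_0\mid\bx;\theta^\ast,\pi^\ast}\E_{y_t\mid y_0}\|s\|^2\).

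For the cross term I will use the standard denoising score matching identity of \citet{vincent2011connection}. Since \(p_t^\ast(y_t)=\int p(y_t\mid y_0)\,p(y_0\mid\bx;\theta^\ast,\pi^\ast)\,\d y_0\), differentiating under the integral sign gives
\[
s^\ast(y_t)\,p_t^\ast(y_t)=\nabla p_t^\ast(y_t)=\int \nabla\ln p(y_t\mid y_0)\,p(y_t\mid y_0)\,p(y_0\mid\bx;\theta^\ast,\pi^\ast)\,\d y_0 .
\]
Differentiation under the integral sign is justified because, for \(y_t\) in a compact set, \(|\nabla p(y_t\mid y_0)|\) is dominated by an integrable Gaussian envelope in \(y_0\) when \(\bar\alpha_t<1\). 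Multiplying by \(s(y_t)\), integrating over \(y_t\), and applying Fubini's theorem then gives
\[
\E_{p_t^\ast}\langle s,s^\ast\rangle=\E_{y_0\mid\bx;\theta^\ast,\pi^\ast}\E_{y_t\mid y_0}\langle s,\nabla\ln p(y_t\mid y_0)\rangle .
\]
Fubini applies because \(|s|\cdot|y_t-y_0\sqrt{\bar\alpha_t}|\) is integrable under the Gaussian joint law of \((y_0,y_t)\).

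For the last term I will integrate by parts once, using \(s^\ast p_t^\ast=\nabla p_t^\ast\). This gives \(\E_{p_t^\ast}\|s^\ast\|^2=\int s^\ast\,\nabla p_t^\ast=-\E_{p_t^\ast}[\partial_{y_t}s^\ast]\). The boundary term \(s^\ast p_t^\ast\) vanishes at \(\pm\infty\) because \(p_t^\ast\) has Gaussian tails and \(s^\ast\) grows linearly. Differentiating the formula in Lemma~\ref{lemma:stein_score} gives \(\partial_{y_t}s^\ast=-1+(\mu_t^\ast)^2\sech^2(\mu_t^\ast y_t+\nu^\ast)\). Hence
\[
\E_{p_t^\ast}\|s^\ast\|^2=1-(\mu_t^\ast)^2\,\E_{y_t\mid\bx;\theta_t^\ast,\pi^\ast}[\sech^2(\mu_t^\ast y_t+\nu^\ast)]=\mathsf{C}(\mu_t^\ast,\nu^\ast).
\]
This quantity depends only on \(\mu_t^\ast\) and \(\nu^\ast\), because \(p_t^\ast\) is determined by them. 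It is nonnegative because it equals the expectation of the square \(\|s^\ast\|^2\).

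Substituting the three rewritten terms into the expansion gives the stated identity. The only delicate point is the interchange steps in the cross term, namely differentiating under the integral and applying Fubini. I expect these to be routine given the explicit Gaussian mixture forms and \(0<\bar\alpha_t<1\).
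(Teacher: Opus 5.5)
Your proposal is correct and follows the paper's proof essentially step for step. The cross term goes through the denoising score identity, which the paper writes as \(s_{\theta_t^\ast,\nu^\ast}=\E[\nabla\ln p(y_t\mid y_0)\mid y_t,\bx;\theta^\ast,\pi^\ast]\) plus the tower property, and one integration by parts with Lemma~\ref{lemma:stein_score} identifies \(\E\|s_{\theta_t^\ast,\nu^\ast}\|^2\) with \(\mathsf{C}(\mu_t^\ast,\nu^\ast)\); you also justify the interchanges and the nonnegativity explicitly, and make the needed restriction \(0<\bar\alpha_t<1\) visible, all of which the paper leaves implicit.
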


\begin{proof}
    Differentiating the Gaussian-mixture marginal and applying Bayes' rule
    gives the denoising score identity, with \(\nabla:=\nabla_{y_t}\),
    \[
        s_{\theta_t^\ast, \nu^\ast}(y_t, \bx) \equiv \nabla \ln p(y_t \mid \bx; \theta_t^\ast, \pi^\ast)
        = \E\!\left[\nabla\ln p(y_t\mid y_0)
          \mid y_t,\bx;\theta^\ast,\pi^\ast\right].
    \]
    The tower property therefore rewrites the cross-term as
    \[
        \E_{y_t \mid \bx; \theta_t^\ast, \pi^\ast}
        [\langle s_{\theta_t, \nu}, s_{\theta_t^\ast, \nu^\ast}\rangle]
        = \E_{y_0\mid\bx;\theta^\ast,\pi^\ast}
          \E_{y_t\mid y_0}
          [\langle s_{\theta_t, \nu}, \nabla \ln p(y_t \mid y_0)\rangle].
    \]
    Expanding the squared score difference and applying this cross-term
    identity gives
    \begin{align*}
        \mathsf{C}
        :=&\ \E_{y_t \mid \bx; \theta_t^\ast, \pi^\ast}
          [\|s_{\theta_t, \nu}-s_{\theta_t^\ast, \nu^\ast}\|^2]
          -\E_{y_0\mid\bx;\theta^\ast,\pi^\ast}\E_{y_t\mid y_0}
          [\|s_{\theta_t, \nu}\|^2]\\
        &+2\E_{y_0\mid\bx;\theta^\ast,\pi^\ast}\E_{y_t\mid y_0}
          [\langle s_{\theta_t, \nu},\nabla\ln p(y_t\mid y_0)\rangle]\\
        =&\ \E_{y_t \mid \bx; \theta_t^\ast, \pi^\ast}
          [\|s_{\theta_t^\ast, \nu^\ast}(y_t, \bx)\|^2],
    \end{align*}
    where suppressed score arguments in the first two lines are
    \((y_t,\bx)\).
    Integration by parts, the vanishing boundary term, and Lemma~\ref{lemma:stein_score} give
    \[
        \E_{y_t \mid \bx; \theta_t^\ast, \pi^\ast}[\|s_{\theta_t^\ast, \nu^\ast}(y_t, \bx)\|^2]
        = -\E_{y_t \mid \bx; \theta_t^\ast, \pi^\ast}[\nabla s_{\theta_t^\ast, \nu^\ast}(y_t, \bx)]
        = 1 - [\mu_t^\ast]^2 \E_{y_t \mid \bx; \theta_t^\ast, \pi^\ast}[\sech^2(\mu_t^\ast y_t + \nu^\ast)].
    \]
    Lemma~\ref{lemma:nll} shows that the conditional density is determined by \(y_t\), \(\mu_t^\ast\), and \(\nu^\ast\). After integrating over \(y_t\), \(\mathsf{C}\) therefore depends on \(\mu_t^\ast\) and \(\nu^\ast\) alone.
\end{proof}

\begin{lemma}[Asymptotic Behavior of the Scaled Score Matching Loss]\label{lemma:asymptotic_score_loss}
    Let \(y_0\) be generated by MLR~\eqref{eq:2mlr} under the ground truth parameters \(\theta^\ast,\pi^\ast\).
    Let \(y_t\) follow SDE~\eqref{eq:sde} at time \(t\geq0\), with cumulative noise schedule \(\bar{\alpha}_t\in(0,1]\).
    Let \(\mu_0 := \langle \theta, \bx \rangle\) and \(\mu_0^\ast := \langle \theta^\ast, \bx \rangle\) denote the estimated and ground truth means at \(t=0\), 
    such that the means at time \(t\) are \(\mu_t = \mu_0\sqrt{\bar{\alpha}_t}\) and \(\mu_t^\ast = \mu_0^\ast\sqrt{\bar{\alpha}_t}\).
    Let \(\nu := (\ln\pi(1)-\ln\pi(2))/2\) and \(\nu^\ast := (\ln\pi^\ast(1)-\ln\pi^\ast(2))/2\) denote the estimated and ground truth imbalance parameters of the mixing weights.
    Then
    \[
    \begin{aligned} 
        &\lim_{\bar{\alpha}_t \to 0} \frac{1}{\bar{\alpha}_t} \E_{y_t \mid \bx; \theta_t^\ast, \pi^\ast} [\| s_{\theta_t, \nu}(y_t, \bx) - s_{\theta_t^\ast, \nu^\ast}(y_t, \bx) \|^2]
        = \left( \mu_0 \tanh \nu - \mu_0^\ast \tanh \nu^\ast \right)^2, \\
        &\lim_{\bar{\alpha}_t \to 1} \frac{1}{\bar{\alpha}_t} \E_{y_t \mid \bx; \theta_t^\ast, \pi^\ast} [\| s_{\theta_t, \nu}(y_t, \bx) - s_{\theta_t^\ast, \nu^\ast}(y_t, \bx) \|^2]
        = \E_{y_0 \mid \bx; \theta^\ast, \pi^\ast} \left[ \left( \mu_0 \tanh(\mu_0 y_0 + \nu) - \mu_0^\ast \tanh(\mu_0^\ast y_0 + \nu^\ast) \right)^2 \right].
    \end{aligned}
    \]
\end{lemma}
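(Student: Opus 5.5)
We work conditionally on a fixed \(\bx\), with \(\mu_0,\mu_0^\ast\) fixed, and write the score discrepancy via Lemma~\ref{lemma:stein_score}. The \(-y_t\) terms cancel, so
\[
s_{\theta_t,\nu}(y_t,\bx)-s_{\theta_t^\ast,\nu^\ast}(y_t,\bx)
=\sqrt{\bar\alpha_t}\,\bigl[\mu_0\tanh(\mu_t y_t+\nu)-\mu_0^\ast\tanh(\mu_t^\ast y_t+\nu^\ast)\bigr].
\]
Dividing the squared discrepancy by \(\bar\alpha_t\) therefore gives exactly
\[
\E_{y_t\mid\bx;\theta_t^\ast,\pi^\ast}\bigl[g_t(y_t)^2\bigr],
\qquad
g_t(y):=\mu_0\tanh(\sqrt{\bar\alpha_t}\mu_0 y+\nu)-\mu_0^\ast\tanh(\sqrt{\bar\alpha_t}\mu_0^\ast y+\nu^\ast).
\]
Since \(|\tanh|\le1\), the integrand is uniformly bounded by \((|\mu_0|+|\mu_0^\ast|)^2\) for all \(\bar\alpha_t\in(0,1]\). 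This is also the bound quoted in the low-noise remark, and both limits will follow from dominated convergence.

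To handle the varying law of \(y_t\), we use the representation from Lemma~\ref{lemma:mlr_sde}:
\[
y_t=(-1)^{z+1}\sqrt{\bar\alpha_t}\,\mu_0^\ast+\varepsilon_t,
\]
where \(z\sim\pi^\ast\) and \(\varepsilon_t\sim\calN(0,1)\) are independent of \(\bx\). This realizes every \(y_t\) on one fixed probability space \((z,\varepsilon)\) whose law does not depend on \(t\). The whole expectation becomes \(\E_{z,\varepsilon}[g_t(y_t(z,\varepsilon))^2]\), with an integrand that is bounded and continuous in \(\bar\alpha_t\) pointwise in \((z,\varepsilon)\).

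\textbf{High-noise limit.} As \(\bar\alpha_t\to0\), pointwise \(y_t\to\varepsilon\), and the arguments \(\sqrt{\bar\alpha_t}\mu_0y_t\) and \(\sqrt{\bar\alpha_t}\mu_0^\ast y_t\) tend to \(0\). Continuity of \(\tanh\) then gives \(g_t\to\mu_0\tanh\nu-\mu_0^\ast\tanh\nu^\ast\), a constant. Bounded convergence yields the first claimed limit, \((\mu_0\tanh\nu-\mu_0^\ast\tanh\nu^\ast)^2\).

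\textbf{Low-noise limit.} As \(\bar\alpha_t\to1\), pointwise \(y_t\to(-1)^{z+1}\mu_0^\ast+\varepsilon\), which has exactly the law of \(y_0\mid\bx;\theta^\ast,\pi^\ast\). Also \(\sqrt{\bar\alpha_t}\mu_0\to\mu_0\), so \(g_t(y_t)\to\mu_0\tanh(\mu_0y_0+\nu)-\mu_0^\ast\tanh(\mu_0^\ast y_0+\nu^\ast)\). Bounded convergence gives the second limit.

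The only step needing care is the coupling that removes the \(t\)-dependence of the sampling measure. If one instead worked with the densities directly, it would be the main obstacle: one would need weak convergence of the mixture laws plus uniform convergence of \(g_t\) on compacts. The coupling makes that step immediate.
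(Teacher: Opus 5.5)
Your proposal is correct and is essentially the paper's proof: cancel the linear \(-y_t\) terms, realize all scales on one probability space with \(y_t\) distributed as \((-1)^{z+1}\sqrt{\bar\alpha_t}\,\mu_0^\ast+\varepsilon\), dominate the integrand by \((|\mu_0|+|\mu_0^\ast|)^2\), and pass the pointwise limits at \(\bar\alpha_t\to0\) and \(\bar\alpha_t\to1\) through the expectation. One notational point: use a single \(t\)-independent \(\varepsilon\) rather than the SDE's \(\varepsilon_t\); this is legitimate because each expectation depends only on the marginal law of \(y_t\mid\bx\) at that scale, which is exactly how the paper justifies its coupling.
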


\begin{proof}[Proof of Lemma~\ref{lemma:asymptotic_score_loss}]
    Using the Stein score function \(s_{\theta_t, \nu}(y_t, \bx) = -y_t + \mu_t \tanh(\mu_t y_t + \nu)\) established in Lemma~\ref{lemma:stein_score}, the difference between the estimated and ground truth score functions cancels the linear \(-y_t\) terms.
    We place the marginals at all scales of the diffusion noise level on a common probability
    space. Let \(\bar{y}_0^\ast := (-1)^{z+1}\mu_0^\ast\), and let
    \(\varepsilon\sim\calN(0,1)\) be independent of \(z\). At each scale,
    \(\sqrt{\bar\alpha_t}\,\bar y_0^\ast+\varepsilon\) has the same law as
    \(y_t\mid\bx;\theta_t^\ast,\pi^\ast\). Substituting this representation
    and \(\mu_t=\mu_0\sqrt{\bar\alpha_t}\) gives
    \[
        \frac{1}{\bar{\alpha}_t} \| s_{\theta_t, \nu}(y_t, \bx) - s_{\theta_t^\ast, \nu^\ast}(y_t, \bx) \|^2
        =\underbrace{\left( \mu_0 \tanh\big(\bar{\alpha}_t \mu_0 \bar{y}_0^\ast + \sqrt{\bar{\alpha}_t} \mu_0 \varepsilon + \nu\big) - \mu_0^\ast \tanh\big(\bar{\alpha}_t \mu_0^\ast \bar{y}_0^\ast + \sqrt{\bar{\alpha}_t} \mu_0^\ast \varepsilon + \nu^\ast\big) \right)^2}_{=:\ell_{\mu_0, \mu_0^\ast, \nu, \nu^\ast}(\bar{\alpha}_t, \bar{y}_0^\ast, \varepsilon)}.
    \]   
    The triangle inequality and \(|\tanh(u)|\le1\) give the following bound
    for each fixed \(\bx\).
    \[
        |\ell_{\mu_0, \mu_0^\ast, \nu, \nu^\ast}(\bar{\alpha}_t, \bar{y}_0^\ast, \varepsilon)| \le (|\mu_0| + |\mu_0^\ast|)^2 < \infty
        \implies \E_{y_t \mid \bx; \theta_t^\ast, \pi^\ast}\left[\frac{1}{\bar{\alpha}_t} \| s_{\theta_t, \nu}(y_t, \bx) - s_{\theta_t^\ast, \nu^\ast}(y_t, \bx) \|^2\right] < \infty.
    \]
    This integrable bound permits the limit to pass through the expectation by
    the dominated convergence theorem
    \citep[Theorem~1.5.8]{durrett2019probability}.
    The expectation over the common probability space is the iterated
    expectation over \(\bar y_0^\ast\) and \(\varepsilon\). For each
    \(c\in\{0,1\}\), the dominated convergence theorem gives
    \[
    \lim_{\bar{\alpha}_t \to c}\E_{y_t \mid \bx; \theta_t^\ast, \pi^\ast}\left[\frac{1}{\bar{\alpha}_t} \| s_{\theta_t, \nu}(y_t, \bx) - s_{\theta_t^\ast, \nu^\ast}(y_t, \bx) \|^2\right] 
    = \E_{\bar{y}_0^\ast \mid \mu_0^\ast, \nu^\ast}
      \E_{\varepsilon}\left[\lim_{\bar{\alpha}_t \to c}
      \ell_{\mu_0, \mu_0^\ast, \nu, \nu^\ast}
      (\bar{\alpha}_t, \bar{y}_0^\ast, \varepsilon)\right].
    \]
    The pointwise limit for \(c=0\) is
    \[
        \lim_{\bar{\alpha}_t \to 0} \ell_{\mu_0, \mu_0^\ast, \nu, \nu^\ast}(\bar{\alpha}_t, \bar{y}_0^\ast, \varepsilon) = \left( \mu_0 \tanh \nu - \mu_0^\ast \tanh \nu^\ast \right)^2.
    \]
    The pointwise limit for \(c=1\) is
    \[
        \lim_{\bar{\alpha}_t \to 1} \ell_{\mu_0, \mu_0^\ast, \nu, \nu^\ast}(\bar{\alpha}_t, \bar{y}_0^\ast, \varepsilon) = \left( \mu_0 \tanh(\mu_0 (\bar{y}_0^\ast + \varepsilon) + \nu) - \mu_0^\ast \tanh(\mu_0^\ast (\bar{y}_0^\ast + \varepsilon) + \nu^\ast) \right)^2.
    \]
    Since \(\bar{y}_0^\ast+\varepsilon\) has the ground truth law of
    \(y_0\mid\bx;\theta^\ast,\pi^\ast\), the last display gives the second
    equality.
\end{proof}

\begin{lemma}[Wasserstein-2 Distance between Two Discrete Distributions]\label{lemma:wasserstein_distance_discrete}
    Fix \(\bx\). Define the discrete distributions
    \(D \equiv D(\bx;\theta, \pi) := \pi(1) \delta_{\langle \theta, \bx\rangle} + \pi(2) \delta_{-\langle \theta, \bx\rangle}\) and
    \(D^* \equiv D(\bx;\theta^*, \pi^*) := \pi^*(1) \delta_{\langle \theta^*, \bx\rangle} + \pi^*(2) \delta_{-\langle \theta^*, \bx\rangle}\).
    Then for \(\mu_0 \equiv\langle \theta, \bx \rangle, \mu_0^\ast \equiv\langle \theta^*, \bx \rangle\) and
    \(\nu \equiv (\ln\pi(1)-\ln\pi(2))/2, \nu^\ast \equiv (\ln\pi^*(1)-\ln\pi^*(2))/2\), we have
    \[
    W_2^2(D, D^*) = (|\mu_0| - |\mu_0^\ast|)^2 + 2 |\mu_0| |\mu_0^\ast| \left| \sgn(\mu_0) \tanh \nu - \sgn(\mu_0^\ast) \tanh \nu^\ast \right|.
    \]
\end{lemma}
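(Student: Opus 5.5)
The plan is to reduce both measures to a common normal form and then compute the one-dimensional optimal transport cost explicitly, which is elementary for two-point laws.

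\textbf{Step 1 (normal form).} Write \(a:=|\mu_0|\) and \(b:=|\mu_0^\ast|\). Since \(\pi(1)=(1+\tanh\nu)/2\) and \(\pi(2)=(1-\tanh\nu)/2\), the law \(D\) puts mass \(p:=(1+\sgn(\mu_0)\tanh\nu)/2\) on \(+a\) and mass \(1-p\) on \(-a\) whenever \(\mu_0\neq0\). If \(\mu_0>0\) this is immediate. If \(\mu_0<0\), the atom at \(+a=-\mu_0\) carries \(\pi(2)\), which again equals \((1+\sgn(\mu_0)\tanh\nu)/2\). Likewise \(D^\ast\) puts mass \(q:=(1+\sgn(\mu_0^\ast)\tanh\nu^\ast)/2\) on \(+b\) and \(1-q\) on \(-b\). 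This gives the key identity
\[
|p-q|=\tfrac12\bigl|\sgn(\mu_0)\tanh\nu-\sgn(\mu_0^\ast)\tanh\nu^\ast\bigr|.
\]
The degenerate cases \(\mu_0=0\) or \(\mu_0^\ast=0\) need a separate check. There \(D=\delta_0\) (resp.\ \(D^\ast=\delta_0\)), the only coupling is the product, and \(W_2^2=b^2\) (resp.\ \(a^2\)). This matches the formula because the term \(2ab|\cdot|\) vanishes, whatever value is assigned to \(\sgn(0)\).

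\textbf{Step 2 (parametrize couplings).} For \(a,b>0\), a coupling \(\gamma\) of \(D\) and \(D^\ast\) is a \(2\times2\) nonnegative matrix on \(\{-a,a\}\times\{-b,b\}\) with prescribed marginals. It is determined by the single entry \(s:=\gamma(\{a\}\times\{b\})\), which ranges over \([\max(0,p+q-1),\min(p,q)]\). The other entries are \(p-s\), \(q-s\) and \(1-p-q+s\). The transport cost is
\[
s(a-b)^2+(p-s+q-s)(a+b)^2+(1-p-q+s)(a-b)^2,
\]
which is affine in \(s\) with slope \(2(a-b)^2-2(a+b)^2=-8ab<0\). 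Hence the minimum is attained at \(s=\min(p,q)\). This is the monotone (quantile) coupling, so the argument does not need to cite general one-dimensional OT theory.

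\textbf{Step 3 (evaluate).} At \(s=\min(p,q)\) the cross mass is \(p+q-2\min(p,q)=|p-q|\), and the same-sign mass is \(1-|p-q|\). The cost therefore equals
\[
(a-b)^2(1-|p-q|)+(a+b)^2|p-q|=(a-b)^2+4ab|p-q|.
\]
Substituting the identity from Step 1 gives \((|\mu_0|-|\mu_0^\ast|)^2+2|\mu_0||\mu_0^\ast|\,|\sgn(\mu_0)\tanh\nu-\sgn(\mu_0^\ast)\tanh\nu^\ast|\).

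The only place needing care is Step 1. The sign bookkeeping when \(\mu_0<0\) swaps which mixing weight sits on the positive atom, and the degenerate zero-mean cases must be handled separately. Once that normal form is in place, the optimization in Step 2 is a one-variable linear program.
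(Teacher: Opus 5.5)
Your proof is correct and follows the paper's route. Both arguments reduce \(D\) and \(D^\ast\) to the laws of \(aS\) and \(a^\ast S^\ast\) with \(\Pr(S=1)=(1+\sgn(\mu_0)\tanh\nu)/2\), and then choose the coupling that maximizes sign agreement; your one-parameter linear program with slope \(-8ab\) just makes explicit the optimality that the paper asserts through the maximal coupling with \(\Pr(S\neq S^\ast)=|q-q^\ast|/2\).
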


\begin{proof}[Proof of Lemma~\ref{lemma:wasserstein_distance_discrete}]
    We adopt \(\sgn(0)=0\) and set
    \[
        a:=|\mu_0|,\qquad a^\ast:=|\mu_0^\ast|,\qquad
        q:=\sgn(\mu_0)\tanh\nu,\qquad
        q^\ast:=\sgn(\mu_0^\ast)\tanh\nu^\ast.
    \]
    If \(aa^\ast=0\), the claimed identity follows. Otherwise, \(D\) is the law of \(aS\), where \(S\in\{-1,1\}\), \(\Pr(S=1)=(1+q)/2\), and \(\tanh\nu=\pi(1)-\pi(2)\). Likewise, \(D^*\) is the law of \(a^\ast S^\ast\) with \(\Pr(S^\ast=1)=(1+q^\ast)/2\).
    For any coupling of \(S\) and \(S^\ast\),
    \[
        \E[(aS-a^\ast S^\ast)^2]
        =a^2+(a^\ast)^2-2aa^\ast\E[SS^\ast].
    \]
    The maximal coupling has \(\Pr(S\neq S^\ast)=|q-q^\ast|/2\), and hence \(\E[SS^\ast]=1-|q-q^\ast|\). It therefore minimizes the quadratic cost, giving
    \[
        W_2^2(D,D^*)=(a-a^\ast)^2+2aa^\ast|q-q^\ast|,
    \]
    which is the stated formula.
\end{proof}

\begin{lemma}[Expectations of Gaussians, Lemma E.2 of~\citet{luo2025structural}]\label{lemma:expectations_gaussian}
    We use the following identities from Lemma~E.2 of
    \citet{luo2025structural}. Let \(g\) and \(g'\) be standard Gaussian
    variables with \(\E[gg']=\sin\varphi\), where
    \(\varphi\in[-\p/2,\p/2]\). Then
    \[
    \E\left[\sgn\left(g g^{\prime}\right)\right]=\frac{2}{\p} \varphi, \quad \E\left[\left|g g^{\prime}\right|\right]=\frac{2}{\p}[\varphi \sin \varphi+\cos \varphi], \quad \E\left[g^2 \sgn\left(g g^{\prime}\right)\right]=\frac{2}{\p}[\varphi+\sin \varphi \cos \varphi].
    \]
\end{lemma}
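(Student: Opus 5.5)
We prove all three identities with one polar-coordinate representation that makes the correlation structure explicit. Let \(u,v\) be independent standard Gaussians and set \(g:=u\) and \(g':=\sin\varphi\,u+\cos\varphi\,v\). Then \((g,g')\) has the required joint law, since \(\cos\varphi\ge0\) for \(\varphi\in[-\p/2,\p/2]\). Writing \(u=R\cos\Theta\) and \(v=R\sin\Theta\) with \(\Theta\) uniform on \([0,2\p)\), \(R\) independent of \(\Theta\), and \(\E[R^2]=2\), we get
\[
g\,g'=R^2\cos\Theta\,\sin(\Theta+\varphi).
\]
Each expectation then factorizes into a radial moment (\(\E[R^0]=1\) or \(\E[R^2]=2\)) times an elementary trigonometric average over \(\Theta\).

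\textbf{First identity.} Since \(\sgn(gg')=\sgn(\cos\Theta\sin(\Theta+\varphi))\), only the angle matters. The sign of \(g=\langle(u,v),(1,0)\rangle\) and the sign of \(g'=\langle(u,v),(\sin\varphi,\cos\varphi)\rangle\) disagree exactly on two opposite sectors. Each sector has angular width equal to the angle between the two normals, namely \(\arccos(\sin\varphi)=\p/2-\varphi\). Therefore
\[
\Pr(\sgn g\ne\sgn g')=\frac{\p/2-\varphi}{\p},
\qquad
\E[\sgn(gg')]=1-2\,\frac{\p/2-\varphi}{\p}=\frac{2}{\p}\varphi.
\]

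\textbf{Second identity.} I would use Price's theorem rather than integrate the absolute value directly. With \(\rho:=\E[gg']=\sin\varphi\), Price's theorem for \(f(x)=|x|\) gives
\[
\frac{\d}{\d\rho}\E|gg'|=\E[\sgn g\,\sgn g']=\frac{2}{\p}\arcsin\rho,
\]
where the last equality is the first identity. At \(\rho=0\) the variables are independent, so \(\E|gg'|=(\E|g|)^2=2/\p\). Integrating \(\int_0^{\sin\varphi}\arcsin r\,\d r=\bigl[r\arcsin r+\sqrt{1-r^2}\bigr]_0^{\sin\varphi}=\varphi\sin\varphi+\cos\varphi-1\) and adding the initial value \(2/\p\) yields \((2/\p)(\varphi\sin\varphi+\cos\varphi)\).

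As a cross-check that avoids Price's theorem, compute \(2\cdot\frac{1}{2\p}\int_0^{2\p}|\cos\Theta\sin(\Theta+\varphi)|\,\d\Theta\). Use \(\cos\Theta\sin(\Theta+\varphi)=\tfrac12[\sin(2\Theta+\varphi)+\sin\varphi]\) and split the circle at the four zeros \(\Theta\in\{\pm\p/2,\,-\varphi,\,\p-\varphi\}\).

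\textbf{Third identity.} Compute
\[
\E[g^2\sgn(gg')]=\E[R^2]\cdot\frac{1}{2\p}\int_0^{2\p}\cos^2\Theta\,\sgn\bigl(\cos\Theta\sin(\Theta+\varphi)\bigr)\,\d\Theta .
\]
By \(\Theta\mapsto\Theta+\p\) symmetry, the integral reduces to twice an integral over \((-\p/2,\p/2)\). On this interval \(\cos\Theta>0\), so the sign is \(+1\) on \((-\varphi,\p/2)\) and \(-1\) on \((-\p/2,-\varphi)\). With the antiderivative \(\tfrac12(\Theta+\sin\Theta\cos\Theta)\) of \(\cos^2\Theta\),
\[
\int_{-\varphi}^{\p/2}\cos^2\Theta\,\d\Theta-\int_{-\p/2}^{-\varphi}\cos^2\Theta\,\d\Theta=\varphi+\sin\varphi\cos\varphi .
\]
Collecting the factors \(2\cdot\frac{2}{2\p}\) gives \((2/\p)(\varphi+\sin\varphi\cos\varphi)\). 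The same antiderivative method, with \(|\cdot|\) handled by the zero set, also verifies the second identity.

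The main obstacle is purely bookkeeping: correctly locating the sign-change points of \(\sin(\Theta+\varphi)\) relative to those of \(\cos\Theta\) for all \(\varphi\in[-\p/2,\p/2]\), including negative \(\varphi\). I would check the endpoint cases \(\varphi=\pm\p/2\) (where \(g'=\pm g\)) and \(\varphi=0\) (independence) against the closed forms to confirm the orientation.
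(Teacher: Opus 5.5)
The paper gives no proof of this lemma; it imports the three identities verbatim from Lemma~E.2 of \citet{luo2025structural}. Your argument is therefore a genuinely different, self-contained route, and it is correct.

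Your coupling $g=u$, $g'=\sin\varphi\,u+\cos\varphi\,v$, together with the polar factorization $gg'=R^2\cos\Theta\sin(\Theta+\varphi)$, reduces everything to a radial moment times an angular average. Each step checks out:
\begin{itemize}
\item The disagreement sectors have width $\arccos(\sin\varphi)=\p/2-\varphi$.
\item The Price integral $\int_0^{\sin\varphi}\arcsin r\,\d r=\varphi\sin\varphi+\cos\varphi-1$ combines correctly with $\E|gg'|=2/\p$ at $\rho=0$.
\item The signed $\cos^2\Theta$ integral gives $\varphi+\sin\varphi\cos\varphi$.
\item Your polar cross-check for the second identity also closes, since $\int_{-\p/2}^{\p/2}\cos\Theta\,|\sin(\Theta+\varphi)|\,\d\Theta=\varphi\sin\varphi+\cos\varphi$.
\end{itemize}

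The citation keeps the appendix short. Your derivation adds an explicit check of negative $\varphi$, which the paper genuinely relies on: it applies the first identity with the signed angle $\arcsin\rho$ in Lemma~\ref{lemma:high_snr_em_imbalance}, and the second in Lemma~\ref{lemma:expectations_wasserstein_distance_discrete}.

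Two small remarks:
\begin{itemize}
\item The statement, like your coupling, tacitly assumes $(g,g')$ is jointly Gaussian. This holds in both applications, because there $g,g'$ are standardized projections of $\bx\sim\calN(\mathbf0,I_d)$.
\item The Price step needs the weak-derivative form of Price's theorem for $|x|$, plus a continuity argument to integrate up to $\rho=\pm1$. The purely polar computation treats all $\varphi\in[-\p/2,\p/2]$ uniformly, so it is the cleaner primary argument, with Price's theorem as the cross-check.
\end{itemize}
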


\begin{lemma}[Expectation of Wasserstein-2 Distance between Two Discrete Distributions]\label{lemma:expectations_wasserstein_distance_discrete}
    Suppose that \(\bx\) is independent of \((z,\varepsilon_0)\) and
    \(\E[\bx \bx^\top] = I_d\). Then \(D,D^*\) from
    Lemma~\ref{lemma:wasserstein_distance_discrete} satisfy
    \[
    \begin{aligned}
    \E_{\bx} W_2^2(D, D^*) &= \|\theta\|^2 + \|\theta^*\|^2 -2 \langle \theta, \theta^* \rangle \sgn(\nu \nu^*) \tanh(|\nu| \wedge |\nu^*|)
    - 2\E_{\bx} [|\mu_0 \mu_0^*|](1 - \tanh(|\nu| \vee |\nu^*|)) \\
    &\leq \|\theta\|^2 + \|\theta^*\|^2 -2 \langle \theta, \theta^* \rangle \sgn(\nu \nu^*) \tanh(|\nu| \wedge |\nu^*|)
    - 2|\langle \theta, \theta^* \rangle|(1 - \tanh(|\nu| \vee |\nu^*|)).
    \end{aligned}
    \]
    In particular, suppose that \(\bx \sim \calN(\mathbf{0}, I_d)\) and \(\theta,\theta^*\neq\mathbf{0}\). Define \(\rho := \frac{\langle \theta, \theta^* \rangle}{\|\theta\| \|\theta^*\|}\) and \(\varphi := \arcsin |\rho|\), so \(\sin \varphi = |\rho|\). Then
    \[
    \begin{aligned}
    &\E_{\bx \sim \calN(\mathbf{0}, I_d)} W_2^2(D, D^*) \\
    =& \|\theta\|^2 + \|\theta^*\|^2 - 2 \|\theta\| \|\theta^*\|
    \left[\sgn(\rho \nu \nu^\ast) \tanh(|\nu| \wedge |\nu^*|) \sin \varphi + (1 - \tanh(|\nu| \vee |\nu^*|))\frac{2}{\p} [\varphi \sin \varphi+\cos \varphi] \right].
    \end{aligned}
    \]
\end{lemma}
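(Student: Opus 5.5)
The plan is to derive a pointwise (fixed-\(\bx\)) algebraic identity from Lemma~\ref{lemma:wasserstein_distance_discrete} and then take the covariate expectation. Write \(a:=|\mu_0|\), \(a^\ast:=|\mu_0^\ast|\), \(q:=\sgn(\mu_0)\tanh\nu\) and \(q^\ast:=\sgn(\mu_0^\ast)\tanh\nu^\ast\). Lemma~\ref{lemma:wasserstein_distance_discrete} gives
\[
W_2^2(D,D^*)=a^2+(a^\ast)^2-2aa^\ast\bigl(1-|q-q^\ast|\bigr).
\]
It therefore suffices to prove, for every fixed \(\bx\),
\[
|\mu_0\mu_0^\ast|\bigl(1-|q-q^\ast|\bigr)=\mu_0\mu_0^\ast\,\sgn(\nu\nu^\ast)\tanh(|\nu|\wedge|\nu^\ast|)+|\mu_0\mu_0^\ast|\bigl(1-\tanh(|\nu|\vee|\nu^\ast|)\bigr).
\]

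\textbf{Pointwise identity.} If \(\mu_0\mu_0^\ast=0\), both sides vanish. Otherwise set \(s:=\sgn(\mu_0\mu_0^\ast)\). Then \(|q-q^\ast|=|\tanh\nu-s\tanh\nu^\ast|\), because \(\tanh\) is odd.

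\emph{Case \(s\,\sgn(\nu\nu^\ast)=1\).} Here \(\tanh\nu\) and \(s\tanh\nu^\ast\) have the same sign, so
\[
|q-q^\ast|=\tanh(|\nu|\vee|\nu^\ast|)-\tanh(|\nu|\wedge|\nu^\ast|).
\]
Moreover \(\mu_0\mu_0^\ast\sgn(\nu\nu^\ast)=|\mu_0\mu_0^\ast|\), and the identity follows.

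\emph{Case \(s\,\sgn(\nu\nu^\ast)=-1\).} Here the two terms have opposite signs, so
\[
|q-q^\ast|=\tanh(|\nu|\vee|\nu^\ast|)+\tanh(|\nu|\wedge|\nu^\ast|).
\]
The sign term now equals \(-|\mu_0\mu_0^\ast|\tanh(|\nu|\wedge|\nu^\ast|)\), and the identity again follows.

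\emph{Case \(\nu\nu^\ast=0\).} Then \(\tanh(|\nu|\wedge|\nu^\ast|)=0\) and \(|q-q^\ast|=\tanh(|\nu|\vee|\nu^\ast|)\). Both sides agree directly.

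This sign bookkeeping is the main obstacle, and the case split above is how I would organize it.

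\textbf{Expectation and inequality.} Take \(\E_\bx\) of the pointwise identity. Since \(\E[\bx\bx^\top]=I_d\), we have \(\E\mu_0^2=\|\theta\|^2\), \(\E(\mu_0^\ast)^2=\|\theta^\ast\|^2\) and \(\E\mu_0\mu_0^\ast=\langle\theta,\theta^\ast\rangle\). The \(\nu\)-dependent factors are deterministic and come out of the expectation, which yields the stated equality. For the inequality, Jensen's inequality gives \(\E|\mu_0\mu_0^\ast|\ge|\langle\theta,\theta^\ast\rangle|\). Combined with \(1-\tanh(|\nu|\vee|\nu^\ast|)\ge0\) and the minus sign in front of this term, the bound follows.

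\textbf{Gaussian case.} For \(\bx\sim\calN(\mathbf0,I_d)\), set \(g:=\mu_0/\|\theta\|\) and \(g':=\mu_0^\ast/\|\theta^\ast\|\). These are standard Gaussians with correlation \(\rho\).
\begin{itemize}
\item Since \(|gg'|\) is unchanged under \(g'\mapsto-g'\), we may assume correlation \(|\rho|=\sin\varphi\). Lemma~\ref{lemma:expectations_gaussian} then gives \(\E|gg'|=\frac{2}{\p}[\varphi\sin\varphi+\cos\varphi]\).
\item We also have \(\langle\theta,\theta^\ast\rangle\sgn(\nu\nu^\ast)=\|\theta\|\|\theta^\ast\|\sgn(\rho\nu\nu^\ast)\sin\varphi\).
\end{itemize}
Substituting both into the equality and factoring out \(2\|\theta\|\|\theta^\ast\|\) gives the closed form.
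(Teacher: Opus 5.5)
Your proposal is correct and follows the paper's proof. Both derive the pointwise identity $W_2^2(D,D^*)=\mu_0^2+(\mu_0^\ast)^2-2\mu_0\mu_0^\ast\sgn(\nu\nu^\ast)\tanh(|\nu|\wedge|\nu^\ast|)-2|\mu_0\mu_0^\ast|(1-\tanh(|\nu|\vee|\nu^\ast|))$ from Lemma~\ref{lemma:wasserstein_distance_discrete}, take expectations using $\E[\bx\bx^\top]=I_d$, bound $\E|\mu_0\mu_0^\ast|\ge|\langle\theta,\theta^\ast\rangle|$, and invoke Lemma~\ref{lemma:expectations_gaussian} for the Gaussian closed form. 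The differences are cosmetic: the paper compresses your three-case sign analysis into the identity $|a-b|=\max\{|a|,|b|\}-\sgn(ab)\min\{|a|,|b|\}$, and it applies the Gaussian lemma with the signed angle $\arcsin\rho$ (the result is even in the angle) rather than reflecting $g'\mapsto-g'$.
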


\begin{proof}[Proof of Lemma~\ref{lemma:expectations_wasserstein_distance_discrete}]
    Write \(A:=\mu_0=\langle\theta,\bx\rangle\), \(B:=\mu_0^\ast=\langle\theta^\ast,\bx\rangle\), and
    \[
        m:=\tanh(|\nu|\wedge|\nu^\ast|),
        \qquad
        M:=\tanh(|\nu|\vee|\nu^\ast|).
    \]
    Since \(|\tanh\nu|=\tanh|\nu|\), the identity
    \(|a-b|=\max\{|a|,|b|\}-\sgn(ab)\min\{|a|,|b|\}\) gives
    \[
        |AB|\left|\sgn(A)\tanh\nu-\sgn(B)\tanh\nu^\ast\right|
        =|AB|M-AB\,\sgn(\nu\nu^\ast)m.
    \]
    Lemma~\ref{lemma:wasserstein_distance_discrete} therefore yields the pointwise identity
    \[
        W_2^2(D,D^*)
        =A^2+B^2-2AB\,\sgn(\nu\nu^\ast)m-2|AB|(1-M).
    \]
    Taking expectations and using \(\E[\bx\bx^\top]=I_d\) proves the equality.
    Moreover, \(\E|AB|\geq|\E[AB]|=|\langle\theta,\theta^\ast\rangle|\).
    Since \(1-M\geq0\), this proves the stated inequality.

    For the Gaussian specialization, the normalized projections \(G:=A/\|\theta\|\) and \(G^\ast:=B/\|\theta^\ast\|\) are standard Gaussian with correlation \(\rho\). Lemma~\ref{lemma:expectations_gaussian}, applied with the signed angle \(\arcsin\rho\), gives
    \[
        \E|AB|
        =\frac{2\|\theta\|\|\theta^\ast\|}{\p}
        (\varphi\sin\varphi+\cos\varphi),
        \qquad
        \E[AB]=\sgn(\rho)\|\theta\|\|\theta^\ast\|\sin\varphi.
    \]
    Taking expectations in the pointwise identity and substituting these moments proves the Gaussian formula. If either parameter vector is zero, the general identity instead gives
    \(\E W_2^2(D,D^*)=\|\theta\|^2+\|\theta^\ast\|^2\).
\end{proof}

\begin{proposition}[Proposition~\ref{prop:terminal_kl_pinsker_zero_set}]
    Suppose that \(0<\bar{\alpha}_T\leq 1\), \(\theta^\ast\neq\mathbf{0}\),
    \(\nu^\ast\in\R\), and that \(\bx\) has a density that is
    positive almost everywhere on \(\R^d\) and finite second moments.
    Define the terminal population KL divergence
    \[
        \KL_T(\theta,\nu)
        := \KL\left(
        p(y_T,\bx\mid\theta_T^\ast,\pi^\ast)
        \parallel
        p(y_T,\bx\mid\theta_T,\pi)
        \right).
    \]
    Then
    \[
        \argmin_{(\theta,\nu)\in\R^d\times\R}\KL_T(\theta,\nu)
        =\{(\theta^\ast,\nu^\ast),(-\theta^\ast,-\nu^\ast)\}.
    \]
\end{proposition}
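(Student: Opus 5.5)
Since $\KL_T\geq0$, the plan is to show that it vanishes at the two listed points and nowhere else. The joint laws share the covariate marginal, so the chain rule for relative entropy gives
\[
\KL_T(\theta,\nu)=\E_{\bx}\KL\bigl(p(\,\cdot\mid\bx;\theta_T^\ast,\pi^\ast)\parallel p(\,\cdot\mid\bx;\theta_T,\pi)\bigr).
\]
For the two candidate points, Lemma~\ref{lemma:nll} shows that $F(y,\mu,\nu)$ is invariant under $(\mu,\nu)\mapsto(-\mu,-\nu)$, because $\cosh$ is even. Hence the conditional densities at $(\theta^\ast,\nu^\ast)$ and $(-\theta^\ast,-\nu^\ast)$ coincide for every $\bx$, and $\KL_T=0$ at both points. (When $\bar\alpha_T<1$ this also follows from Lemma~\ref{lemma:decaying_kl_bound}, Lemma~\ref{lemma:wasserstein_distance_conditional} and the explicit formula of Lemma~\ref{lemma:wasserstein_distance_discrete}, which gives $W_2^2(D,D^\ast)=0$ at both points. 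The invariance argument, however, covers $\bar\alpha_T=1$ as well.) So both points are global minimizers with value $0$.

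For the converse, suppose $\KL_T(\theta,\nu)=0$. Pinsker's inequality gives $0=\KL_T\geq 2\E_{\bx}\TV^2(\cdot,\cdot)$, so for almost every $\bx$ the two conditional laws of $y_T$ agree. Each law is a two-component Gaussian mixture with unit variances, means $\pm\mu_T$ and $\pm\mu_T^\ast$, and weights $\pi,\pi^\ast$. I would then use identifiability of finite Gaussian location mixtures, which can be checked directly by comparing the moment generating functions
\[
\E e^{sy}=e^{s^2/2}\bigl(\pi(1)e^{s\mu}+\pi(2)e^{-s\mu}\bigr)=e^{s^2/2}\,\frac{\cosh(s\mu+\nu)}{\cosh\nu}.
\]
Equality of the laws forces $\cosh(s\mu+\nu)/\cosh\nu=\cosh(s\mu^\ast+\nu^\ast)/\cosh\nu^\ast$ for all $s$. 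Comparing second derivatives at $s=0$ gives $\mu^2=(\mu^\ast)^2$, and comparing first derivatives gives $\mu\tanh\nu=\mu^\ast\tanh\nu^\ast$. Therefore, for a.e. $\bx$,
\[
\langle\theta,\bx\rangle=\pm\langle\theta^\ast,\bx\rangle\quad\text{and}\quad\langle\theta,\bx\rangle\tanh\nu=\langle\theta^\ast,\bx\rangle\tanh\nu^\ast .
\]
The factor $\sqrt{\bar\alpha_T}>0$ cancels throughout, which is why $\bar\alpha_T>0$ is needed.

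The main obstacle is passing from these a.e.\ pointwise relations to global equalities of the parameters, and this is where the positive-density assumption enters. The set $\{\bx:\langle\theta-\theta^\ast,\bx\rangle=0\}\cup\{\bx:\langle\theta+\theta^\ast,\bx\rangle=0\}$ must have full probability. If neither $\theta-\theta^\ast$ nor $\theta+\theta^\ast$ is zero, this is a union of two hyperplanes, which has Lebesgue measure zero and hence probability zero under a density positive a.e., a contradiction. So $\theta=s\theta^\ast$ for some $s\in\{\pm1\}$. Substituting into the second relation gives $\langle\theta^\ast,\bx\rangle(s\tanh\nu-\tanh\nu^\ast)=0$ a.e. Because $\theta^\ast\ne\mathbf0$, the hyperplane $\{\langle\theta^\ast,\bx\rangle=0\}$ is null, so $\tanh\nu=s\tanh\nu^\ast$ and, by injectivity of $\tanh$, $\nu=s\nu^\ast$. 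This shows the zero set is exactly $\{(\theta^\ast,\nu^\ast),(-\theta^\ast,-\nu^\ast)\}$, and since the minimum value is $0$, this is the argmin set. The second-moment assumption only ensures that the relevant expectations, including the Wasserstein upper bound mentioned in the sketch, are finite.
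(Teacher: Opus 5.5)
Your proof is correct and follows essentially the same route as the paper. Both arguments use the chain rule together with Pinsker (or nonnegativity) to get a.e.\ equality of the conditional terminal laws. They then match the first two moments to obtain $\mu_0^2=(\mu_0^\ast)^2$ and $\mu_0\tanh\nu=\mu_0^\ast\tanh\nu^\ast$, use the density of $\bx$ to force $\theta=s\theta^\ast$ and then $\nu=s\nu^\ast$, and invoke joint-sign invariance for the reverse inclusion.

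The differences are cosmetic. You read the moments off the moment generating function instead of computing them directly. You also argue through the null union of the hyperplanes $\{\langle\theta\mp\theta^\ast,\bx\rangle=0\}$, whereas the paper deduces $\theta\theta^\top=\theta^\ast(\theta^\ast)^\top$ from the vanishing quadratic form.
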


\begin{proof}[Proof of Proposition~\ref{prop:terminal_kl_pinsker_zero_set}]
    Because the marginal distribution of \(\bx\) does not depend on the model parameters, the chain rule for KL divergence yields
    \[
        \KL_T(\theta,\nu)
        = \E_{\bx}\KL\left(
            p(y_T\mid\bx;\theta_T^\ast,\pi^\ast)
            \parallel p(y_T\mid\bx;\theta_T,\pi)
        \right).
    \]
    Applying Pinsker's inequality~\citep{cover2006elements} for each \(\bx\)
    and then taking the expectation over \(\bx\) gives
    \[
        \KL_T(\theta,\nu)
        \geq 2\E_{\bx}\TV^2\left(
            p(\,\cdot\mid\bx;\theta_T^\ast,\pi^\ast),
            p(\,\cdot\mid\bx;\theta_T,\pi)
        \right).
    \]

    We next identify the equality case. Suppose that \(\KL_T(\theta,\nu)=0\). Since the conditional KL divergence is non-negative, it follows that
    \[
        p(\,\cdot\mid\bx;\theta_T,\pi)
        =p(\,\cdot\mid\bx;\theta_T^\ast,\pi^\ast)
        \quad\text{as distributions for almost every }\bx.
    \]
    By Lemma~\ref{lemma:mlr_sde}, the first two raw moments of the terminal conditional distribution are
    \[
        \E_{y_T\mid\bx;\theta_T,\pi}[y_T]=\mu_T\tanh\nu,
        \qquad
        \E_{y_T\mid\bx;\theta_T,\pi}[y_T^2]=1+\mu_T^2.
    \]
    Equality of the terminal conditional distributions and \(\bar{\alpha}_T>0\) therefore imply
    \[
        \mu_0\tanh\nu=\mu_0^\ast\tanh\nu^\ast,
        \qquad
        \mu_0^2=(\mu_0^\ast)^2
        \quad\text{for almost every }\bx.
    \]
    The second identity can be rewritten as
    \[
        \bx^\top\left(\theta\theta^\top-\theta^\ast(\theta^\ast)^\top\right)\bx=0
        \quad\text{almost surely}.
    \]
    Since the density of \(\bx\) is positive almost everywhere on \(\R^d\), the quadratic polynomial on the left-hand side is the zero polynomial. Hence
    \[
        \theta\theta^\top=\theta^\ast(\theta^\ast)^\top.
    \]
    Because \(\theta^\ast\neq\mathbf{0}\), equality of these rank-one matrices implies \(\theta=s\theta^\ast\) for some \(s\in\{+1,-1\}\).
    Substitution into the first-moment identity gives \(s\tanh\nu=\tanh\nu^\ast\).
    Oddness and strict monotonicity of the hyperbolic tangent give \(\nu=s\nu^\ast\).
    The joint sign change \((\theta,\nu)\mapsto(-\theta,-\nu)\) exchanges the two mixture components and leaves the terminal conditional distribution unchanged.
    Thus
    \[
        \KL_T(\theta,\nu)=0
        \quad\Longleftrightarrow\quad
        (\theta,\nu)\in
        \{(\theta^\ast,\nu^\ast),(-\theta^\ast,-\nu^\ast)\}.
    \]
    Lemma~\ref{lemma:wasserstein_distance_discrete} and the same positive-density argument show that
    \(\E_{\bx}W_2^2(D,D^*)=0\) has the same two solutions. Combining these equivalences yields the claimed characterization of the global minimizers.
\end{proof}

\begin{lemma}[Integrated Relative de Bruijn Identity for Conditional MLR]\label{lemma:kl_divergence_SM_loss}
    Let the population Score Matching loss be \(\calL_t(\theta, \nu) := \frac{1}{2\bar{\alpha}_t} \E_{y_t, \bx; \theta_t^\ast, \pi^\ast} \left[ \| s_{\theta_t, \nu}(y_t, \bx) - s_{\theta_t^\ast, \nu^\ast}(y_t, \bx) \|^2 \right]\)
    as defined in Equation~\eqref{eq:loss}. Suppose that \(\bx\) is
    independent of \((z,\varepsilon_0)\) and that
    \(\bar{\alpha}_t \in (0, 1]\). Then every \(T\geq0\) satisfies
    \[
    \KL(p(y_0, \bx \mid \theta^\ast, \pi^\ast) \parallel p(y_0, \bx \mid \theta, \pi))
    = \KL(p(y_T, \bx \mid \theta_T^\ast, \pi^\ast) \parallel p(y_T, \bx \mid \theta_T, \pi)) + \int_{\bar{\alpha}_T}^{\bar{\alpha}_0} \calL_t(\theta, \nu) \d \bar{\alpha}_t.
    \]
\end{lemma}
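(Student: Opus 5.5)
The plan is to integrate the conditional dissipation identity of Lemma~\ref{lemma:kl_derivative} in time and then change variables from \(t\) to \(\bar\alpha_t\).

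\textbf{Step 1: reduce to conditional divergences.} Since the law of \(\bx\) does not depend on the parameters, the chain rule for relative entropy gives, for every \(t\geq0\),
\[
\KL(p(y_t,\bx\mid\theta_t^\ast,\pi^\ast)\parallel p(y_t,\bx\mid\theta_t,\pi))=\E_{\bx}\,k_t(\bx),
\]
where \(k_t(\bx):=\KL(p(y_t\mid\bx;\theta_t^\ast,\pi^\ast)\parallel p(y_t\mid\bx;\theta_t,\pi))\).

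\textbf{Step 2: integrate the dissipation identity.} For fixed \(\bx\), Lemma~\ref{lemma:kl_derivative} gives \(-\frac{\d}{\d t}k_t(\bx)=\frac{\beta_t}{2}\FI_t(\bx)\), with \(\FI_t(\bx)\) the conditional relative Fisher information. Integrating over \([0,T]\) yields
\[
k_0(\bx)-k_T(\bx)=\int_0^T\frac{\beta_t}{2}\FI_t(\bx)\,\d t.
\]

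\textbf{Step 3: change variables.} Since \(\d\bar\alpha_t=-\beta_t\bar\alpha_t\,\d t\), we have \(\frac{\beta_t}{2}\d t=-\frac{\d\bar\alpha_t}{2\bar\alpha_t}\), so the time integral becomes \(\int_{\bar\alpha_T}^{\bar\alpha_0}\frac{1}{2\bar\alpha_t}\FI_t(\bx)\,\d\bar\alpha_t\). Taking \(\E_{\bx}\) and exchanging it with the \(\d\bar\alpha_t\) integral by Tonelli (the integrand is nonnegative) turns the integrand into \(\calL_t(\theta,\nu)\), as defined in Equation~\eqref{eq:loss}. The case \(T=0\) is trivial.

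\textbf{Step 4: justify the exchange of limits (main obstacle).} This step has two parts.

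\emph{Integrability of the integrand.} The proof of Lemma~\ref{lemma:asymptotic_score_loss} shows that \(\FI_t(\bx)/\bar\alpha_t\) is bounded by \((|\mu_0|+|\mu_0^\ast|)^2\) uniformly in \(\bar\alpha_t\). This bound is integrable in \(\bx\) whenever \(\E\|\bx\|^2<\infty\). If only independence is assumed, Tonelli still applies, and both sides may be \(+\infty\) together.

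\emph{Finiteness of \(k_0(\bx)\).} Lemma~\ref{lemma:decaying_kl_bound} does not apply at \(\bar\alpha_t=1\). At \(t=0\) both conditional laws are unit-variance two-Gaussian mixtures, so the log-likelihood ratio grows at most quadratically in \(y\) and \(k_0(\bx)\) is finite for each \(\bx\).

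\emph{Absolute continuity of \(t\mapsto k_t(\bx)\) on \([0,T]\).} This is what makes the fundamental theorem of calculus usable in Step 2. I would use the FPK derivation in the proof of Lemma~\ref{lemma:kl_derivative}, which holds pointwise for \(t>0\) by explicit differentiation of the mixture densities; all boundary terms in the integrations by parts vanish because of Gaussian tails. I would then take the lower limit \(t\downarrow0\) using continuity of \(k_t(\bx)\), obtained by dominated convergence on the explicit mixture densities. Once \(k_0(\bx)-k_T(\bx)\) is identified pointwise in \(\bx\), integrating over \(\bx\) completes the proof.
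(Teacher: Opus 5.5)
Your proposal is correct and follows essentially the same route as the paper: it reduces the joint KL to \(\E_{\bx}\) of conditional KLs via \(p(y_t,\bx)=p(\bx)\,p(y_t\mid\bx)\), integrates the dissipation identity of Lemma~\ref{lemma:kl_derivative}, and changes variables with \(\d\bar\alpha_t=-\beta_t\bar\alpha_t\,\d t\). The only difference is the order of operations: the paper interchanges \(\frac{\d}{\d t}\) with \(\E_{\bx}\) before integrating in time, whereas you integrate pointwise in \(\bx\) and then apply Tonelli, which avoids dominating the time derivative, gives the identity in \([0,\infty]\) without moment assumptions, and makes explicit the regularity near \(t=0\) that the paper leaves implicit.
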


\begin{lemma}[Log-likelihood and Finite Sample Score Matching Loss]\label{lemma:log_likelihood_SM_loss_finite}
    Let \(\Pr_n\) be the empirical measure defined before
    Equation~\eqref{eq:auxiliary_loss_finite}, and let \(\calJ_t^n\) be the empirical
    auxiliary loss in Equation~\eqref{eq:auxiliary_loss_finite}. Then every
    \(T>0\) with \(0<\bar\alpha_T<1\) satisfies
    \[
    -\E_{\Pr_n}\ln p(y_0 | \bx; \theta, \pi) = \E_{\Pr_n}\KL \left(p(y_T\mid y_0) \parallel p(y_T | \bx; \theta_T, \pi)\right) + \int_{\bar{\alpha}_T}^{\bar{\alpha}_0} \calJ_t^n(\theta, \nu) \d \bar{\alpha}_t + \frac{1}{2}\left(\ln\frac{1-\bar{\alpha}_T}{\bar{\alpha}_T} + \ln (2\p) + 1\right).
    \]
\end{lemma}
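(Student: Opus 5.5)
The plan is to integrate the pathwise identity of Lemma~\ref{lemma:log_likelihood_derivative} in time and then convert to the $\bar\alpha_t$ variable. Since $\bar\alpha_t=\exp(-\int_0^t\beta_\tau\,\d\tau)$, we have $\d\bar\alpha_t/\d t=-\beta_t\bar\alpha_t$. Comparing with the definition \eqref{eq:auxiliary_loss_finite} of $\calJ_t^n$, the right-hand side of Lemma~\ref{lemma:log_likelihood_derivative} equals $\beta_t\bar\alpha_t\calJ_t^n(\theta,\nu)+\beta_t/2$. Hence for every $t>0$,
\[
\frac{\d}{\d t}\E_{\Pr_n}\E_{y_t\mid y_0}\ln p(y_t\mid\bx;\theta_t,\pi)
=-\frac{\d\bar\alpha_t}{\d t}\,\calJ_t^n(\theta,\nu)+\frac{\beta_t}{2}.
\]

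I will integrate this identity over $[\tau,T]$ for a small $\tau>0$ and then let $\tau\downarrow0$. The $\beta_t/2$ term integrates to $-\tfrac12\ln\bar\alpha_T+\tfrac12\ln\bar\alpha_\tau$. The first term becomes $\int_{\bar\alpha_T}^{\bar\alpha_\tau}\calJ_t^n\,\d\bar\alpha_t$.

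The limit $\tau\downarrow0$ is the step I expect to be the main obstacle, because Lemma~\ref{lemma:log_likelihood_derivative} only holds for $t>0$ and at $t=0$ the kernel $p(y_t\mid y_0)$ degenerates to a point mass. I would handle it with two facts.
\begin{itemize}
\item[(i)] $\E_{y_\tau\mid y_0}\ln p(y_\tau\mid\bx;\theta_\tau,\pi)\to\ln p(y_0\mid\bx;\theta,\pi)$. This follows by dominated convergence, writing $y_\tau=y_0\sqrt{\bar\alpha_\tau}+\xi\sqrt{1-\bar\alpha_\tau}$ and using that $F$ from Lemma~\ref{lemma:nll} grows at most quadratically in $(y_t,\mu_t)$. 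The average $\E_{\Pr_n}$ is a finite sum, so it commutes with the limit trivially.
\item[(ii)] $\calJ_t^n$ is integrable in $\bar\alpha_t$ near $\bar\alpha_t=1$. For this I would rewrite the cross term by Gaussian integration by parts in $\xi$: $\E_\xi[s\,\xi]/\sqrt{1-\bar\alpha_t}=\E_\xi[\partial_{y_t}s]$. Using Lemma~\ref{lemma:stein_score}, $\partial_{y_t}s=-1+\mu_t^2\sech^2(\cdot)$. This gives a bound on $\calJ_t^n$ that is uniform for $\bar\alpha_t\in[\bar\alpha_T,1]$ and removes the apparent $(1-\bar\alpha_t)^{-1/2}$ singularity.
\end{itemize}
Together these give
\[
\E_{\Pr_n}\E_{y_T\mid y_0}\ln p_T-\E_{\Pr_n}\ln p(y_0\mid\bx;\theta,\pi)=\int_{\bar\alpha_T}^{\bar\alpha_0}\calJ_t^n\,\d\bar\alpha_t-\tfrac12\ln\bar\alpha_T.
\]

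The last step is to identify the terminal term as a KL divergence. Since $p(y_T\mid y_0)=\calN(y_0\sqrt{\bar\alpha_T},1-\bar\alpha_T)$, we have
\[
\E_{y_T\mid y_0}\ln p_T=-\KL\bigl(p(y_T\mid y_0)\parallel p_T\bigr)-\tfrac12\ln\bigl(2\p\mathe(1-\bar\alpha_T)\bigr),
\]
where the last term is minus the Gaussian entropy. Substituting and rearranging yields
\[
-\E_{\Pr_n}\ln p(y_0\mid\bx;\theta,\pi)=\Delta+\int_{\bar\alpha_T}^{\bar\alpha_0}\calJ_t^n\,\d\bar\alpha_t+\tfrac12\Bigl(\ln\tfrac{1-\bar\alpha_T}{\bar\alpha_T}+\ln(2\p)+1\Bigr),
\]
with $\Delta=\E_{\Pr_n}\KL\bigl(p(y_T\mid y_0)\parallel p(y_T\mid\bx;\theta_T,\pi)\bigr)$, which is the claim. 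Finiteness of this KL term is guaranteed by Lemma~\ref{lemma:decaying_kl_bound_transition}.
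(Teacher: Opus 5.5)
Your proposal is correct and follows the paper's proof essentially step by step: integrate Lemma~\ref{lemma:log_likelihood_derivative} from a small positive time to \(T\) using \(\d\bar\alpha_t=-\beta_t\bar\alpha_t\,\d t\), pass to the limit at \(t=0\) by dominated convergence via the quadratic envelope of the log-density, and rewrite the terminal cross-entropy as the KL divergence plus the Gaussian entropy \(\tfrac12\ln(2\p\mathe(1-\bar\alpha_T))\). The one difference is at the upper endpoint. Your Gaussian integration-by-parts bound shows that \(\calJ_t^n\) is uniformly bounded on \([\bar\alpha_T,1]\), so the integral is a proper one. The paper instead interprets it as an improper integral, whose convergence is forced by the convergence of the remaining terms.
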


\begin{proof}
    The independence assumption gives \(p(\bx \mid z; \theta, \pi) = p(\bx \mid z; \theta_t, \pi) = p(\bx)\). Therefore \(p(y_0,\bx\mid \theta, \pi) = p(\bx) p(y_0\mid \bx; \theta, \pi)\) and \(p(y_t,\bx\mid \theta_t, \pi) = p(\bx) p(y_t\mid \bx; \theta_t, \pi)\).

    Under the ground truth, \(p(y_0, \bx \mid \theta^\ast, \pi^\ast) = p(\bx) p(y_0 \mid \bx; \theta^\ast, \pi^\ast)\) and \(p(y_t, \bx \mid \theta_t^\ast, \pi^\ast) = p(\bx) p(y_t \mid \bx; \theta_t^\ast, \pi^\ast)\). Hence \(\E_{y_0, \bx\mid \theta^\ast, \pi^\ast} [\cdot] = \E_{\bx}\E_{y_0| \bx; \theta^\ast, \pi^\ast} [\cdot]\) and \(\E_{y_t, \bx\mid \theta_t^\ast, \pi^\ast} [\cdot] = \E_{\bx}\E_{y_t| \bx; \theta_t^\ast, \pi^\ast} [\cdot]\).

    Interchanging \(\frac{\d}{\d t}\) and \(\E_\bx\), applying Lemma~\ref{lemma:kl_derivative}, and using \(\frac{\d \bar{\alpha}_t}{\d t} = -\beta_t \bar{\alpha}_t\) gives
    \[
    \begin{aligned}
    &-\frac{\d}{\d t}\KL(p(y_t, \bx \mid \theta_t^\ast, \pi^\ast) \parallel p(y_t, \bx \mid \theta_t, \pi))
    = -\E_\bx \frac{\d}{\d t}\KL(p(y_t\mid \bx; \theta_t^\ast, \pi^\ast) \parallel p(y_t\mid \bx; \theta_t, \pi)) \\
    =& \frac{\beta_t}{2} \E_\bx \E_{y_t| \bx; \theta_t^\ast, \pi^\ast} \left[ \| s_{\theta_t, \nu}(y_t, \bx) - s_{\theta_t^\ast, \nu^\ast}(y_t, \bx) \|^2 \right]
    = \beta_t \bar{\alpha}_t \cdot \calL_t(\theta, \nu)
    = -\calL_t(\theta, \nu)\; \frac{\d \bar{\alpha}_t}{\d t}.
    \end{aligned}
    \]
    Integrating from \(0\) to \(T\), with \(\theta_0=\theta\) and \(\theta_0^\ast=\theta^\ast\), gives
    \[
    \KL(p(y_0, \bx \mid \theta^\ast, \pi^\ast) \parallel p(y_0, \bx \mid \theta, \pi))
    - \KL(p(y_T, \bx \mid \theta_T^\ast, \pi^\ast) \parallel p(y_T, \bx \mid \theta_T, \pi)) = \int_{\bar{\alpha}_T}^{\bar{\alpha}_0} \calL_t(\theta, \nu) \d \bar{\alpha}_t.
    \]
    Rearranging the terms proves the result.
\end{proof}

\begin{proof}[Proof of Lemma~\ref{lemma:log_likelihood_SM_loss_finite}]
    Fix \(\epsilon\in(0,T)\). We apply
    Lemma~\ref{lemma:log_likelihood_derivative} to \(\calJ_t^n\), use
    \(\frac{\d \bar{\alpha}_t}{\d t}=-\beta_t\bar{\alpha}_t\), and
    integrate from \(\epsilon\) to \(T\). Under the common Gaussian coupling,
    \(y_\epsilon\to y_0\) in \(L^2\) and
    \(\theta_\epsilon\to\theta\) as \(\epsilon\downarrow0\). The log-density
    is continuous and has a quadratic envelope for each fixed empirical
    sample, so dominated convergence applies. We therefore let
    \(\epsilon\downarrow0\), interpret the integral at its upper endpoint as
    an improper integral, and obtain
    \[
    -\E_{\Pr_n}\ln p(y_0 \mid \bx; \theta, \pi) + \E_{\Pr_n}\E_{y_T\mid y_0} \ln p(y_T | \bx; \theta_T, \pi) = \int_{\bar{\alpha}_T}^{\bar{\alpha}_0} \calJ_t^n(\theta, \nu) \d \bar{\alpha}_t + \frac{1}{2}\ln\frac{1}{\bar{\alpha}_T}.
    \]
    The Gaussian transition satisfies
    \[
    \begin{aligned}
        p(y_T\mid y_0)
        &=\calN(y_T;y_0\sqrt{\bar\alpha_T},1-\bar\alpha_T),\\
        -\ln p(y_T\mid y_0)
        &=\frac12\ln\!\bigl(2\p(1-\bar\alpha_T)\bigr)
        +\frac{\|y_T-y_0\sqrt{\bar\alpha_T}\|^2}{2(1-\bar\alpha_T)},\\
        \E_{y_T\mid y_0}\!\left[\|y_T-y_0\sqrt{\bar\alpha_T}\|^2\right]
        &=1-\bar\alpha_T.
    \end{aligned}
    \]
    These identities give
    \[
    \begin{aligned}
        -\E_{y_T\mid y_0} \ln p(y_T | \bx; \theta_T, \pi)
        &= \KL(p(y_T\mid y_0) \parallel p(y_T | \bx; \theta_T, \pi)) - \E_{y_T\mid y_0}[\ln p(y_T\mid y_0)] \\
        &= \KL(p(y_T\mid y_0) \parallel p(y_T | \bx; \theta_T, \pi)) +\frac{1}{2}\left(\ln(1-\bar{\alpha}_T) + \ln(2\p) + 1\right).
    \end{aligned}
    \]
    Taking \(\E_{\Pr_n}\), substituting into the first identity, and rearranging the terms proves the result.
\end{proof}

\begin{lemma}[Derivatives of Negative Log-Likelihood]\label{lemma:derivatives_negative_log_likelihood}
    Let \(F(y_t, \mu_t, \nu):=-\ln p(y_t \mid \bx; \theta_t, \pi)\) be the negative log-likelihood from Lemma~\ref{lemma:nll}. Then
    \[
    y_t \partial_{y_t} F(y_t, \mu_t, \nu) - y_t^2 = \mu_t \partial_{\mu_t} F(y_t, \mu_t, \nu) - \mu_t^2 = (\partial_\nu F(y_t, \mu_t, \nu) - \tanh \nu)\mu_t y_t = - \tanh(\mu_t y_t + \nu) \mu_t y_t.
    \]
    Moreover, \(\partial^2_{y_t y_t} F(y_t, \mu_t, \nu) =  1 - \mu_t^2\sech^2(\mu_t y_t + \nu)\), and
    \[
        \|\partial_{y_t} F(y_t, \mu_t, \nu) \|^2  - \partial^2_{y_t y_t} F(y_t, \mu_t, \nu) = 2 y_t \partial_{y_t} F(y_t, \mu_t, \nu) + \left( \mu_t^2 - y_t^2 - 1\right).
    \]
\end{lemma}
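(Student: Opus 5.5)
This is a direct computation from the closed form of \(F\) in Lemma~\ref{lemma:nll}. Write \(u:=\mu_t y_t+\nu\). We first record the three first partial derivatives, then form the three combinations in the first display, and finally compute the second \(y_t\)-derivative and check the last identity algebraically. Every step uses only \(\partial_u\ln\cosh u=\tanh u\) and \(\partial_u\tanh u=\sech^2 u\).

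\emph{First derivatives.} Differentiating \(F=\tfrac12(y_t^2+\mu_t^2)-\ln\cosh(\mu_t y_t+\nu)+\ln\cosh\nu+\tfrac12\ln(2\p)\) gives
\[
\partial_{y_t}F=y_t-\mu_t\tanh u,\qquad
\partial_{\mu_t}F=\mu_t-y_t\tanh u,\qquad
\partial_{\nu}F=\tanh\nu-\tanh u.
\]
Multiplying the first by \(y_t\) and subtracting \(y_t^2\) gives \(-\mu_t y_t\tanh u\). Multiplying the second by \(\mu_t\) and subtracting \(\mu_t^2\) gives the same quantity. Finally, \((\partial_\nu F-\tanh\nu)\mu_t y_t=-\tanh u\,\mu_t y_t\). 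This establishes the chain of equalities. The first of these also agrees with Lemma~\ref{lemma:stein_score}, since \(\partial_{y_t}F=-s_{\theta_t,\nu}\).

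\emph{Second derivative and final identity.} Differentiating \(\partial_{y_t}F\) once more in \(y_t\) gives \(\partial^2_{y_ty_t}F=1-\mu_t^2\sech^2 u\). For the last display, expand
\[
\|\partial_{y_t}F\|^2=y_t^2-2\mu_t y_t\tanh u+\mu_t^2\tanh^2 u .
\]
Then use \(\tanh^2u=1-\sech^2u\) to write \(\mu_t^2\tanh^2 u=\mu_t^2-\mu_t^2\sech^2 u\). Subtracting \(\partial^2_{y_ty_t}F\) cancels the \(\sech^2\) terms and leaves
\[
y_t^2-2\mu_t y_t\tanh u+\mu_t^2-1 .
\]
On the right-hand side of the claim, \(2y_t\partial_{y_t}F=2y_t^2-2\mu_t y_t\tanh u\). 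Adding \(\mu_t^2-y_t^2-1\) gives the same expression, so the identity holds.

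There is no genuine obstacle here. The only step that needs care is the \(\sech^2\) cancellation, which relies on the Pythagorean identity \(\tanh^2+\sech^2=1\). The additive constants \(\ln\cosh\nu\) and \(\tfrac12\ln(2\p)\) are independent of \(y_t\) and \(\mu_t\), so they contribute only the \(\tanh\nu\) term in \(\partial_\nu F\).
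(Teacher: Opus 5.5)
Your proposal is correct and follows essentially the same route as the paper: compute the three first partials from the closed form of \(F\), multiply through to obtain the common value \(-\mu_t y_t\tanh(\mu_t y_t+\nu)\), and verify the last identity using \(\tanh^2=1-\sech^2\). The only cosmetic difference is the direction of the final step: the paper rewrites \(\mu_t^2\tanh^2(\mu_t y_t+\nu)\) as \(\|y_t-\partial_{y_t}F\|^2\) and expands, whereas you expand \(\|\partial_{y_t}F\|^2\) directly, and the algebra is the same.
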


\begin{proof}
    Differentiating the expression for \(F\) from Lemma~\ref{lemma:nll} with respect to \(y_t\), \(\mu_t\), and \(\nu\) gives
    \[
    \partial_{y_t} F(y_t, \mu_t, \nu) = y_t - \tanh(\mu_t y_t + \nu) \mu_t,\;
    \partial_{\mu_t} F(y_t, \mu_t, \nu) = \mu_t - \tanh(\mu_t y_t + \nu)y_t,
    \]
    \[
    \partial_{\nu} F(y_t, \mu_t, \nu) = \tanh \nu - \tanh(\mu_t y_t + \nu).
    \]
    We complete the proof of the first identity by rearranging terms in the above expressions and multiplying by \(y_t, \mu_t\) and \(\mu_t y_t\) to obtain \(-\tanh(\mu_t y_t + \nu) \mu_t y_t\).
    For the second identity, use \(\tanh^2(\cdot) = 1 - \sech^2(\cdot)\), \(\d \tanh(\cdot)/\d (\cdot) = \sech^2(\cdot)\), and \(\partial_{y_t} F(y_t, \mu_t, \nu) = y_t - \tanh(\mu_t y_t + \nu) \mu_t\). Then
    \[
    \begin{aligned}
    \partial^2_{y_t y_t} F(y_t, \mu_t, \nu) &= 1 - \mu_t^2\sech^2(\mu_t y_t + \nu) 
    = 1- \mu_t^2 + \mu_t^2 \tanh^2(\mu_t y_t + \nu) 
    = 1- \mu_t^2 + \|y_t - \partial_{y_t} F(y_t, \mu_t, \nu)\|^2\\
    &= \|\partial_{y_t} F(y_t, \mu_t, \nu) \|^2 - 2 y_t \partial_{y_t} F(y_t, \mu_t, \nu) + \left(y_t^2  + 1 - \mu_t^2 \right).
    \end{aligned}
    \]
    Therefore, the last identity follows by rearranging terms in the above expressions.
\end{proof}


\begin{lemma}[Spiked Covariance under General Gaussian Covariance]
    \label{lemma:generalized_spiked_covariance}
    Let \(\bx\sim\calN(0,\Sigma)\), where \(\Sigma\succ0\), and define
    \(\calC^\ast:=\E[\bx\bx^\top(\bx^\top\theta^\ast)^2]\).  With
    \(c:=(\theta^\ast)^\top\Sigma\theta^\ast\) and
    \(a:=\Sigma^{1/2}\theta^\ast\),
    \[
        \calC^\ast=c\Sigma+2\Sigma\theta^\ast
        (\Sigma\theta^\ast)^\top,
        \qquad
        \Sigma^{-1/2}\calC^\ast\Sigma^{-1/2}
        =cI_d+2aa^\top.
    \]
    If \(\theta^\ast\ne\mathbf0\), the whitened matrix has principal
    eigenspace \(\operatorname{span}\{a\}\), with eigenvalue \(3c\), and
    eigenvalue \(c\) on \(a^\perp\), with multiplicity \(d-1\). The
    equivalent generalized eigenproblem \(\calC^\ast v=\lambda\Sigma v\) has leading
    eigenspace \(\operatorname{span}\{\theta^\ast\}\), lower eigenspace
    \(\{v:v^\top\Sigma\theta^\ast=0\}\), and generalized spectral gap \(2c\).
    When \(\Sigma=I_d\), these are the ordinary eigenspaces of \(\calC^\ast\).
\end{lemma}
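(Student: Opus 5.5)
The plan is to compute the fourth-order moment by whitening the Gaussian and applying Isserlis' theorem. Write \(\bx=\Sigma^{1/2}g\) with \(g\sim\calN(0,I_d)\). Then \(\bx^\top\theta^\ast=g^\top a\) with \(a=\Sigma^{1/2}\theta^\ast\), and
\[
\calC^\ast=\Sigma^{1/2}\,\E\!\left[gg^\top(g^\top a)^2\right]\Sigma^{1/2}.
\]

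\textbf{Step 1: the whitened moment.} Isserlis' theorem gives, entrywise,
\[
\E[g_ig_jg_kg_l]=\delta_{ij}\delta_{kl}+\delta_{ik}\delta_{jl}+\delta_{il}\delta_{jk}.
\]
Contracting the \(k,l\) indices with \(a_ka_l\) yields
\[
\E[gg^\top(g^\top a)^2]=\|a\|^2I_d+2aa^\top .
\]
Since \(\|a\|^2=(\theta^\ast)^\top\Sigma\theta^\ast=c\), this is \(cI_d+2aa^\top\).

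\textbf{Step 2: undo the whitening.} Conjugating by \(\Sigma^{1/2}\) and using \(\Sigma^{1/2}a=\Sigma\theta^\ast\) gives \(\calC^\ast=c\Sigma+2\Sigma\theta^\ast(\Sigma\theta^\ast)^\top\). Conversely, \(\Sigma^{-1/2}\calC^\ast\Sigma^{-1/2}=cI_d+2aa^\top\), which proves both displayed identities.

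\textbf{Step 3: spectrum of the whitened matrix.} Assume \(\theta^\ast\ne\mathbf0\), so \(a\ne\mathbf0\) because \(\Sigma\succ0\). The matrix \(cI_d+2aa^\top\) is a rank-one perturbation of a multiple of the identity. It acts on \(a\) with eigenvalue \(c+2\|a\|^2=3c\), and it acts on \(a^\perp\), which has dimension \(d-1\), with eigenvalue \(c\).

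\textbf{Step 4: the generalized eigenproblem.} Substitute \(v=\Sigma^{-1/2}w\). Then \(\calC^\ast v=\lambda\Sigma v\) is equivalent to \(\Sigma^{-1/2}\calC^\ast\Sigma^{-1/2}w=\lambda w\).
\begin{itemize}
\item The leading eigenspace \(w\in\operatorname{span}\{a\}\) maps to \(v\in\operatorname{span}\{\Sigma^{-1/2}\Sigma^{1/2}\theta^\ast\}=\operatorname{span}\{\theta^\ast\}\).
\item The lower eigenspace \(w\perp a\) maps to \(v\) satisfying \(0=w^\top a=v^\top\Sigma^{1/2}\Sigma^{1/2}\theta^\ast=v^\top\Sigma\theta^\ast\).
\item The generalized spectral gap is \(3c-c=2c\).
\end{itemize}
When \(\Sigma=I_d\), the whitening is trivial and these become the ordinary eigenspaces of \(\calC^\ast\).

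The only step that needs care is the Isserlis contraction in Step 1, in particular tracking the factor \(2\) that comes from the two cross pairings. Everything after that is linear algebra.
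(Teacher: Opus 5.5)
Your proof is correct and follows essentially the same route as the paper: Isserlis' identity, contraction against \(\theta^\ast\) (equivalently \(a\)), and the substitution \(w=\Sigma^{1/2}v\), which carries the whitened eigenvalues \(3c\) and \(c\) over to the generalized eigenproblem. The only cosmetic difference is the order of steps: you whiten first and apply Isserlis to a standard Gaussian, whereas the paper applies Isserlis with the covariance entries \(\Sigma_{ij}\) directly and whitens afterward.
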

    
\begin{proof}
    Isserlis' identity gives
    \(\E[X_iX_jX_kX_l]=\Sigma_{ij}\Sigma_{kl}
    +\Sigma_{ik}\Sigma_{jl}+\Sigma_{il}\Sigma_{jk}\)
    \citep{isserlis1918formula}.  Contracting this identity with
    \(\theta_k^\ast\theta_l^\ast\) yields
    \(\calC^\ast=c\Sigma+2\Sigma\theta^\ast
    (\Sigma\theta^\ast)^\top\), and whitening yields the displayed rank-one
    form.  Since \(\|a\|^2=c\), that matrix maps \(a\) to \(3ca\) and every
    \(z\perp a\) to \(cz\). Under \(w=\Sigma^{1/2}v\),
    \(\calC^\ast v=\lambda\Sigma v\) is equivalent to
    \(\Sigma^{-1/2}\calC^\ast\Sigma^{-1/2}w=\lambda w\), which gives the
    generalized eigenspaces and gap.
\end{proof}

\begin{lemma}[Spiked Covariance for Elliptical Distributions]
    \label{lemma:elliptical_spiked_covariance}
    Let \(\bx\in\R^d\) follow a zero-mean elliptical distribution with
    covariance \(\Sigma\succ0\) and finite fourth moments.
    Define the excess kurtosis parameter
    \[
    \kappa \;:=\; \frac{\E[\|\Sigma^{-1/2}\bx\|^4]}{d(d+2)} - 1 \;>\; -1,
    \]
    where \(\Sigma^{-1/2}\) is the symmetric inverse square root of \(\Sigma\).
    Then the fourth-order moments satisfy
    \[
    \E[X_i X_j X_k X_l]
    = (1+\kappa)\bigl( \Sigma_{ij}\Sigma_{kl} + \Sigma_{ik}\Sigma_{jl} + \Sigma_{il}\Sigma_{jk} \bigr).
    \]
    The fourth cumulant tensor
    \[
    \Delta_{ijkl} := \E[X_i X_j X_k X_l] - \bigl( \Sigma_{ij}\Sigma_{kl} + \Sigma_{ik}\Sigma_{jl} + \Sigma_{il}\Sigma_{jk} \bigr)
    \]
    satisfies \(\Delta_{ijkl} = \kappa\,(\Sigma_{ij}\Sigma_{kl} + \Sigma_{ik}\Sigma_{jl} + \Sigma_{il}\Sigma_{jk})\).
    
    For \(\calC^\ast:=\E[\bx\bx^\top(\bx^\top\theta^\ast)^2]\), set
    \(c:=(\theta^\ast)^\top\Sigma\theta^\ast\) and
    \(a:=\Sigma^{1/2}\theta^\ast\).  Then
    \[
        \calC^\ast=(1+\kappa)
        \bigl[c\Sigma+2\Sigma\theta^\ast(\Sigma\theta^\ast)^\top\bigr],
        \qquad
        \Sigma^{-1/2}\calC^\ast\Sigma^{-1/2}
        =(1+\kappa)(cI_d+2aa^\top).
    \]
    If \(\theta^\ast\ne\mathbf0\), the whitened eigenvalues are
    \(3(1+\kappa)c\) on \(\operatorname{span}\{a\}\) and
    \((1+\kappa)c\) on \(a^\perp\). The equivalent generalized problem
    \(\calC^\ast v=\lambda\Sigma v\) has leading eigenspace
    \(\operatorname{span}\{\theta^\ast\}\), lower eigenspace
    \(\{v:v^\top\Sigma\theta^\ast=0\}\), and generalized spectral gap
    \(2(1+\kappa)c\).  For \(\Sigma=I_d\), these are ordinary eigenspaces.
\end{lemma}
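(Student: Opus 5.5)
The plan is to work with the whitened vector \(\mathbf{w}:=\Sigma^{-1/2}\bx\). By the defining property of a zero-mean elliptical law, \(\mathbf{w}\) is spherically symmetric, so \(\mathbf{w}\stackrel{\mathsf d}{=}R\,\mathbf{u}\), where \(\mathbf{u}\) is uniform on the unit sphere \(S^{d-1}\) and \(R\ge0\) is independent of \(\mathbf{u}\). Because \(\operatorname{Cov}(\bx)=\Sigma\), we have \(\E[\mathbf{w}\mathbf{w}^\top]=I_d\). Since \(\E[\mathbf{u}\mathbf{u}^\top]=I_d/d\), this forces \(\E[R^2]=d\), and \(\E\|\mathbf{w}\|^4=\E[R^4]\). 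The fourth moments of \(\mathbf{w}\) then factor as \(\E[R^4]\,\E[u_iu_ju_ku_l]\).

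The key step is the isotropic fourth moment of the sphere:
\[
\E[u_iu_ju_ku_l]=\frac{\delta_{ij}\delta_{kl}+\delta_{ik}\delta_{jl}+\delta_{il}\delta_{jk}}{d(d+2)}.
\]
I would obtain this without integration by comparison with a standard Gaussian \(\mathbf g\). Write \(\mathbf g=\|\mathbf g\|\,\mathbf u\) with \(\|\mathbf g\|\) independent of \(\mathbf u\). Isserlis' identity, already used in Lemma~\ref{lemma:generalized_spiked_covariance}, gives the fourth Gaussian moments as the symmetric \(\delta\)-sum. Dividing by \(\E\|\mathbf g\|^4=d(d+2)\) yields the display. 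It follows that
\[
\E[w_iw_jw_kw_l]=\frac{\E[R^4]}{d(d+2)}(\delta\delta+\delta\delta+\delta\delta)=(1+\kappa)(\delta_{ij}\delta_{kl}+\delta_{ik}\delta_{jl}+\delta_{il}\delta_{jk}),
\]
using \(\E\|\Sigma^{-1/2}\bx\|^4=\E[R^4]\) and the definition of \(\kappa\). Pushing forward by \(\bx=\Sigma^{1/2}\mathbf{w}\) is multilinear, so each \(\delta_{ij}\) becomes \((\Sigma^{1/2}\Sigma^{1/2})_{ij}=\Sigma_{ij}\). This gives the moment formula, and the cumulant identity for \(\Delta_{ijkl}\) is immediate by subtraction.

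For the bound \(\kappa>-1\), apply Jensen: \(\E[R^4]\ge(\E R^2)^2=d^2\). Hence
\[
1+\kappa\ge \frac{d}{d+2}>0.
\]
This is strictly positive, which is all that is claimed.

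Contracting the moment formula with \(\theta^\ast_k\theta^\ast_l\) gives
\[
(1+\kappa)\bigl[c\Sigma_{ij}+2(\Sigma\theta^\ast)_i(\Sigma\theta^\ast)_j\bigr],
\]
which is the stated \(\calC^\ast\). The whitened form follows by conjugating with \(\Sigma^{-1/2}\). The spectral claims are then exactly those of Lemma~\ref{lemma:generalized_spiked_covariance}, scaled by the positive factor \(1+\kappa\). That positivity preserves the ordering of eigenspaces and multiplies both the eigenvalues and the gap \(2c\) by \(1+\kappa\). The generalized eigenproblem transfers under \(w=\Sigma^{1/2}v\) as before.

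The main obstacle is justifying the stochastic representation \(R\mathbf u\) with \(R\) independent of \(\mathbf u\) and \(\E R^2=d\). I would cite the standard characterization of spherical laws for this. The finite fourth-moment assumption is what makes \(\E R^4<\infty\), so that \(\kappa\) is well defined. The degenerate case \(R=0\) cannot occur, because the covariance \(\Sigma\) is nonsingular.
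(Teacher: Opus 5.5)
Your proposal is correct and follows essentially the same route as the paper: the representation \(\bx=R\Sigma^{1/2}\bu\) with \(R\) independent of the uniform direction \(\bu\), then the isotropic fourth-moment tensor of \(\bu\), then the push-forward to \(\Sigma\), then contraction with \(\theta^\ast_k\theta^\ast_l\), and finally reuse of Lemma~\ref{lemma:generalized_spiked_covariance} scaled by \(1+\kappa\). The differences are minor and in your favor. You obtain the sphere tensor by dividing Isserlis' Gaussian moments by \(\E\|\mathbf g\|^4=d(d+2)\), which gives both its form and its constant without the paper's appeal to rotational invariance plus a double trace. Your Jensen bound \(1+\kappa\ge d/(d+2)\) also verifies \(\kappa>-1\), which the paper only asserts.
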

    
\begin{proof}
    Write \(\bx=R\Sigma^{1/2}\bu\), where \(R\ge0\) is independent of
    the uniform direction \(\bu\in\mathbb S^{d-1}\)
    \cite[Theorem~2.5(iii), p.~30]{fang1990symmetric}. Rotational invariance
    implies that the fourth-moment tensor of \(\bu\) has the form
    \(c_u(\delta_{ij}\delta_{kl}+\delta_{ik}\delta_{jl}
    +\delta_{il}\delta_{jk})\) for a scalar \(c_u\). Taking the double trace
    and using \(\|\bu\|=1\) gives \(c_u=1/[d(d+2)]\), and therefore
    \[
        \E[u_i u_j u_k u_l]
        =\frac{\delta_{ij}\delta_{kl}+\delta_{ik}\delta_{jl}
        +\delta_{il}\delta_{jk}}{d(d+2)}.
    \]
    Together with \(R=\|\Sigma^{-1/2}\bx\|\), this identity gives the
    stated fourth-moment formula.
    Contracting it with \(\theta_k^\ast\theta_l^\ast\) gives the formula for
    \(\calC^\ast\).  The remaining claims follow from the whitened calculation
    in Lemma~\ref{lemma:generalized_spiked_covariance}, multiplied by
    \(1+\kappa\).
\end{proof}

\newpage
\section{Proofs of Maximum-Likelihood Asymptotics for Path-Integrated Score Matching}\label{sup:setup}
This appendix uses the likelihood bridges from Appendix~\ref{sup:lemma} to
restate and prove Main Theorem~\ref{thm:consistency_asymptotic_normality},
including consistency modulo joint sign, the aligned Gaussian limit with
inverse-Fisher covariance, and the attainment argument required by the
empirical estimator.
\begin{myframe}
\begin{main theorem}[Main Theorem~\ref{thm:consistency_asymptotic_normality}]
    The ground-truth parameters
    \((\theta^\ast,\nu^\ast)\in\R^d\times\R\) satisfy
    \(\theta^\ast\ne\mathbf0\). The deterministic terminal-horizon sequence
    \(\{T_n\}_{n\geq1}\) satisfies
    \(\ln\frac{1-\bar{\alpha}_{T_n}}{\bar{\alpha}_{T_n}}-\ln n\to\infty\),
    and the covariate \(\bx\) has a density that is positive almost everywhere
    and satisfies \(\E\|\bx\|^2<\infty\). 
    The estimator \(\widehat{\vartheta}_{n,T_n}^{\,\mathsf{SM}} \equiv (\widehat{\theta}_{n,T_n}^{\,\mathsf{SM}},\widehat{\nu}_{n,T_n}^{\,\mathsf{SM}})\) is \textsf{consistent} up
    to a joint sign change. We choose
    \(
        s_n \in \argmin_{s\in \{ +1, -1\}} \left\| s\,\left(\widehat{\theta}_{n,T_n}^{\,\mathsf{SM}},\widehat{\nu}_{n,T_n}^{\,\mathsf{SM}}\right) - (\theta^\ast,\nu^\ast) \right\|
    \).
    Then, as \(n \to \infty\),
    \[
        s_n \left(\widehat{\theta}_{n,T_n}^{\,\mathsf{SM}},\widehat{\nu}_{n,T_n}^{\,\mathsf{SM}}\right) \xrightarrow{\Pr} (\theta^\ast,\nu^\ast).
    \]
    If \(\E[\|\bx\|^4] < \infty\), then the aligned Score Matching estimator is \textsf{asymptotically normal}. As \(n\to\infty\),
    \[
        \sqrt{n} \left[ s_n \left(\widehat{\theta}_{n,T_n}^{\,\mathsf{SM}},\widehat{\nu}_{n,T_n}^{\,\mathsf{SM}}\right) - (\theta^\ast,\nu^\ast) \right]
        \xrightarrow{\mathsf{d}} \mathcal{N}(\mathbf{0}, I(\theta^\ast,\nu^\ast)^{-1}),
    \]
    where \(I\) denotes the Fisher information matrix.
\end{main theorem}
\end{myframe}

\begin{proof}[Proof of Theorem~\ref{thm:consistency_asymptotic_normality}]
    This proof adapts the DDPM-to-MLE argument of
    \citet{chewi2025arxiv} to conditional MLR with joint sign symmetry. The
    model-specific part identifies the two label-equivalent limits. The local
    M-estimator argument is then applied after sign alignment.

    \textbf{Proof-specific notation.} In addition to the notation fixed in
    Section~\ref{subsec:notations}, define
    \begin{itemize}
        \item \(\calE_n(\vartheta)
        :=\widehat{\calR}_n^{\,\mathsf{MLE}}(\vartheta)
        -\widehat{\calR}_n^{\,\mathsf{MLE}}(\vartheta^\ast)\) and
        \(\calE(\vartheta):=\E[\calE_n(\vartheta)]\), where the expectation
        is over the i.i.d. sample. These quantities are the empirical
        contrast and the population excess risk.
        \item \(c_n:=\bar\alpha_{T_n}/[2(1-\bar\alpha_{T_n})]\) and
        \(\Sigma_n:=\E_{\Pr_n}[\bx\bx^\top]\).
        \item \(\widetilde{\vartheta}_{n,T_n}^{\,\mathsf{SM}}
        :=s_n\widehat{\vartheta}_{n,T_n}^{\,\mathsf{SM}}\). This estimator is
        sign-aligned in both coordinates, with \(s_n\) as defined in the theorem.
        \item \(m_{\vartheta}(y_0,\bx):=\ln p(y_0\mid\bx;\vartheta)\).
        We use \(\dot m(y_0,\bx)\) for a measurable local Lipschitz envelope
        of \(m_\vartheta\) near \(\vartheta^\ast\).
    \end{itemize}

    We set \(T=T_n\) in
    Lemma~\ref{lemma:log_likelihood_SM_loss_finite} and define
    \(\mathsf{c}(\bar{\alpha}_{T_n})
    :=\frac{1}{2}\left(\ln\frac{1-\bar{\alpha}_{T_n}}
    {\bar{\alpha}_{T_n}}+\ln(2\p)+1\right)\). The lemma gives the empirical
    bridge
    \[
        \widehat{\calR}_{n, T_n}^{\,\mathsf{SM}}(\vartheta) = \widehat{\calR}_n^{\,\mathsf{MLE}}(\vartheta) - \Delta_n(\vartheta) - \mathsf{c}(\bar{\alpha}_{T_n}).
    \]
    Whenever the empirical minimum is attained, exact optimality gives
    \(\widehat{\calR}_{n,T_n}^{\,\mathsf{SM}}
    (\widehat{\vartheta}_{n,T_n}^{\,\mathsf{SM}})
    \leq\widehat{\calR}_{n,T_n}^{\,\mathsf{SM}}(\vartheta^\ast)\).
    Substitution isolates the excess risk
    \(\mathcal{E}_n(\widehat{\vartheta}_{n,T_n}^{\,\mathsf{SM}})
    \equiv \widehat{\calR}_n^{\,\mathsf{MLE}}
    (\widehat{\vartheta}_{n,T_n}^{\,\mathsf{SM}})
    -\widehat{\calR}_n^{\,\mathsf{MLE}}(\vartheta^\ast)\).
    Since \(\Delta_n(\vartheta^\ast)\ge 0\), we obtain
    \[
        \mathcal{E}_n(\widehat{\vartheta}_{n,T_n}^{\,\mathsf{SM}})
        \leq \Delta_n(\widehat{\vartheta}_{n,T_n}^{\,\mathsf{SM}})
        -\Delta_n(\vartheta^\ast)
        \leq \Delta_n(\widehat{\vartheta}_{n,T_n}^{\,\mathsf{SM}}).
    \]

    \vspace{0.5em}
    \noindent\textbf{Step 1. Boundedness of \(\widehat{\theta}_{n,T_n}^{\,\mathsf{SM}}\) and control of the excess risk.} \\
    Lemma~\ref{lemma:decaying_kl_bound_transition} bounds the decaying KL
    divergence \(\Delta_n\). We first work on samples with a nonempty
    score matching argmin. Set
    \(c_n \equiv \frac{\bar{\alpha}_{T_n}}{2(1-\bar{\alpha}_{T_n})}\).
    The Cauchy--Schwarz inequality for the empirical covariance matrix
    \(\Sigma_n\), together with \(|\tanh(\cdot)|\leq 1\), gives
    \[
    \begin{aligned}
        \mathcal{E}_n(\widehat{\vartheta}_{n,T_n}^{\,\mathsf{SM}})
        &\leq \Delta_n(\widehat{\vartheta}_{n,T_n}^{\,\mathsf{SM}})\\
        &\leq c_n \E_{\Pr_n} \left[y_0^2 + (\widehat{\theta}_{n,T_n}^{\,\mathsf{SM}})^\top \bx \bx^\top \widehat{\theta}_{n,T_n}^{\,\mathsf{SM}}
        - 2 y_0 ((\widehat{\theta}_{n,T_n}^{\,\mathsf{SM}})^\top \bx) \tanh \widehat{\nu}_{n,T_n}^{\,\mathsf{SM}} + 1\right]\\
        &\leq c_n \left( \|\widehat{\theta}_{n,T_n}^{\,\mathsf{SM}}\|_{\Sigma_n}^2 + 2\|\widehat{\theta}_{n,T_n}^{\,\mathsf{SM}}\|_{\Sigma_n} \sqrt{\mathbb{E}_{\Pr_n}[y_0^2]} + \mathbb{E}_{\Pr_n}[y_0^2] + 1\right).
    \end{aligned}
    \]
    
    Lemma~\ref{lemma:nll} gives the negative log-likelihood. We apply
    \(|x| \geq \ln\frac{\cosh(x+x')}{\cosh x'} \geq -|x|\) and the
    Cauchy--Schwarz inequality for \(\Sigma_n\) to obtain the lower bound
    \[
    \begin{aligned}
        \mathcal{E}_n(\widehat{\vartheta}_{n,T_n}^{\,\mathsf{SM}})
        &= \E_{\Pr_n}\left[
        \begin{aligned}
            &\frac{1}{2}(\widehat{\theta}_{n,T_n}^{\,\mathsf{SM}})^\top
              \bx \bx^\top \widehat{\theta}_{n,T_n}^{\,\mathsf{SM}}
              -\frac{1}{2} (\theta^\ast)^\top \bx \bx^\top \theta^\ast \\
            &\quad - \ln \frac{\cosh((\widehat{\theta}_{n,T_n}^{\,\mathsf{SM}})^\top
              \bx y_0+\widehat{\nu}_{n,T_n}^{\,\mathsf{SM}})}{\cosh\widehat{\nu}_{n,T_n}^{\,\mathsf{SM}}}
              + \ln \frac{\cosh((\theta^\ast)^\top \bx y_0+\nu^\ast)}{\cosh \nu^\ast}
        \end{aligned}
        \right]\\
        &\geq \frac{1}{2} \|\widehat{\theta}_{n,T_n}^{\,\mathsf{SM}}\|_{\Sigma_n}^2
        - \|\widehat{\theta}_{n,T_n}^{\,\mathsf{SM}}\|_{\Sigma_n}
          \sqrt{\mathbb{E}_{\Pr_n}[y_0^2]} \\
        &\quad - \|\theta^\ast\|_{\Sigma_n} \sqrt{\mathbb{E}_{\Pr_n}[y_0^2]}
        - \frac{1}{2}\|\theta^\ast\|_{\Sigma_n}^2.
    \end{aligned}
    \]
    
    Combining the quadratic lower bound with the \(c_n\)-scaled upper bound
    gives the following constraint on the empirical norm of the score matching
    estimator.
    \[
        \underbrace{\left( \frac{1}{2} - c_n \right)}_{:=A_n} \|\widehat{\theta}_{n,T_n}^{\,\mathsf{SM}}\|_{\Sigma_n}^2 - \underbrace{(1 + 2c_n) \sqrt{\mathbb{E}_{\Pr_n}[y_0^2]}}_{:= B_n} \|\widehat{\theta}_{n,T_n}^{\,\mathsf{SM}}\|_{\Sigma_n} 
        \leq \underbrace{c_n(\E_{\Pr_n}[y_0^2] + 1 ) + \|\theta^\ast\|_{\Sigma_n} \sqrt{\mathbb{E}_{\Pr_n}[y_0^2]} + \frac{1}{2}\|\theta^\ast\|_{\Sigma_n}^2}_{C_n}.
    \]
    
    Under the time-scaling condition \(\ln\frac{1-\bar{\alpha}_{T_n}}{\bar{\alpha}_{T_n}} - \ln n \to \infty\), it follows that \(c_n \equiv \frac{\bar{\alpha}_{T_n}}{2(1-\bar{\alpha}_{T_n})} = o(1/n)\).
    Since \(A_n=\frac{1}{2}-c_n\to\frac{1}{2}\), we have \(A_n>0\) for all large \(n\).
    Since \(\bx\) has finite population second moments \(\E[\bx\bx^\top]\),
    Markov's inequality applied to the finite population expectations gives
    \(\|\theta^\ast\|_{\Sigma_n}^2 = \calO_P(1)\) and
    \(\mathbb{E}_{\Pr_n}[y_0^2]=\calO_P(1)\).
    First, consider the empirical weighted norm \(\|\theta^\ast\|_{\Sigma_n}^2 = (\theta^\ast)^\top \Sigma_n \theta^\ast\) with \(\Sigma_n \equiv \E_{\Pr_n}[\bx \bx^\top]\).
    Its expectation under the data-generating distribution is finite.
    Markov's inequality gives, for every \(M > 0\),
    \[
        \lim_{M \to \infty} \limsup_{n \to \infty} \mathbb{P} \left( \|\theta^\ast\|_{\Sigma_n}^2 > M \right) 
        \leq \lim_{M \to \infty} \frac{\mathbb{E}\left[(\theta^\ast)^\top \E_{\Pr_n}[\bx\bx^\top] \theta^\ast\right]}{M}
        = \lim_{M \to \infty} \frac{(\theta^\ast)^\top \mathbb{E}[\bx\bx^\top] \theta^\ast}{M} = 0.
    \]
    Second, consider the empirical second moment \(\mathbb{E}_{\Pr_n}[y_0^2]\).
    Under the ground-truth distribution, the response satisfies \(y_0 = (-1)^{z+1} \langle \theta^\ast, \bx \rangle + \varepsilon_0\), where \((\bx,z)\ind\varepsilon_0\) and \(\varepsilon_0\sim\mathcal{N}(0,1)\).
    Taking the expectation gives
    \(
        \mathbb{E} \left[ \mathbb{E}_{\Pr_n}[y_0^2] \right] = \mathbb{E}[y_0^2] = (\theta^\ast)^\top \mathbb{E}[\bx\bx^\top] \theta^\ast + 1
    \).
    The finiteness of \(\E[\bx \bx^\top]\) and Markov's inequality give
    \[
        \lim_{M \to \infty} \limsup_{n \to \infty} \mathbb{P} \left( \mathbb{E}_{\Pr_n}[y_0^2] > M \right) 
        \leq \lim_{M \to \infty} \frac{\E\left[\mathbb{E}_{\Pr_n}[y_0^2]\right]}{M}
        = \lim_{M \to \infty} \frac{(\theta^\ast)^\top \mathbb{E}[\bx\bx^\top] \theta^\ast + 1}{M} = 0.
    \]
    The two inequalities above guarantee \(\|\theta^\ast\|_{\Sigma_n}^2 = \calO_P(1),\, \mathbb{E}_{\Pr_n}[y_0^2]=\calO_P(1)\).
    It follows that
    \[
        B_n := (1 + 2c_n)\sqrt{\mathbb{E}_{\Pr_n}[y_0^2]} = \calO(1) \sqrt{\calO_P(1)} = \calO_P(1),
    \]
    \[
        C_n:=c_n(\E_{\Pr_n}[y_0^2]+1)+\|\theta^\ast\|_{\Sigma_n}
        \sqrt{\E_{\Pr_n}[y_0^2]}+\frac12\|\theta^\ast\|_{\Sigma_n}^2
        =o(n^{-1})\calO_P(1)+\calO_P(1)=\calO_P(1).
    \]    
    This leaves a quadratic inequality of the form \(A_n \|\widehat{\theta}_{n,T_n}^{\,\mathsf{SM}}\|_{\Sigma_n}^2 - B_n \|\widehat{\theta}_{n,T_n}^{\,\mathsf{SM}}\|_{\Sigma_n} \le C_n\), where \(A_n \to \frac{1}{2}\) and \(B_n, C_n = \calO_P(1)\). Hence \(A_n \geq \frac{1}{4}\) for all large \(n\).
    Solving the quadratic inequality shows that the empirical norm is bounded
    in probability.
    \[
        \|\widehat{\theta}_{n,T_n}^{\,\mathsf{SM}}\|_{\Sigma_n} \leq \frac{B_n + \sqrt{B_n^2 + 4 A_n C_n}}{2 A_n} \leq \frac{B_n}{A_n} + \sqrt{\frac{C_n}{A_n}} \leq 4\calO_P(1) + \sqrt{4\calO_P(1)} = \calO_P(1).
    \]
    On samples with an empty argmin, the convention in
    Section~\ref{subsec:notations} gives
    \(\widehat\vartheta_{n,T_n}^{\,\mathsf{SM}}=(\mathbf0,0)\). Hence the
    empirical norm bound holds unconditionally. The transition-kernel KL bound
    also holds for the fallback value. In particular,
    \(\Delta_n((\mathbf0,0))\leq c_n(\E_{\Pr_n}[y_0^2]+1)=o_P(n^{-1})\).
    Substitution into the KL discrepancy bound, together with
    \(c_n = o(1/n)\), gives
    \[
        \Delta_n(\widehat{\vartheta}_{n,T_n}^{\,\mathsf{SM}})\leq c_n\calO_P(1)=o_P(n^{-1}).
    \]
    On samples with a nonempty argmin, we also have
    \(\mathcal{E}_n(\widehat{\vartheta}_{n,T_n}^{\,\mathsf{SM}})
    \leq\Delta_n(\widehat{\vartheta}_{n,T_n}^{\,\mathsf{SM}})\).
    Moreover, since \(\bx\) has a density that is positive almost everywhere on \(\mathbb{R}^d\), \(\Sigma := \E[\bx \bx^\top]\) is positive definite,
    and its smallest eigenvalue satisfies \(\lambda_{\min}(\Sigma) > 0\).
    Weyl's Perturbation Theorem (Theorem~4.3.1 on page~239 of~\citet{horn2012matrix}) and the Weak Law of Large Numbers (Theorem~2.2.3 of~\citet{durrett2019probability}) give
    \(
        |\lambda_{\min}(\Sigma_n) - \lambda_{\min}(\Sigma)| \leq \|\Sigma_n - \Sigma\|_2  = o_P(1)
    \).
    Therefore, \(|\lambda_{\min}(\Sigma_n) - \lambda_{\min}(\Sigma)| / \lambda_{\min}(\Sigma) = o_P(1)\). For all large \(n\),
    \(\lambda_{\min}(\Sigma_n) \geq \lambda_{\min}(\Sigma) - |\lambda_{\min}(\Sigma_n) - \lambda_{\min}(\Sigma)| = (1-o_P(1)) \lambda_{\min}(\Sigma) > 0\) with high probability and
    \[
    \|\widehat{\theta}_{n,T_n}^{\,\mathsf{SM}}\|
    \leq \sqrt{\frac{(\widehat{\theta}_{n,T_n}^{\,\mathsf{SM}})^\top\Sigma_n (\widehat{\theta}_{n,T_n}^{\,\mathsf{SM}})}{\lambda_{\min}(\Sigma_n)}} \leq \frac{\|\widehat{\theta}_{n,T_n}^{\,\mathsf{SM}}\|_{\Sigma_n}}{\sqrt{(1-o_P(1))\lambda_{\min}(\Sigma)}} = \mathcal{O}_P(1).
    \]
    
    \vspace{0.5em}
    \noindent\textbf{Step 2. Convergence to the ground truth and boundedness of \(\widehat{\vartheta}_{n,T_n}^{\,\mathsf{SM}}\).} \\
    Taking the expectation of the empirical average under the i.i.d.
    ground-truth sample gives the population excess risk
    \[
        \mathcal{E}(\vartheta)
        = \E_{\bx}\!\left[
          \KL\!\left(p(\,\cdot\mid\bx;\vartheta^\ast)
          \parallel p(\,\cdot\mid\bx;\vartheta)\right)\right]
        \geq 0.
    \]
    The equality condition for KL divergence shows that
    \[
        \mathcal{E}(\vartheta) = 0 \Leftrightarrow
        p(\,\cdot\mid\bx;\vartheta^\ast)
        =p(\,\cdot\mid\bx;\vartheta)
        \quad\text{for almost every }\bx.
    \]
    Their conditional moment generating function is
    \[
        \E_{y_0\mid\bx;\vartheta}[\exp(t y_0)]
        =\left[\cosh(t\theta^\top\bx)+\tanh\nu\cdot\sinh(t\theta^\top\bx)\right]\exp(t^2/2).
    \]
    In particular, equality of the conditional distributions implies equality of their first two raw moments.
    \[
        (\theta^\top\bx)\tanh\nu=((\theta^\ast)^\top\bx)\tanh\nu^\ast,
        \qquad
        (\theta^\top\bx)^2=((\theta^\ast)^\top\bx)^2
        \quad\text{almost surely}.
    \]
    The second identity is equivalent to
    \[
        \bx^\top\left(\theta\theta^\top-\theta^\ast(\theta^\ast)^\top\right)\bx=0
        \quad\text{almost surely}.
    \]
    Since \(\bx\) has a density that is positive almost everywhere on \(\R^d\), this quadratic polynomial is zero on \(\R^d\), and hence
    \(\theta\theta^\top=\theta^\ast(\theta^\ast)^\top\).
    Because \(\theta^\ast\neq\mathbf{0}\), it follows that \(\theta=s\theta^\ast\) for some \(s\in\{+1,-1\}\).
    Substitution into the first-moment identity gives \(s\tanh\nu=\tanh\nu^\ast\), so the oddness and strict monotonicity of \(\tanh\) imply \(\nu=s\nu^\ast\).
    The reverse implication follows because the joint sign flip exchanges the two mixture components and leaves the conditional distribution unchanged.
    This agrees with Proposition~\ref{prop:terminal_kl_pinsker_zero_set} at
    \(\bar{\alpha}_0=1\) and establishes identifiability for every
    \(\vartheta\in\R^d\times\R\).
    \[
        \mathcal{E}(\vartheta)=0
        \quad\Longleftrightarrow\quad
        \vartheta\equiv(\theta,\nu)=s(\theta^\ast,\nu^\ast)=s\vartheta^\ast,
        \qquad s\in\{+1,-1\}.
    \]
    Step~1 gives
    \(\|\widehat{\theta}_{n,T_n}^{\,\mathsf{SM}}\|=\mathcal{O}_P(1)\).
    Thus, for every \(\delta>0\), we can choose a finite radius
    \(R_\delta>\|\theta^\ast\|\) and an index \(n_\delta\) such that
    \[
        \inf_{n\geq n_\delta}
        \Pr\!\left(\widehat{\theta}_{n,T_n}^{\,\mathsf{SM}}
        \in K_{R_\delta}\right)\geq 1-\delta,
        \qquad
        K_{R_\delta}:=\{\theta\in\R^d\mid\|\theta\|\leq R_\delta\}.
    \]
    We work on the compact parameter set
    \(K_\delta:=K_{R_\delta}\times\overline{\R}\), where
    \(\overline{\R}:=\R\cup\{-\infty,+\infty\}\).
    At its two boundary points, the conditional model has the limits
    \(p(\,\cdot\mid\bx;\theta,\pm\infty)
    =\mathcal{N}(\pm\langle\theta,\bx\rangle,1)\).
    Fatou's lemma (Theorem~1.5.5 of~\citet{durrett2019probability})
    yields
    \[
        \liminf_{|\nu| \to \infty} \mathcal{E}(\theta, \nu)
        \geq \E_{\bx}\!\left[\min_{s\in\{+1,-1\}}
        \KL\!\left(p(\,\cdot\mid\bx;\vartheta^\ast)
        \parallel\mathcal{N}\!\left(s\langle\theta,\bx\rangle,1\right)\right)\right]>0.
    \]
    Because the ground-truth parameter is finite (\(|\nu^\ast| < \infty\) and \(\theta^\ast \neq \mathbf{0}\)), the true conditional distribution is a strict two-component mixture for almost every \(\bx\).
    The identifiability of Gaussian mixtures implies that the KL divergence to
    either single Gaussian is positive for almost every \(\bx\). Define
    \[
        g(\theta):=\E_{\bx}\!\left[\min_{s\in\{+1,-1\}}
        \KL\!\left(p(\,\cdot\mid\bx;\vartheta^\ast)
        \parallel\mathcal N\!\left(s\langle\theta,\bx\rangle,1\right)\right)\right].
    \]
    The integrand is continuous in \(\theta\). Convexity of KL divergence in
    its first argument bounds it, for \(\theta\in K_{R_\delta}\), by
    \(\frac12(\|\theta^\ast\|+R_\delta)^2\|\bx\|^2\). The finite second
    moment of \(\bx\) and dominated convergence therefore make \(g\)
    continuous on \(K_{R_\delta}\). We also have \(g(\theta)>0\) at every
    \(\theta\in K_{R_\delta}\). Compactness now gives
    \(\kappa_\delta:=\min_{\theta\in K_{R_\delta}}g(\theta)>0\).
    \[
        \inf_{\theta \in K_{R_\delta}} \liminf_{|\nu| \to \infty}
        \mathcal{E}(\theta,\nu) \geq \kappa_\delta>0,
        \qquad
        \min_{\vartheta\in K_\delta}\calE(\vartheta)
        =\calE(\vartheta^\ast)=\calE(-\vartheta^\ast)=0.
    \]
    Hence
    \(\argmin_{\vartheta\in K_\delta}\calE(\vartheta)
    =\{\vartheta^\ast,-\vartheta^\ast\}\).
    The ambient parameter space \(\R^d\times\overline{\R}\) contains
    \(K_\delta\). The global maximizers of the expected conditional
    log-likelihood on the ambient space are
    \(\mathcal{S}^\ast:=\{\vartheta^\ast,-\vartheta^\ast\}\).
    \[
        \mathcal{S}^\ast
        =\left\{\vartheta'\in\R^d\times\overline{\R}\ \middle|\
        \E_{y_0,\bx\mid\vartheta^\ast}[m_{\vartheta'}]
        =\sup_{\vartheta\in\R^d\times\overline{\R}}
        \E_{y_0,\bx\mid\vartheta^\ast}[m_\vartheta]\right\}
        =\left\{\vartheta'\in\R^d\times\overline{\R}\ \middle|\
        \calE(\vartheta')=0\right\}.
    \]
    Let \(U\subset K_\delta\) be a neighborhood and set
    \(R_U:=\sup_{(\theta,\nu)\in U}\|\theta\|<\infty\).
    Lemma~\ref{lemma:nll} and the inequality
    \(\ln[\cosh(x+x')/\cosh x']\leq |x|\), followed by passage to the
    boundary limits when needed, give
    \[
    \begin{aligned}
    \E\!\left[\sup_{\vartheta\in U}\{-m_\vartheta(y_0,\bx)\}\right]
    &\leq \frac12\E[y_0^2]
        +\frac12R_U^2\E[\|\bx\|^2]
        +R_U\E[|y_0|\|\bx\|]
        +\frac12\ln(2\p)\\
    &\leq \frac12\E[y_0^2]
        +\frac12R_U^2\E[\|\bx\|^2]
        +R_U\sqrt{\E[y_0^2]\E[\|\bx\|^2]}
        +\frac12\ln(2\p)<\infty.
    \end{aligned}
    \]
    This integrable envelope gives the local domination condition in Wald's
    consistency theorem for M-estimators with nonunique maxima
    (Theorem~5.14 on page~48 of~\citet{van2000asymptotic}). The conditional
    log-likelihood is continuous in each finite parameter and has continuous
    limits at \(\nu=\pm\infty\), so it is upper semicontinuous on
    \(K_\delta\).

    The empirical bridge and score matching optimality give the required
    likelihood near-optimality. Let \(\mathcal A_n\) denote the event that the
    empirical score matching argmin is nonempty, and define
    \(D_n:=\sup_{\vartheta}\E_{\Pr_n}m_\vartheta
    -\E_{\Pr_n}m_{\widehat{\vartheta}_{n,T_n}^{\,\mathsf{SM}}}\) on the entire
    sample space using the estimator convention in
    Section~\ref{subsec:notations}. Equivalently,
    \(D_n=\widehat{\calR}_n^{\,\mathsf{MLE}}
    (\widehat{\vartheta}_{n,T_n}^{\,\mathsf{SM}})
    -\inf_{\vartheta}\widehat{\calR}_n^{\,\mathsf{MLE}}(\vartheta)\).
    On \(\mathcal A_n\), the score matching optimality gives
    \(\widehat{\calR}_{n,T_n}^{\,\mathsf{SM}}(\widehat\vartheta)-\widehat{\calR}_{n,T_n}^{\,\mathsf{SM}}(\vartheta)\leq0\), the bridge constant
    cancels, and \(\Delta_n(\vartheta)\geq0\); hence, for every \(\vartheta\),
    \[
        \widehat{\calR}_n^{\,\mathsf{MLE}}(\widehat\vartheta)-\widehat{\calR}_n^{\,\mathsf{MLE}}(\vartheta)=\widehat{\calR}_{n,T_n}^{\,\mathsf{SM}}(\widehat\vartheta)-\widehat{\calR}_{n,T_n}^{\,\mathsf{SM}}(\vartheta)+\Delta_n(\widehat\vartheta)-\Delta_n(\vartheta)\leq\Delta_n(\widehat\vartheta).
    \]
    Taking the supremum over \(\vartheta\) gives
    \(0\leq D_n\leq\Delta_n(\widehat\vartheta)\) on \(\mathcal A_n\).
    Step~1 established
    \(\Delta_n(\widehat\vartheta)=o_P(n^{-1})\) on the entire sample space,
    while \(\Pr(\mathcal A_n^c)\to0\) is ensured by the application of Lemma~\ref{lemma:sm_minimum_attainment}. Therefore, for every \(\epsilon>0\),
    \(\Pr(nD_n>\epsilon)\leq\Pr(\mathcal A_n^c)
    +\Pr\{n\Delta_n(\widehat\vartheta)>\epsilon\}\to0\), which proves
    \(D_n=o_P(n^{-1})\).
    We apply Wald's theorem to the compact set
    \(K_\delta \equiv K_{R_\delta} \times \overline{\mathbb{R}}\). The
    equivalence \(\widehat{\vartheta}_{n,T_n}^{\,\mathsf{SM}} \in K_\delta
    \Leftrightarrow \widehat{\theta}_{n,T_n}^{\,\mathsf{SM}} \in
    K_{R_\delta}\) gives
    \[
        \lim_{n\to\infty} \Pr\!\left(
        \left\{\min_{s \in \{+1, -1\}} \|s\, \widehat{\vartheta}_{n,T_n}^{\,\mathsf{SM}} - \vartheta^\ast\| \geq \epsilon\right\}
        \cap\left\{\widehat{\theta}_{n,T_n}^{\,\mathsf{SM}} \in K_{R_\delta}\right\}\right) = 0
        \quad\text{for every }\epsilon > 0.
    \]
    The union bound
    \(\Pr(A) = \Pr(A\cap B) + \Pr(A\cap B^{\mathsf c})
    \leq \Pr(A\cap B) + 1 - \Pr(B)\),
    together with the definitions of \(R_\delta\) and \(K_{R_\delta}\), gives
    \[
        \limsup_{n\to\infty} \Pr\left(\min_{s \in \{+1, -1\}} \|s\, \widehat{\vartheta}_{n,T_n}^{\,\mathsf{SM}} - \vartheta^\ast\| \geq \epsilon\right)
        \leq 1 - \liminf_{n\to\infty}\Pr\left(\widehat{\theta}_{n,T_n}^{\,\mathsf{SM}} \in K_{R_\delta}\right) \leq \delta
        \quad\text{for every }\epsilon > 0.
    \]
    Letting \(\delta\) tend to zero shows that \(\lim_{n\to\infty} \Pr\left(\min_{s \in \{+1, -1\}} \|s\widehat{\vartheta}_{n,T_n}^{\,\mathsf{SM}} - \vartheta^\ast\| \geq \epsilon\right) = 0\), and hence
    \[\min_{s \in \{+1, -1\}} \|s\, \widehat{\vartheta}_{n,T_n}^{\,\mathsf{SM}} - \vartheta^\ast\| \xrightarrow{\Pr} 0.\]
    This also implies \(\min_{s \in \{+1, -1\}} \|s\, \widehat{\theta}_{n,T_n}^{\,\mathsf{SM}} - \theta^\ast\| = o_P(1)\).
    The preceding convergence and the triangle inequality also give
    \(\|\widehat{\vartheta}_{n,T_n}^{\,\mathsf{SM}}\|=\calO_P(1)\) and
    \(|\widehat{\nu}_{n,T_n}^{\,\mathsf{SM}}| = \mathcal{O}_P(1)\).
    \[
    \|\widehat{\vartheta}_{n,T_n}^{\,\mathsf{SM}}\| \leq \|\vartheta^\ast\| + \min_{s \in \{+1, -1\}} \|s\, \widehat{\vartheta}_{n,T_n}^{\,\mathsf{SM}} - \vartheta^\ast\| = \calO(1) + o_P(1) = \calO_P(1),
    \]
    \[
    |\widehat{\nu}_{n,T_n}^{\,\mathsf{SM}}| =  \sqrt{\|\widehat{\vartheta}_{n,T_n}^{\,\mathsf{SM}}\|^2 - \|\widehat{\theta}_{n,T_n}^{\,\mathsf{SM}}\|^2} 
    \leq \|\widehat{\vartheta}_{n,T_n}^{\,\mathsf{SM}}\| = \calO_P(1).
    \]

    \vspace{0.5em}
    \noindent\textbf{Step 3. Asymptotic normality.}\\
    We apply Theorem~5.23 of \citet[p.~53]{van2000asymptotic}. The following
    paragraphs verify its assumptions for the aligned estimator.
    
    \textbf{The aligned estimator is consistent.}
    Step~2 gives
    \[
        \widetilde{\vartheta}_{n,T_n}^{\,\mathsf{SM}}
        :=s_n\widehat{\vartheta}_{n,T_n}^{\,\mathsf{SM}}
        \xrightarrow{\Pr}\vartheta^\ast.
    \]
    \textbf{The aligned estimator is near optimal.}
    For \(m_\vartheta(y_0,\bx):=\ln p(y_0\mid\bx;\vartheta)\),
    Lemma~\ref{lemma:nll} gives the symmetry
    \(\widehat{\calR}_n^{\,\mathsf{MLE}}(\vartheta)
    =\widehat{\calR}_n^{\,\mathsf{MLE}}(-\vartheta)\). Therefore,
    the near-optimality bound from Step~2 also holds after sign alignment.
    \[
        \E_{\Pr_n} m_{\widetilde{\vartheta}_{n,T_n}^{\,\mathsf{SM}}} \geq \sup_\vartheta \E_{\Pr_n} m_\vartheta - o_P\left(1/n\right).
    \]

    \textbf{The population criterion has a nonsingular quadratic expansion.}
    The conditional log-likelihood
    \(m_\vartheta(y_0,\bx)=\ln p(y_0\mid\bx;\vartheta)\) is constructed
    from \(\ln\cosh(\cdot)\) and linear maps
    (Lemma~\ref{lemma:nll}).
    Hence the map \(\vartheta \mapsto m_\vartheta(y_0, \bx)\) is smooth in
    \(\vartheta \equiv(\theta, \nu)\) for all \(y_0, \bx\).
    In particular, the derivative \(\nabla m_{\vartheta^\ast}\) with respect to \(\vartheta\) at \(\vartheta = \vartheta^\ast\) exists.
    Writing \(u=\theta^\top\bx\), the Hessian blocks of the conditional
    log-likelihood are
    \[
    \begin{aligned}
        \nabla_{\theta\theta}^2m_\vartheta
        &=\bx\bx^\top\bigl[y_0^2\sech^2(y_0u+\nu)-1\bigr],\\
        \nabla_{\theta\nu}^2m_\vartheta
        &=\bx y_0\sech^2(y_0u+\nu),\\
        \nabla_{\nu\nu}^2m_\vartheta
        &=\sech^2(y_0u+\nu)-\sech^2\nu.
    \end{aligned}
    \]
    Their norms have a common integrable bound over \(\vartheta\) given by a
    linear combination of \(\|\bx\|^2(y_0^2+1)\),
    \(\|\bx\||y_0|\), and \(1\). Under the fourth-moment condition and the
    representation \(y_0=(-1)^{z+1}(\theta^\ast)^\top\bx+\varepsilon_0\),
    this envelope is integrable. Because the Hessian entries are continuous
    in \(\vartheta\), the Dominated Convergence Theorem
    \cite[Theorem~1.5.8]{durrett2019probability} justifies differentiating
    under the expectation twice and shows that the population criterion has a
    second-order Taylor expansion at \(\vartheta^\ast\), governed by
    \[
        V_{\vartheta^\ast}:=
        \nabla^2\E_{y_0,\bx\mid\vartheta^\ast}[m_\vartheta]
        \big|_{\vartheta=\vartheta^\ast}
        =\E_{y_0,\bx\mid\vartheta^\ast}[\nabla^2m_{\vartheta^\ast}].
    \]
    The information matrix equality gives
    \[
    I(\vartheta^\ast) := \E_{y_0, \bx\mid \vartheta^\ast}[\nabla m_{\vartheta^\ast} \nabla m_{\vartheta^\ast}^\top] = -\E_{y_0, \bx\mid \vartheta^\ast}[\nabla^2 m_{\vartheta^\ast}] = -\E_{y_0, \bx\mid \vartheta^\ast}[\nabla_{\vartheta}^2\ln p(y_0\mid \bx; \vartheta)] = - V_{\vartheta^\ast}.
    \]
    We verify nonsingularity through the score coordinates.
    For \((a,b)\in\R^d\times\R\), set
    \(m=(\theta^\ast)^\top\bx\) and \(q=a^\top\bx\). If
    \((a,b)^\top I(\vartheta^\ast)(a,b)=0\), then the corresponding linear
    combination of the score coordinates vanishes almost surely.
    \[
        (q y_0+b)\tanh(m y_0+\nu^\ast)-qm-b\tanh\nu^\ast=0.
    \]
    For almost every \(\bx\), the conditional density
    \(y_0\mapsto p(y_0\mid\bx;\vartheta^\ast)\) is positive on \(\R\).
    The displayed expression is continuous in
    \(y_0\), so it vanishes for every \(y_0\). For every such \(\bx\) with
    \(m\ne0\), division by \(y_0\) followed by \(y_0\to+\infty\) gives
    \(q=0\), after which the nonconstancy of
    \(y_0\mapsto\tanh(m y_0+\nu^\ast)\) gives \(b=0\). Because
    \(\theta^\ast\ne\mathbf0\) and the density of \(\bx\) is
    positive almost everywhere, \(m\ne0\) almost surely, and
    \(a^\top\bx=0\) almost surely forces \(a=0\). Thus the Fisher information
    matrix \(I(\vartheta^\ast)\) is positive definite and nonsingular.
    
    \textbf{A square-integrable Lipschitz envelope exists.}
    Let \(U\) be a fixed convex neighborhood of \(\vartheta^\ast\) with
    radius \(R\), and define
    \(\dot{m}(y_0, \bx) := \sup_{\vartheta \in U}
    \|\nabla m_\vartheta(y_0, \bx)\|\). The Mean Value Theorem
    (Theorem 9.19 on page 218 of~\citet{rudin1976}) gives
    \(|m_{\vartheta_1}(y_0, \bx) - m_{\vartheta_2}(y_0, \bx)| \leq
    \dot{m}(y_0, \bx) \|\vartheta_1 - \vartheta_2\|\) for every
    \(\vartheta_1, \vartheta_2\in U\).
    Lemma~\ref{lemma:nll} gives
    \(\nabla_\theta m_\vartheta = \nabla_\theta \ln p(y_0 \mid \bx;
    \vartheta) = \bx(y_0 \tanh(y_0 \theta^\top \bx + \nu) -
    \theta^\top \bx)\) and
    \(\nabla_\nu m_\vartheta = \nabla_\nu \ln p(y_0\mid \bx, \vartheta)
    = \tanh(y_0 \theta^\top \bx + \nu) - \tanh \nu\).
    We apply the triangle inequality and \(|\tanh(\cdot)| \le 1\). The bounds
    \(\sup_{\vartheta \in U} \|\theta\| \leq R + \|\theta^\ast\|\) and
    \(|y_0|\leq \|\theta^\ast\|\|\bx\| + |\varepsilon_0|\) yield
    \[
        \dot{m}(y_0, \bx) \le \|\bx\| (\|\theta^\ast\|\|\bx\| + |\varepsilon_0| + (R+\|\theta^\ast\|)\|\bx\|) + 2 \leq \left(R+2\|\theta^\ast\| + \frac{1}{2}\right) \|\bx\|^2 + \frac{1}{2} \varepsilon_0^2 + 2.
    \]
    Because the fourth moments are finite (\(\E[\|\bx\|_2^4] < \infty\)) and \(\E[\varepsilon_0^4] = 3\), the Lipschitz envelope satisfies the square-integrability requirement (\(\E_{y_0, \bx \mid \vartheta^\ast}[\dot{m}^2(y_0, \bx)] < \infty\)), noting that \((a + b + c)^2 \leq 3(a^2 + b^2 + c^2)\).

    The aligned estimator
    \(\widetilde{\vartheta}_{n,T_n}^{\,\mathsf{SM}}
    :=s_n\widehat{\vartheta}_{n,T_n}^{\,\mathsf{SM}}\) is consistent. The
    event argument in Step~2 shows that its empirical likelihood
    suboptimality is \(o_P(n^{-1})\). The verified conditions allow
    Theorem~5.23 of \citet{van2000asymptotic} to be applied. It gives
    \[
        \sqrt{n} \left( \widetilde{\vartheta}_{n,T_n}^{\,\mathsf{SM}} - \vartheta^\ast \right) \xrightarrow{\mathsf{d}} \mathcal{N}\left(\mathbf{0}, (-V_{\vartheta^\ast})^{-1} \E_{y_0, \bx\mid \vartheta^\ast}[\nabla m_{\vartheta^\ast} \nabla m_{\vartheta^\ast}^\top] (-V_{\vartheta^\ast})^{-1} \right) = \mathcal{N}(\mathbf{0}, I(\theta^\ast,\nu^\ast)^{-1}).
    \]
    This completes the proof, noting that \(I(\theta^\ast,\nu^\ast) \equiv I(\vartheta^\ast) = \E_{y_0, \bx\mid \vartheta^\ast}[\nabla m_{\vartheta^\ast} \nabla m_{\vartheta^\ast}^\top] = -V_{\vartheta^\ast}\).
\end{proof}

\begin{lemma}[High-probability attainment of the empirical minimum]
\label{lemma:sm_minimum_attainment}
    The dimension \(d\) is fixed, and the observations are i.i.d. from the
    MLR model with \(\theta^\ast\ne\mathbf0\) and
    \(\nu^\ast\in\R\). The covariate \(\bx\) has a density that is positive
    almost everywhere on \(\R^d\) and satisfies \(\E\|\bx\|^2<\infty\).
    The terminal horizon satisfies
    \(\bar\alpha_{T_n}/(1-\bar\alpha_{T_n})=o(n^{-1})\). Then the probability
    that the empirical Score Matching objective attains its minimum over
    \(\R^{d+1}\) tends to one. A sample-measurable exact minimizer can be
    selected whenever this minimum exists.
\end{lemma}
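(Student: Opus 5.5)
We prove the lemma through the empirical bridge~\eqref{eq:empirical_sm_mle_bridge}, which writes the objective as
\(\widehat{\calR}_{n,T_n}^{\,\mathsf{SM}}=\widehat{\calR}_n^{\,\mathsf{MLE}}-\Delta_n-\mathsf c(\bar\alpha_{T_n})\).
Attainment then becomes a coercivity-plus-continuity question. First, both \(\widehat{\calR}_n^{\,\mathsf{MLE}}\) and \(\Delta_n\) are continuous in \(\vartheta=(\theta,\nu)\) for every fixed sample. For \(\widehat{\calR}_n^{\,\mathsf{MLE}}\) this is immediate from Lemma~\ref{lemma:nll}. For \(\Delta_n\), continuity follows by dominated convergence in \(y_T\), using the quadratic envelope of \(-\ln p(y_T\mid\bx;\theta_T,\pi)\). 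Hence the objective is continuous, and it suffices to show two things with probability tending to one: a sublevel set is bounded in \(\theta\), and the objective does not approach its infimum as \(|\nu|\to\infty\).

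\textbf{Control of \(\theta\).} Repeat the Step~1 computation of the main proof for an arbitrary \(\vartheta\), not only the minimizer. By Lemma~\ref{lemma:decaying_kl_bound_transition} we have \(\Delta_n(\vartheta)\le c_n(\|\theta\|_{\Sigma_n}^2+2\|\theta\|_{\Sigma_n}\sqrt{\E_{\Pr_n}y_0^2}+\E_{\Pr_n}y_0^2+1)\), where \(c_n=o(n^{-1})\). Combined with the lower bound \(\widehat{\calR}_n^{\,\mathsf{MLE}}(\vartheta)\ge\frac12\|\theta\|_{\Sigma_n}^2-\|\theta\|_{\Sigma_n}\sqrt{\E_{\Pr_n}y_0^2}-\mathrm{const}\), obtained from \(\ln[\cosh(x+x')/\cosh x']\le|x|\), this gives
\[
\widehat{\calR}_{n,T_n}^{\,\mathsf{SM}}(\vartheta)\ge A_n\|\theta\|_{\Sigma_n}^2-B_n\|\theta\|_{\Sigma_n}-C_n',
\]
with \(A_n\to\frac12\). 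On the event \(\lambda_{\min}(\Sigma_n)\ge\lambda_{\min}(\Sigma)/2\), whose probability tends to one by the weak law of large numbers and Weyl's inequality, the objective is therefore coercive in \(\theta\) uniformly in \(\nu\). Any minimizing sequence thus has bounded \(\theta\), with a sample-dependent radius.

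\textbf{Control of \(\nu\) (main obstacle).} The difficulty is that \(\nu\) ranges over a noncompact line. We compactify it to \(\overline{\R}\), as in Step~2 of the main proof. Both terms extend continuously to \(\nu=\pm\infty\), where the model becomes the single Gaussian \(\calN(\pm\langle\theta,\bx\rangle,1)\) and \(\Delta_n\) extends continuously as well. The objective therefore attains its minimum on \(K\times\overline{\R}\) for a compact ball \(K\), and it remains to exclude minimizers at \(\nu=\pm\infty\). We compare with \(\vartheta^\ast\). Since the \(\Delta_n\) terms are \(o_P(n^{-1})\) uniformly on the bounded \(\theta\)-ball, it suffices to show that with probability tending to one
\[
\inf_{\theta\in K}\widehat{\calR}_n^{\,\mathsf{MLE}}(\theta,\pm\infty)>\widehat{\calR}_n^{\,\mathsf{MLE}}(\vartheta^\ast)+\kappa/2.
\]
Here \(\kappa:=\inf_{\theta\in K}g(\theta)>0\) is the single-Gaussian KL gap from Step~2 of the main proof. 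This inequality is a uniform law of large numbers over the compact \(K\) for the single-Gaussian log-likelihoods. It holds because those log-likelihoods have integrable envelopes of order \(y_0^2+\|\bx\|^2+|y_0|\|\bx\|\) and are continuous in \(\theta\), so a Glivenko--Cantelli argument on compacts applies. Since \(K\) must be chosen deterministically, we take it from the \(\calO_P(1)\) coercivity bound before invoking the uniform law. On the intersection of these events, the minimum over \(K\times\overline{\R}\) lies at a finite \(\nu\), hence in \(\R^{d+1}\), and it is a global minimum by coercivity.

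\textbf{Measurable selection.} The objective is a Carathéodory function: it is jointly measurable in the sample and continuous in \(\vartheta\). The argmin correspondence, restricted to the measurable event where it is nonempty, is therefore closed-valued and measurable. A measurable minimizer then exists by the Kuratowski--Ryll-Nardzewski selection theorem, or equivalently by the measurable maximum theorem of Aliprantis--Border. The main effort lies in the \(\nu\to\pm\infty\) exclusion, specifically in the uniformity over \(\theta\) of the empirical KL gap.
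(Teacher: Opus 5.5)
Your proof is correct, but it takes a different route from the paper's. The paper never restricts \(\theta\) to a compact ball or uses a uniform law. After discarding parameter-free terms, it writes the criterion as \(\mathsf G_n\) and reparametrizes by \(\rho=\tanh\nu\in[-1,1]\) via \(\psi_\rho(z)=\log(\cosh z+\rho\sinh z)\). Coercivity then follows from the \(1\)-Lipschitz property of \(\psi_\rho\). The key observation is that at \(\rho=\pm1\) the criterion is exactly the quadratic \((1-a_n)(\frac12\theta^\top\Sigma_n\theta\mp\theta^\top m_n)\), with \(a_n=\bar\alpha_{T_n}\) and \(m_n=\E_{\Pr_n}[\bx y_0]\). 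Its minimum over all of \(\R^d\) converges to \(b_\infty=-\frac12\tanh^2(\nu^\ast)\|\theta^\ast\|_\Sigma^2\). The gap between \(b_\infty\) and the limit of the criterion at \(\vartheta^\ast\) is the expected KL divergence from the true conditional law to \(\calN(\tanh\nu^\ast\langle\theta^\ast,\bx\rangle,1)\). It is positive because the true conditional variance exceeds one.

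You instead obtain coercivity uniform in \(\nu\) from the bridge and Lemma~\ref{lemma:decaying_kl_bound_transition}, comparing with \(\vartheta^\ast\) so that the divergent constant \(\mathsf c(\bar\alpha_{T_n})\) cancels. You then fix a deterministic radius \(R_\delta>\|\theta^\ast\|\) from the \(\calO_P(1)\) sublevel bound. Finally, you exclude \(\nu=\pm\infty\) using a uniform law on \(K\), the gap \(\inf_K g>0\), and \(\sup_{K\times\overline{\R}}\Delta_n\le c_n\calO_P(1)\). This is sound; borrowing \(g\) from Step~2 is not circular, since that part does not use attainment. It is also more modular, because it would transfer to models whose boundary likelihood is not explicit. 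The cost is a \(\delta\)--\(R_\delta\) layer and a Glivenko--Cantelli step, both of which the exact quadratic boundary form makes unnecessary. The paper's version instead gives a global-in-\(\theta\) comparison with an explicit gap.

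One citation should be adjusted. The Aliprantis--Border measurable maximum theorem needs a compact-valued feasible correspondence, which \(\R^{d+1}\) does not provide. You should instead invoke the normal-integrand result: the argmin of a Carath\'eodory integrand is closed-valued and measurable, with measurable domain. Kuratowski--Ryll-Nardzewski then applies. The paper avoids this by compactifying to fitted values times \([-1,1]\) and selecting the minimizer of smallest \(|\rho|\). That choice makes the attainment event explicitly \(\{|\widetilde\rho_n|<1\}\).
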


\begin{proof}
    Set \(a_n:=\bar\alpha_{T_n}\),
    \(\sigma_n:=\sqrt{a_n(1-a_n)}\),
    \(\Sigma_n:=\E_{\Pr_n}[\bx\bx^\top]\), and
    \(m_n:=\E_{\Pr_n}[\bx y_0]\). For
    \(u_\theta:=\theta^\top\bx\), define
    \(r_\theta:=y_0u_\theta\) and
    \(q_{\theta,\xi}:=a_n r_\theta+\sigma_n u_\theta\xi\), where
    \(\xi\sim\calN(0,1)\). The terminal-horizon condition gives
    \(a_n\to0\). The empirical bridge in
    Lemma~\ref{lemma:log_likelihood_SM_loss_finite} and the Gaussian mixture
    formula in Lemma~\ref{lemma:nll} show that, up to a parameter-independent
    random term, the empirical score matching criterion is
    \[
        \mathsf G_n(\theta,\nu)=\frac{1-a_n}{2}\theta^\top\Sigma_n\theta-\E_{\Pr_n}\log\cosh(r_\theta+\nu)+\E_{\Pr_n}\E_\xi\log\cosh(q_{\theta,\xi}+\nu).
    \]

    Set \(\rho:=\tanh\nu\) and
    \(\psi_\rho(z):=\log(\cosh z+\rho\sinh z)\). The identity
    \(\log\cosh(\nu+z)-\log\cosh\nu=\psi_\rho(z)\) gives the continuous
    compactification
    \[
        \overline{\mathsf G}_n(\theta,\rho)=\frac{1-a_n}{2}\theta^\top\Sigma_n\theta-\E_{\Pr_n}\psi_\rho(r_\theta)+\E_{\Pr_n}\E_\xi\psi_\rho(q_{\theta,\xi}),\qquad \rho\in[-1,1].
    \]
    Indeed, \(\cosh z+\rho\sinh z>0\) on \([-1,1]\), and
    \(\psi_{\pm1}(z)=\pm z\). For \(|\rho|<1\), the derivative is
    \(\psi_\rho'(z)=\tanh(z+\operatorname{arctanh}\rho)\), while the endpoint
    derivatives are \(\pm1\). Thus \(\psi_\rho\) is \(1\)-Lipschitz. Define
    \(L_n:=\{\E_{\Pr_n}[((1-a_n)|y_0|+\sigma_n\E|\xi|)^2]\}^{1/2}\).
    Cauchy--Schwarz gives
    \[
        \overline{\mathsf G}_n(\theta,\rho)\geq\frac{1-a_n}{2}\|\theta\|_{\Sigma_n}^2-L_n\|\theta\|_{\Sigma_n}.
    \]
    Thus the compactified criterion is coercive in fitted-value space
    \(\operatorname{Im}(X)\), where the design matrix
    \(X\in\R^{n\times d}\) has rows \((\bx^{(i)})^\top\). It has a compact
    nonempty argmin there, and \(X^+\) gives its minimum-norm lift to
    \(\R^d\). For \(n\geq d\), the positive-density assumption makes
    \(\Sigma_n\) positive definite with probability one, so the criterion is
    coercive in \(\theta\) itself.

    At \(\rho=\pm1\), direct substitution gives
    \[
        \overline{\mathsf G}_n(\theta,\pm1)=(1-a_n)\left(\frac12\theta^\top\Sigma_n\theta\mp\theta^\top m_n\right),\qquad b_n=-\frac{1-a_n}{2}m_n^\top\Sigma_n^+m_n.
    \]
    Here \(b_n\) is the common boundary minimum because
    \(m_n\in\operatorname{Range}(\Sigma_n)\). For \(n\geq d\), we have
    \(\Sigma_n^+=\Sigma_n^{-1}\) almost surely. The weak law gives
    \[
        \Sigma_n\xrightarrow{\Pr}\Sigma,\qquad m_n\xrightarrow{\Pr}\Sigma\theta^\ast\tanh\nu^\ast,\qquad b_n\xrightarrow{\Pr}b_\infty:=-\frac12\tanh^2(\nu^\ast)(\theta^\ast)^\top\Sigma\theta^\ast.
    \]

    Put \(u^\ast:=(\theta^\ast)^\top\bx\),
    \(r^\ast:=y_0u^\ast\), and
    \(H_n:=\E_{\Pr_n}\E_\xi\log\cosh(\nu^\ast+a_n r^\ast+\sigma_n u^\ast\xi)\).
    The Lipschitz bound yields
    \[
        |H_n-\log\cosh\nu^\ast|\leq a_n\E_{\Pr_n}|r^\ast|+\sigma_n\E|\xi|\E_{\Pr_n}|u^\ast|=o_P(1).
    \]
    The empirical moments on the right are \(\calO_P(1)\). The weak law
    applied to the other terms therefore gives
    \[
        \mathsf G_n(\theta^\ast,\nu^\ast)\xrightarrow{\Pr}g_\ast:=\E\!\left[\frac12(u^\ast)^2-\log\frac{\cosh(r^\ast+\nu^\ast)}{\cosh\nu^\ast}\right].
    \]
    The boundary gap has the distributional representation
    \[
        b_\infty-g_\ast=\E_\bx\KL\!\left(p(\,\cdot\mid\bx;\theta^\ast,\nu^\ast)\parallel\calN(\tanh\nu^\ast u^\ast,1)\right)>0.
    \]
    The inequality is strict because the true conditional variance is
    \(1+\sech^2(\nu^\ast)(u^\ast)^2>1\) for almost every \(\bx\), whereas
    the boundary Gaussian has variance one. Consequently,
    \[
        \Pr\{\mathsf G_n(\theta^\ast,\nu^\ast)<b_n\}\longrightarrow1.
    \]
    On this event, every compactified minimizer has \(|\rho|<1\) and maps
    through \(\nu=\operatorname{arctanh}\rho\) to an exact finite minimizer.
    Hence
    \[
        \Pr\!\left\{\argmin_{\vartheta\in\R^{d+1}}\widehat{\calR}_{n,T_n}^{\,\mathsf{SM}}(\vartheta)\ne\varnothing\right\}\longrightarrow1.
    \]

    For measurability, write \(u=X\theta\in\R^n\). The identity
    \(u\in\operatorname{Im}(X)\) if and only if \(XX^+u=u\) gives a
    measurable graph for the fitted-value space. We extend the compactified
    criterion by \(+\infty\) off this space. The measurable minimum theorem
    then gives a measurable compact argmin correspondence on
    \(\R^n\times[-1,1]\). We minimize \(|\rho|\) over this correspondence,
    take a measurable selector \((\widetilde u_n,\widetilde\rho_n)\), and set
    \(\widetilde\theta_n:=X^+\widetilde u_n\). This lift is measurable because
    \(X\mapsto X^+\) is Borel measurable. The selector satisfies
    \[
        \left\{\argmin_{\vartheta\in\R^{d+1}}\widehat{\calR}_{n,T_n}^{\,\mathsf{SM}}(\vartheta)\ne\varnothing\right\}=\{|\widetilde\rho_n|<1\}.
    \]
    The event on the left is therefore measurable, and
    \(\operatorname{arctanh}\widetilde\rho_n\) gives the claimed exact
    selection on that event.
\end{proof}

\begin{remark}[Fixed-Sample Nonattainment]
\label{rmk:fixed_sample_nonattainment}
Finite-parameter attainment need not hold on the whole sample space for a
fixed \(n\). For example, take \(n=d=1\), \(0<a_1<1\), \(x\ne0\), and
\(y_0\ne0\), and write \(u=\theta x\). Jensen's inequality and the
\(1\)-Lipschitz property of \(\log\cosh\) give
\[
    \mathsf G_1(\theta,\nu)\geq\frac{1-a_1}{2}(|u|-|y_0|)^2-\frac{1-a_1}{2}y_0^2.
\]
The inequality is strict for every finite \(\nu\) when \(u\ne0\), while the
criterion is zero at \(u=0\). The boundary choices
\((u,\rho)=(y_0,1)\) and \((-y_0,-1)\) attain the negative lower bound.
Thus the finite-parameter argmin is empty for this sample, although
Lemma~\ref{lemma:sm_minimum_attainment} shows that this obstruction has
vanishing probability in the stated asymptotic regime.
\end{remark}

\newpage
\section{Proofs of Cross-Entropy and EM Decompositions for Fixed-Scale Score Matching}\label{sup:connect}
This appendix restates and proves the exact fixed-scale decompositions,
gradient identities, and noise-limit results from
Section~\ref{sec:connect}: Propositions~\ref{prop:em_update},
\ref{prop:cross_entropy}, \ref{prop:sm_gradients},
\ref{prop:low_noise_em}, and~\ref{prop:high_noise_effective_coefficient}, Main
Theorem~\ref{thm:information_theoretical_decomposition}, and
Corollaries~\ref{cor:sm_gradients_entropy_activation}
and~\ref{cor:sm_gradients_em_variance}.

\begin{proposition}[Proposition~\ref{prop:em_update}]
    The EM update rules for the regression parameters \(\theta_t\) and the imbalance parameter \(\nu\) are
    \[
    M(\theta_t, \nu) = \Sigma^{-1}\E_{\ast}[\bx y_t \tanh(\mu_t y_t + \nu)],\;\;
    N(\theta_t, \nu) = \E_{\ast}[\tanh(\mu_t y_t + \nu)].
    \]
    They also satisfy
    \[
    \langle \Sigma M(\theta_t, \nu), \theta_t \rangle = \E_{\ast}[\mu_t y_t\tanh(\mu_t y_t + \nu)] = \E_{\ast}[y_t^2 - y_t \partial_{y_t} F(y_t, \mu_t, \nu)] .
    \]
    Their derivatives satisfy
    \[\Sigma \nabla_{\nu} M(\theta_t, \nu) = \nabla_{\theta_t} N(\theta_t, \nu) = \E_{\ast}[\bx y_t \sech^2(\mu_t y_t + \nu)].\]
\end{proposition}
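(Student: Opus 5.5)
We will establish the three parts of Proposition~\ref{prop:em_update} in order: derive the EM operators, verify the scalar identity, and then check the symmetry of the cross-derivatives.

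\textbf{Step 1: derive $M$ and $N$.} We write the EM surrogate $Q(\theta_t',\nu'\mid\theta_t,\nu)=\E_\ast\sum_{z}\Pr(z\mid y_t,\bx;\theta_t,\pi)\ln[\pi'(z)\calN(y_t;(-1)^{z+1}\langle\theta_t',\bx\rangle,1)]$. By Bayes' rule and Lemma~\ref{lemma:nll}, the posterior weights are $\Pr(z=1\mid y_t,\bx)=(1+\tanh(\mu_ty_t+\nu))/2$ and $\Pr(z=2\mid y_t,\bx)=(1-\tanh(\mu_ty_t+\nu))/2$, so the posterior mean of $(-1)^{z+1}$ is $\tanh(\mu_ty_t+\nu)$. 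The $\theta_t'$ part of $Q$ is the concave quadratic $-\tfrac12\E_\ast[\langle\theta_t',\bx\rangle^2]+\E_\ast[\langle\theta_t',\bx\rangle y_t\tanh(\mu_ty_t+\nu)]$. Since $\bx$ is independent of $z$ and the noise, $\E_\ast[\bx\bx^\top]=\Sigma$, and setting the gradient to zero gives $M$. The $\pi'$ part is $\E_\ast[\tfrac{1+\tanh}{2}\ln\pi'(1)+\tfrac{1-\tanh}{2}\ln\pi'(2)]$. Maximizing it over the simplex gives $\pi'(1)-\pi'(2)=\E_\ast\tanh(\mu_ty_t+\nu)$, which is $N$, i.e.\ the updated $\tanh\nu$.

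\textbf{Step 2: the scalar identity.} Taking the inner product with $\theta_t$, linearity gives $\langle\Sigma M,\theta_t\rangle=\E_\ast[\langle\theta_t,\bx\rangle y_t\tanh(\mu_ty_t+\nu)]=\E_\ast[\mu_ty_t\tanh(\mu_ty_t+\nu)]$. The first identity of Lemma~\ref{lemma:derivatives_negative_log_likelihood} states $y_t\partial_{y_t}F-y_t^2=-\tanh(\mu_ty_t+\nu)\mu_ty_t$. Taking expectations then gives the second equality.

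\textbf{Step 3: derivatives.} We differentiate under $\E_\ast$, which requires a dominating bound. Since $|\sech^2|\le1$, the integrands are bounded by $\|\bx\||y_t|$, and $\E_\ast\|\bx\||y_t|<\infty$ by Cauchy--Schwarz using $\E\|\bx\|^2<\infty$ and $y_t=\pm\mu_t^\ast+\varepsilon_t$. This is the only mildly delicate point: we take the standing second-moment assumption on $\bx$, which makes it routine. Differentiating $\tanh(\mu_ty_t+\nu)$ in $\nu$ gives $\sech^2(\mu_ty_t+\nu)$, so $\Sigma\nabla_\nu M=\E_\ast[\bx y_t\sech^2(\mu_ty_t+\nu)]$. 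The gradient in $\theta_t$ is $\sech^2(\mu_ty_t+\nu)\,y_t\bx$, because $\nabla_{\theta_t}\mu_t=\bx$. Hence $\nabla_{\theta_t}N$ equals the same expression. The expectation $\E_\ast$ is under the fixed ground truth, so it does not depend on $(\theta_t,\nu)$ and the differentiation passes only through the integrand.
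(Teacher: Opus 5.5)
Your argument is correct. Steps 2 and 3 coincide with the paper's proof. The scalar identity comes from the first identity of Lemma~\ref{lemma:derivatives_negative_log_likelihood}. The cross-derivative identity comes from $\tanh'=\sech^2$, $\nabla_{\theta_t}\mu_t=\bx$, and dominated convergence; your explicit envelope $\|\bx\|\,|y_t|$ is a useful detail the paper leaves implicit.

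Step 1 is where you differ. The paper never forms the $Q$-function. It takes from prior work the representation of the EM maps as $\Sigma^{-1}$-preconditioned gradient steps on the cross-entropy, $M(\theta_t,\nu)=\theta_t-\Sigma^{-1}\E_\ast[\bx\,\partial_{\mu_t}F(y_t,\mu_t,\nu)]$ and $N(\theta_t,\nu)=\tanh\nu-\E_\ast[\partial_\nu F(y_t,\mu_t,\nu)]$. It then substitutes the formulas for $\partial_{\mu_t}F$ and $\partial_\nu F$ from Lemma~\ref{lemma:derivatives_negative_log_likelihood}. You instead compute the E-step posterior $(1\pm\tanh(\mu_ty_t+\nu))/2$ and solve the M-step explicitly: a strictly concave quadratic in $\theta_t'$ with Hessian $-\Sigma$, and a two-point multinomial problem in $\pi'$.

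Your route is self-contained and actually verifies that $M$ and $N$ are the EM updates, whereas the paper imports that identification. The paper's route is shorter and yields Proposition~\ref{prop:cross_entropy} for free, since that proof merely rearranges the same display. Fisher's identity links the two: the $\theta_t'$-gradient of your $Q$ at the current iterate equals $-\nabla_{\theta_t}\calH(\theta_t,\nu)$, so maximizing the quadratic gives $\theta_t-\Sigma^{-1}\nabla_{\theta_t}\calH(\theta_t,\nu)$. On your route, Proposition~\ref{prop:cross_entropy} therefore costs only one extra differentiation of $\calH$.
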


\begin{proof}[Proof for Proposition~\ref{prop:em_update}]
    The calculation adapts Appendix~B of \citet{luo2024unveiling} and
    Proposition~3 of \citet{luo2025structural} to the diffused parameter
    \(\theta_t\). Those results use the clean-scale parameter. Here we replace
    it with \(\theta_t\) from Equation~\eqref{eq:yt}.

    The marginal law of \(\bx\) does not depend on \((\theta_t,\nu)\), so
    \(-\ln p(y_t,\bx\mid\theta_t,\pi)=F(y_t,\mu_t,\nu)-\ln p(\bx)\).
    Since \(\mu_t=\langle\theta_t,\bx\rangle\), differentiation gives
    \[
    M(\theta_t, \nu) = \theta_t - \E[\bx \bx^\top]^{-1} \nabla_{\theta_t} \E_{\bx} \E_{y_t\mid \bx; \theta_t^\ast, \pi^\ast} [-\ln p(y_t,\bx\mid \theta_t, \pi)]
    = \theta_t - \Sigma^{-1} \E_{\bx} \E_{y_t\mid \bx; \theta_t^\ast, \pi^\ast} [\bx \partial_{\mu_t} F(y_t, \mu_t, \nu)],
    \]
    \[
    N(\theta_t, \nu) = \tanh \nu - \nabla_\nu \E_{\bx} \E_{y_t\mid \bx; \theta_t^\ast, \pi^\ast} [-\ln p(y_t,\bx\mid \theta_t, \pi)]
    = \tanh \nu - \E_{\bx} \E_{y_t\mid \bx; \theta_t^\ast, \pi^\ast} [\partial_{\nu} F(y_t, \mu_t, \nu)].
    \]
    Lemma~\ref{lemma:derivatives_negative_log_likelihood} gives
    \(\partial_{\mu_t}F=\mu_t-y_t\tanh(\mu_ty_t+\nu)\) and
    \(\partial_\nu F=\tanh\nu-\tanh(\mu_ty_t+\nu)\). Substitution yields
    \[
    M(\theta_t, \nu) = \Sigma^{-1} \E_{\ast} [\bx y_t \tanh(\mu_t y_t + \nu)],\;\;
    N(\theta_t, \nu) = \E_{\ast} [\tanh(\mu_t y_t + \nu)].
    \]
    The same derivative formulas imply
    \(\mu_t\partial_{\mu_t}F-\mu_t^2
    =y_t\partial_{y_t}F-y_t^2
    =-\mu_ty_t\tanh(\mu_ty_t+\nu)\). Therefore
    \[
    \langle \Sigma M(\theta_t, \nu), \theta_t \rangle = \E_{\ast}[\mu_t y_t\tanh(\mu_t y_t + \nu)] = \E_{\ast}[y_t^2 - y_t \partial_{y_t} F(y_t, \mu_t, \nu)] .
    \]
    The identity \(\tanh'(u)=\sech^2(u)\) and the relation
    \(\nabla_{\theta_t}\mu_t=\bx\) give the derivative formula below.
    The dominated convergence theorem justifies moving the derivatives through
    \(\E_\ast\)~\citep[Theorem~1.5.8]{durrett2019probability}.
    \[
        \Sigma \nabla_{\nu} M(\theta_t, \nu) 
        = \E_{\ast}[\bx y_t \nabla_{\nu} \tanh(\mu_t y_t + \nu)]
        = \E_{\ast}[\bx y_t \sech^2(\mu_t y_t + \nu)]
        = \E_{\ast}[\nabla_{\theta_t} \tanh(\mu_t y_t + \nu)]
        = \nabla_{\theta_t}N(\theta_t, \nu).
    \]
\end{proof}

\begin{proposition}[Proposition~\ref{prop:cross_entropy}]
    The cross-entropy \(\calH(\theta_t,\nu)\) defined in
    Section~\ref{subsec:em_update} satisfies
    \[
    \nabla_{\theta_t} \calH(\theta_t, \nu) = \Sigma \left(\theta_t - M(\theta_t, \nu)\right),\;\;
    \nabla_{\nu} \calH(\theta_t, \nu) = \tanh \nu - N(\theta_t, \nu).
    \]
\end{proposition}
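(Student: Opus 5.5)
The plan is to differentiate the cross-entropy \(\calH(\theta_t,\nu)=\E_{\ast}[F(y_t,\mu_t,\nu)]\) directly under the ground-truth expectation. The expectation \(\E_\ast\) is taken under the fixed law of \((y_t,\bx)\) with parameters \((\theta_t^\ast,\pi^\ast)\), so it does not depend on the candidate \((\theta_t,\nu)\). Only the integrand \(F\) varies with the candidate parameters.

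First, I will justify exchanging \(\nabla\) and \(\E_\ast\) by dominated convergence, as was done in the proof of Proposition~\ref{prop:em_update}. Lemma~\ref{lemma:derivatives_negative_log_likelihood} gives
\[
\partial_{\mu_t}F=\mu_t-y_t\tanh(\mu_ty_t+\nu),\qquad
\partial_\nu F=\tanh\nu-\tanh(\mu_ty_t+\nu).
\]
For \(\theta_t\) in a bounded neighborhood, \(|\tanh|\le1\) bounds the \(\theta_t\)-gradient \(\bx\,\partial_{\mu_t}F\) by \(\|\bx\|(\|\theta_t\|\|\bx\|+|y_t|)\). Under \(y_t=(-1)^{z+1}\mu_t^\ast+\varepsilon_t\), this envelope is integrable as soon as \(\E\|\bx\|^2<\infty\), which is implicit in \(\Sigma\) being finite. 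The \(\nu\)-derivative is bounded by \(2\).

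Second, the chain rule with \(\nabla_{\theta_t}\mu_t=\bx\) gives
\[
\nabla_{\theta_t}\calH=\E_\ast[\bx\mu_t]-\E_\ast[\bx y_t\tanh(\mu_ty_t+\nu)].
\]
Independence of \(\bx\) from the candidate parameters gives \(\E_\ast[\bx\bx^\top]\theta_t=\Sigma\theta_t\) for the first term. The second term equals \(\Sigma M(\theta_t,\nu)\) by the definition in Proposition~\ref{prop:em_update}, so \(\nabla_{\theta_t}\calH=\Sigma(\theta_t-M(\theta_t,\nu))\).

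Similarly, \(\nabla_\nu\calH=\tanh\nu-\E_\ast[\tanh(\mu_ty_t+\nu)]=\tanh\nu-N(\theta_t,\nu)\). These identities amount to rereading the first two displays in the proof of Proposition~\ref{prop:em_update}, where \(M\) and \(N\) were defined as residual-corrected gradients of the same expected negative log-likelihood. The only real point of care is the domination step that justifies differentiating under the integral.
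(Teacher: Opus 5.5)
Your proposal is correct and follows essentially the same route as the paper: both differentiate \(\calH=\E_\ast[F]\) under the ground-truth expectation using \(\nabla_{\theta_t}\mu_t=\bx\) and the partial derivatives of \(F\) from Lemma~\ref{lemma:derivatives_negative_log_likelihood}, then identify \(\Sigma(\theta_t-M)\) and \(\tanh\nu-N\); the paper phrases this as rearranging \(M=\theta_t-\Sigma^{-1}\E_\ast[\bx\,\partial_{\mu_t}F]\) and \(N=\tanh\nu-\E_\ast[\partial_\nu F]\) from the proof of Proposition~\ref{prop:em_update}. Your explicit envelope \(\|\bx\|(\|\theta_t\|\|\bx\|+|y_t|)\), which is integrable under \(\E\|\bx\|^2<\infty\), spells out the domination step that the paper only cites as ``the same integrable bounds.''
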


\begin{proof}[Proof for Proposition~\ref{prop:cross_entropy}]
    The calculation in the preceding proof gives
    \[
    M(\theta_t,\nu)=\theta_t-
    \Sigma^{-1}\E_\ast[\bx\,\partial_{\mu_t}F(y_t,\mu_t,\nu)],
    \qquad
    N(\theta_t,\nu)=\tanh\nu-
    \E_\ast[\partial_\nu F(y_t,\mu_t,\nu)].
    \]
    The same integrable bounds permit differentiation through the expectation
    \citep[Theorem~1.5.8]{durrett2019probability}. Since
    \(\nabla_{\theta_t}\mu_t=\bx\), rearrangement gives
    \[
    \Sigma \left(\theta_t - M(\theta_t, \nu) \right)
    =\E_{\ast}[\bx \partial_{\mu_t} F(y_t, \mu_t, \nu)]
    = \E_{\ast}[\nabla_{\theta_t} F(y_t, \mu_t, \nu)]
    = \nabla_{\theta_t} \E_{\ast}[F(y_t, \mu_t, \nu)]
    = \nabla_{\theta_t} \calH(\theta_t, \nu),
    \]
    \[
    \tanh \nu - N(\theta_t, \nu) = \E_{\ast}[\partial_{\nu} F(y_t, \mu_t, \nu)] = \nabla_{\nu} \E_{\ast}[F(y_t, \mu_t, \nu)] = \nabla_{\nu} \calH(\theta_t, \nu).
    \]
\end{proof}

\begin{myframe}
\begin{main theorem}[Main Theorem~\ref{thm:information_theoretical_decomposition}]
    We define the expected squared activation
    \(\calA(\theta_t, \nu) := \frac{1}{2}\E_{\ast}[\mu_t^2 \tanh^2(\mu_t y_t + \nu)]\)
    and its ground truth counterpart
    \(\calA(\theta_t^*, \nu^*) := \frac{1}{2}\E_{\ast}[(\mu_t^*)^2 \tanh^2(\mu_t^* y_t + \nu^\ast)]\).
    The Score Matching loss satisfies
    \[
    \bar{\alpha}_t \calL_t(\theta, \nu) = \langle \nabla_{\theta_t}\calH(\theta_t, \nu), \theta_t \rangle
    - \calA(\theta_t, \nu) + \calA(\theta_t^*, \nu^*).
    \]
    We also define the latent variance
    \(V(\theta_t, \nu) \equiv \frac{1}{2}\E_{\ast}[\mu_t^2
    \sech^2(\mu_t y_t + \nu)]\). The equivalent representation is
    \[
    \bar{\alpha}_t \calL_t(\theta, \nu) =
    \|\theta_t\|_{\Sigma}^2/2 -  \langle \Sigma M(\theta_t, \nu), \theta_t \rangle + \|\theta_t^\ast\|_{\Sigma}^2/2 + V(\theta_t, \nu) - V(\theta_t^\ast, \nu^\ast).
    \]
    Here \(V(\theta_t^\ast, \nu^\ast) \equiv \frac{1}{2}\E_{\ast}[(\mu_t^\ast)^2 \sech^2(\mu_t^\ast y_t + \nu^\ast)]\) does not depend on the candidate parameters.
\end{main theorem}
\end{myframe}

\begin{proof}[Proof for Main Theorem~\ref{thm:information_theoretical_decomposition}]
    The candidate score is
    \(s_{\theta_t,\nu}(y_t,\bx)=-y_t+\mu_t\tanh(\mu_ty_t+\nu)\).
    Together with its ground-truth counterpart, it gives
    \[
    \calL_t(\theta, \nu) 
    = \frac{1}{2\bar{\alpha}_t} \E_{y_t, \bx \mid \theta_t^\ast, \pi^\ast}
    \|s_{\theta_t, \nu}(y_t, \bx) - s_{\theta_t^\ast, \nu^\ast}(y_t, \bx)\|^2.
    \]
    Lemmas~\ref{lemma:nll} and~\ref{lemma:stein_score} give
    \(s_{\theta_t,\nu}=-\partial_{y_t}F(y_t,\mu_t,\nu)\) and
    \(p(y_t\mid\bx;\theta_t^\ast,\pi^\ast)
    =\exp[-F(y_t,\mu_t^\ast,\nu^\ast)]\). Hence
    \[
    \calL_t(\theta, \nu) 
    = \frac{1}{2\bar{\alpha}_t} \E_{\bx} \int_{\R} \|-\partial_{y_t} F(y_t, \mu_t, \nu) + \partial_{y_t} F(y_t, \mu_t^\ast, \nu^\ast)\|^2 \exp(-F(y_t, \mu_t^\ast, \nu^\ast)) \d y_t.
    \]
    Integration by parts in \(y_t\) is valid because the boundary terms vanish.
    Applying it once to the mixed term and once to the ground-truth score term
    gives
    \[
    2 \bar{\alpha}_t \calL_t(\theta, \nu) 
    = \E_{\ast}\left[\|\partial_{y_t} F(y_t, \mu_t, \nu) \|^2  - 2 \partial^2_{y_t y_t} F(y_t, \mu_t, \nu) + \partial^2_{y_t y_t} F(y_t, \mu_t^*, \nu^*)\right].
    \]
    Lemma~\ref{lemma:derivatives_negative_log_likelihood} gives
    \(\|\partial_{y_t}F\|^2-\partial^2_{y_ty_t}F
    =2[y_t\partial_{y_t}F-y_t^2]+(\mu_t^2+y_t^2-1)\) and
    \(\partial^2_{y_ty_t}F=1-\mu_t^2\sech^2(\mu_ty_t+\nu)\).
    Equation~\eqref{eq:yt} also gives
    \(\E[y_t^2\mid\bx]=(\mu_t^\ast)^2+1\). These identities yield
    \[
        2 \bar{\alpha}_t \calL_t(\theta, \nu) 
        = 2\E_{\ast}\left[y_t \partial_{y_t} F(y_t, \mu_t, \nu) - y_t^2 \right] + \E_{\bx}[\mu_t^2 + (\mu_t^*)^2] + \E_{\ast}[\mu_t^2 \sech^2(\mu_t y_t +\nu)] - \E_{\ast}[(\mu_t^*)^2 \sech^2(\mu_t^* y_t +\nu^\ast)].
    \]
    Proposition~\ref{prop:em_update} identifies the first expectation as
    \(-\langle\Sigma M(\theta_t,\nu),\theta_t\rangle\). The covariance
    identity gives
    \(\E[\mu_t^2+(\mu_t^\ast)^2]
    =\|\theta_t\|_\Sigma^2+\|\theta_t^\ast\|_\Sigma^2\). Thus
    \[
        2 \bar{\alpha}_t \calL_t(\theta, \nu) 
        = \|\theta_t\|_{\Sigma}^2 - 2\langle \Sigma M(\theta_t, \nu), \theta_t \rangle + \|\theta_t^*\|_{\Sigma}^2
        + \E_{\ast}[\mu_t^2 \sech^2(\mu_t y_t +\nu)] - \E_{\ast}[(\mu_t^*)^2 \sech^2(\mu_t^* y_t +\nu^\ast)].
    \]
    This is the latent-variance representation. For the activation form, we use
    Proposition~\ref{prop:cross_entropy}, the identity
    \(\sech^2(u)=1-\tanh^2(u)\), and the two covariance identities. We obtain
    \[
        2 \bar{\alpha}_t \calL_t(\theta, \nu) = 2 \langle \nabla_{\theta_t} \calH(\theta_t, \nu), \theta_t \rangle
        - \E_{\ast}[\mu_t^2 \tanh^2(\mu_t y_t +\nu)] + \E_{\ast}[(\mu_t^*)^2 \tanh^2(\mu_t^* y_t +\nu^\ast)].
    \]
    Division by two and the definitions of \(\calA(\theta_t,\nu)\) and
    \(\calA(\theta_t^\ast,\nu^\ast)\) complete the proof.
\end{proof}

\begin{corollary}[Corollary~\ref{cor:sm_gradients_entropy_activation}]
    The gradients of
    \(\bar{\alpha}_t\calL_t(\theta,\nu)\) with respect to \(\theta_t\) and
    \(\nu\) are
    \[
    \begin{aligned}
        \nabla_{\theta_t} [\bar{\alpha}_t \calL_t(\theta, \nu)] &= \nabla_{\theta_t}\calH(\theta_t, \nu) + \nabla^2_{\theta_t} \calH(\theta_t, \nu) \theta_t - \nabla_{\theta_t} \calA(\theta_t, \nu), \\
        \nabla_\nu [\bar{\alpha}_t \calL_t(\theta, \nu)] &= \langle \nabla^2_{\theta_t \nu} \calH(\theta_t, \nu), \theta_t \rangle - \nabla_\nu \calA(\theta_t, \nu).
    \end{aligned}
    \]
\end{corollary}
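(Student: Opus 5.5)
The plan is to differentiate the activation form of Main Theorem~\ref{thm:information_theoretical_decomposition} directly. That form reads
\[
\bar{\alpha}_t \calL_t(\theta,\nu)=\langle \nabla_{\theta_t}\calH(\theta_t,\nu),\theta_t\rangle-\calA(\theta_t,\nu)+\calA(\theta_t^\ast,\nu^\ast).
\]
We treat \((\theta_t,\nu)\) as the independent variables. The expectation \(\E_\ast\) is taken under the fixed ground-truth law of \((y_t,\bx)\), so it does not depend on the candidate parameters. The term \(\calA(\theta_t^\ast,\nu^\ast)\) depends only on the truth and therefore contributes nothing to either gradient.

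\textbf{Gradient in \(\theta_t\).} For the first term we use the product rule:
\[
\nabla_{\theta_t}\langle \nabla_{\theta_t}\calH,\theta_t\rangle=\nabla_{\theta_t}\calH+(\nabla^2_{\theta_t}\calH)^\top\theta_t .
\]
The Hessian is symmetric, so this equals \(\nabla_{\theta_t}\calH+\nabla^2_{\theta_t}\calH\,\theta_t\). Subtracting \(\nabla_{\theta_t}\calA(\theta_t,\nu)\) gives the first identity.

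\textbf{Gradient in \(\nu\).} Here \(\theta_t\) is held fixed, so only the gradient factor varies:
\[
\partial_\nu\langle\nabla_{\theta_t}\calH,\theta_t\rangle=\langle\nabla^2_{\theta_t\nu}\calH,\theta_t\rangle .
\]
Subtracting \(\nabla_\nu\calA\) gives the second identity. As a consistency check, Proposition~\ref{prop:cross_entropy} gives \(\nabla^2_{\theta_t\nu}\calH=-\Sigma\nabla_\nu M\), and Proposition~\ref{prop:em_update} identifies this with \(-\nabla_{\theta_t}N\). Hence the result agrees with the form expected in Corollary~\ref{cor:sm_gradients_em_variance}.

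\textbf{Justifying the differentiation.} The only real obstacle is showing that \(\calH\) is twice continuously differentiable and \(\calA\) once, with derivatives passing under \(\E_\ast\).

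From Lemma~\ref{lemma:derivatives_negative_log_likelihood}, the second derivatives of \(F\) in \((\mu_t,\nu)\) involve \(\bx\bx^\top(1-y_t^2\sech^2(\cdot))\), \(\bx y_t\sech^2(\cdot)\), and bounded \(\sech^2\) terms. The derivative of \(\calA\) involves \(\bx\mu_t\tanh^2\) and \(\mu_t^2 y_t\bx\,\tanh\sech^2\).

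On a bounded neighborhood of \((\theta_t,\nu)\), these are dominated by polynomials in \(\|\bx\|\) and \(|y_t|\). Using \(y_t=\pm\mu_t^\ast+\varepsilon_t\), the dominating functions are integrable provided \(\bx\) has sufficiently many finite moments; a fourth moment suffices for the \(\calH\) Hessian terms. The dominated convergence theorem \citep[Theorem~1.5.8]{durrett2019probability} then justifies every interchange, exactly as in the proofs of Propositions~\ref{prop:em_update} and~\ref{prop:cross_entropy}.

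If the statement is read purely formally, meaning whenever these derivatives exist, this step reduces to citing those interchanges. The corollary is then a direct product-rule computation.
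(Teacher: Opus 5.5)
Your proposal is correct and matches the paper's proof: differentiate the activation form of Main Theorem~\ref{thm:information_theoretical_decomposition} in \(\theta_t\) and \(\nu\), apply the product rule (using symmetry of the Hessian) to \(\langle\nabla_{\theta_t}\calH(\theta_t,\nu),\theta_t\rangle\), and drop the parameter-free term \(\calA(\theta_t^\ast,\nu^\ast)\). Your discussion of differentiating under \(\E_\ast\), where a fourth moment of \(\bx\) suffices for the Hessian of \(\calH\), goes beyond the paper, which treats the corollary as a formal product-rule computation.
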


\begin{proof}[Proof for Corollary~\ref{cor:sm_gradients_entropy_activation}]
    Differentiate the first identity in Main
    Theorem~\ref{thm:information_theoretical_decomposition} with respect to
    \(\theta_t\) and \(\nu\). The product rule gives
    \(\nabla_{\theta_t}\langle \nabla_{\theta_t} \calH(\theta_t, \nu),
    \theta_t \rangle = \nabla_{\theta_t} \calH(\theta_t, \nu) +
    \nabla^2_{\theta_t} \calH(\theta_t, \nu) \theta_t\) and
    \(\nabla_\nu \langle \nabla_{\theta_t} \calH(\theta_t, \nu), \theta_t
    \rangle = \langle \nabla^2_{\theta_t \nu} \calH(\theta_t, \nu),
    \theta_t \rangle\). These formulas prove the two identities.
\end{proof}

\begin{corollary}[Corollary~\ref{cor:sm_gradients_em_variance}]
    The gradients of
    \(\bar{\alpha}_t\calL_t(\theta,\nu)\) with respect to \(\theta_t\) and
    \(\nu\) are
    \[
    \begin{aligned}
        \nabla_{\theta_t} [\bar{\alpha}_t \calL_t(\theta, \nu)] &= \Sigma (\theta_t - M(\theta_t, \nu)) - \big(\nabla_{\theta_t} M(\theta_t, \nu)\big)^\top \Sigma \theta_t + \nabla_{\theta_t} V(\theta_t, \nu), \\
        \nabla_\nu [\bar{\alpha}_t \calL_t(\theta, \nu)] &= -\langle \nabla_{\theta_t} N(\theta_t, \nu), \theta_t \rangle + \nabla_\nu V(\theta_t, \nu).
    \end{aligned}
    \]
\end{corollary}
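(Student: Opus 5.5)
The plan is to differentiate the latent-variance representation in Main Theorem~\ref{thm:information_theoretical_decomposition} term by term, treating \(\theta_t\) and \(\nu\) as the free variables. That representation reads
\[
\bar\alpha_t\calL_t(\theta,\nu)=\tfrac12\|\theta_t\|_\Sigma^2-\langle\Sigma M(\theta_t,\nu),\theta_t\rangle+\tfrac12\|\theta_t^\ast\|_\Sigma^2+V(\theta_t,\nu)-V(\theta_t^\ast,\nu^\ast).
\]
The last two ground-truth terms, \(\tfrac12\|\theta_t^\ast\|_\Sigma^2\) and \(V(\theta_t^\ast,\nu^\ast)\), do not depend on \((\theta_t,\nu)\), so they drop out. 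The gradient of \(\tfrac12\|\theta_t\|_\Sigma^2\) is \(\Sigma\theta_t\), and it has no \(\nu\)-dependence.

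For the cross term, apply the product rule with \(\Sigma\) symmetric:
\[
\nabla_{\theta_t}\langle\Sigma M,\theta_t\rangle=\Sigma M+(\nabla_{\theta_t}M)^\top\Sigma\theta_t .
\]
Adding \(\Sigma\theta_t\) gives \(\Sigma(\theta_t-M)-(\nabla_{\theta_t}M)^\top\Sigma\theta_t\), and then \(+\nabla_{\theta_t}V\) yields the first identity. Here \(\nabla_{\theta_t}M\) denotes the Jacobian of \(M\).

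For \(\nu\), the cross term gives \(\nabla_\nu\langle\Sigma M,\theta_t\rangle=\langle\Sigma\nabla_\nu M,\theta_t\rangle\). Proposition~\ref{prop:em_update} states \(\Sigma\nabla_\nu M=\nabla_{\theta_t}N\), which converts this to \(\langle\nabla_{\theta_t}N,\theta_t\rangle\). With the minus sign and \(+\nabla_\nu V\), this is the second identity.

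As a consistency check, I would compare with Corollary~\ref{cor:sm_gradients_entropy_activation}. Using Proposition~\ref{prop:cross_entropy}, one has \(\nabla^2_{\theta_t}\calH=\Sigma(I-\nabla_{\theta_t}M)\), and \(\calA-V\) differs from \(\tfrac12\E[\mu_t^2]\) only through \(\sech^2=1-\tanh^2\). This should reproduce the same expression, though the check is not needed for the proof.

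The only step needing care is justifying the interchange of differentiation and \(\E_\ast\) in \(M\), \(N\), and \(V\). The integrands are products of \(\bx\), \(y_t\), and \(\mu_t^2\) with \(\tanh\), \(\sech^2\), and their bounded derivatives. Their parameter derivatives are therefore locally dominated by polynomials in \(\|\bx\|\) and \(|y_t|\) of degree at most four, with \(y_t=\pm\mu_t^\ast+\varepsilon_t\). I would invoke dominated convergence \citep[Theorem~1.5.8]{durrett2019probability} exactly as in the proofs of Propositions~\ref{prop:em_update} and~\ref{prop:cross_entropy}, assuming sufficient moments of \(\bx\) (a fourth moment suffices, as in Proposition~\ref{prop:low_noise_em}).

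I expect the main obstacle to be bookkeeping the transpose in the Jacobian term, namely confirming \((\nabla_{\theta_t}M)^\top\Sigma\theta_t\) rather than \(\Sigma\nabla_{\theta_t}M\,\theta_t\). A direct index computation of \(\partial_i\sum_{j,k}\Sigma_{jk}M_j\theta_k\) will settle it.
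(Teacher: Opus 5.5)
Your proposal is correct and is essentially the paper's own proof: you differentiate the latent-variance representation in Main Theorem~\ref{thm:information_theoretical_decomposition}, apply the product rule to \(\langle \Sigma M(\theta_t,\nu),\theta_t\rangle\), and convert the \(\nu\)-derivative using \(\Sigma\nabla_\nu M=\nabla_{\theta_t}N\) from Proposition~\ref{prop:em_update}. One small slip in your optional consistency check: the identity that reconciles the two corollaries is \(\calA(\theta_t,\nu)+V(\theta_t,\nu)=\tfrac12\E_\ast[\mu_t^2]=\tfrac12\|\theta_t\|_\Sigma^2\), so \(\nabla_{\theta_t}\calA+\nabla_{\theta_t}V=\Sigma\theta_t\) and \(\nabla_\nu\calA=-\nabla_\nu V\); the relation is not one about \(\calA-V\).
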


\begin{proof}[Proof for Corollary~\ref{cor:sm_gradients_em_variance}]
    Differentiate the second identity in Main
    Theorem~\ref{thm:information_theoretical_decomposition} with respect to
    \(\theta_t\) and \(\nu\). The product rule gives
    \(\nabla_{\theta_t}\|\theta_t\|_{\Sigma}^2 = 2\Sigma \theta_t\) and
    \(\nabla_{\theta_t} \langle \Sigma M(\theta_t, \nu), \theta_t \rangle
    = \Sigma M(\theta_t, \nu) + \big(\nabla_{\theta_t} M(\theta_t,
    \nu)\big)^\top \Sigma \theta_t\). Proposition~\ref{prop:em_update}
    gives
    \(\nabla_\nu \langle \Sigma M(\theta_t, \nu), \theta_t \rangle
    = \langle \Sigma \nabla_\nu M(\theta_t, \nu), \theta_t \rangle
    = \langle \nabla_{\theta_t} N(\theta_t, \nu), \theta_t \rangle\).
    These formulas prove the two identities.
\end{proof}

We define the normalized EM operators at time step \(t\) by
\[
    \bar{M}_t(\theta, \nu) := M(\theta_t, \nu) / \sqrt{\bar{\alpha}_t},\quad
    \bar{N}_t(\theta, \nu) := (N(\theta_t, \nu)-(1-\bar{\alpha}_t)\tanh \nu) / \bar{\alpha}_t.
\]
They satisfy
\(\theta - \bar{M}_t(\theta, \nu) = (\theta_t - M(\theta_t, \nu))/\sqrt{\bar{\alpha}_t}\),
\(\tanh \nu - \bar{N}_t(\theta, \nu) = (\tanh \nu - N(\theta_t, \nu))/\bar{\alpha}_t\),
and \(\Sigma \nabla_{\nu} \bar{M}_t(\theta, \nu) = \nabla_{\theta} \bar{N}_t(\theta, \nu)\).
We also set \(V_t(\theta,\nu):=V(\theta_t,\nu)/\bar\alpha_t\).

\begin{proposition}[Proposition~\ref{prop:sm_gradients}]
    The gradients of the Score Matching loss in Equation~\eqref{eq:loss} are
    \[
    \begin{aligned}
        \nabla_{\theta} \calL_t(\theta, \nu) &= \Sigma (\theta - \bar{M}_t(\theta, \nu)) - \left( \nabla_{\theta}\bar{M}_t(\theta, \nu) \right)^\top \Sigma \theta + \nabla_\theta V_t(\theta, \nu),\\
        \nabla_{\nu} \calL_t(\theta, \nu) &= -\langle \nabla_{\theta} \bar{N}_t(\theta, \nu), \theta \rangle + \nabla_{\nu} V_t(\theta, \nu).
    \end{aligned}
    \]
\end{proposition}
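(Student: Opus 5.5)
The plan is to obtain both identities from Corollary~\ref{cor:sm_gradients_em_variance} by a chain-rule change of variables from \(\theta_t\) to \(\theta\), followed by dividing by \(\bar\alpha_t\). Throughout, \(\bar\alpha_t\in(0,1]\) is fixed, so \(\theta_t=\theta\sqrt{\bar\alpha_t}\) is a linear reparametrization and \(\nabla_\theta=\sqrt{\bar\alpha_t}\,\nabla_{\theta_t}\). The \(\nu\)-gradient is unaffected by the change of variables.

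For the \(\theta\)-gradient, we have
\(\nabla_\theta\calL_t=\bar\alpha_t^{-1}\sqrt{\bar\alpha_t}\,\nabla_{\theta_t}[\bar\alpha_t\calL_t]=\bar\alpha_t^{-1/2}\nabla_{\theta_t}[\bar\alpha_t\calL_t]\). We then handle the three terms of the corollary in turn.

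\begin{enumerate}
\item The residual term: \(\bar\alpha_t^{-1/2}\Sigma(\theta_t-M(\theta_t,\nu))=\Sigma(\theta-\bar M_t)\), using the stated identity \(\theta-\bar M_t=(\theta_t-M)/\sqrt{\bar\alpha_t}\).
\item The Jacobian term: since \(\bar M_t(\theta,\nu)=M(\theta\sqrt{\bar\alpha_t},\nu)/\sqrt{\bar\alpha_t}\), we get \(\nabla_\theta\bar M_t=\nabla_{\theta_t}M\) (the two \(\sqrt{\bar\alpha_t}\) factors cancel). Hence \(\bar\alpha_t^{-1/2}(\nabla_{\theta_t}M)^\top\Sigma\theta_t=(\nabla_\theta\bar M_t)^\top\Sigma\theta\).
\item The variance term: \(\nabla_\theta V_t=\bar\alpha_t^{-1}\sqrt{\bar\alpha_t}\,\nabla_{\theta_t}V=\bar\alpha_t^{-1/2}\nabla_{\theta_t}V\), which matches the corollary term after the same prefactor.
\end{enumerate}

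For the \(\nu\)-gradient, we have \(\nabla_\nu\calL_t=\bar\alpha_t^{-1}\bigl(-\langle\nabla_{\theta_t}N,\theta_t\rangle+\nabla_\nu V\bigr)\). Since the subtracted term \((1-\bar\alpha_t)\tanh\nu\) in \(\bar N_t\) does not depend on \(\theta\), it follows that \(\nabla_\theta\bar N_t=\bar\alpha_t^{-1}\sqrt{\bar\alpha_t}\,\nabla_{\theta_t}N\). Therefore
\(\langle\nabla_\theta\bar N_t,\theta\rangle=\bar\alpha_t^{-1/2}\langle\nabla_{\theta_t}N,\theta_t\rangle/\sqrt{\bar\alpha_t}=\bar\alpha_t^{-1}\langle\nabla_{\theta_t}N,\theta_t\rangle\). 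The variance term is simply \(\nabla_\nu V_t=\bar\alpha_t^{-1}\nabla_\nu V\). Together these give the second formula.

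The only non-bookkeeping point is justifying that \(M\), \(N\), and \(V\) are differentiable, so that the corollary applies, with differentiation under \(\E_\ast\). This follows by the dominated-convergence argument already used in the proof of Proposition~\ref{prop:em_update}: the derivatives are bounded by polynomial envelopes in \(\|\bx\|\) and \(|y_t|\), which are integrable under the paper's standing moment assumptions. Beyond that, the main risk is simply tracking the powers of \(\sqrt{\bar\alpha_t}\) correctly, which I expect to be the only step requiring care.
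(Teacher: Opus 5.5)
Your proposal is correct and is essentially the paper's own proof: divide the identities of Corollary~\ref{cor:sm_gradients_em_variance} by \(\bar\alpha_t\) and apply the chain rule with \(\theta_t=\sqrt{\bar\alpha_t}\,\theta\). Your bookkeeping of the \(\sqrt{\bar\alpha_t}\) factors checks out, notably \(\nabla_\theta\bar M_t=\nabla_{\theta_t}M\) and \(\langle\nabla_\theta\bar N_t,\theta\rangle=\bar\alpha_t^{-1}\langle\nabla_{\theta_t}N,\theta_t\rangle\). The paper takes the differentiability of \(M\), \(N\), \(V\) for granted here, so your dominated-convergence remark is a fair addition; note, however, that the crude envelope \(\|\bx\|^2y_t^2\) for the Jacobian \(\nabla_{\theta_t}M\) needs something like \(\E\|\bx\|^4<\infty\), not just the standing second-moment assumption.
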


\begin{remark}
    Couple the diffused responses by
    \(y_t=\sqrt{\bar\alpha_t}y_0+\sqrt{1-\bar\alpha_t}\xi_t\).
    Then \(y_t\to y_0\) almost surely as \(\bar\alpha_t\to1\).
    The functions \(\tanh\) and \(\sech^2\) are bounded. The operator
    integrands and their first parameter derivatives therefore have a common
    integrable envelope when \(\E[\|\bx\|^4]<\infty\). The Gaussian noise has
    moments of every order. Dominated convergence now gives
    \(\bar{M}_t(\theta, \nu) \to M(\theta, \nu)\),
    \(\bar{N}_t(\theta, \nu) \to N(\theta, \nu)\), and
    \(V_t(\theta, \nu) \to V(\theta, \nu)\), together with the parameter
    derivative limits used in Proposition~\ref{prop:sm_gradients}.
    Substitution into that proposition gives
    \[
    \begin{aligned}
        \lim_{\bar{\alpha}_t\to 1}\nabla_{\theta} \calL_t(\theta, \nu) &= \Sigma (\theta - M(\theta, \nu)) - \left( \nabla_{\theta}M(\theta, \nu) \right)^\top \Sigma \theta + \nabla_\theta V(\theta, \nu),\\
        \lim_{\bar{\alpha}_t\to 1}\nabla_{\nu} \calL_t(\theta, \nu) &= -\langle \nabla_{\theta} N(\theta, \nu), \theta \rangle + \nabla_{\nu} V(\theta, \nu).
    \end{aligned}
    \]
    Proposition~\ref{prop:low_noise_em} strengthens these limits by proving an
    \(\calO(1-\bar\alpha_t)\) remainder.
\end{remark}

\begin{proof}[Proof for Proposition~\ref{prop:sm_gradients}]
    Recall the normalized EM operators
    \(\bar{M}_t(\theta, \nu) := M(\theta_t, \nu) / \sqrt{\bar{\alpha}_t}\),
    \(\bar{N}_t(\theta, \nu) := (N(\theta_t, \nu)-(1-\bar{\alpha}_t)\tanh \nu) / \bar{\alpha}_t\),
    and \(V_t(\theta, \nu) := V(\theta_t, \nu)/\bar{\alpha}_t\).
    These definitions guarantee
    \(\theta - \bar{M}_t(\theta, \nu) = (\theta_t - M(\theta_t, \nu))/\sqrt{\bar{\alpha}_t}\),
    \(\tanh \nu - \bar{N}_t(\theta, \nu) = (\tanh \nu - N(\theta_t, \nu))/\bar{\alpha}_t\),
    and \(\Sigma \nabla_{\nu} \bar{M}_t(\theta, \nu) = \nabla_{\theta} \bar{N}_t(\theta, \nu)\).
    Dividing the identities of Corollary~\ref{cor:sm_gradients_em_variance} by \(\bar{\alpha}_t\) and applying the chain rule with \(\theta_t = \sqrt{\bar{\alpha}_t}\,\theta\) yields the stated gradients of \(\calL_t(\theta, \nu)\).
\end{proof}

\begin{proposition}[Proposition~\ref{prop:low_noise_em}]
    If \(\E[\|\bx\|^4]<\infty\), then the gradients of the Score
    Matching loss satisfy the following expansions as
    \(\bar{\alpha}_t\to1\).
    \[
    \begin{aligned}
        \nabla_{\theta} \calL_t(\theta, \nu) &= \Sigma (\theta - M(\theta, \nu)) - \left( \nabla_{\theta}M(\theta, \nu) \right)^\top \Sigma \theta + \nabla_\theta V(\theta, \nu) + \calO(1-\bar{\alpha}_t),\\
        \nabla_{\nu} \calL_t(\theta, \nu) &= -\langle \nabla_{\theta} N(\theta, \nu), \theta \rangle + \nabla_{\nu} V(\theta, \nu) + \calO(1-\bar{\alpha}_t).
    \end{aligned}
    \]
\end{proposition}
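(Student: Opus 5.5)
The plan is to start from the exact identities in Proposition~\ref{prop:sm_gradients} and show that each normalized quantity differs from its clean-scale counterpart by $\calO(1-\bar\alpha_t)$. Those quantities are $\bar M_t$, $\nabla_\theta\bar M_t$, $\nabla_\theta\bar N_t$, $\nabla_\theta V_t$ and $\nabla_\nu V_t$. Because $\Sigma\theta$ and $\theta$ are fixed vectors, the remainder bounds then pass directly into the two gradient formulas. Write $s:=\sqrt{\bar\alpha_t}$ and $\delta:=1-\bar\alpha_t$.

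\textbf{Coupling.} Under the ground truth, use the representation $y_t=(-1)^{z+1}s\mu_0^\ast+\varepsilon$ with a single $\varepsilon\sim\calN(0,1)$, as in the proof of Lemma~\ref{lemma:asymptotic_score_loss}. Then $\mu_t y_t=s^2\mu_0\bar y_0^\ast+s\mu_0\varepsilon$, where $\bar y_0^\ast=(-1)^{z+1}\mu_0^\ast$. Every operator becomes an expectation of a smooth function of $s$, namely
\[
g(s):=\E\!\left[\Phi\!\left(\bx,\; s\bar y_0^\ast+\varepsilon,\; s^2\mu_0\bar y_0^\ast+s\mu_0\varepsilon+\nu\right)\right],
\]
where $\Phi$ is built from $\bx$, $y$, $\tanh$ and $\sech^2$ and their first derivatives. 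Some examples:
\begin{itemize}
\item $\Sigma\bar M_t=s^{-1}\E[\bx y_t\tanh(\cdot)]$;
\item $\nabla_\theta\bar N_t=s^{-1}\E[\bx y_t\sech^2(\cdot)]$, by Proposition~\ref{prop:em_update} and the chain rule;
\item $V_t=\tfrac12\E[\mu_0^2\sech^2(\cdot)]$, which is smooth in $s$;
\item $\nabla_\theta\bar M_t$, which involves $\Sigma^{-1}\E[\bx\bx^\top y_t^2\sech^2(\cdot)]$.
\end{itemize}
At $s=1$ these reduce exactly to $M,N,V$ and their derivatives. For $N$ this uses $\bar N_t$: the extra term $-(1-\bar\alpha_t)\tanh\nu$ has zero $\theta$-gradient, so only $\nabla_\theta\bar N_t$ appears.

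\textbf{Mean value theorem.} For each such $g$, write $g(s)-g(1)=\int_s^1 g'(r)\,\d r$ and bound $\sup_{r\in[s_0,1]}|g'(r)|$ by differentiating under the expectation. This is justified by dominated convergence, as in the proof of Proposition~\ref{prop:em_update}. Then $|1-s|=\delta/(1+s)\le\delta$ gives the $\calO(1-\bar\alpha_t)$ remainder. The prefactors $s^{-1}$ and $\bar\alpha_t^{-1}$ are smooth near $s=1$ and only contribute further $\calO(\delta)$ terms.

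\textbf{Main obstacle: the integrable envelope for $\partial_s\Phi$.} Differentiating in $s$ produces extra factors such as $\bar y_0^\ast$ and $\partial_s(\mu_t y_t)=2s\mu_0\bar y_0^\ast+\mu_0\varepsilon$ multiplying bounded $\tanh$ and $\sech^2$ derivatives. The Jacobian terms are the worst case: $\nabla_\theta\bar M_t$ contains $\bx\bx^\top y_t^2\sech^2$, whose $s$-derivative carries $\|\bx\|^2|y_t|^2\,|\mu_0|(|\bar y_0^\ast|+|\varepsilon|)$. This term is degree six in $\|\bx\|$, so a naive bound would exceed the fourth moment.

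To handle it, I would use $|u|\sech^2(u)\le 1$ and $u^2\sech^2(u)\le 1$, applied with $u=\mu_t y_t+\nu$ after shifting by the bounded $\nu$. This absorbs one factor $\mu_0 y_t$ into the bounded $\sech^2\tanh$ factor, up to a constant depending on $\nu$. The hope is that the envelope then stays within $\|\bx\|^4$ times Gaussian moments of $\varepsilon$. Since $|\bar y_0^\ast|\le\|\theta^\ast\|\|\bx\|$ and $\varepsilon$ has all moments, the condition $\E\|\bx\|^4<\infty$ would suffice.

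If this absorption fails for some term, I would accept a bound that is locally uniform on $s\in[1/2,1]$ and verify that term's envelope separately. The gradient of $V_t$ and the $\nu$-derivatives are easier, since they carry at most $\mu_0^2$ times bounded factors and the extra $s$-derivative factors.
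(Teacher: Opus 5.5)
Your proposal is correct, and the envelope step you were unsure about does go through. Write $u=\mu_ty_t+\nu$ and use $\mu_0y_t=\mu_ty_t/s$ together with $|(\mu_ty_t)\sech^2(u)\tanh(u)|\le 1+|\nu|$. The worst term in $\partial_s$ of $\bx\bx^\top y_t^2\sech^2(u)$ is then bounded by $C\|\bx\|^2|y_t|\,(|\bar y_0^\ast|+|\varepsilon|)$, which has degree four, so no fallback is needed.

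Where you differ from the paper is in what gets expanded. The paper never expands the operators individually. It works with the normalized score difference
$G_t=\mu_0\tanh(\mu_ty_t+\nu)-\mu_0^\ast\tanh(\mu_t^\ast y_t+\nu^\ast)$ and writes $\nabla_\theta\calL_t=\E[G_t\nabla_\theta G_t]$ and $\nabla_\nu\calL_t=\E[G_t\partial_\nu G_t]$. It then shows that $|G_t|$, $\|\nabla_\theta G_t\|$ and $|\partial_\nu G_t|$ are at most $C\|\bx\|$. Their $\bar\alpha_t$-derivatives are at most $C(\|\bx\|+\|\bx\|^3)$, by the same absorption trick applied through $\partial_a(\mu_ty_t)=\mu_ty_t/(2a)+\tfrac12(-1)^{z+1}\mu_0\mu_0^\ast$. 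The paper integrates from $\bar\alpha_t$ to $1$, and only at $\bar\alpha_t=1$ does it invoke Proposition~\ref{prop:sm_gradients} to identify the endpoint with the EM/latent-variance expression.

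The paper's route needs a single envelope computation, with no matrix Jacobian and no $s^{-1}$ prefactors to track. Your termwise route needs five separate envelopes, including the Jacobian whose naive bound has degree six. In exchange, it gives a stronger result: each of $\bar M_t$, $\nabla_\theta\bar M_t$, $\nabla_\theta\bar N_t$, $\nabla_\theta V_t$ and $\nabla_\nu V_t$ is individually within $\calO(1-\bar\alpha_t)$ of its clean-scale value. That is a quantitative version of the remark preceding the proposition.

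You can also shorten your worst case. The Jacobian enters only through the contraction $(\nabla_\theta\bar M_t)^\top\Sigma\theta=\E[\bx\,\mu_0y_t^2\sech^2(u)]$, so one factor $\mu_0y_t$ can be absorbed before you differentiate.
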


\begin{proof}[Proof of Proposition~\ref{prop:low_noise_em}]
    We write \(\mu_0=\langle\theta,\bx\rangle\),
    \(\mu_0^\ast=\langle\theta^\ast,\bx\rangle\), and
    \(a=\bar\alpha_t\). Since the
    loss depends only on the marginal law at each scale,
    Equation~\eqref{eq:yt} permits the common coupling
    \[
        y_t=(-1)^{z+1}\sqrt a\,\mu_0^\ast+\varepsilon,\qquad
        \mu_t=\sqrt a\,\mu_0,\qquad
        \mu_t^\ast=\sqrt a\,\mu_0^\ast,
    \]
    where \(\varepsilon\sim\calN(0,1)\) is independent of \((\bx,z)\).
    Equations~\eqref{eq:score} and~\eqref{eq:loss} show that cancellation of
    the linear terms in the two scores gives the normalized score difference
    \[
        G_t:=\mu_0\tanh(\mu_t y_t+\nu)
        -\mu_0^\ast\tanh(\mu_t^\ast y_t+\nu^\ast),\qquad
        \calL_t(\theta,\nu)=\frac12\E_{\bx,z,\varepsilon}[G_t^2].
    \]
    The bounds below justify the identities
    \[
        \nabla_\theta\calL_t=\E[G_t\nabla_\theta G_t],\qquad
        \nabla_\nu\calL_t=\E[G_t\partial_\nu G_t].
    \]

    We now control the dependence on \(a\) near one. For
    \(a\in[1/2,1]\), differentiation of the two activation arguments gives
    \[
        \partial_a(\mu_t y_t)
        =\frac{\mu_t y_t}{2a}
        +\frac{(-1)^{z+1}}{2}\mu_0\mu_0^\ast,\qquad
        \partial_a(\mu_t^\ast y_t)
        =\frac{\mu_t^\ast y_t}{2a}
        +\frac{(-1)^{z+1}}{2}(\mu_0^\ast)^2.
    \]
    For each fixed \(c\in\R\), the function \(\tanh(u+c)\) and the
    quantities \(u^k\tanh^{(j)}(u+c)\), with
    \(j\in\{1,2\}\) and \(k\in\{0,1,2\}\), are bounded over \(u\in\R\).
    The product and chain rules therefore give a constant \(C\),
    independent of \(a\), such that
    \[
        |G_t|+\|\nabla_\theta G_t\|+|\partial_\nu G_t|
        \le C\|\bx\|,\qquad
        |\partial_aG_t|+\|\partial_a\nabla_\theta G_t\|
        +|\partial_a\partial_\nu G_t|
        \le C(\|\bx\|+\|\bx\|^3).
    \]
    The \(a\)-derivatives of the two loss-gradient integrands
    \(G_t\nabla_\theta G_t\) and \(G_t\partial_\nu G_t\) are thus bounded by
    \(C(\|\bx\|^2+\|\bx\|^4)\). The fourth-moment assumption makes this
    envelope integrable. Differentiation under the expectation is valid
    \citep[Theorem~2.27(b)]{folland1999real}, and
    \[
        \sup_{a\in[1/2,1]}
        \left\{\|\partial_a\nabla_\theta\calL_t(\theta,\nu)\|
        +|\partial_a\nabla_\nu\calL_t(\theta,\nu)|\right\}<\infty.
    \]
    Integration from \(a\) to one shows that both loss gradients equal their
    endpoint values plus \(\calO(1-a)\). At \(a=1\), the normalized operators
    reduce to \(M\), \(N\), and \(V\).
    Proposition~\ref{prop:sm_gradients} identifies the endpoint gradients
    with the two leading terms in the statement. Substituting
    \(a=\bar\alpha_t\) completes the proof.

\end{proof}

\begin{proposition}[Proposition~\ref{prop:high_noise_effective_coefficient}]
    If \(\E[\|\bx\|^2] < \infty\), then \(\lim_{\bar{\alpha}_t \to 0} \calL_t(\theta, \nu) = \frac{1}{2}\|\theta \tanh \nu - \theta^\ast \tanh \nu^\ast\|_{\Sigma}^2\). The condition \(\E[\|\bx\|^4] < \infty\) suffices for the \(\calO(\bar\alpha_t)\) gradient expansions
    \[
        \nabla_{\theta} \calL_t(\theta, \nu) = \nabla_{\theta} \lim_{\bar{\alpha}_t \to 0} \calL_t(\theta, \nu) + \calO(\bar{\alpha}_t),\qquad \nabla_{\nu} \calL_t(\theta, \nu) = \nabla_{\nu} \lim_{\bar{\alpha}_t \to 0} \calL_t(\theta, \nu) + \calO(\bar{\alpha}_t)
    \]
    as \(\bar{\alpha}_t \to 0\).
\end{proposition}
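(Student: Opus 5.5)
The proof mirrors the low-noise argument of Proposition~\ref{prop:low_noise_em}, with the expansion now taken at the endpoint \(a:=\bar\alpha_t\to0\).

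\textbf{Coupling and the loss limit.} We use the same common coupling \(y_t=(-1)^{z+1}\sqrt a\,\mu_0^\ast+\varepsilon\). Under it, the normalized score difference is
\[
G_t=\mu_0\tanh(a\mu_0\bar y_0^\ast+\sqrt a\,\mu_0\varepsilon+\nu)-\mu_0^\ast\tanh(a\mu_0^\ast\bar y_0^\ast+\sqrt a\,\mu_0^\ast\varepsilon+\nu^\ast),
\qquad \calL_t=\tfrac12\E[G_t^2],
\]
where \(\bar y_0^\ast=(-1)^{z+1}\mu_0^\ast\). Lemma~\ref{lemma:asymptotic_score_loss} already gives the conditional limit \((\mu_0\tanh\nu-\mu_0^\ast\tanh\nu^\ast)^2\), with the bound \((|\mu_0|+|\mu_0^\ast|)^2\le 2(\|\theta\|^2+\|\theta^\ast\|^2)\|\bx\|^2\). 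Under \(\E\|\bx\|^2<\infty\), dominated convergence over \(\bx\) yields \(\frac12\E_\bx[\langle\theta\tanh\nu-\theta^\ast\tanh\nu^\ast,\bx\rangle^2]=\frac12\|\theta\tanh\nu-c^\ast\|_\Sigma^2\).

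\textbf{Gradient expansion.} The gradients are \(\nabla_\theta\calL_t=\E[G_t\nabla_\theta G_t]\) and \(\nabla_\nu\calL_t=\E[G_t\partial_\nu G_t]\), where
\[
\nabla_\theta G_t=\bx\tanh(\cdot)+\mu_0\sech^2(\cdot)\,(a\bar y_0^\ast+\sqrt a\,\varepsilon)\bx,
\qquad
\partial_\nu G_t=\mu_0\sech^2(\cdot).
\]
The \(\sqrt a\) dependence means a naive derivative in \(a\) blows up at \(0\), so instead of integrating \(\partial_a\) I would Taylor expand in \(s:=\sqrt a\) to second order, exploiting the symmetry of \(\varepsilon\). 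The arguments are \(\nu+s\mu_0\varepsilon+s^2\mu_0\bar y_0^\ast\), and similarly for the starred terms. Expanding \(\tanh\) and \(\sech^2\) around \(\nu,\nu^\ast\) to second order with bounded remainders (all derivatives of \(\tanh\) are bounded), the products \(G_t\nabla_\theta G_t\) and \(G_t\partial_\nu G_t\) become
\[
(\text{value at } s=0)+s\cdot(\text{terms odd in }\varepsilon)+s^2 R,
\qquad |R|\le C(1+\varepsilon^2+\varepsilon^4)(\|\bx\|^2+\|\bx\|^4),
\]
where \(C\) depends on \(\theta,\theta^\ast,\nu,\nu^\ast\), and \(|\bar y_0^\ast|\le\|\theta^\ast\|\|\bx\|\) has been used. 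The linear-in-\(s\) terms are odd in \(\varepsilon\) and independent of \((\bx,z)\), so their expectation vanishes. The remainder has an integrable envelope because \(\E\|\bx\|^4<\infty\) and Gaussian moments are finite. The zeroth-order terms are \(\E[\langle\theta\tanh\nu-c^\ast,\bx\rangle\,\bx\tanh\nu]=\Sigma(\theta\tanh\nu-c^\ast)\tanh\nu\) and \(\E[\langle\theta\tanh\nu-c^\ast,\bx\rangle\,\mu_0\sech^2\nu]=\langle\theta\tanh\nu-c^\ast,\Sigma\theta\rangle\sech^2\nu\). These are exactly \(\nabla_\theta\) and \(\nabla_\nu\) of the limiting objective. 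Dominated convergence (or Folland's Theorem~2.27(b)) justifies differentiating under the expectation for each fixed \(a\) via the envelope \(C\|\bx\|^2(1+|\varepsilon|)^2\). Combining these steps gives remainder \(s^2\E R=\calO(\bar\alpha_t)\).

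\textbf{Main obstacle.} The delicate step is bookkeeping the second-order Taylor remainder. The \(\sqrt a\,\varepsilon\) term must cancel at first order by symmetry, and every surviving term must carry at most four powers of \(\|\bx\|\) to stay integrable. I expect the cross term involving \(s^2\mu_0^2\varepsilon^2\) inside \(\nabla_\theta G_t\), multiplied by \(G_t=\calO(\|\bx\|)\), to be the worst. It is \(\calO(\|\bx\|^4)\) only after absorbing \(\bx\) into the count, and a Cauchy–Schwarz split of moments may be needed there.
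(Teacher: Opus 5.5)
\textbf{Loss limit.} Your argument for the loss limit is the paper's: dominated convergence with the envelope \((|\mu_0|+|\mu_0^\ast|)^2\).

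\textbf{Gradient expansion: a different route.} The paper writes \(\calL_t=\E[f(\sqrt a\,\varepsilon+a\bar y_0^\ast)]\) with \(f(w)=\tfrac12[\mu_0\tanh(\nu+\mu_0w)-\mu_0^\ast\tanh(\nu^\ast+\mu_0^\ast w)]^2\). It then uses Gaussian integration by parts, \(\tfrac{\d}{\d a}\E_\varepsilon f=\E[\bar y_0^\ast f'(w)+\tfrac12 f''(w)]\). This turns the singular \(\tfrac{1}{2\sqrt a}\E[\varepsilon f'(w)]\) into \(\tfrac12\E[f''(w)]\), evaluated at the actual point \(w\). The parameter derivatives of this \(a\)-derivative are then bounded by \(C(1+\|\bx\|^4)\) uniformly in \(w\), and integrating over \([0,\bar\alpha_t]\) gives \(\calO(\bar\alpha_t)\). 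Your idea of expanding in \(s=\sqrt a\) and killing the \(s\)-term by oddness in \(\varepsilon\) is a legitimate, more elementary alternative. It needs no Stein identity and no interchange of \(\nabla\) with \(\int_0^a\). However, the step carrying all the weight, the envelope for \(R\), does not follow from the expansion you describe.

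\textbf{The gap.} Suppose you expand \(\tanh\) and \(\sech^2\) around \(\nu,\nu^\ast\) in \(\delta=s\mu_0\varepsilon+s^2\mu_0\bar y_0^\ast\) and bound term by term. The drift \(s^2\mu_0\bar y_0^\ast\) is \(\calO(s^2\|\bx\|^2)\), not \(\calO(s^2\|\bx\|)\).
\begin{itemize}
\item The remainder \(\tfrac12\tanh''(\zeta)\delta^2\) contains \(s^4\mu_0^2(\bar y_0^\ast)^2\). After the prefactor \(\mu_0\) in \(G_t\) and the factor \(\bx\tanh\nu\) in \(\nabla_\theta G_t\), this contributes \(s^4\|\bx\|^6\) to \(G_t\nabla_\theta G_t\).
\item The \(s^2\)-coefficient of \(G_t\) (containing \(\mu_0^2\bar y_0^\ast\sech^2\nu\)) times that of \(\nabla_\theta G_t\) (containing \(2\mu_0\bar y_0^\ast\sech^2\nu\,\bx\)) also gives \(s^4\|\bx\|^6\). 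Higher cross terms are worse.
\end{itemize}
So your \(R\) contains terms like \(s^2\|\bx\|^6\). Its expectation is infinite for every \(s>0\) when only \(\E\|\bx\|^4<\infty\), and no Cauchy--Schwarz split can lower the required moment order. The term you flag, \(s^2\mu_0^2\varepsilon^2\), is harmless: it contributes \(\calO(s^2\varepsilon^2\|\bx\|^4)\).

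\textbf{How to close it.} Conditionally on \((\bx,z)\), write the integrand as \(H(w)\) with \(w=s\varepsilon+s^2\bar y_0^\ast\). Note that \(\nabla_\theta G_t=\bx\,\phi(\mu_ty_t+\nu)\), where \(\phi(u)=\tanh u+(u-\nu)\sech^2u\) is bounded with bounded derivatives. This absorption of \(\mu_0(a\bar y_0^\ast+\sqrt a\,\varepsilon)\) is the same device the paper uses through boundedness of \(v^k\tanh^{(j)}(\eta+v)\). Then \(\sup_w|H'(w)|\lesssim\|\bx\|^3\) and \(\sup_w|H''(w)|\lesssim\|\bx\|^4\), for both the \(\theta\)- and \(\nu\)-integrands.
\begin{itemize}
\item Treat the drift only by the mean value theorem: \(|H(s\varepsilon+s^2\bar y_0^\ast)-H(s\varepsilon)|\lesssim s^2\|\bx\|^4\).
\item Treat the noise by second-order Taylor: \(|H(s\varepsilon)-H(0)-sH'(0)\varepsilon|\lesssim s^2\varepsilon^2\|\bx\|^4\).
\item \(\E[H'(0)\varepsilon]=0\), since \(H'(0)\) depends only on \(\bx\) and \(\E\|\bx\|^3<\infty\).
\end{itemize}
With this split your envelope holds (with \(1+\varepsilon^2\) in place of \(1+\varepsilon^2+\varepsilon^4\)), and your route proves the \(\calO(\bar\alpha_t)\) expansions under the fourth-moment assumption.
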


\begin{proof}[Proof of Proposition~\ref{prop:high_noise_effective_coefficient}]
    With \(\mu_0=\langle\theta,\bx\rangle\) and \(\mu_0^\ast=\langle\theta^\ast,\bx\rangle\), set
    \[
        f(w):=\frac12\left[\mu_0\tanh(\nu+\mu_0w)-\mu_0^\ast\tanh(\nu^\ast+\mu_0^\ast w)\right]^2.
    \]
    Under the ground-truth model, Equation~\eqref{eq:yt} and the normalization in Equation~\eqref{eq:loss} give
    \[
        \calL_t(\theta,\nu)=\E_{\bx,z,\varepsilon_t}\!\left[f\!\left(\sqrt{\bar\alpha_t}\,\varepsilon_t+\bar\alpha_t(-1)^{z+1}\mu_0^\ast\right)\right].
    \]
    Since \(|f(w)|\le\frac12(|\mu_0|+|\mu_0^\ast|)^2\), dominated convergence under \(\E[\|\bx\|^2]<\infty\) yields
    \[
        \lim_{\bar\alpha_t\to0}\calL_t(\theta,\nu)=\E_{\bx}[f(0)]=\frac12\|\theta\tanh\nu-\theta^\ast\tanh\nu^\ast\|_\Sigma^2.
    \]

    Conditional on \((\bx,z)\), Gaussian integration by parts gives, for \(a>0\),
    \[
        \frac{\d}{\d a}\E_{\varepsilon_t}\!\left[f\!\left(\sqrt a\,\varepsilon_t+a(-1)^{z+1}\mu_0^\ast\right)\right]=\E_{\varepsilon_t}\!\left[(-1)^{z+1}\mu_0^\ast f'\!\left(\sqrt a\,\varepsilon_t+a(-1)^{z+1}\mu_0^\ast\right)+\frac12f''\!\left(\sqrt a\,\varepsilon_t+a(-1)^{z+1}\mu_0^\ast\right)\right].
    \]
    For \(j\in\{1,2,3\}\), \(k\in\{0,1,2\}\), and \(\eta\) in any
    bounded set, the functions \(\tanh\) and
    \(v^k\tanh^{(j)}(\eta+v)\) have a common finite bound. Product and chain
    rules therefore give a constant \(C<\infty\), common to a bounded
    neighborhood of \((\theta,\nu)\), such that the derivatives in \(\mu_0\)
    and \(\nu\) of the integrand on the right satisfy
    \[
        \|\bx\|\left|\partial_{\mu_0}\left\{(-1)^{z+1}\mu_0^\ast f'(w)+\frac12f''(w)\right\}\right|+\left|\partial_\nu\left\{(-1)^{z+1}\mu_0^\ast f'(w)+\frac12f''(w)\right\}\right|\le C\bigl(1+\|\bx\|^4\bigr)
    \]
    for every \(w\in\R\). The fourth-moment condition gives continuity of the parameter gradients at \(a=0\) and permits differentiation under the expectations and integration of the preceding Gaussian identity from zero to \(\bar\alpha_t\). Since \(\nabla_\theta=\bx\partial_{\mu_0}\), this gives
    \[
        \nabla_\theta\calL_t(\theta,\nu)=\nabla_\theta\lim_{\bar\alpha_t\to0}\calL_t(\theta,\nu)+\calO(\bar\alpha_t),\qquad \nabla_\nu\calL_t(\theta,\nu)=\nabla_\nu\lim_{\bar\alpha_t\to0}\calL_t(\theta,\nu)+\calO(\bar\alpha_t),
    \]
    which proves the claim.
\end{proof}

\newpage
\section{Proofs for High-Noise Gradient Dynamics and Score Matching Blindness}\label{sup:theory}
In this appendix we give the full quantitative statement
(Theorem~\ref{thm:global_convergence_high_noise_full}) and proof of
Main Theorem~\ref{thm:global_convergence_high_noise}, prove the complementary
high-SNR blindness result in Proposition~\ref{prop:pointwise_sm_blindness}, and
establish the companion hard-assignment limit that supports the comparison
with population EM.

\begin{myframe}
\begin{main theorem}[Formal statement of Main Theorem~\ref{thm:global_convergence_high_noise}]\label{thm:global_convergence_high_noise_full}
    Assume \(\Sigma=\E[\bx\bx^\top]=I_d\), set
    \(c^*:=\theta^*\tanh\nu^*\), and define the high-noise loss
    \[
        \calL(\theta,\nu)
        :=\lim_{\bar{\alpha}_t\to0}\calL_t(\theta,\nu)
        =\frac12\|\theta\tanh\nu-c^*\|^2.
    \]
    For \(\vartheta=(\theta,\nu)\), set
    \(J(\vartheta):=\frac12(\|\theta\|^2-\sinh^2\nu)\) and
    \(x_+:=\max\{x,0\}\).  Given \(\eta>0\), let
    \(\Phi(\vartheta):=\vartheta-\eta\nabla\calL(\vartheta)\) and
    \[
        \calD_\eta:=\left\{\vartheta\in\R^{d+1}:32\eta\max\!\left\{\calL(\vartheta),[J(\vartheta)+\calL(\vartheta)]_+,\|c^*\|^2,1\right\}\le1\right\}.
    \]
    The set \(\calD_\eta\) is forward invariant under \(\Phi\), and
    the loss is nonincreasing along its trajectories. If
    \(\vartheta_0\in\calD_\eta\) and
    \(\vartheta_{k+1}=\Phi(\vartheta_k)\), then, at every index
    \(k\in\mathbb Z_{\ge0}\), \(\vartheta_k\in\calD_\eta\) and
    \(\calL(\vartheta_{k+1})\le\calL(\vartheta_k)\).
    Write \(\calL_0:=\calL(\vartheta_0)\) and
    \(B_0:=4\max\{\sqrt{\calL_0},\sqrt{[J(\vartheta_0)+\calL_0]_+},\|c^*\|,1\}\).
    \begin{enumerate}[label=(\alph*)]
        \item \textbf{Degenerate target.} If \(c^*=\mathbf0\) and
        \(\calL_0>0\), then
        \[
            \calL(\vartheta_K)
            \le \left(\calL_0^{-1/2}+\frac{\eta K}{\sqrt6\,B_0}\right)^{-2},
            \qquad K\in\mathbb Z_{\ge0}.
        \]
        Thus \(\calL(\vartheta_K)=\calO(K^{-2})\).  If \(\calL_0=0\),
        all subsequent losses vanish.
        \item \textbf{Null saddle basin.} Suppose
        \(c^*\neq\mathbf0\), and define
        \[
            \calS:=\{(\theta,0):\langle\theta,c^*\rangle=0\},\qquad \calW^s(\calS):=\left\{\vartheta\in\R^{d+1}:\exists\vartheta_s\in\calS,\ \lim_{k\to\infty}\Phi^k(\vartheta)=\vartheta_s\right\}.
        \]
        The exceptional set
        \(\calD_\eta\cap\calW^s(\calS)\) has zero
        \((d+1)\)-dimensional Lebesgue measure.  Initializations in this set
        satisfy \(\calL(\vartheta_k)\to\|c^*\|^2/2\).
        \item \textbf{Generic geometric convergence.} Suppose
        \(c^*\neq\mathbf0\) and
        \(\vartheta_0\in\calD_\eta\setminus\calW^s(\calS)\).
        Then there exist \(\tau\in\mathbb Z_{\ge0}\) and
        \(\underline u>0\), both depending on \(\vartheta_0\), such that
        \[
            \calL(\vartheta_{\tau+j})
            \le e^{-\eta\underline u^2j}\calL(\vartheta_\tau),
            \qquad j\in\mathbb Z_{\ge0}.
        \]
        With
        \(K_\epsilon:=\inf\{k\in\mathbb Z_{\ge0}:\calL(\vartheta_k)\le\epsilon\}\),
        this bound gives
        \(K_\epsilon=\calO_{\vartheta_0,\eta,c^*}(\log(1/\epsilon))\) as
        \(\epsilon\downarrow0\).
    \end{enumerate}
\end{main theorem}
\end{myframe}
\begin{proof}[Proof of Theorem~\ref{thm:global_convergence_high_noise}]
    Fix \(\vartheta_0=(\theta_0,\nu_0)\in\calD_\eta\), and write
    \(J_0:=J(\vartheta_0)\), \(\calL_0:=\calL(\vartheta_0)\),
    \(B:=B_0\), and \(C:=\|c^*\|\).  The admissibility condition gives
    \[
        2\eta B^2=32\eta\max\{\calL_0,[J_0+\calL_0]_+,C^2,1\}\le1,\qquad \eta\le\frac1{2B^2}\le\frac1{32}.
    \]

    Define the confinement set and its ambient convex cylinder by
    \[
        \Omega:=\{\vartheta:\calL(\vartheta)\le\calL_0,\
        J(\vartheta)+\calL(\vartheta)\le J_0+\calL_0\},\qquad
        \overline\Omega:=\{(\theta,\nu):\|\theta\|\le B,\ \nu\in\mathbb R\}.
    \]
    We first prove \(\Omega\subset\overline\Omega\).  Neither set is claimed to be
    compact because \(\nu\) need not be bounded.

    The definition of \(B\) gives
    \[
        B\ge4,\qquad C\le\frac B4,\qquad C^2\le\frac{B^2}{16},\qquad \calL_0\le\frac{B^2}{16},\qquad [J_0+\calL_0]_+\le\frac{B^2}{16}.
    \]
    For \((\theta,\nu)\in\Omega\), the triangle inequality and
    \((a+b)^2\le2a^2+2b^2\) give
    \[
        \|\theta\|\tanh|\nu|\le\|\theta\tanh\nu-c^*\|+C\le\sqrt{2\calL_0}+C\le\sqrt{2C^2+4\calL_0}.
    \]
    If \(\nu=0\), then
    \(\|\theta\|^2+C^2=2[J(\theta,0)+\calL(\theta,0)]\), so membership in
    \(\Omega\) gives
    \[
        \|\theta\|^2+C^2\le2(J_0+\calL_0)\quad\Longrightarrow\quad J_0+\calL_0\ge0,\qquad \|\theta\|^2\le2(J_0+\calL_0)\le\frac18B^2.
    \]
    If \(\nu\neq0\), then \(\tanh^{-2}\nu=1+\sinh^{-2}\nu\), so the loss
    bound above gives the first inequality below, while the energy bound
    defining \(\Omega\) gives the second inequality,
    \[
        \|\theta\|^2\le(2C^2+4\calL_0)(1+\sinh^{-2}\nu),\qquad \|\theta\|^2=\sinh^2\nu+2J(\vartheta)\le\sinh^2\nu+2(J_0+\calL_0).
    \]
    Moreover,
    \[
        2C^2+4\calL_0\le\frac38B^2,\qquad 2(J_0+\calL_0)\le2[J_0+\calL_0]_+\le\frac18B^2.
    \]
    Here we used \(J_0+\calL_0\le[J_0+\calL_0]_+\).
    Taking the minimum of the two preceding bounds, using
    \(\min\{a+b,c+d\}\le\max\{a,c\}+\min\{b,d\}\), and then using
    \(\min\{a/x,x\}\le\sqrt a\) for \(a\ge0\) and \(x>0\) therefore yields
    \[
        \|\theta\|^2\le\max\{2C^2+4\calL_0,2(J_0+\calL_0)\}+\min\!\left\{\frac{2C^2+4\calL_0}{\sinh^2\nu},\sinh^2\nu\right\}\le\frac38B^2+\sqrt{\frac38}\,B<\frac34B^2.
    \]
    The last strict inequality follows from \(B\ge4\).  Thus
    \(\sup_{\Omega}\|\theta\|\le\sqrt3B/2<B\), proving
    \(\Omega\subset\overline\Omega\).
    Put \(u:=\tanh\nu\in(-1,1)\), so \(\partial_\nu u=1-u^2\).  The first
    derivatives are
    \[
        \nabla_\theta\calL=u(u\theta-c^*),\qquad \nabla_\nu\calL=(1-u^2)\langle\theta,u\theta-c^*\rangle=(1-u^2)(u\|\theta\|^2-\langle\theta,c^*\rangle).
    \]
    Differentiating once more gives
    \[
        \nabla^2_{\theta\theta}\calL=u^2I_d,\qquad \nabla^2_{\theta\nu}\calL=(1-u^2)\{(u\theta-c^*)+u\theta\}=(1-u^2)(2u\theta-c^*),
    \]
    and
    \[
        \nabla^2_{\nu\nu}\calL=-2u(1-u^2)(u\|\theta\|^2-\langle\theta,c^*\rangle)+(1-u^2)^2\|\theta\|^2=(1-u^2)\{2u\langle\theta,c^*\rangle+(1-3u^2)\|\theta\|^2\}.
    \]
    Thus
    \[
        \nabla^2\calL=\begin{pmatrix}u^2I_d&(1-u^2)(2u\theta-c^*)\\(1-u^2)(2u\theta-c^*)^\top&(1-u^2)\{2u\langle\theta,c^*\rangle+(1-3u^2)\|\theta\|^2\}\end{pmatrix}.
    \]
    By symmetry it suffices to consider \(u\in[0,1]\).
    Since \(\frac{\mathrm d}{\mathrm du}[2u(1-u^2)]=2(1-3u^2)\), the first
    expression is maximized at \(u=1/\sqrt3\).  In addition,
    \((1-u^2)(1-3u^2)=3(u^2-2/3)^2-1/3\in[-1/3,1]\).  Hence
    \[
        \max_{|u|\le1}|2u(1-u^2)|=\frac4{3\sqrt3},\qquad \max_{|u|\le1}|(1-u^2)(1-3u^2)|=1.
    \]
    On \(\overline\Omega\), \(\|\theta\|\le B\), \(C\le B/4\), and
    \(B\ge4\), and therefore
    \[
        \|\nabla^2_{\theta\theta}\calL\|_2=u^2\le1\le\frac{B^2}{16},
    \]
    \[
        \|\nabla^2_{\theta\nu}\calL\|_2\le\frac4{3\sqrt3}\|\theta\|+C\le\frac4{3\sqrt3}B+\frac B4\le\left(\frac1{3\sqrt3}+\frac1{16}\right)B^2,
    \]
    and
    \[
        |\nabla^2_{\nu\nu}\calL|\le\frac4{3\sqrt3}\|\theta\|C+\|\theta\|^2\le\frac4{3\sqrt3}B\frac B4+B^2=\left(1+\frac1{3\sqrt3}\right)B^2.
    \]
    The last step in the mixed-block bound uses \(B\le B^2/4\), which also
    gives \(B/4\le B^2/16\).  Decomposing the Hessian into its four
    block-supported summands and applying the triangle inequality gives
    \(\|\nabla^2\calL\|_2\le\|\nabla^2_{\theta\theta}\calL\|_2+
    2\|\nabla^2_{\theta\nu}\calL\|_2+|\nabla^2_{\nu\nu}\calL|\).
    Since \(1/\sqrt3<3/4\), the resulting coefficient satisfies
    \(19/16+1/\sqrt3<31/16<2\).  These bounds give
    \[
        \sup_{\vartheta\in\overline\Omega}\|\nabla^2\calL(\vartheta)\|_2\le\left\{\frac1{16}+2\left(\frac1{3\sqrt3}+\frac1{16}\right)+\left(1+\frac1{3\sqrt3}\right)\right\}B^2=\left(\frac{19}{16}+\frac1{\sqrt3}\right)B^2<2B^2.
    \]

    \textbf{Step 1. Descent, confinement, and admissibility.}
    Since \(\vartheta_0\in\Omega\), the induction starts at \(k=0\).
    Suppose as the induction hypothesis that \(\vartheta_k\in\Omega\), and set
    \(u_k:=\tanh\nu_k\) and \(q_k:=u_k\theta_k-c^*\).  The first component
    of the gradient is \(\nabla_\theta\calL(\vartheta_k)=u_kq_k\).  Since
    \(\eta\le1/32\), the residual bound is
    \[
        \|q_k\|\le|u_k|\|\theta_k\|+C\le\frac{\sqrt3}{2}B+\frac B4=\left(\frac{\sqrt3}{2}+\frac14\right)B<\frac54B,
    \]
    and the resulting one-step movement satisfies
    \[
        \|\theta_{k+1}\|=\|\theta_k-\eta u_kq_k\|\le\|\theta_k\|+\eta|u_k|\|q_k\|\le\left(\frac{\sqrt3}{2}+\frac1{32}\frac54\right)B=\left(\frac{\sqrt3}{2}+\frac5{128}\right)B<B.
    \]
    Here the last strict inequality follows from
    \(\sqrt3/2<7/8\) and \(7/8+5/128=117/128<1\).  Since
    \(\overline\Omega\) imposes no restriction on \(\nu\), this bound is
    sufficient to place \(\vartheta_{k+1}\) in \(\overline\Omega\).
    Thus \(\vartheta_k,\vartheta_{k+1}\in\overline\Omega\), and convexity of
    \(\overline\Omega\) places the segment joining them in \(\overline\Omega\).
    Taylor's inequality on this segment and the preceding uniform Hessian
    bound give
    \[
        \calL(\vartheta_{k+1})-\calL(\vartheta_k)\le\langle\nabla\calL(\vartheta_k),-\eta\nabla\calL(\vartheta_k)\rangle+B^2\|\eta\nabla\calL(\vartheta_k)\|^2=-\eta(1-\eta B^2)\|\nabla\calL(\vartheta_k)\|^2\le-\frac{\eta}{2}\|\nabla\calL(\vartheta_k)\|^2.
    \]
    Here the Taylor remainder coefficient is
    \(B^2=\frac12(2B^2)\), and the final inequality uses
    \(\eta B^2\le1/2\).

    The component updates are
    \[
        \theta_{k+1}=\theta_k-\eta u_kq_k,\qquad \nu_{k+1}=\nu_k-\eta(1-u_k^2)\langle\theta_k,q_k\rangle.
    \]
    This identity gives
    \[
        \|\theta_{k+1}\|^2-\|\theta_k\|^2=-2\eta u_k\langle\theta_k,q_k\rangle+\eta^2u_k^2\|q_k\|^2=\eta^2u_k^2\|q_k\|^2+\frac{2u_k}{1-u_k^2}(\nu_{k+1}-\nu_k).
    \]
    Convexity of \(\nu\mapsto\sinh^2\nu\) gives the exact supporting-line
    comparison
    \[
        \sinh^2\nu_{k+1}-\sinh^2\nu_k\ge\sinh(2\nu_k)(\nu_{k+1}-\nu_k)=\frac{2u_k}{1-u_k^2}(\nu_{k+1}-\nu_k).
    \]
    Combining these two displays and then using the descent inequality above
    yields
    \[
        J(\vartheta_{k+1})-J(\vartheta_k)\le\frac{\eta^2}{2}u_k^2\|q_k\|^2=\frac{\eta^2}{2}\|\nabla_\theta\calL(\vartheta_k)\|^2\le\frac{\eta^2}{2}\|\nabla\calL(\vartheta_k)\|^2\le\eta\bigl[\calL(\vartheta_k)-\calL(\vartheta_{k+1})\bigr].
    \]
    Because \(\eta<1\), combining the descent and energy-drift inequalities
    gives
    \[
        [J(\vartheta_{k+1})+\calL(\vartheta_{k+1})]-[J(\vartheta_k)+\calL(\vartheta_k)]\le-(1-\eta)[\calL(\vartheta_k)-\calL(\vartheta_{k+1})]\le0.
    \]
    Hence \(\vartheta_{k+1}\in\Omega\), and induction proves confinement.
    In particular, monotonicity of the two trajectory-dependent quantities
    gives the admissibility chain
    \[
        \max\{\calL(\vartheta_{k+1}),[J(\vartheta_{k+1})+\calL(\vartheta_{k+1})]_+,C^2,1\}\le\max\{\calL_0,[J_0+\calL_0]_+,C^2,1\}\le\frac1{32\eta}.
    \]
    Thus admissibility is preserved.  In summary,
    \[
        \vartheta_k\in\Omega\subset\overline\Omega,\qquad \|\theta_k\|\le\frac{\sqrt3}{2}B,\qquad \vartheta_k\in\calD_\eta,\qquad k\in\mathbb Z_{\ge0}.
    \]

    \textbf{Step 2. Degenerate target.}
    Suppose \(c^*=\mathbf0\).  If \(\calL_0=0\),
    the descent inequality from Step~1 shows that all subsequent losses
    vanish.  We may
    therefore assume \(\calL_0>0\) and work until a possible finite iterate
    with zero loss, after which the same conclusion applies.

    Since \(c^*=\mathbf0\), direct calculation gives
    \[
        \nabla_\theta\calL=u^2\theta,\qquad \nabla_\nu\calL=u(1-u^2)\|\theta\|^2,\qquad \|\nabla\calL\|^2=u^4\|\theta\|^2+u^2(1-u^2)^2\|\theta\|^4=2\calL\{u^2+\|\theta\|^2(1-u^2)^2\}.
    \]
    For \(\vartheta\in\Omega\), if \(u^2>1/2\), then
    \[
        u^2+\|\theta\|^2(1-u^2)^2\ge u^2>\frac12\ge\frac{\|\theta\||u|}{\sqrt3B}.
    \]
    If \(u^2\le1/2\), then \((|u|-\|\theta\|/2)^2\ge0\) gives
    \[
        u^2+\|\theta\|^2(1-u^2)^2\ge u^2+\frac14\|\theta\|^2\ge\|\theta\||u|\ge\frac{\|\theta\||u|}{\sqrt3B}.
    \]
    Here we used the confinement bound
    \(\|\theta\|\le\sqrt3B/2\) and \(B\ge4\).  We obtain
    \[
        \|\nabla\calL(\vartheta)\|^2=2\calL(\vartheta)\{u^2+\|\theta\|^2(1-u^2)^2\}\ge\frac{2\calL(\vartheta)\sqrt{2\calL(\vartheta)}}{\sqrt3B}=2\sqrt{\frac23}\,\frac{\calL(\vartheta)^{3/2}}{B}.
    \]
    As long as \(\calL(\vartheta_{k+1})>0\), the supporting-line inequality
    for \(x\mapsto x^{-1/2}\), together with the descent inequality and the
    preceding gradient lower bound, gives
    \[
        \calL(\vartheta_{k+1})^{-1/2}-\calL(\vartheta_k)^{-1/2}\ge\frac{\calL(\vartheta_k)-\calL(\vartheta_{k+1})}{2\calL(\vartheta_k)^{3/2}}\ge\frac{\eta\|\nabla\calL(\vartheta_k)\|^2}{4\calL(\vartheta_k)^{3/2}}\ge\frac{\eta}{\sqrt6\,B}.
    \]
    If \(\calL(\vartheta_K)>0\), telescoping and inversion prove
    \[
        \calL(\vartheta_K)\le\left(\calL_0^{-1/2}+\frac{\eta K}{\sqrt6\,B_0}\right)^{-2}.
    \]
    If \(\calL(\vartheta_K)=0\), the same bound is immediate, and
    the descent inequality makes all later losses vanish.  Thus the displayed
    sublinear bound holds for every \(K\in\mathbb Z_{\ge0}\).

    For Steps 3 through 6, assume \(c^*\ne\mathbf0\), so \(C>0\).

    \textbf{Step 3. Saddle points and the boundary of the sign regions.}
    If \(\vartheta=(\theta,0)\in\calS\), then \(u=0\),
    \(\nabla_\theta\calL(\vartheta)=u(u\theta-c^*)=0\), and
    \(\nabla_\nu\calL(\vartheta)=(1-u^2)\langle\theta,u\theta-c^*\rangle
    =-\langle\theta,c^*\rangle=0\).  Hence \(\Phi(\vartheta)=\vartheta\),
    so an initialization in \(\calS\) remains fixed.

    Fix any boundary index
    \(r\in\mathbb Z_{\ge0}\) with \(\langle\theta_r,c^*\rangle\nu_r=0\) and
    \(\vartheta_r\notin\calS\).  The boundary
    point enters the matching-sign region after one update.  If
    \(\langle\theta_r,c^*\rangle=0\) and \(\nu_r\ne0\), then
    \(u_r\nu_r>0\), \(0<u_r(1-u_r^2)/\nu_r\le1\), and
    \(\eta\|\theta_r\|^2\le3\eta B^2/4\le3/8\).  Therefore,
    \[
        \langle\theta_{r+1},c^*\rangle\nu_r=\eta C^2u_r\nu_r>0,\qquad \frac{\nu_{r+1}}{\nu_r}=1-\eta\|\theta_r\|^2\frac{u_r(1-u_r^2)}{\nu_r}\ge1-\eta\|\theta_r\|^2\ge\frac58>0,\qquad \langle\theta_{r+1},c^*\rangle\nu_{r+1}>0.
    \]
    If
    \(\langle\theta_r,c^*\rangle\ne0\) and \(\nu_r=0\), then
    \(u_r=0\) and \(q_r=-c^*\), so the same conclusion follows from
    the update
    \[
        \theta_{r+1}=\theta_r-\eta u_rq_r=\theta_r,\qquad \nu_{r+1}=\nu_r-\eta(1-u_r^2)\langle\theta_r,q_r\rangle=\eta\langle\theta_r,c^*\rangle,\qquad \langle\theta_{r+1},c^*\rangle\nu_{r+1}=\eta\langle\theta_r,c^*\rangle^2>0.
    \]
    Thus, in either boundary case,
    \[
        \langle\theta_{r+1},c^*\rangle\nu_{r+1}>0.
    \]

    \textbf{Step 4. Matching signs.}
    Because \(\calL(\theta,\nu)=\calL(-\theta,-\nu)\) and
    \(\Phi(-\theta,-\nu)=-\Phi(\theta,\nu)\), it suffices to
    analyze a time \(r\) at which
    \(\langle\theta_r,c^*\rangle>0\) and \(\nu_r>0\).
    In this and the next step, we use the normalized projection
    \(w_k:=\langle\theta_k,c^*\rangle/C^2\). The component updates from
    Step~1 give
    \[
        w_{k+1}=(1-\eta u_k^2)w_k+\eta u_k,\qquad \nu_{k+1}-\nu_k=\eta(1-u_k^2)(C^2w_k-u_k\|\theta_k\|^2).
    \]

    Define \(w_{\min,r}:=\min\{w_r,1\}>0\) and
    \(u_{\min,r}:=\min\{u_r,C^2w_{\min,r}/(2B^2)\}>0\).
    The joint induction starts at \(k=r\).  Suppose that
    \(w_k\ge w_{\min,r}\) and \(u_k\ge u_{\min,r}\) for some \(k\ge r\).
    Since \(1-\eta u_k^2>0\), the first matching-sign recurrence gives
    \[
        w_{k+1}=(1-\eta u_k^2)w_k+\eta u_k\ge\bigl[(1-\eta u_k^2)+\eta u_k\bigr]\min\{w_k,1\}=\bigl[1+\eta u_k(1-u_k)\bigr]\min\{w_k,1\}\ge w_{\min,r}.
    \]
    Moreover,
    \(0<C^2w_{\min,r}/B^2\le C^2/B^2\le1/16\).  If
    \(u_k\le C^2w_{\min,r}/B^2\), the second matching-sign recurrence and
    the confinement bound \(\|\theta_k\|^2\le3B^2/4\) give
    \[
        C^2w_k-u_k\|\theta_k\|^2\ge C^2w_{\min,r}-\frac34C^2w_{\min,r}=\frac14C^2w_{\min,r}>0\quad\Longrightarrow\quad \nu_{k+1}\ge\nu_k\quad\Longrightarrow\quad u_{k+1}\ge u_k\ge u_{\min,r}.
    \]
    Otherwise, \(u_k>C^2w_{\min,r}/B^2\).  Dropping the nonnegative
    term involving \(C^2w_k\) in that recurrence and using
    \(2\eta B^2\le1\) give
    \[
        \nu_{k+1}\ge\operatorname{arctanh}(u_k)-\eta(1-u_k^2)u_k\|\theta_k\|^2\ge\operatorname{arctanh}(u_k)-\eta B^2u_k\ge\operatorname{arctanh}(u_k)-\frac{u_k}{2}.
    \]
    The map \(x\mapsto\operatorname{arctanh}x-x/2\) has derivative
    \((1+x^2)/[2(1-x^2)]>0\) on \((0,1)\).  The subtraction formula gives
    \(\tanh(a-b)\ge\tanh a-\tanh b\) for \(a\ge b\ge0\).  We also use
    \(\tanh x\le x\) for \(x\ge0\).  Hence
    \[
        u_{k+1}\ge\tanh\!\left(\operatorname{arctanh}(u_k)-\frac{u_k}{2}\right)\ge\frac{C^2w_{\min,r}}{B^2}-\tanh\!\left(\frac{C^2w_{\min,r}}{2B^2}\right)\ge\frac{C^2w_{\min,r}}{2B^2}\ge u_{\min,r}.
    \]
    This completes the joint inductive step and yields
    \[
        w_k\ge w_{\min,r},\qquad
        u_k\ge u_{\min,r},\qquad k\in\mathbb Z_{\ge r}.
    \]
    In particular, the matching-sign region is forward invariant.
    For \(k\ge r\), the gradient satisfies
    \[
        \|\nabla\calL(\vartheta_k)\|^2\ge\|\nabla_\theta\calL(\vartheta_k)\|^2=u_k^2\|q_k\|^2=2u_k^2\calL(\vartheta_k)\ge2u_{\min,r}^2\calL(\vartheta_k).
    \]
    Combining this inequality with the descent inequality from Step~1 and using
    \(1-x\le e^{-x}\) yields
    \[
        \calL(\vartheta_{r+j})\le(1-\eta u_{\min,r}^2)^j\calL(\vartheta_r)\le e^{-\eta u_{\min,r}^2j}\calL(\vartheta_r),\qquad j\in\mathbb Z_{\ge0}.
    \]

    \textbf{Step 5. Mixed signs, finite exit or convergence to \(\calS\).}
    Suppose \(\langle\theta_0,c^*\rangle\nu_0<0\), and define the exit time
    \(T:=\inf\{k\in\mathbb Z_{\ge0}:
    \langle\theta_k,c^*\rangle\nu_k\ge0\}\), with
    \(\inf\varnothing:=\infty\).  Thus
    \(T\in\mathbb Z_{\ge0}\cup\{\infty\}\).
    The estimates below apply at indices \(k<T\).

    Put \(\theta_k^\perp:=\theta_k-w_kc^*\) and
    \(G_k:=\calL(\vartheta_k)-C^2/2\).  Then
    \(\langle\theta_k^\perp,c^*\rangle=0\), \(w_ku_k<0\), and
    \[
        G_k
        =\frac12\|\theta_k^\perp\|^2u_k^2
         +\frac{C^2}{2}(w_ku_k-2)w_ku_k>0.
    \]
    The update has the form
    \[
        \theta_{k+1}^\perp=(1-\eta u_k^2)\theta_k^\perp,\qquad w_{k+1}=(1-\eta u_k^2)w_k+\eta u_k,\qquad \nu_{k+1}-\nu_k=\eta C^2(1-u_k^2)\left[w_k-u_k\left(w_k^2+\frac{\|\theta_k^\perp\|^2}{C^2}\right)\right].
    \]
    The first two mixed-sign recurrences give
    \(\|\theta_k^\perp\|\le\|\theta_0^\perp\|\) and
    \(|w_k|\le\max\{|w_0|,\eta\}\).  These bounds hold at
    \(0\le k\le T\) when \(T<\infty\), and at every finite index when
    \(T=\infty\).
    The final mixed-sign recurrence has sign opposite
    to \(\nu_k\), while its magnitude is at most
    \(\eta[B^2+C^2\max\{|w_0|,\eta\}]\).
    Hence the same index ranges satisfy
    \(|\nu_k|\le\max\{|\nu_0|,\eta[B^2+C^2\max\{|w_0|,\eta\}]\}\).

    Set
    \(c_\nu:=\sech^2(\max\{|\nu_0|,\eta[B^2+C^2\max\{|w_0|,\eta\}]\})>0\)
    and \(c_{\mathrm{mix}}:=\min\{2C^2/B^2,c_\nu C\}>0\).  Orthogonality of
    \(\theta_k^\perp\) and \(c^*\) gives
    \(\|\nabla_\theta\calL(\vartheta_k)\|^2
      =C^2u_k^2+2u_k^2G_k\) and
    \(|\nabla_\nu\calL(\vartheta_k)|\ge c_\nu C^2|w_k|\).
    Also, the confinement bound from Step~1 gives
    \(G_k\le3B^2u_k^2/8+C^2|w_ku_k|\).
    Since
    \(\frac14C^2u_k^2+c_\nu^2C^4w_k^2
      \ge c_\nu C^3|w_ku_k|\),
    \[
        \|\nabla\calL(\vartheta_k)\|^2\ge\frac34C^2u_k^2+c_\nu C^3|w_ku_k|\ge c_{\mathrm{mix}}\left(\frac38B^2u_k^2+C^2|w_ku_k|\right)\ge c_{\mathrm{mix}}G_k.
    \]
    Here \(c_{\mathrm{mix}}\le2C^2/B^2\le1/8\), so
    \(0<\eta c_{\mathrm{mix}}/2<1\).
    Combining the descent inequality from Step~1 with the preceding mixed-sign
    gradient lower bound yields
    \[
        G_{k+1}
        \le\left(1-\frac{\eta c_{\mathrm{mix}}}{2}\right)G_k
        \le e^{-\eta c_{\mathrm{mix}}/2}G_k,
        \qquad k<T.
    \]
    
    Suppose now that \(T=\infty\).  The preceding contraction estimate gives
    \(G_k\to0\), while
    \(\|\nabla\calL(\vartheta_k)\|^2\ge C^2u_k^2\) and
    the descent inequality from Step~1 gives the sharper bound below because
    \(\calL(\vartheta_k)>C^2/2\) throughout the mixed-sign region.
    \[
        \sum_{k=0}^\infty u_k^2
        \le\frac{2G_0}{\eta C^2}<\infty.
    \]
    This summability bound implies \(u_k\to0\), and hence
    \(\nu_k\to0\).  Since
    \(0\le\eta u_k^2\le1/32\),
    \(-\log(1-\eta u_k^2)\le(32/31)\eta u_k^2\).  Thus the positive
    product \(\prod_{k=0}^\infty(1-\eta u_k^2)\) converges to a positive
    limit, and
    the first mixed-sign recurrence shows that \(\theta_k^\perp\) converges
    to some \(\theta_s\perp c^*\).

    Because \(\nu_k\to0\), its increments vanish.  In the mixed-sign
    region, the final mixed-sign recurrence satisfies
    \(\eta C^2(1-u_k^2)|w_k|\le|\nu_{k+1}-\nu_k|\), so \(w_k\to0\).
    We conclude that
    \[
        \vartheta_k\longrightarrow(\theta_s,0)\in\calS.
    \]
    Thus \(T=\infty\) implies
    \(\vartheta_0\in\calW^s(\calS)\).

    Now suppose \(\vartheta_0\notin\calW^s(\calS)\).  If the
    initialization has matching signs, take \(\tau=0\).  A boundary
    initialization outside \(\calS\) is handled by the boundary-entry
    calculation in Step~3.  For a mixed-sign initialization, the implication
    \(T=\infty\Rightarrow\vartheta_0\in\calW^s(\calS)\) proved above rules out
    \(T=\infty\).  After the finite exit, a landing in \(\calS\) would make
    \(\vartheta_0\) belong to
    \(\calW^s(\calS)\).  Thus the exit either has matching signs or
    lands on the boundary outside \(\calS\), where
    the boundary-entry calculation applies.  Therefore
    \(\tau:=\inf\{k\in\mathbb Z_{\ge0}:
      \langle\theta_k,c^*\rangle\nu_k>0\}<\infty\).

    Set
    \[
        \underline u:=\min\!\left\{|\tanh\nu_\tau|,\frac{\min\{|\langle\theta_\tau,c^*\rangle|,C^2\}}{2B^2}\right\}>0.
    \]
    Applying the matching-sign rate from Step~4 at time \(\tau\) gives
    \[
        \calL(\vartheta_{\tau+j})
        \le e^{-\eta\underline u^2j}\calL(\vartheta_\tau),
        \qquad j\in\mathbb Z_{\ge0}.
    \]
    If \(\calL(\vartheta_\tau)=0\), then \(K_\epsilon\le\tau\).
    Otherwise,
    \[
        K_\epsilon
        \le\tau+
        \left\lceil
          \frac{[\log(\calL(\vartheta_\tau)/\epsilon)]_+}
               {\eta\underline u^2}
        \right\rceil.
    \]
    This proves the pointwise logarithmic complexity. We do not claim a
    uniform bound on \(\tau\) or \(\underline u\).

    If \(\vartheta_0\in\calW^s(\calS)\), continuity gives
    \(\calL(\vartheta_k)\to C^2/2\).

    \textbf{Step 6. Null saddle basin.}
    We adapt the standard center-stable-manifold argument for avoiding saddle
    points under gradient descent~\citep{lee2016gradient}. The argument below
    also treats points at which the update map is singular.
    It remains to prove
    \(\mathrm{Vol}_{d+1}(\calD_\eta\cap\calW^s(\calS))=0\).
    The claim is immediate if \(\calD_\eta\) is empty.  Otherwise, its
    definition implies
    \[
        \eta\le\frac{1}{32\max\{C^2,1\}},\qquad \eta C\le\frac1{32}.
    \]

    \emph{Reachable saddle limits.}
    Let \(\vartheta_0\in
    \calD_\eta\cap\calW^s(\calS)\) and
    \(\Phi^k(\vartheta_0)\to\vartheta_s=(\theta_s,0)\in\calS\).
    Applying the confinement argument above with this initialization and
    using the definition of \(\calD_\eta\) gives
    \(\|\theta_s\|^2\le3/(8\eta)\).  Thus all such limits lie in the
    compact set
    \(\calK_\eta:=\{(\theta,0)\in\calS:\|\theta\|^2\le3/(8\eta)\}\).

    \emph{Local center-stable disks.}
    At \(\vartheta_s=(\theta_s,0)\in\calS\), the Hessian kernel is
    \(\{(v,0):v\in\mathbb R^d,\ \langle v,c^*\rangle=0\}\).  Direct
    diagonalization on its two-dimensional complement gives
    \[
        \lambda_\pm(\vartheta_s)
        =\frac12\left(\|\theta_s\|^2\pm\sqrt{\|\theta_s\|^4+4C^2}\right),\qquad \lambda_-<0<\lambda_+.
    \]
    The derivative \(D\Phi(\vartheta_s)\) has eigenvalue \(1\) with
    multiplicity \(d-1\), together with
    \(1-\eta\lambda_+(\vartheta_s)\) and
    \(1-\eta\lambda_-(\vartheta_s)\).
    The inequality \(\sqrt{a^2+4b^2}\le a+2b\),
    the global step-size bounds above, and
    \(\eta\|\theta_s\|^2\le3/8\) give
    \(0<\eta\lambda_+(\vartheta_s)\le13/32<1\).  Hence the eigenvalues of
    \(D\Phi(\vartheta_s)\) form center, stable, and unstable groups
    whose dimensions are \(d-1\), \(1\), and \(1\).
    In particular, \(D\Phi(\vartheta_s)\) is nonsingular.  The Inverse
    Function Theorem makes \(\Phi\) a \(C^1\) local diffeomorphism near
    each point of \(\calK_\eta\).

    The local discrete-time Center-Stable Manifold Theorem
    \cite[Theorem~III.7, pp.~65\textendash 66]{shub1987global} therefore supplies an
    open neighborhood \(U_{\vartheta_s}\) and a \(d\)-dimensional embedded
    \(C^1\) disk \(W^{cs}_{\mathrm{loc}}(\vartheta_s)\) satisfying
    \[
        \left\{\vartheta\in U_{\vartheta_s}:
          \forall n\in\mathbb Z_{\ge0},\quad
          \Phi^n(\vartheta)\in U_{\vartheta_s}\right\}
        \subseteq W^{cs}_{\mathrm{loc}}(\vartheta_s).
    \]
    Each such disk is Lebesgue-null in \(\mathbb R^{d+1}\).  Compactness
    gives points \(\vartheta_{s,1},\ldots,\vartheta_{s,N}\in\calK_\eta\)
    such that \(\calK_\eta\subseteq\bigcup_{i=1}^N U_{\vartheta_{s,i}}\).  Write
    \(U_i:=U_{\vartheta_{s,i}}\) and
    \(W_i^{cs}:=W^{cs}_{\mathrm{loc}}(\vartheta_{s,i})\).

    If an admissible trajectory converges to a point of \(\calK_\eta\), it
    eventually remains in some \(U_i\).  The local trapping property then
    places a finite
    iterate in \(W_i^{cs}\).  Thus
    \[
        \calD_\eta\cap\calW^s(\calS)
        \subseteq\bigcup_{i=1}^N\bigcup_{\ell=0}^\infty
          \Phi^{-\ell}(W_i^{cs}).
    \]

    \emph{Null preimages.}
    We next show that inverse images of null sets remain null. The determinant
    of \(D\Phi\) is real analytic. Its value at the origin is
    \(\det D\Phi(0,0)=1-\eta^2C^2>0\), so the determinant is not the zero
    function. Its critical set
    \(\operatorname{Crit}(\Phi):=
      \{\vartheta:\det D\Phi(\vartheta)=0\}\) is Lebesgue-null
    \cite[Section~4.1, p.~83]{krantz2002primer}.

    Let \(Z\subseteq\mathbb R^{d+1}\) be Lebesgue-null. At every point outside
    \(\operatorname{Crit}(\Phi)\), the Inverse Function Theorem gives a
    neighborhood \(U\) on which \(\Phi\) has a \(C^1\) inverse. We choose an
    open ball \(B\) whose compact closure lies in \(\Phi(U)\), and set
    \(O=(\Phi|_U)^{-1}(B)\). The inverse derivative is bounded on
    \(\overline B\). Since \(B\) is convex, the inverse is Lipschitz on \(B\).
    Second countability gives a collection \(\{O_j\}_{j\ge1}\) of these
    neighborhoods that covers the regular points. Lipschitz maps preserve null sets
    \cite[Sections~2.2 and~2.4]{evans2015measure}, so
    \[
        \Phi^{-1}(Z)\setminus\operatorname{Crit}(\Phi)
        \subseteq\bigcup_{j=1}^\infty
          (\Phi|_{O_j})^{-1}\bigl(Z\cap\Phi(O_j)\bigr),
        \qquad
        \mathrm{Vol}_{d+1}
          \bigl(\Phi^{-1}(Z)
            \setminus\operatorname{Crit}(\Phi)\bigr)=0.
    \]
    The part inside \(\operatorname{Crit}(\Phi)\) is also null. Therefore
    \[
        \mathrm{Vol}_{d+1}(Z)=0\quad\Longrightarrow\quad \mathrm{Vol}_{d+1}\bigl(\Phi^{-1}(Z)\bigr)=0.
    \]

    Repeating the argument shows that each
    \(\Phi^{-\ell}(W_i^{cs})\) is null. The countable union on the
    right of the stable-basin cover above is measurable and null.
    Its subset \(\calD_\eta\cap\calW^s(\calS)\) is measurable and null
    by completeness of Lebesgue measure.
\end{proof}

\begin{proposition}[Proposition~\ref{prop:pointwise_sm_blindness}]
    Assume \(\E[\|\bx\|^2]<\infty\).  Fix
    \(\bar{\alpha}_t\in(0,1]\), a finite model pair
    \((\theta,\nu)\in\R^d\times\R\). Let
    \(\theta^\ast\ne\mathbf0\) vary with \(\|\theta^\ast\|\to\infty\) along
    a fixed ray, while the ground-truth mixing weights and all other model and
    data-generating quantities remain fixed. If
    \(\theta\ne\mathbf0\), assume
    \[
        \Pr\!\left(\{\langle\theta,\bx\rangle=0\}\cup\{\langle\theta^\ast,\bx\rangle=0\}\right)=0.
    \]
    The expectations defining \(\calL_t\) and \(V_t\) are taken under the
    corresponding varying ground truth.
    Then, pointwise at the fixed model pair and fixed scale of the diffusion noise level,
    \[
        \lim_{\|\theta^\ast\|\to\infty}|\nabla_\nu\calL_t(\theta,\nu)|=0,
        \qquad
        \lim_{\|\theta^\ast\|\to\infty}V_t(\theta,\nu)=0.
    \]
\end{proposition}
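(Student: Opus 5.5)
The plan is to work at the level of a fixed covariate $\bx$, using the common coupling from the proofs of Propositions~\ref{prop:low_noise_em} and~\ref{prop:high_noise_effective_coefficient}. Write $a:=\bar\alpha_t$ (fixed), $\mu_0=\langle\theta,\bx\rangle$ (fixed), and $\mu_0^\ast=r\,m$ with $r:=\|\theta^\ast\|\to\infty$ and $m:=\langle e,\bx\rangle$ for the fixed unit direction $e$ of the ray. The response is coupled as $y_t=(-1)^{z+1}\sqrt a\,\mu_0^\ast+\varepsilon$.

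The case $\theta=\mathbf0$ is trivial: then $\mu_t\equiv0$, so $V_t\equiv0$. Also $\partial_\nu G_t\equiv0$, so the $\nu$-gradient vanishes identically. From now on assume $\theta\ne\mathbf0$ and the projection condition, so that $\mu_0\ne0$ and $m\ne0$ almost surely.

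\textbf{Setting up the two quantities.} I would use the representation $\calL_t=\frac12\E[G_t^2]$, where
\[
G_t=\mu_0\tanh(\mu_ty_t+\nu)-\mu_0^\ast\tanh(\mu_t^\ast y_t+\nu^\ast),
\]
rather than Proposition~\ref{prop:sm_gradients}. This gives $\nabla_\nu\calL_t=\E[G_t\,\mu_0\sech^2(\mu_ty_t+\nu)]$. Using $|G_t|\le|\mu_0|+r|m|$, the gradient is bounded by
\[
\E_{\bx}\Big[\mu_0^2(|\mu_0|+r|m|)\,\E_{z,\varepsilon}\sech^2(S)\Big],\qquad S:=\pm a\,\mu_0 r m+\sqrt a\,\mu_0\varepsilon+\nu .
\]
Similarly, $V_t=\frac1{2}\E[\mu_0^2\sech^2(S)]$. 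For $V_t$ the integrand is at most $\mu_0^2/2$, which is integrable since $\E\|\bx\|^2<\infty$. So dominated convergence reduces the second limit to showing $\E_\varepsilon\sech^2(S)\to0$ for each fixed $\bx$.

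\textbf{Pointwise decay.} For fixed $\bx$, I split on the event $\{|\varepsilon|\le\sqrt a\,r|m|/2\}$.

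On this event, $|S|\ge \tfrac a2|\mu_0|r|m|-|\nu|$. Combined with $\sech^2 u\le4e^{-2|u|}$, this gives a contribution of order $e^{-a|\mu_0 m|r}$.

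The complement has Gaussian probability at most $2e^{-a r^2m^2/8}$.

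Hence
\[
(|\mu_0|+r|m|)\,\E_\varepsilon\sech^2(S)\lesssim (1+r)\big(e^{-a|\mu_0m|r}+e^{-ar^2m^2/8}\big)\to0 .
\]
This yields both pointwise limits: the $V_t$ integrand alone, and the $\nabla_\nu$ integrand including its extra factor $r|m|$.

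\textbf{Main obstacle: an $r$-uniform integrable envelope for the gradient.} The naive bound $\mu_0^2(|\mu_0|+r|m|)$ grows with $r$, so it cannot serve as the dominating function. To get a bound uniform in $r$, I would combine two estimates.

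The first is a density estimate that holds for every $r$:
\[
\E_\varepsilon\sech^2(S)\le\frac{\|\sech^2\|_{L^1}}{\sqrt{2\pi}\,\sqrt a\,|\mu_0|}=\frac{2}{\sqrt{2\pi a}\,|\mu_0|}.
\]
The second is the split from the previous step, now tracking the constants. Together they bound $r|m|\,\E_\varepsilon\sech^2(S)$ by
\[
\frac{r|m|\,e^{-ar^2m^2/8}}{\sqrt a\,|\mu_0|}\;+\;4e^{2|\nu|}\,r|m|\,e^{-a|\mu_0 m|r}.
\]
Using $\sup_{s>0}s e^{-s^2}<\infty$ and $\sup_{s>0}se^{-s}<\infty$, this is at most
\[
\frac{C(a,\nu)}{|\mu_0|}\Big(1+\frac{1}{|\mu_0|}\Big),
\]
independently of $r$.

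Multiplying by the prefactor $\mu_0^2$ leaves the envelope $C(|\mu_0|+1)+\mu_0^2$. This is integrable under $\E\|\bx\|^2<\infty$, so dominated convergence gives $\nabla_\nu\calL_t\to0$.

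The delicate point is the second exponential term: its sign case must be handled so that the factor $r$ is absorbed by $e^{-a|\mu_0m|r}$ at the cost of only $1/|\mu_0|$. If that step fails to close, the fallback is to keep the exact Gaussian integral of $\sech^2$ in closed-form-bounded shape instead.
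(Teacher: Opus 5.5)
Your route works, but the envelope step as written does not close because of a slip in the prefactor. Since \(\partial_\nu G_t=\mu_0\sech^2 S\) and \(|G_t|\le|\mu_0|+r|m|\), the correct pointwise bound is \(|G_t\,\partial_\nu G_t|\le|\mu_0|\,(|\mu_0|+r|m|)\sech^2 S\). The prefactor is \(|\mu_0|\), not \(\mu_0^2\); your version is not a valid pointwise bound when \(|\mu_0|<1\).

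With the correct prefactor, your stated estimate \(r|m|\,\E_\varepsilon\sech^2 S\le C(a,\nu)|\mu_0|^{-1}(1+|\mu_0|^{-1})\) leaves a term \(C/|\mu_0|\) in the envelope. That term need not be integrable: for \(d=1\) and \(x\sim\calN(0,1)\), \(\E[1/|x|]=\infty\).

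The repair is that the factor \(1+|\mu_0|^{-1}\) is spurious. Your two estimates give
\begin{itemize}
\item \(r|m|e^{-ar^2m^2/8}/(\sqrt a\,|\mu_0|)\le C/(a|\mu_0|)\), and
\item \(4e^{2|\nu|}r|m|e^{-a|\mu_0 m|r}\le 4e^{2|\nu|-1}/(a|\mu_0|)\).
\end{itemize}
Hence \(|\mu_0|\,r|m|\,\E_\varepsilon\sech^2 S\le C(a,\nu)\), and the envelope \(C(a,\nu)+\mu_0^2\) is integrable under \(\E\|\bx\|^2<\infty\). The density estimate is in fact unnecessary: with the correct prefactor, the crude tail bound already gives \(2|\mu_0|\,r|m|\,e^{-ar^2m^2/8}\le C|\mu_0|/\sqrt a\), which is integrable.

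After this correction, your proof is a genuinely different route from the paper's. The paper's proof does not differentiate \(\tfrac12\E[G_t^2]\) directly. It invokes the exact identity from Corollary~\ref{cor:sm_gradients_em_variance} and Proposition~\ref{prop:sm_gradients},
\[
\nabla_\nu\calL_t=-\bar\alpha_t^{-1}\E[(\mu_ty_t+\mu_t^2\tanh(\mu_ty_t+\nu))\sech^2(\mu_ty_t+\nu)],
\]
which comes from the integration-by-parts decomposition in Main Theorem~\ref{thm:information_theoretical_decomposition}.

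In that form, the ground-truth factor \(\mu_0^\ast\tanh(\mu_t^\ast y_t+\nu^\ast)\) has been integrated away; its linear growth in \(r\) is the whole difficulty in your approach. The only large quantity left, \(\mu_ty_t\), sits inside the argument of the same \(\sech^2\). The bound \(|x|\sech^2x\le 2/e\) then gives the \(\theta^\ast\)-free majorant \(2/e+|\nu|+\mu_t^2\) immediately. Dominated convergence then needs only the qualitative divergence \(|\mu_ty_t+\nu|\to\infty\) almost surely.

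Your argument buys two things. It is self-contained, since it does not depend on the decomposition or its boundary-term justification. It also exposes the quantitative mechanism: at each fixed \(\bx\), \(\E_\varepsilon\sech^2 S\) decays exponentially in \(r\) and beats the linear factor \(r|m|\). The cost is the uniform-in-\(r\) envelope estimate, which the paper's identity renders trivial.
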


\begin{proof}[Proof of Proposition~\ref{prop:pointwise_sm_blindness}]
    We use the established scaled projections and forward observation
    \[
        \mu_t:=\sqrt{\bar{\alpha}_t}\langle\theta,\bx\rangle,\qquad \mu_t^\ast:=\sqrt{\bar{\alpha}_t}\langle\theta^\ast,\bx\rangle,\qquad y_t=(-1)^{z+1}\mu_t^\ast+\varepsilon_t,
    \]
    where \(\varepsilon_t\sim\calN(0,1)\) is independent of \((\bx,z)\).
    Proposition~\ref{prop:sm_gradients} and
    Corollary~\ref{cor:sm_gradients_em_variance} give the
    score-matching and latent-variance identities
    \[
        V_t(\theta,\nu)=\frac{1}{2\bar{\alpha}_t}
        \E[\mu_t^2\sech^2(\mu_ty_t+\nu)],
        \qquad
        \nabla_\nu\calL_t(\theta,\nu)
        =-\frac{1}{\bar{\alpha}_t}\E[(\mu_ty_t
        +\mu_t^2\tanh(\mu_ty_t+\nu))\sech^2(\mu_ty_t+\nu)].
    \]

    If \(\theta=\mathbf 0\), then \(\mu_t=0\) almost surely, so both
    quantities vanish for every signal norm. Hence assume
    \(\theta\neq\mathbf 0\).

    For every \(x\in\R\),
    \[
        |x|\sech^2x\le\frac2e,\qquad \sech^2x\le1,\qquad |\tanh x|\sech^2x\le1.
    \]
    Substituting \(\mu_ty_t=(\mu_ty_t+\nu)-\nu\) therefore gives the
    ground-truth-independent majorants
    \[
        |(\mu_ty_t+\mu_t^2\tanh(\mu_ty_t+\nu))\sech^2(\mu_ty_t+\nu)|\le\frac2e+|\nu|+\mu_t^2.
    \]
    The variance integrand is bounded in absolute value by \(\mu_t^2\).
    Moreover,
    \[
        \E[\mu_t^2]=\bar{\alpha}_t\E[\langle\theta,\bx\rangle^2]\le\bar{\alpha}_t\|\theta\|^2\E[\|\bx\|^2]<\infty.
    \]
    Thus both absolute integrands have integrable majorants independent of
    \(\|\theta^\ast\|\).  By the projection assumption, almost surely
    \[
        \mu_ty_t+\nu=(-1)^{z+1}\bar{\alpha}_t\langle\theta,\bx\rangle\langle\theta^\ast,\bx\rangle+\sqrt{\bar{\alpha}_t}\langle\theta,\bx\rangle\varepsilon_t+\nu,
    \]
    whose absolute value tends to infinity. Each integrand in the exact
    identities therefore tends to zero almost surely. The dominated
    convergence theorem, followed by division by the fixed positive
    \(\bar{\alpha}_t\), proves both limits.
\end{proof}

\begin{lemma}[Hard-Assignment Limit of the EM Imbalance Operator]\label{lemma:high_snr_em_imbalance}
    Under the assumptions of
    Proposition~\ref{prop:pointwise_sm_blindness}, suppose
    \(\theta\ne\mathbf0\), and define
    \(q(\theta,\theta^\ast):=\E_{\bx}[\sgn(
    \langle\theta,\bx\rangle\langle\theta^\ast,\bx\rangle)]\), which is
    constant along the fixed ray. Let \(N_{\theta^\ast}\) and
    \(\calH_{\theta^\ast}\) denote the EM imbalance operator and
    cross-entropy when their ground-truth expectations are taken under
    \((\theta^\ast,\nu^\ast)\). Then, as \(\|\theta^\ast\|\to\infty\) along
    the fixed ray,
    \[
        N_{\theta^\ast}(\theta_t,\nu)\longrightarrow
        q(\theta,\theta^\ast)\tanh\nu^\ast,
        \qquad
        \partial_\nu N_{\theta^\ast}(\theta_t,\nu)\longrightarrow0,
    \]
    and
    \[
        \nabla_\nu\calH_{\theta^\ast}(\theta_t,\nu)\longrightarrow
        \tanh\nu-q(\theta,\theta^\ast)\tanh\nu^\ast,
        \qquad
        \nabla_\nu^2\calH_{\theta^\ast}(\theta_t,\nu)\longrightarrow\sech^2\nu>0.
    \]
    If \(\bx\sim\calN(\mathbf0,I_d)\) and
    \(\rho:=\langle\theta,\theta^\ast\rangle/(\|\theta\|\|\theta^\ast\|)\), then
    \(q(\theta,\theta^\ast)=(2/\pi)\arcsin\rho\).
\end{lemma}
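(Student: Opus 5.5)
Following the proof of Proposition~\ref{prop:pointwise_sm_blindness}, I would put every varying-truth expectation on one probability space. Write \(y_t=(-1)^{z+1}\mu_t^\ast+\varepsilon_t\) with \(\mu_t^\ast=\sqrt{\bar\alpha_t}\langle\theta^\ast,\bx\rangle\), and write \(\theta^\ast=r e\) for a fixed unit vector \(e\) and scale \(r\to\infty\). Then
\[
\mu_ty_t+\nu=(-1)^{z+1}\bar\alpha_t r\langle\theta,\bx\rangle\langle e,\bx\rangle+\sqrt{\bar\alpha_t}\langle\theta,\bx\rangle\varepsilon_t+\nu .
\]
By the projection condition, almost surely this has sign equal to \((-1)^{z+1}\sgn(\langle\theta,\bx\rangle\langle e,\bx\rangle)\) for all large \(r\), and its modulus diverges. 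So \(\tanh(\mu_ty_t+\nu)\to(-1)^{z+1}\sgn(\langle\theta,\bx\rangle\langle\theta^\ast,\bx\rangle)\) and \(\sech^2(\mu_ty_t+\nu)\to0\) almost surely. Both functions are bounded by \(1\), so dominated convergence applies with a constant majorant and needs no moment condition.

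\textbf{Limits of \(N\) and \(\partial_\nu N\).} The law of \(z\) is independent of \((\bx,\varepsilon_t)\), and \(\E[(-1)^{z+1}]=\pi^\ast(1)-\pi^\ast(2)=\tanh\nu^\ast\). Hence \(N_{\theta^\ast}(\theta_t,\nu)=\E_\ast[\tanh(\mu_ty_t+\nu)]\to\tanh\nu^\ast\,\E_\bx[\sgn(\langle\theta,\bx\rangle\langle\theta^\ast,\bx\rangle)]=q\tanh\nu^\ast\). Along the ray \(\sgn\langle\theta^\ast,\bx\rangle=\sgn\langle e,\bx\rangle\), so \(q\) is indeed constant. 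Differentiating under the expectation is justified by the bounded derivative \(\sech^2\). It gives \(\partial_\nu N=\E_\ast[\sech^2(\mu_ty_t+\nu)]\to0\).

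\textbf{Cross-entropy limits.} Proposition~\ref{prop:cross_entropy} gives \(\nabla_\nu\calH=\tanh\nu-N\), so its limit is \(\tanh\nu-q\tanh\nu^\ast\). Differentiating once more gives \(\nabla^2_\nu\calH=\sech^2\nu-\partial_\nu N\to\sech^2\nu>0\).

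\textbf{Gaussian formula.} For \(\bx\sim\calN(\mathbf0,I_d)\), the normalized projections \(\langle\theta,\bx\rangle/\|\theta\|\) and \(\langle\theta^\ast,\bx\rangle/\|\theta^\ast\|\) are standard Gaussians with correlation \(\rho=\sin(\arcsin\rho)\). Lemma~\ref{lemma:expectations_gaussian} then gives \(q=(2/\p)\arcsin\rho\). The projection condition holds automatically here because \(\theta,\theta^\ast\neq\mathbf0\).

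The only delicate step is the almost-sure divergence of the activation argument. The cross term \(r\langle\theta,\bx\rangle\langle e,\bx\rangle\) dominates the \(r\)-independent remainder exactly when \(\langle\theta,\bx\rangle\langle e,\bx\rangle\ne0\), and the projection assumption guarantees this almost surely. Everything else is bounded convergence.
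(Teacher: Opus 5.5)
Your proposal is correct and follows essentially the same route as the paper. Both arguments represent the activation on a common probability space and use the projection condition to get almost-sure saturation of \(\tanh\) and collapse of \(\sech^2\). Both then apply bounded convergence with the independence of \(z\) and \(\bx\), use Proposition~\ref{prop:cross_entropy} for the cross-entropy limits, and use Lemma~\ref{lemma:expectations_gaussian} for the arcsine formula. You also spell out two details the paper leaves implicit: why \(q\) is constant along the ray, and why the projection condition holds automatically for Gaussian covariates.
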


\begin{proof}
    Under the ground truth \((\theta^\ast,\nu^\ast)\), the candidate
    activation is \(\mu_ty_t=(-1)^{z+1}\bar\alpha_t
    \langle\theta,\bx\rangle\langle\theta^\ast,\bx\rangle+
    \sqrt{\bar\alpha_t}\langle\theta,\bx\rangle\varepsilon_t\). The projection
    condition gives \(\tanh(\mu_ty_t+\nu)\to
    (-1)^{z+1}\sgn(\langle\theta,\bx\rangle
    \langle\theta^\ast,\bx\rangle)\) and
    \(\sech^2(\mu_ty_t+\nu)\to0\) almost surely. Bounded convergence,
    independence of \(z\) and \(\bx\), and
    \(\E[(-1)^{z+1}]=\tanh\nu^\ast\) then give
    \[
        N_{\theta^\ast}(\theta_t,\nu)\longrightarrow q(\theta,\theta^\ast)\tanh\nu^\ast,\qquad \partial_\nu N_{\theta^\ast}(\theta_t,\nu)=\E_\ast[\sech^2(\mu_ty_t+\nu)]\longrightarrow0.
    \]
    Proposition~\ref{prop:cross_entropy} and its derivative give the two
    cross-entropy limits. For isotropic Gaussian covariates,
    Lemma~\ref{lemma:expectations_gaussian} applied to the standardized
    projections gives \(q(\theta,\theta^\ast)=(2/\pi)\arcsin\rho\).
\end{proof}

\newpage
\section{Experimental Details and Additional Numerical Results}\label{sup:experiment}
This appendix develops the covariance interpretation for the auxiliary local
spectral diagnostic and gives the numerical settings for the figures placed
alongside the results in Sections~\ref{sec:asymptotic}, \ref{sec:connect},
and~\ref{sec:theory}, together with their synthesis in
Section~\ref{sec:experiments}. Unless stated otherwise,
$\bx\sim\calN(0,I_d)$.
All computations use double precision, unit response-noise variance, and the
convention for the mixing weights in Section~\ref{subsec:2mlr}.

\suppressfloats[t]

\par\smallskip
\noindent\textbf{Experimental notation.}
To avoid a second notation system, Table~\ref{tab:experiment-notation}
cross-references quantities already introduced in the theorem statements or
their preceding formal appendix.  Numerical diagnostics such as fitted slopes,
quantiles, and coverage are described in words rather than assigned new symbols.

\begin{table}[!htbp]
    \centering
    \caption{\textbf{Experimental notation.} Main-paper quantities reused in the numerical experiments.}
    \label{tab:experiment-notation}
    \small
    \setlength{\tabcolsep}{5pt}
    \renewcommand{\arraystretch}{1.13}
    \begin{tabular}{@{}>{\raggedright\arraybackslash}p{0.31\linewidth}>{\raggedright\arraybackslash}p{0.63\linewidth}@{}}
        \toprule
        \textbf{Quantity} & \textbf{Use in the experiments} \\
        \midrule
        $(\theta,\nu)$ and $(\theta^\ast,\nu^\ast)$
        & Model and ground-truth parameters in every study. \\
        $\bar\alpha_t$ and $\bar\alpha_{T_n}$
        & Scale of the diffusion noise level in the endpoint and blindness studies, and terminal
          scale in the finite-sample study. \\
        $(\widehat\theta_{n,T_n}^{\,\mathsf{SM}},
          \widehat\nu_{n,T_n}^{\,\mathsf{SM}})$ and $s_n$
        & Integrated score matching estimator and the joint sign alignment in
          Theorem~\ref{thm:consistency_asymptotic_normality}. \\
        $I(\theta^\ast,\nu^\ast)$
        & Fisher-information benchmark for the standardized aligned estimator. \\
        $M$, $N$, $V$, and $V_t$
        & Terms in the complete low-noise reference and the blindness study. \\
        $\Sigma$
        & Covariate covariance in the endpoint and limiting-loss calculations. \\
        $\calC^\ast$
        & Fourth-order signal covariance in the auxiliary spectral
          diagnostic. \\
        $c^\ast$, $\calL(\vartheta)$, $J(\vartheta)$, $\Phi$, and
          $\calD_\eta$
        & Limiting loss, update map, and admissible set used in the dynamics
          study. \\
        $\eta$, $\calL_0$, and $B_0$
        & Step size, initial loss, and explicit-envelope constant in the
          dynamics study. \\
        $\calS$, $\calW^s(\calS)$, and $K_\epsilon$
        & Exceptional saddle set, its stable set, and the first iteration
          $K$ for which $\calL(\vartheta_K)\le\epsilon$. \\
        \bottomrule
    \end{tabular}
\end{table}

\FloatBarrier
\par\smallskip
\noindent\textbf{Numerical settings.}
Tables~\ref{tab:experiment-statistical-settings}
and~\ref{tab:experiment-dynamics-settings} collect the settings without
introducing experiment-specific mathematical notation.  A separate entry
lists the spectral study as an auxiliary local-mechanism diagnostic.

\begin{table}[!htbp]
    \centering
    \caption{\textbf{Finite-sample, endpoint, and blindness settings.}}
    \label{tab:experiment-statistical-settings}
    \small
    \setlength{\tabcolsep}{5pt}
    \renewcommand{\arraystretch}{1.14}
    \begin{tabular}{@{}>{\raggedright\arraybackslash}p{0.22\linewidth}>{\raggedright\arraybackslash}p{0.72\linewidth}@{}}
        \toprule
        \textbf{Study} & \textbf{Model and numerical setting} \\
        \midrule
        Finite sample
        & $d=1$, $(\theta^\ast,\nu^\ast)=(1.10,0.55)$,
          $n=250\cdot2^j$ for $j=0,\ldots,4$, with 200 independent repetitions at
          each $n$, $\bar\alpha_0=1$, and
          $\bar\alpha_{T_n}=n^{-3/2}$.  The diffusion integral uses 24-point
          Gauss--Legendre quadrature, the expectation over $\xi$ uses 11-point
          Gauss--Hermite quadrature, and each fit uses four starts.  The Fisher
          information uses an 80-point Gauss--Hermite rule in each Gaussian
          variable and an exact sum over the two latent classes. \\
        \addlinespace[2pt]
        Endpoints
        & $d=1$, $(\theta^\ast,\nu^\ast)=(1.10,0.55)$, and
          $(\theta,\nu)=(0.45,-0.35)$.  Both the Gaussian covariate and
          response-noise expectations use 140-point Gauss--Hermite quadrature.
          For $j=0,\ldots,8$, the low-noise grid uses
          $\bar\alpha_t=1-10^{-6+j/2}$, while the high-noise grid uses
          $\bar\alpha_t=10^{-6+j/2}$. \\
        \addlinespace[2pt]
        Blindness
        & $d=1$, $\bar\alpha_t=\tfrac12$, $(\theta,\nu)=(0.80,-0.35)$,
          $\nu^\ast=0.55$, and
          $\theta^\ast=3\times10^{j/5}>0$ for $j=0,\ldots,25$.
          Expectations use deterministic quadrature as detailed below. \\
        \bottomrule
    \end{tabular}
\end{table}

\begin{table}[!htbp]
    \centering
    \caption{\textbf{Limiting-dynamics and spectral-diagnostic settings.}}
    \label{tab:experiment-dynamics-settings}
    \small
    \setlength{\tabcolsep}{5pt}
    \renewcommand{\arraystretch}{1.14}
    \begin{tabular}{@{}>{\raggedright\arraybackslash}p{0.22\linewidth}>{\raggedright\arraybackslash}p{0.72\linewidth}@{}}
        \toprule
        \textbf{Study} & \textbf{Model and numerical setting} \\
        \midrule
        Degenerate dynamics
        & $d=2$, $c^\ast=\mathbf0$, $\eta=1/50=2\times10^{-2}$,
          $\theta_0=(\sinh(0.70),0)$, and $\nu_0=0.70$, so that
          $J(\vartheta_0)=0$.  The exact map $\Phi$ is iterated for 30,000
          steps and the theorem's explicit envelope is evaluated at each step. \\
        \addlinespace[2pt]
        Nondegenerate dynamics
        & $d=2$, $c^\ast=(0.80,0)$, and $\eta=1/50=2\times10^{-2}$.  Every displayed start is
          checked to lie in $\calD_\eta$ before the exact map $\Phi$ is
          iterated for 1,600 steps.  The four starts are
          $((0.45,0.15),0.40)$, $((-0.45,0.25),0.40)$,
          $((5\times10^{-3},0.65),0)$, and the exact saddle
          $((0,0.65),0)$. \\
        \addlinespace[2pt]
        Spectral diagnostic
        & $d=20$, $\Sigma=I_d$,
          $\theta^\ast=(1,0,\ldots,0)$, $\nu^\ast=\nu=0$,
          $\bar\alpha_t=5\times10^{-3}$, raw $\theta$-update step size
          $\eta_\theta=20$
          (whose product with $\bar\alpha_t$ is $10^{-1}$), and
          $\|\theta_0\|=2\times10^{-2}$.  The initial absolute cosine with
          $\theta^\ast$ is $10^{-1}$. For the Gaussian and
          covariance-normalized spherical Student $t$ designs with ten degrees
          of freedom, the experiment uses $1.6\times10^5$ common draws and 14
          normalized updates. \\
        \bottomrule
    \end{tabular}
\end{table}

\FloatBarrier
\par\smallskip
\noindent\textbf{Finite-sample estimation and calibration.}
For a data point $(y_0,\bx)$, define $\mu_0=\langle\theta,\bx\rangle$,
$\mu_t=\sqrt{\bar\alpha_t}\mu_0$, and
$y_t=y_0\sqrt{\bar\alpha_t}+\xi\sqrt{1-\bar\alpha_t}$ as in
Section~\ref{subsec:2mlr}.  After terms independent of $(\theta,\nu)$ are
removed, the implemented conditional Hyv\"arinen integrand is the sum of
\(-y_t\mu_0\tanh(\mu_t y_t+\nu)/\sqrt{\bar\alpha_t}\) and
\(\mu_0^2\{2-\tanh^2(\mu_t y_t+\nu)\}/2\).
It is integrated over
$\bar\alpha_t\in[\bar\alpha_{T_n},\bar\alpha_0]$ and over
$\xi\sim\calN(0,1)$.  Since
$n\bar\alpha_{T_n}/(1-\bar\alpha_{T_n})
=n^{-1/2}/(1-n^{-3/2})\to0$, the selected
terminal scale satisfies the terminal-scale condition in
Theorem~\ref{thm:consistency_asymptotic_normality}.

Each fit uses L-BFGS-B with $\theta\in[-3,3]$, $\nu\in[-2.5,2.5]$, at most
140 iterations, function tolerance $10^{-11}$, and projected-gradient
tolerance $2\times10^{-7}$.  Two starts are a data-moment estimate and its
joint negative.  The first coordinate of the data-moment estimate is
$\sqrt{\max\{n^{-1}\sum_i (y_0^{(i)})^2-1,\,8\times10^{-2}\}}$, and its second coordinate
is obtained by applying $\operatorname{arctanh}$ to the ratio of
$(n^{-1}\sum_i\bx^{(i)}y_0^{(i)})/
\max\{n^{-1}\sum_i\|\bx^{(i)}\|^2,10^{-10}\}$ to this first coordinate, clipped
to remain $8\times10^{-2}$ away from $\{\pm1\}$ by restricting it to
$[-1+8\times10^{-2},\,1-8\times10^{-2}]$.  The other two starts are $(0.55,0.15)$ and
$(-0.55,-0.15)$.  Among the four starts, we retain the fit with the smallest
objective and then apply the joint sign $s_n$ to both estimated coordinates.
The aligned error is this estimate minus the ground-truth pair, and
Figure~\ref{fig:finite-sample-scaling}(a) uses its norm induced by
$I(\theta^\ast,\nu^\ast)$, called the Fisher norm.
Theorem~\ref{thm:consistency_asymptotic_normality} gives convergence of $n$
times its square to the two-degree-of-freedom chi-square law $\chi^2_2$.
Hence the median and 90\% references are the square roots of the corresponding
$\chi^2_2$ quantiles divided by $n$. The empirical quantile bars are 95\%
percentile-bootstrap intervals from 2,000 resamples with seed 17031. Panel (b)
compares all ordered values of $n$ times the squared Fisher error with
$\chi^2_2$ quantiles at $n=250$ and $n=4000$.

For each sample size $n$, Table~\ref{tab:finite-normal-calibration} reports the
fraction of aligned estimates for which $n$ times the squared Fisher error does
not exceed the 95\% $\chi^2_2$ quantile. Its covariance discrepancy is the
Frobenius distance between the empirical covariance of the $\sqrt n$-scaled
aligned errors and $I(\theta^\ast,\nu^\ast)^{-1}$, divided by the Frobenius norm
of $I(\theta^\ast,\nu^\ast)^{-1}$. All 1,000 optimization runs terminated
without solver failure.

\begin{table}[!htbp]
    \centering
    \caption{\textbf{Finite-sample calibration.} Joint 95\% Fisher-ellipse coverage and relative covariance discrepancy.}
    \label{tab:finite-normal-calibration}
    \small
    \setlength{\tabcolsep}{8pt}
    \renewcommand{\arraystretch}{1.12}
    \begin{tabular}{@{}ccc@{}}
        \toprule
        Sample size $n$ & Joint 95\% Fisher-ellipse coverage
            & Relative covariance discrepancy \\
        \midrule
        250  & 93.5\% & 0.220 \\
        500  & 96.5\% & 0.093 \\
        1000 & 94.0\% & 0.127 \\
        2000 & 94.5\% & 0.089 \\
        4000 & 95.0\% & 0.070 \\
        \bottomrule
    \end{tabular}
\end{table}

\FloatBarrier
\par\smallskip
\noindent\textbf{Endpoint-gradient checks.}
We use an exact sum over the latent class and 140-point Gauss--Hermite
quadrature for both the Gaussian covariate and response-noise expectations at
every scale of the diffusion noise level.  At low noise, the $\theta$ component of the complete
reference in Proposition~\ref{prop:low_noise_em} is the sum of
\(\Sigma(\theta-M(\theta,\nu))\),
\(-\bigl(\nabla_\theta M(\theta,\nu)\bigr)^\top\Sigma\theta\), and
\(\nabla_\theta V(\theta,\nu)\).  The $\nu$ component is
$-\langle\nabla_\theta N(\theta,\nu),\theta\rangle+\nabla_\nu V(\theta,\nu)$.
The plotted remainder uses the direct $\bar\alpha_t=1$ gradient to stabilize
the subtraction at the smallest scales. As a separate check, that endpoint differs
from the complete operator expression in Proposition~\ref{prop:low_noise_em} by
$2.24964\times10^{-9}$ in Euclidean norm. Its $\theta$- and $\nu$-component
residuals are $7.606\times10^{-10}$ and $2.117\times10^{-9}$.  This separate
identity check confirms that the endpoint reference includes the correction
terms and is not the bare EM residual.
At high noise, the reference is the gradient of the effective-product limiting
loss in Proposition~\ref{prop:high_noise_effective_coefficient}.

Table~\ref{tab:endpoint-component-slopes} records the componentwise log--log
slope deviations from the first-order benchmark.  Each fit uses
the five smallest of the nine endpoint distances.

\begin{table}[!htbp]
    \centering
    \caption{\textbf{Endpoint-remainder slopes.} Entries are written relative to the unit-slope benchmark.}
    \label{tab:endpoint-component-slopes}
    \small
    \setlength{\tabcolsep}{8pt}
    \renewcommand{\arraystretch}{1.12}
    \begin{tabular}{@{}lcc@{}}
        \toprule
        Gradient quantity & Low-noise slope (panel a) & High-noise slope (panel b) \\
        \midrule
        $\theta$ component & $1+2.3191\times10^{-5}$ & $1-3.8124\times10^{-5}$ \\
        $\nu$ component    & $1+1.0247\times10^{-5}$ & $1-3.00596\times10^{-4}$ \\
        \bottomrule
    \end{tabular}
\end{table}

\FloatBarrier
\par\smallskip
\begin{remark}[Spiked fourth-order covariance and the local spectral diagnostic]
Proposition~\ref{prop:high_noise_effective_coefficient} identifies
\(\calL(\theta,\nu)=\frac12\|\theta\tanh\nu-c^\ast\|_\Sigma^2\), where
\(c^\ast:=\theta^\ast\tanh\nu^\ast\), as the leading high-noise objective.
For Gaussian covariates and for elliptical covariates with zero mean,
covariance \(\Sigma\succ0\), and finite fourth moments, define the fourth-order
signal covariance using the covariate marginal by
\[
    \calC^\ast:=\E_{\bx}\!\left[\bx\bx^\top
    (\bx^\top\theta^\ast)^2\right].
\]
Lemmas~\ref{lemma:generalized_spiked_covariance}
and~\ref{lemma:elliptical_spiked_covariance} give
\[
    \calC^\ast=
    \begin{cases}
        \sigma_\mu^2\Sigma
        +2\Sigma\theta^\ast(\Sigma\theta^\ast)^\top,
        & \bx\sim\calN(0,\Sigma),\\[0.25em]
        (1+\kappa)\!\left[
        \sigma_\mu^2\Sigma
        +2\Sigma\theta^\ast(\Sigma\theta^\ast)^\top\right],
        & \bx\text{ elliptical},
    \end{cases}
    \qquad
    \sigma_\mu^2:=\|\theta^\ast\|_\Sigma^2,
\]
where
\(\kappa:=\E_{\bx}[\|\Sigma^{-1/2}\bx\|^4]/(d(d+2))-1>-1\).
If \(\theta^\ast\ne\mathbf0\) and
\(v^\ast:=\Sigma^{1/2}\theta^\ast\), whitening gives
\(\sigma_\mu^2I_d+2v^\ast(v^\ast)^\top\) in the Gaussian case and
\((1+\kappa)[\sigma_\mu^2I_d+2v^\ast(v^\ast)^\top]\) in the elliptical
case, so \(\operatorname{span}\{v^\ast\}\) is the unique leading eigenspace
after whitening.  This also makes
\(\operatorname{span}\{\theta^\ast\}\) the leading generalized eigenspace
of \(\calC^\ast v=\lambda\Sigma v\).  The corresponding generalized spectral
gap is \(2\sigma_\mu^2\) in the Gaussian case and
\(2(1+\kappa)\sigma_\mu^2\) in the elliptical case.  When
\(\Sigma=I_d\), the Gaussian covariance reduces to
\(\|\theta^\ast\|^2I_d+2\theta^\ast(\theta^\ast)^\top\), whereas the
elliptical covariance is \((1+\kappa)\) times this matrix.  These covariance
identities define the auxiliary directional comparator used below.
\end{remark}

On the balanced slice \(\nu^\ast=\nu=0\), the leading high-noise objective is
flat in \(\theta\), so Proposition~\ref{prop:high_noise_effective_coefficient}
supplies no leading-order direction. The \(\calO(\bar\alpha_t)\) remainder test of
that proposition is Figure~\ref{fig:endpoint-gradients}(b), not
Figure~\ref{fig:local-spectral}. The latter is only an auxiliary next-order
directional comparison at small positive \(\bar\alpha_t\). Using \(n\) covariate
draws \(\{\bx^{(i)}\}_{i=1}^n\) with \(\Sigma=I_d\), form the sample spiked
covariance
\[
    \widehat{\calC}^\ast
    :=\frac1n\sum_{i=1}^n \bx^{(i)}(\bx^{(i)})^\top
    \bigl((\bx^{(i)})^\top\theta^\ast\bigr)^2
\]
and the normalized power step
\[
    \theta\;\longmapsto\;
    \frac{(I_d+2\gamma\widehat{\calC}^\ast)\theta}
   {\|(I_d+2\gamma\widehat{\calC}^\ast)\theta\|},
    \qquad\gamma:=\eta_\theta\bar\alpha_t.
\]
In parallel, with \(\nu\) clamped at zero, apply the finite-noise score-matching
update \(\theta\leftarrow\theta-\eta_\theta\nabla_\theta\calL_t(\theta,0)\) and
renormalize to a fixed radius. The two trajectories use respective
normalization radii \(1\) and \(2\times10^{-2}\), which do not affect the absolute
cosine \(|\cos\angle(\theta_k,\theta^\ast)|\) plotted in
Figure~\ref{fig:local-spectral}. The label \(t_{10}\) denotes spherical
Student-\(t\) covariates with ten degrees of freedom (covariance-normalized),
contrasted with isotropic Gaussian covariates. The experiment compares
directions only: it asserts neither equality of the unnormalized updates nor
that the comparison follows from the proposition's \(\calO(\bar\alpha_t)\) bound.
Isotropic \(\Sigma=I_d\) gives the power step an ordinary-eigenvector
interpretation; anisotropic covariance would require the corresponding
generalized-eigenvector map.

\begin{figure}[!htbp]
    \centering
    \includegraphics[width=0.74\linewidth]{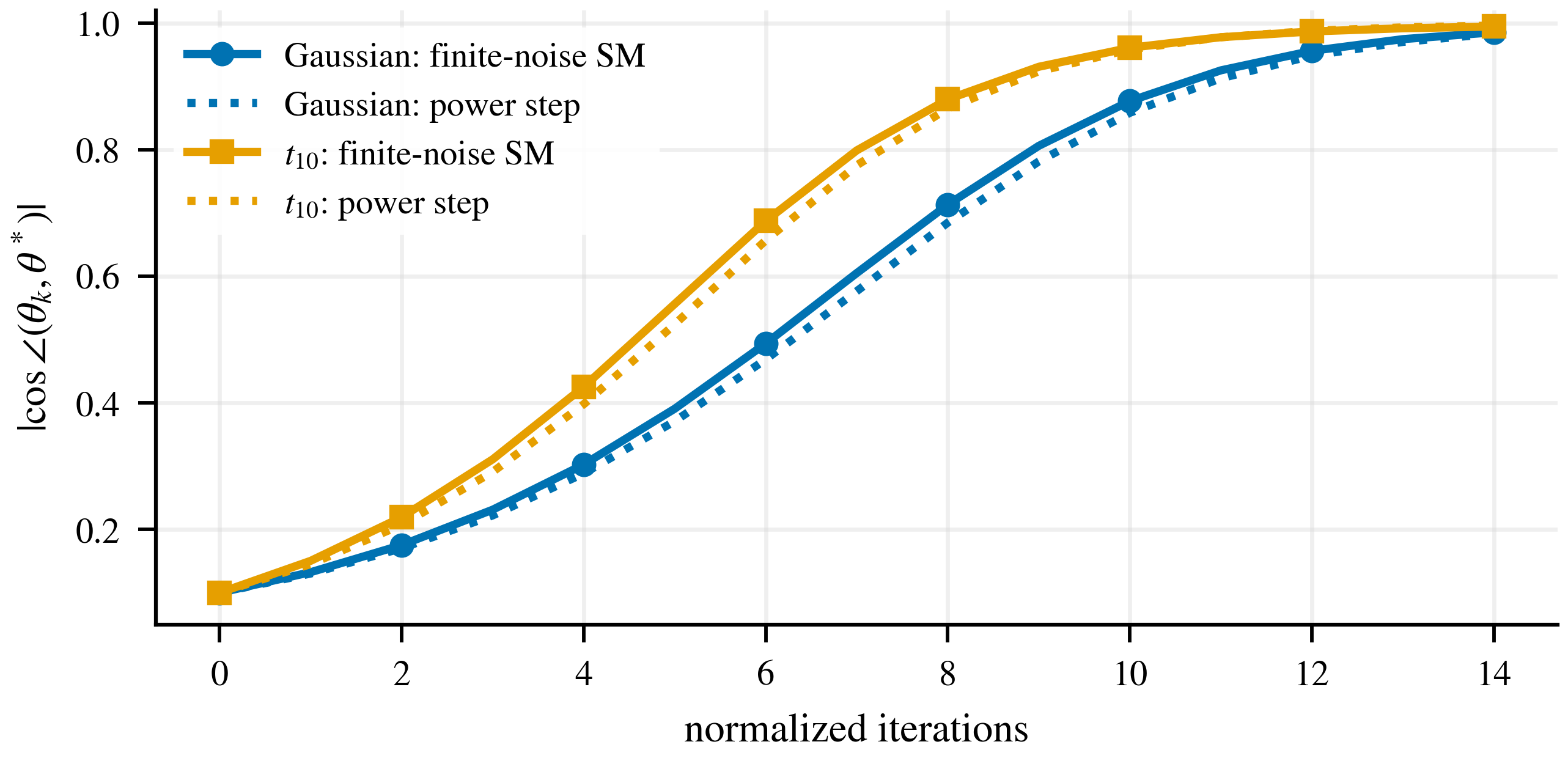}
    \caption{\textbf{Auxiliary next-order directional comparison on the
    balanced slice \(\nu^\ast=\nu=0\).}
    Absolute cosine \(|\cos\angle(\theta_k,\theta^\ast)|\) for normalized
    finite-noise score-matching updates
    \(\theta\leftarrow\theta-\eta_\theta\nabla_\theta\calL_t(\theta,0)\)
    (solid) and normalized power steps
    \(\theta\mapsto(I_d+2\gamma\widehat{\calC}^\ast)\theta/\|(I_d+2\gamma\widehat{\calC}^\ast)\theta\|\)
    with \(\gamma=\eta_\theta\bar\alpha_t\) (dotted).
    Curves labeled \(t_{10}\) use spherical Student-\(t\) covariates with ten
    degrees of freedom; the remaining curves use isotropic Gaussians.}
    \label{fig:local-spectral}
\end{figure}

\FloatBarrier
\par\smallskip
\noindent\textbf{Limiting-dynamics diagnostics.}
The implementation iterates
$\vartheta_{k+1}=\Phi(\vartheta_k)$.  For the degenerate target,
\(\calL_0=\tfrac12\sinh^2(0.70)\tanh^2(0.70)\approx0.1050944\),
\(J(\vartheta_0)=0\), and \(B_0=4\).
Since $\eta=1/50$, the left-hand side in the definition of $\calD_\eta$ is
$32\eta=16/25=0.64<1$. The reference curve in
Figure~\ref{fig:high-noise-dynamics}(a) is the theorem's bound
$\bigl(\calL_0^{-1/2}+\eta K/(\sqrt6\,B_0)\bigr)^{-2}$.
We fit a linear regression of $\log\calL(\vartheta_K)$ on $\log K$ for every
integer $K$ from $3\times10^3$ through $3\times10^4$.  The fitted slope is
$-2-1.5035\times10^{-2}$. It is $1.5035\times10^{-2}$ steeper than the
$K^{-2}$ reference. The coefficient of determination is
$1-4.6533\times10^{-7}$. This fit is a finite-window diagnostic. The pointwise
comparison with the explicit bound checks the theorem's $\calO(K^{-2})$ upper
bound.

For $c^\ast\ne\mathbf0$, the three displayed generic starts in
Table~\ref{tab:dynamics-initializations} are selected to show different
entrance times while remaining in $\calD_\eta$. The exact saddle
start satisfies $\nu_0=0$ and $\langle\theta_0,c^\ast\rangle=0$ and verifies
the predicted plateau $\|c^\ast\|^2/2=0.32$. Across the one degenerate and four
nondegenerate paths,
the maximum admissibility
left-hand side is $0.64$ and the largest one-step loss increase is zero (at the
stationary saddle), consistent along these trajectories with the theorem's
forward-invariance and monotonicity conclusions.

\begin{table}[!htbp]
    \centering
    \caption{\textbf{Nondegenerate-dynamics initializations.}}
    \label{tab:dynamics-initializations}
    \small
    \setlength{\tabcolsep}{5pt}
    \renewcommand{\arraystretch}{1.12}
    \begin{tabular}{@{}>{\raggedright\arraybackslash}p{0.18\linewidth}>{\raggedright\arraybackslash}p{0.23\linewidth}>{\centering\arraybackslash}p{0.14\linewidth}>{\raggedright\arraybackslash}p{0.37\linewidth}@{}}
        \toprule
        \textbf{Start} & $\theta_0$ & $\nu_0$ & \textbf{Role} \\
        \midrule
        Same-sign & $(0.45,0.15)$ & $0.40$
          & Attraction to the matching-sign branch. \\
        Opposite-sign & $(-0.45,0.25)$ & $0.40$
          & Crosses toward the target branch. \\
        Near saddle & $(5\times10^{-3},0.65)$ & $0$
          & Perturbation from the saddle manifold. \\
        Exact saddle & $(0,0.65)$ & $0$
          & Stationary at loss $\|c^\ast\|^2/2$. \\
        \bottomrule
    \end{tabular}
\end{table}

For each displayed generic start, the analysis fits a linear regression of
$\log\calL(\vartheta_K)$ on $K$ while the loss lies between
$10^{-3}$ and $10^{-12}$. Here $K_{10^{-2}}$, for example, is the first
iteration $K$ with $\calL(\vartheta_K)\le10^{-2}$.
Table~\ref{tab:dynamics-hitting-times} reports the four hitting times, the
fitted slope, and one minus its coefficient of determination.

\begin{table}[!htbp]
    \centering
    \caption{\textbf{Hitting times and geometric tail fits.}}
    \label{tab:dynamics-hitting-times}
    \small
    \setlength{\tabcolsep}{7pt}
    \renewcommand{\arraystretch}{1.12}
    \begin{tabular}{@{}lrrrrrr@{}}
        \toprule
        Start & $K_{10^{-2}}$ & $K_{10^{-4}}$ & $K_{10^{-6}}$
              & $K_{10^{-8}}$ & Tail slope & Tail lack of fit \\
        \midrule
        Same-sign     & 136 & 296 & 475  & 680  & $-0.022705$ & $1.71\times10^{-3}$ \\
        Opposite-sign & 328 & 498 & 694  & 907  & $-0.021873$ & $6.67\times10^{-4}$ \\
        Near saddle   & 614 & 844 & 1082 & 1322 & $-0.019248$ & $5.68\times10^{-6}$ \\
        \bottomrule
    \end{tabular}
\end{table}

We omit values below $10^{-14}$ from the semilogarithmic panel as a roundoff
display cutoff.

\FloatBarrier
\par\smallskip
\noindent\textbf{Pointwise high-SNR blindness.}
The experiment in Figure~\ref{fig:high-snr-blindness} uses the scalar covariate
$x\sim\calN(0,1)$ and a fixed positive ground-truth direction.
This numerical derivation uses the boundary-layer coordinate
$u=\sqrt{\|\theta^\ast\|}\,x$.  The diffused ground-truth signal norm is
$\sqrt{\bar\alpha_t}\|\theta^\ast\|$.  Because $\bar\alpha_t$ is fixed in
each sweep covered by Proposition~\ref{prop:pointwise_sm_blindness}, it diverges as
$\|\theta^\ast\|\to\infty$.
With the established latent class $z\in\{1,2\}$ and
$\varepsilon_t\sim\calN(0,1)$, the candidate activation is the sum of
$(-1)^{z+1}\bar\alpha_t\theta u^2$,
$\sqrt{\bar\alpha_t}\theta u\varepsilon_t/\sqrt{\|\theta^\ast\|}$, and $\nu$,
so its nonsaturated region has fixed width in $u$.  We evaluate
$\nabla_\nu\calL_t$ and $V_t$ by quadrature, multiplying the former by
$\|\theta^\ast\|^{1/2}$ and the latter by
$\|\theta^\ast\|^{3/2}$ before integrating over
$(-\infty,0]\cup[0,\infty)$, and undo the
scaling afterward. This prevents the adaptive quadrature routine from having
to resolve a shrinking neighborhood of $x=0$ and
keeps the reported components comparable in magnitude.
These factors are numerical preconditioners tailored to this
one-dimensional Gaussian design and are undone after integration.  The same
adaptive call also evaluates two diagnostic components.  The first is a
second copy of the preconditioned score matching gradient obtained by direct
differentiation.  The second is the scaled boundary correction used for the
cross-entropy gradient.

We use an exact sum over the two latent classes. We evaluate the response-noise
expectation with paired 80- and 160-node Gauss--Hermite rules. We compare their
results without conflating this check with the adaptive integration in \(u\).
The latter uses
vector quadrature with absolute tolerance $10^{-11}$, relative tolerance
$10^{-9}$, and subdivision limit 500.  Its returned error estimate covers the
adaptive \(u\)-integration, not Gauss--Hermite discretization.  A production
run is rejected if any reported scaled component has paired-order
difference exceeding $10^{-8}+10^{-6}$ times the larger absolute result from
the 80- and 160-node rules.  At the
    publication settings, the direct-score formula, which is derived without the
    integration-by-parts identity, differs from that identity by at most
$2.85\times10^{-10}$ in absolute value over the displayed sweep.  The largest
80-versus-160-node difference in the score gradient obtained by direct
differentiation is also $2.85\times10^{-10}$.  The largest adaptive error
estimate for the scaled five-component vector is $1.60\times10^{-10}$.  We use the direct formula for
the plotted score gradient because it preserves exact-model cancellation, and
retain the integration-by-parts formula as an independent identity check.
We take absolute values after integrating the signed quantities.

Panels (a)--(b) of Figure~\ref{fig:high-snr-blindness} use 1,200
common-random-number draws at $\|\theta^\ast\|=3$ and $\|\theta^\ast\|=100$.  Their background is the fixed
candidate's response-score sensitivity
$|\partial_\nu s_{\theta_t,\nu}(y_t,x)|$ on a common logarithmic scale.  Panel
(c) contains the imbalance gradient and posterior-variance term stated in
Proposition~\ref{prop:pointwise_sm_blindness}. At the largest displayed
$\|\theta^\ast\|$, the computed values of $|\nabla_\nu\calL_t|$ and $V_t$ are
$1.6688\times10^{-3}$ and $2.9399\times10^{-9}$.

The cross-entropy control appears in Appendix
Figure~\ref{fig:high-snr-blindness-controls}(c).  It evaluates
$\nabla_\nu\calH(\theta_t,\nu)=\tanh\nu-
\E_\ast[\tanh(\mu_t y_t+\nu)]$ along the same fixed-candidate ray. For the
displayed aligned one-dimensional specialization, the candidate activation is
the sum of $(-1)^{z+1}\bar\alpha_t\theta\|\theta^\ast\|x^2$,
$\sqrt{\bar\alpha_t}\theta x\varepsilon_t$, and $\nu$.  Its hyperbolic tangent
therefore converges almost surely to $(-1)^{z+1}$, and bounded convergence gives
$|\nabla_\nu\calH(\theta_t,\nu)|\to
|\tanh(-0.35)-\tanh(0.55)|\approx0.836896$.  Its nonzero limit therefore
agrees with Lemma~\ref{lemma:high_snr_em_imbalance} and provides a contrast.
The score matching diagnostics lose their local imbalance signal, whereas this
cross-entropy gradient remains informative. This is a fixed-candidate
comparison of population gradients, not a comparison of optimized estimators.

Figure~\ref{fig:high-snr-blindness-controls}(a)--(b) provides two scope checks.
In the fixed-scale comparisons, each value of \(\bar\alpha_t\) is held constant while
\(\|\theta^\ast\|\) varies, so each curve is a separate specialization covered by
Proposition~\ref{prop:pointwise_sm_blindness}. The moving-candidate comparison instead
sets \(\theta=0.8/\|\theta^\ast\|\) while retaining the main experiment's scale of the diffusion noise level,
ground-truth ray, and imbalance parameters.  Because the candidate changes
with \(\|\theta^\ast\|\), this curve illustrates the proposition's
pointwise-in-candidate scope.

\begin{figure}[!htbp]
    \centering
    \includegraphics[width=\linewidth]{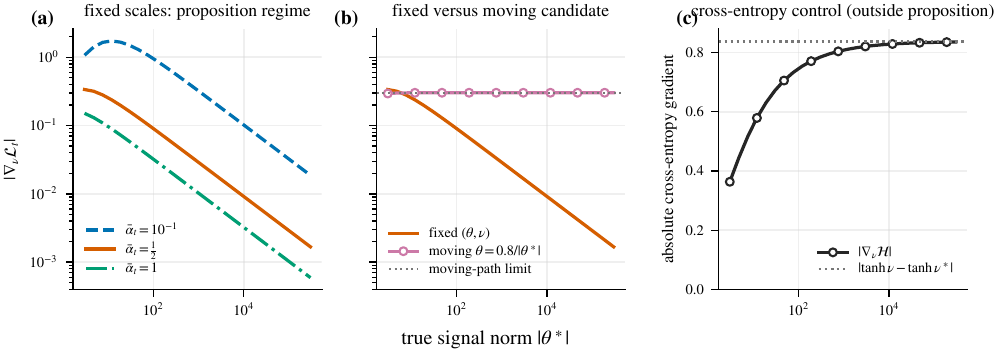}
    \caption{\textbf{Scope and objective controls for pointwise high-SNR blindness.}
    \textbf{(a)} Fixed-\(\bar\alpha_t\) curves provide separate checks covered by the proposition as the
    true signal \(\|\theta^\ast\|\) grows, for
    \(\bar\alpha_t\in\{10^{-1},\tfrac12,1\}\).  \textbf{(b)} The moving-candidate curve uses
    \(\theta=0.8/\|\theta^\ast\|\), for which the candidate activation need not diverge and
    the moving-candidate score matching gradient
    $|\nabla_\nu\calL_t(0.8/\|\theta^\ast\|,\nu)|$ approaches $0.302302$.  That curve is
    outside the fixed-candidate hypothesis of
    Proposition~\ref{prop:pointwise_sm_blindness}
    and illustrates its pointwise-in-candidate scope.  \textbf{(c)} The separate population
    cross-entropy control is evaluated along the same fixed-candidate ray as
    Figure~\ref{fig:high-snr-blindness}. It approaches
    $|\tanh\nu-\tanh\nu^\ast|\approx0.836896$, showing that this selected
    cross-entropy diagnostic retains imbalance signal.}
    \label{fig:high-snr-blindness-controls}
\end{figure}

Along the moving candidate used in the control, differentiation and dominated
convergence give
$\nabla_\nu\calL_t(0.8/\|\theta^\ast\|,\nu)\to
-0.8\,\E_\ast[(-1)^{z+1}x^2\sech^2(0.8\bar\alpha_t(-1)^{z+1}x^2+\nu)]$.
A separate integration gives the signed limiting value
$-0.3023024610$ to ten decimal places, with magnitude
$0.3023024610$.  The plotted absolute
gradient at $\|\theta^\ast\|=3\times10^5$ differs from that magnitude by
$3.2\times10^{-12}$. This is the discrepancy at the largest displayed signal
norm. The
largest 80-versus-160-node difference over the control sweep is
$8.72\times10^{-8}$.


\end{document}